\newcommand{\argmin}{\mathop{\mathrm{arg\,min}}}
\DeclareFixedFont{\myfont}{OT1}{ptm}{m}{n}{7pt}
\begin{document}
\title{Tensor completion via nonconvex tensor ring rank minimization with guaranteed convergence}


\author{
Meng Ding \thanks{Meng Ding, Ting-Zhu Huang, and Xi-Le Zhao are with the School of Mathematical Sciences, University of Electronic Science and Technology of China, Chengdu, Sichuan, 611731, P.R.China. Tian-Hui Ma is with the School of Mathematics and Statistics, Xi'an Jiaotong University, Xi'an, Shaanxi, 710049, P.R.China. E-mails: dingmeng56@163.com, tingzhuhuang@126.com, xlzhao122003@163.com, nkmth0307@126.com.}
\and Ting-Zhu Huang\thanks{Corresponding author.} \and Xi-Le Zhao  \and Tian-Hui Ma}

\date{}
\maketitle

\begin{abstract}
In recent studies, the tensor ring (TR) rank has shown high effectiveness in tensor completion due to its ability of capturing the intrinsic structure within high-order tensors. A recently proposed TR rank minimization method is based on the convex relaxation by penalizing the weighted sum of nuclear norm of TR unfolding matrices. However, this method treats each singular value equally and neglects their physical meanings, which usually leads to suboptimal solutions in practice. In this paper, we propose to use the logdet-based function as a nonconvex smooth relaxation of the TR rank for tensor completion, which can more accurately approximate the TR rank and better promote the low-rankness of the solution. To solve the proposed nonconvex model efficiently, we develop an alternating direction method of multipliers algorithm and theoretically prove that, under some mild assumptions, our algorithm converges to a stationary point. Extensive experiments on color images, multispectral images, and color videos demonstrate that the proposed method outperforms several state-of-the-art competitors in both visual and quantitative comparison.

Key words: nonconvex optimization, tensor ring rank, logdet function, tensor completion, alternating direction method of multipliers.
\end{abstract}

\section{Introduction}
Tensor plays an important role in various fields, such as image processing \cite{Jiang2018FastDeRain,Lu2020TRPCA,Wen2008Restoration}, remote sensing \cite{Chang2017Transformed,Fu2016Unmixing,Zhao2013Unmixing,Zheng2019Mixed}, and machine learning \cite{Chang2019DeNet,Yang2017TTRNN}, due to its ability of expressing the complex interactions within high-dimensional data. Tensor completion aims to estimate the missing entries or damaged parts from the observed data, which is a fundamental problem in multidimensional image processing, e.g., color image inpainting \cite{Komodakis2006Image-Completion,Liu2019Image,Zhao2015Bayesian}, video inpainting \cite{Chen2014STDC,Zhang2018Nonconvex}, hyperspectral images recovery \cite{Li2012Coupled,Xing2012Dictionary}, and seismic data reconstruction \cite{Kreimer2012seismic}.

Inspired by the success of rank minimization in matrix completion, many researchers applied the low-tensor-rank constraint to recover high-order tensors with missing entries, named as low-rank tensor completion (LRTC). Unfortunately, unlike the matrix case, characterizing the redundancy of the tensor is much more difficult, and there exists many definitions of the tensor rank, such as CANDECOMP/PARAFAC (CP) rank, Tucker rank, tubal rank, and tensor train (TT) rank. Below we briefly review some related works and introduce our motivation and contributions.

\subsection{Related works}
Three representative works on the tensor low-rankness characterization are CP rank \cite{Chiantini2012CPDIdentifiability,Hitchcock1927CPD}, Tucker rank \cite{Ishteva2009Tucker_rank,Tucker1966Tucker}, and tubal rank \cite{Kilmer2013Third-Order}. As a direct generalization of matrix rank, CP rank is defined as the smallest number of rank-one tensors needed to generate the target tensor. Despite of its theoretical elegance, the computation of CP rank is NP-hard, and thus minimizing CP rank usually suffers from computational issues. Tucker rank is a vector consisting of ranks of unfolding matrices of the target tensor. Some works \cite{Gandy2011Tensor,Liu2013tensor} proposed to minimize Tucker rank using its convex relaxation, i.e., the sum of nuclear norm (SNN) of unfolding matrices. However, Tucker rank can only capture the correction between one mode and the rest modes of the tensor due to its unbalanced unfolding scheme, which is not much suitable for high-order tensor data \cite{Oseledets2011Tensor-Train-Decomposition}. Recently, Kilmer et al. \cite{Kilmer2013Third-Order} developed a new tensor singular value decomposition (tSVD) by treating third-order tensors as operators on matrices and defined the corresponding tubal rank as the nonzero singular tubes under the tSVD of the target tensor. Later, Zhang et al. \cite{Zhang2017tSVD} suggested to minimize tubal rank using tensor nuclear norm (TNN) and established theoretical results of TNN minimization for LRTC; Lu et al. \cite{Lu2018Exact} gave the exact guarantee of TNN minimization for the low-tubal-rank tensor recovery from Gaussian measurements.  Zheng et al. \cite{Zheng2018Ntubal} extended the tubal rank to the $N$-tubal rank for high-order tensors ($\textrm{order} >3$), with better flexibility in depicting the correlations along different modes.

Recently, tensor decompositions based on matrix product states have attracted much attention. Specifically, TT decomposition \cite{Oseledets2011Tensor-Train-Decomposition} represents a $j$th-order tensor $\mathcal{X}\in \mathbb{R}^{m_{1}\times \cdots \times m_{j}}$ by a set of third-order core tensors with two border matrices, i.e.,
\begin{equation}\label{TT decomposition}
x_{i_{1},\ldots,i_{j}}=\textbf{G}_{1}(i_{1},:)\mathcal{G}_{2}(:,i_{2},:) \cdots \mathcal{G}_{j-1}(:,i_{j-1},:)\textbf{G}_{j}(:,i_{j}),
\end{equation}
where $\textbf{G}_{1}\in \mathbb{R}^{m_{1}\times r_{1}}$, $\textbf{G}_{j}\in \mathbb{R}^{r_{j-1}\times m_{j}}$, $\mathcal{G}_{h}\in \mathbb{R}^{r_{h-1} \times m_{h} \times r_{h}}$, $h=2, \cdots, j-1$, and TT rank is defined as $(r_{1},\ldots,r_{j-1})$. TT decomposition and TT rank have been widely studied with theoretical analyses and numerical implementations \cite{Ding2019TTTV,Grasedyck2015TT,Wang2016TT}. Particularly, Bengua et al. \cite{Bengua2017Efficient} relaxed TT rank by tensor train nuclear norm based on a canonical matricization scheme, i.e., matricizing the tensor along permutations of modes. However, TT unfolding scheme also suffers from the unbalanced problem, i.e., matricizing the tensor along permutations makes the sizes of the middle unfolding matrices more balanced than those of the border matrices. To tackle this limitation, Zhao et al. \cite{Zhao2016TR} extended TT to tensor ring (TR) decomposition, which essentially solves the unbalance problem and balances the size of core tensors via the trace operation. More precisely, TR decomposition models each element of $\mathcal{X}$ by
\begin{equation}\label{TR decomposition}
x_{i_{1},\ldots,i_{j}}=\textrm{tr}(\mathcal{G}_{1}(:,i_{1},:) \cdots \mathcal{G}_{j}(:,i_{j},:)),
\end{equation}
where $\mathcal{G}_{h}\in \mathbb{R}^{r_{h-1} \times m_{h} \times r_{h}}$ is the $h$th third-order core tensor ($h=1, \cdots, j$), the boundary condition states that $r_{0} = r_{j}$, and $\textrm{tr}(\cdot)$ denotes the matrix trace. TR rank corresponding to \eqref{TR decomposition} is defined as $(r_{1},\ldots,r_{j})$. The TR model can be viewed as a linear combination of several correlated TT decompositions, leading to a higher representation ability.

\begin{figure}[!t]
\scriptsize\setlength{\tabcolsep}{0.5pt}
\begin{center}
\begin{tabular}{c}
\includegraphics[width=0.94\textwidth]{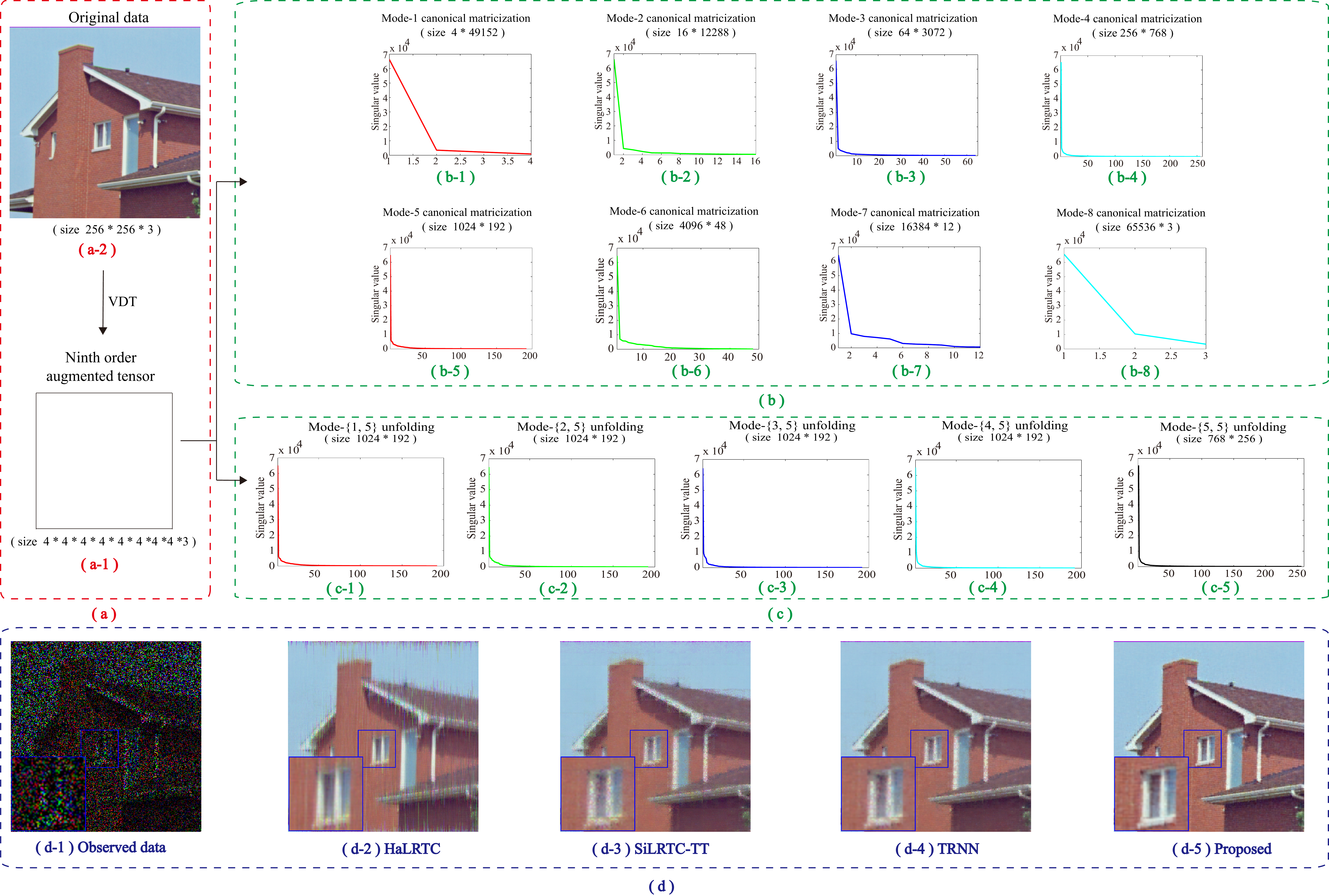}
\end{tabular}
\caption{\small{Comparison of the low-rankness of the canonical matricization scheme and TR unfolding scheme. (a-1, 2) the augmented tensor, and the original data. (b-1) to (b-8) the distribution of singular values of the mode-1 to mode-8 canonical matricizations of the augmented tensor (a-1). (c-1) to (c-5) the distribution of singular values of the mode-$\{1,5\}$ to mode-$\{5,5\}$ unfoldings of the augmented tensor (a-1). The average ratios of singular values larger than $1\%$ of the corresponding largest ones of (b) and (c) are $21.0\%$ and $13.1\%$ respectively.  (d-1) to (d-5) the observed data, the results recovered by HaLRTC \cite{Liu2013tensor},  SiLRTC-TT \cite{Bengua2017Efficient}, TRNN \cite{Huang2019TRNN}, and the proposed method.}}
  \label{fig:motivation}
  \end{center}\vspace{-0.3cm}
\end{figure}

The minimization of TR rank has became a hot research topic. Wang et al. \cite{Wang2017TR} proposed an iterative algorithm by alternatively updating each core tensor, and Yuan et al. \cite{Yuan2019TR} imposed the low-rank regularization on TR core tensors for tensor completion. However, these methods are generally time-consuming and suffer from the problem of optimal rank selection. For more efficiently minimizing TR rank, Yu et al. \cite{Yu2019TRNN} and Huang et al. \cite{Huang2019TRNN} proposed a new circular TR unfolding scheme named mode-$\{n,l\}$ unfolding and relaxed the nonconvex TR rank by the convex tensor ring nuclear norm (TRNN). More precisely, the mode-$\{n,l\}$ unfolding is implemented by first permuting $\mathcal{X}$ with order $[l,\ldots,j,1,\ldots,l-1]$ and then unfolding $\mathcal{X}$ along first $n$ modes, and then the TRNN is defined as the sum of nuclear norm of each TR unfolding matrix, i.e.,
$\|\mathcal{Z}\|_{\textrm{TRNN}}=\sum_{n}^{j}\sum_{l}^{j}\|\textbf{Z}_{\{n,l\}}\|_{\ast}$.
TRNN minimization has shown promising performance in LRTC with lower computational complexity and no need of choosing the optimal TR rank. In addition, compared with TT unfolding, TR unfolding can better capture the global correlation of high-order tensors, since TR unfolding matrices admit more balanced sizes and exhibit more significantly low-rank property than those obtained by TT unfolding; see Figure \ref{fig:motivation} for an illustration.

\subsection{Motivations and contributions}
Despite of the effectiveness of the above TRNN-based methods, TRNN still has two shortcomings in TR rank minimization. First, TRNN is based on the nuclear norm, which is only a biased approximation to the TR rank and can not effectively promote the low-rankness of the solution. Second, TRNN treats each singular value equally and neglects the physical meaning of singular values, which leads to suboptimal solutions and loss the major information. Actually, in practice the singular values have clear physical meanings and should be treated differently \cite{Gu2017Weighted}. For instance, larger singular values often represent low-frequency information such as major edges and cartoons; smaller singular values convey high-frequency information such as tiny structures and textures, which are, however, more likely to be contaminated by noises. Thus, we should shrink less the larger singular values to preserve the major data components while shrink more the smaller ones to suppress random errors.

Summarizing the aforementioned observations, TR decomposition admits a promising representation ability for high-order tensors; TR unfolding operator gives a balanced tensor matricization scheme; and TRNN is the convex relaxation of TR rank, which is easy to minimize. However, computing TR rank is NP-hard and time-consuming; and TRNN treats each singular value equally, which less effectively approximates TR rank and leads to a suboptimal solution. So here comes the question: can we find a new relaxation for TR rank that is tighter than TRNN and easy to optimize?

In this paper, we propose a novel nonconvex approximation to TR rank by using the logdet function \cite{Fazel2003logDet} onto TR unfolding matrices, which is defined as
\begin{equation}\label{LogTR}
\begin{split}
\|\mathcal{Z}\|_{\textrm{LogTR}}= &\ \sum_{n=1}^{j}\sum_{l=1}^{j}\beta_{n,l}\log\det((\textbf{X}_{\{n,l\}}\textbf{X}_{\{n,l\}}^{\top})^{1/2}+\varepsilon\textbf{I}_{n})\\
 =&\ \sum_{n=1}^{j}\sum_{l=1}^{j}\beta_{n,l}\sum_{i=1}^{i_{n,l}}\log(\sigma_{i}(\textbf{X}_{\{n,l\}})+\varepsilon),
\end{split}
\end{equation}
where $\{\beta_{n,l}\}$ are non-negative weighted parameters and $\sigma_{i}(\textbf{X}_{\{n,l\}})$ is the $i$th singular value of $\textbf{X}_{\{n,l\}}$. Here, the proposed LogTR surrogate has three advantages. First, LogTR function does not need to compute TR rank. Second, LogTR function not only retains the strength of TR unfolding (shown in Figure \ref{fig:motivation}), but also provides a tighter approximation to TR rank ($l_{0}$ norm of the singular values) than TRNN. Figure \ref{fig:log} compares the rank, the nuclear norm, and the logdet function for scalars; and Table \ref{table:LogTR} gives the low-rank approximation of TR unfolding matrices\footnote{The TR unfolding matrices are obtained by applying TR unfolding on the augmented tensor.} of the CAVE multispectral images (MSI) database\footnote{http://www1.cs.columbia.edu/CAVE/databases/multispectral} on average. From both visual and numerical comparisons, LogTR surrogate can approximate TR rank much better than TRNN. Third, it is easy to solve the proposed nonconvex LogTR surrogate by the alternating direction method of multipliers (ADMM) method, where the logdet-based subproblem has the closed-form solution using weighted singular value thresholding \cite{Gong2013General,Xie2016Multispectral}.

\begin{figure}[!t]
\scriptsize\setlength{\tabcolsep}{0.5pt}
\begin{center}
\begin{tabular}{c}
\includegraphics[width=0.8\textwidth]{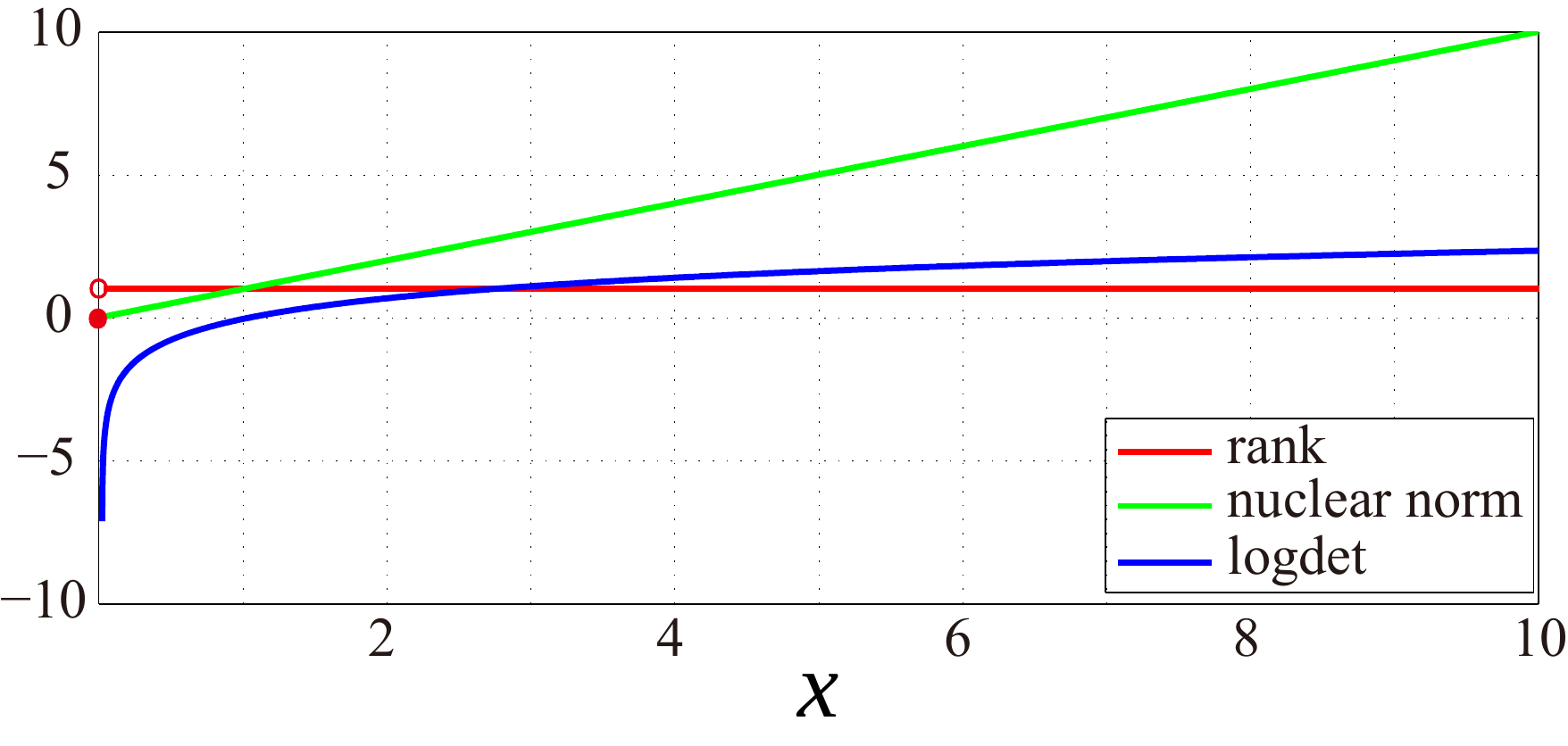}
\end{tabular}
\caption{Comparison of the rank, the nuclear norm, and the logdet function for scalars.}
  \label{fig:log}
  \end{center}\vspace{-0.3cm}
\end{figure}

\begin{table}[!t]
\renewcommand\arraystretch{1.2}
\caption{Average low-rank approximation of TR unfolding. TR unfolding rank, TRNN, and LogTR are calculated by the numbers of the singular values which are larger than 0.01 of the largest one, the weighted sum of nuclear norm, the weighted sum of the logarithmic singular values of TR unfolding matrices, respectively.}
\vspace{-0.3cm}
\begin{center}
\begin{tabular}{c|c|ccccc}
\hline \hline
Data &size & TR unfolding rank & TRNN & LogTR\\ \hline
MSIs & $256\times 256\times 31$ & 57    &$2.8076\times 10^{5}$	  &2610    \\
          \hline
\end{tabular}\label{table:LogTR}
\end{center}
\end{table}

Based on the proposed low-TR-rank approximation \eqref{LogTR}, we formulate the following nonconvex model for tensor completion:
\begin{equation}\label{our model}
\begin{split}
\min_{\mathcal{X}} &\ \sum_{n=1}^{\lceil j/2 \rceil}\beta_{n}\log\det((\textbf{X}_{\{n\}}\textbf{X}_{\{n\}}^{\top})^{1/2}+\varepsilon\textbf{I}_{n})\\
\textrm{s.t.} &\ \mathcal{P}_{\Omega}(\mathcal{X})=\mathcal{P}_{\Omega}(\mathcal{T}),
\end{split}
\end{equation}
where $\{\beta_{n}\}_{n=1}^{\lceil j/2 \rceil}$ are weighted parameters satisfying $\beta_{n} \geq 0$ and $\sum_{n=1}^{\lceil j/2 \rceil}\beta_{n}=1$, $\textbf{X}_{\{n\}}$ is the mode-$\{n,\lceil \frac{j}{2} \rceil\}$ unfolding of $\mathcal{X}$, $\textbf{I}_{n}$ is the identify matrix, $\mathcal{T}$ is the incomplete tensor with order $j$, $\Omega$ is the index of observed entries, $\mathcal{P}_{\Omega}(\cdot)$ is the projection operator that keeps entries in $\Omega$ and zeros out others. In \eqref{our model}, we only consider the first $\lceil j/2 \rceil$ rather all the unfolding matrices, because this setting not only reduces much computational complexity, but also ensures that the balanced unfolding matrices capture the most global correlations of high-order tensors \cite{Huang2019TRNN}. To solve the proposed nonconvex model, we develop the ADMM method and demonstrate that, under some mild assumptions, the sequence generated by the ADMM-based algorithm converges to the stationary point of the augmented Lagrangian function. From Figure \ref{fig:motivation} (d), one can see that the proposed method preserves structures and details better than compared methods.

The contributions of this paper are mainly three folds: (1) we propose a new logdet-based TR rank approximation for tensor completion, which can effectively depicts the global low-rankness of tensors; (2) we solve the proposed model by an efficient ADMM-based algorithm with guaranteed convergence; (3) experiments show that the proposed method achieves better performance than several existing LRTC methods in recovered visual effects and numerical metrics.



The outline of this paper is as follows. In Section \ref{section:Preliminary}, we give some preliminary knowledge about tensors and visual data tensorization. In Section \ref{section:model and algorithm}, we detail the proposed effective ADMM solver with guaranteed convergence. In Section \ref{section:experiments}, we conduct numerical experiments to demonstrate the effectiveness of the proposed algorithm. Finally, we conclude this work in Section \ref{section:Conclusion}.

\section{Preliminary}
\label{section:Preliminary}
\subsection{Tensor basics}
We give some basic notations of tensors, which are listed in Table \ref{table:notation}.

\begin{table}[!ht]
\centering
\caption{\small{Tensor notations.}}
\label{table:notation}
\begin{tabular}{ll}
\hline

\hline
  \vspace{-0.25cm}
  \\
  Notations                                                 & Explanations
  \vspace{0.08cm}
  \\
  \hline
  \vspace{-0.25cm}
  \\
  \vspace{0.08cm}
  $\mathcal{Z}$, $\textbf{Z}$, \textbf{z}, $z$              & tensor, matrix, vector, scalar.
  \\
  \vspace{0.08cm}
  $\langle\mathcal{X}, \mathcal{Y}\rangle$                  & inner product of two same-sized tensors $\mathcal{X}$ and $\mathcal{Y}.$
  \\
  \vspace{0.08cm}
  $\|\mathcal{Z}\|_{F}$                                     & Frobenius norm of $\mathcal{Z}$.
  \\
  \vspace{0.08cm}
  $\textbf{Z}_{(n)}$                                        & mode-$n$ unfolding of $\mathcal{Z}\in \mathbb{R}^{m_{1}\times \ldots \times m_{j}}$ of size $\mathbb{R}^{m_{n}\times \prod_{d\neq n} m_{d}}$.
  \\
  \vspace{0.08cm}
  $\textbf{Z}_{[n]}$                                        & mode-$n$ canonical matricization of $\mathcal{Z}\in \mathbb{R}^{m_{1}\times \ldots \times m_{j}}$ of size
  \\
  \vspace{0.03cm}
                                                            &$ \mathbb{R}^{(\prod_{d=1}^{n}m_{d}) \times (\prod_{d=n+1}^{j}m_{d})}$.
  \\
  \vspace{0.08cm}
  $\textbf{Z}_{\{n,l\}}$, $\textrm{unfold}_{\{n,l\}}(\mathcal{Z})$  & mode-$\{n,l\}$ unfolding of $\mathcal{Z}\in \mathbb{R}^{m_{1}\times \ldots \times m_{j}}$ of size $ \mathbb{R}^{(\prod_{d=l}^{l+n-1}m_{d}) \times (\prod_{d=l+n}^{l-1}m_{d})}$.
  \\
  \vspace{0.1cm}
  $\textrm{fold}_{\{n,l\}} (\textbf{Z}_{\{n,l\}})$                  & inverse operator of mode-${\{n,l\}}$ unfolding satisfying $\mathcal{Z} = \textrm{fold}_{\{n,l\}} (\textbf{Z}_{\{n,l\}})$.
  \\
\hline

\hline
\end{tabular}
\end{table}

A tensor is a high-dimensional array and its order (or mode) is the number of its dimensions. We denote scalars, vectors, matrices, and tensors as lowercase letters ($z$), boldface lowercase letters (\textbf{z}), capital letters, ($\textbf{Z}$), and calligraphic letters ($\mathcal{Z}$), respectively. $\mathcal{Z}\in \mathbb{R}^{m_{1}\times \ldots \times m_{j}}$ is the $j$th-order tensor and its $(i_{1},\ldots,i_{j})$-th component is denoted as $z_{i_{1},\ldots,i_{j}}$.

The inner product of tensors $\mathcal{X}$ and $\mathcal{Y}$ is denoted as
\[
\langle\mathcal{X}, \mathcal{Y}\rangle = \sum_{i_{1},\ldots,i_{j}} x_{i_{1},\ldots,i_{j}}\cdot y_{i_{1},\ldots,i_{j}}.
\]
$\|\mathcal{Z}\|_{F} = \sqrt{\langle\mathcal{Z}, \mathcal{Z}\rangle}$ denotes the Frobenius norm of $\mathcal{Z}$.

$\textbf{Z}_{(n)}\in \mathbb{R}^{m_{n}\times \prod_{d\neq n} m_{d}}$ denotes the mode-$n$ unfolding of $\mathcal{Z}$. The element $(i_{n}, b)$ of matrix $\textbf{Z}_{(n)}$ maps to the tensor element $(i_{1},\ldots,i_{j})$ satisfying
\begin{equation}\label{mode-$k$ unfolding}
b = 1+\sum_{d=1,d\neq n}^{j}\ (i_{d} - 1) j_{d} \quad \textrm{with} \quad j_{d} = \prod_{t = 1, t\neq n}^{d-1} m_{t}.
\end{equation}
This operator can be implemented via the following MATLAB command:
\[
\textbf{Z}_{(n)} = \textrm{reshape}\big(\textrm{shiftdim}(\mathcal{Z}, n-1), \textrm{size}(\mathcal{Z}),[]\big).
\]

$\textbf{Z}_{[n]}\in \mathbb{R}^{(\prod_{d=1}^{n}m_{d}) \times (\prod_{d=n+1}^{j}m_{d})}$ denotes the mode-$n$ canonical matricization of $\mathcal{Z}$. The element $(a, b)$ of matrix $\textbf{Z}_{[n]}$ maps to the tensor element $(i_{1},\ldots,i_{j})$ satisfying
\begin{equation}\label{mode-$k$ canonical matricization}
a = 1+\sum_{d=1}^{n}\big((i_{d}-1)\prod_{t=1}^{d-1}m_{t}\big) \quad \textrm{and} \quad b = 1+\sum_{d=n+1}^{j}\big((i_{d}-1)\prod_{t=n+1}^{d-1}m_{t}\big).
\end{equation}
This operator can be implemented by the function reshape in MATLAB, i.e.,
\[
\textbf{Z}_{[k]} = \textrm{reshape}\big(\mathcal{Z}, \Pi_{d=1}^{n}m_{d},\Pi_{d=n+1}^{j}m_{d}\big).
\]

$\textbf{Z}_{\{ n,l\}}\in \mathbb{R}^{(\prod_{d=l}^{l+n-1}m_{d}) \times (\prod_{d=l+n}^{l-1}m_{d})}$ denotes the mode-$\{n,l\}$ unfolding of $\mathcal{Z}$. The element $(a, b)$ of matrix $\textbf{Z}_{\{ n,l\}}$ maps to the tensor element $(i_{1},\ldots,i_{j})$ satisfying

\begin{equation}\label{shifting-$k$ mode-$l$ matricization}
a = 1+\sum_{d=l}^{l+n-1}\big((i_{d}-1)\prod_{t=l}^{d-1}m_{t}\big) \quad \textrm{and} \quad b = 1+\sum_{d=l+n}^{l-1}\big((i_{d}-1)\prod_{t=l+n}^{d-1}m_{t}\big).
\end{equation}
Using the permutation and reshape operators, we can get $\textbf{Z}_{\{n,l\}}$ as follows:
\[
\textbf{Z}_{\{n,l\}} = \textrm{reshape}\big(\textrm{permute}(\mathcal{Z}, [l,\ldots,j,1,\ldots,l-1]), \Pi_{d=l}^{l+n-1}m_{d},\Pi_{d=l+n}^{l-1}m_{d}\big).
\]
We denote the mode-$\{n,l\}$ unfolding as $\textrm{unfold}_{\{n,l\}}(\cdot)$, and the corresponding inverse operator is denoted as ``$\textrm{fold}_{\{n,l\}}$", i.e., $\text{fold}_{\{n,l\}}(\textbf{Z}_{\{n,l\}})=\mathcal{Z}$.

\subsection{Visual data tensorization}
\label{VDT}
We introduce the visual data tensorization (VDT) \cite{Yuan2018High} as a rearranging method for transforming a low-order tensor to a high-order one. Using VDT, the proposed method can effectively exploit the low-TR-rankness embedded in the underlying data.

Generally, given visual data $\mathcal{Z}\in \mathbb{R}^{m\times n\times p_{1}\ldots \times p_{s}}$, where the first two dimensions are spatial dimensions and the later dimensions represent RGB color channels, time, bands, etc. The details of performing VDT on $\mathcal{Z}$ are as follows. Assuming that $m$ and $n$ have factorizations $m=\Pi_{d=1}^{q} m_{d}$ and $n=\Pi_{d=1}^{q} n_{d}$, we factorize the spatial dimensions $m\times n$ to $m_{1}\times m_{2}\times \ldots \times m_{q}\times n_{1}\times n_{2}\times \ldots \times n_{q}$, then we permute the order of the first $2q$ dimensions to $m_{1}\times n_{1}\times m_{2}\times n_{2}\times \ldots \times m_{q}\times n_{q}$ and reshape to the size $m_{1} n_{1}\times m_{2} n_{2}\times \ldots \times m_{q}n_{q}$, finally the original tensor is transformed into a high-order tensor $\mathcal{\tilde{Z}}\in \mathbb{R}^{m_{1} n_{1}\times m_{2} n_{2}\times \ldots \times m_{q}n_{q}\times p_{1}\ldots \times p_{s}}$. The $d$-th dimension of $\mathcal{\tilde{Z}}$ corresponds to an $m_{d}\times n_{d}$ patch of $\mathcal{Z}$. After applying the completion algorithm on $\mathcal{\tilde{Z}}$, performing the reverse operation of VDT to transform the result into the original size.


\section{Tensor completion via nonconvex TR minimization}
\label{section:model and algorithm}

In this section, we  present the proposed algorithm in detail and establish the convergence of the proposed algorithm.

\subsection{The proposed algorithm}

Recall that the proposed model is
\begin{equation}\label{re our model}
\begin{split}
\min_{\mathcal{X}}&\ \sum_{n=1}^{\lceil j/2 \rceil}\beta_{n}L(\textbf{X}_{\{n\}})\\
\textrm{s.t.}     &\ \mathcal{P}_{\Omega}(\mathcal{X})=\mathcal{P}_{\Omega}(\mathcal{T}),
\end{split}
\end{equation}
where $L(\textbf{X}_{\{n\}})=\log\det((\textbf{X}_{\{n\}}\textbf{X}_{\{n\}}^{\top})^{1/2}+\varepsilon\textbf{I}_{n})$. We formulate the numerical scheme based on ADMM to solve the optimization problem \eqref{re our model}. By introducing auxiliary variables $\mathcal{G} = [\mathcal{G}_{1};\cdots;\mathcal{G}_{\lceil j/2 \rceil}]$, we get the equivalent constrained version of \eqref{re our model} as follows:
\begin{equation}\label{constrained model}
\begin{split}
\argmin_{\mathcal{X},  \mathcal{G}} & \ \mathcal{E}(\mathcal{G})+\mathcal{I}_{\Omega}(\mathcal{X})\\
                      \textrm{s.t.} & \ \mathcal{G}-[\textbf{I};\cdots;\textbf{I}]\mathcal{X}=0,
\end{split}
\end{equation}
where $\mathcal{E}(\mathcal{G})=\sum_{n=1}^{\lceil j/2 \rceil}\beta_{n}L(\textbf{G}_{n\{n\}})$, $\mathcal{I}_{\Omega}(\cdot)$ is the indicator function satisfies $\mathcal{I}_{\Omega}(x)=0$ if $x\in \Omega$ and $\infty$ otherwise, and $\textbf{I}$ denotes the identify operator. By separating the variables in \eqref{constrained model} into two groups $\{\mathcal{G}_{n}\}_{n=1}^{\lceil j/2 \rceil}$ and $\mathcal{X}$, we observe that  \eqref{constrained model} fits the framework of ADMM \cite{Wang2019Global}. The augmented Lagrangian function of \eqref{constrained model} is defined as
\begin{equation}\label{Lagrangian function}
\mathcal{L}_{\eta}(\mathcal{G},\mathcal{X},\mathcal{H})=\mathcal{E}(\mathcal{G})+\mathcal{I}_{\Omega}(\mathcal{X})+\langle \mathcal{G}-[\textbf{I};\cdots;\textbf{I}]\mathcal{X}, \mathcal{H}\rangle+\frac{\eta}{2}\|\mathcal{G}-[\textbf{I};\cdots;\textbf{I}]\mathcal{X}\|_{F}^{2},
\end{equation}
where $\mathcal{H} = [\mathcal{H}_{1};\cdots;\mathcal{H}_{\lceil j/2 \rceil}]$, $\{\mathcal{H}_{n}\}_{n=1}^{\lceil j/2 \rceil}$ are Lagrangian multipliers, and $\eta$ is a penalty parameter. Then, the ADMM procedure for solving \eqref{Lagrangian function} is following:
\begin{equation}\label{iterative scheme}
\left\{
\begin{array}{l}
\begin{split}
\vspace{0.1cm}
&\mathcal{G}^{k+1} =\argmin_{\mathcal{G}}\mathcal{L}_{\eta}(\mathcal{G},\mathcal{X}^{k}, \mathcal{H}^{k}),\\
\vspace{0.2cm}
&\mathcal{X}^{k+1}=\argmin_{\mathcal{G}}\mathcal{L}_{\eta}(\mathcal{G}^{k+1}, \mathcal{X}, \mathcal{H}^{k}),\\
\vspace{0.1cm}
&\mathcal{H}_{n}^{k+1}=\mathcal{H}_{n}^{k}+\eta(\mathcal{G}_{n}^{k+1}-\mathcal{X}^{k+1}).\\
\end{split}
\end{array}
\right.
\end{equation}
Next, we give the details for solving each subproblem.


  (1)\ \textbf{$\mathcal{G}$-subproblem.} It is clear that the minimization with respect to each $\mathcal{G}_{n}$ is decoupled. The optimal $\mathcal{G}_{n}$ is given by
  \begin{equation}\label{Gsubproblem}
  \mathcal{G}_{n}^{k+1} = \argmin_{\mathcal{G}_{n}} \beta_{n}L(\textbf{G}_{n\{n\}})+\frac{\eta^{k}}{2}\Big\|\mathcal{G}_{n}-\mathcal{X}^{k}+\frac{\mathcal{H}_{n}^{k}}{\eta^{k}}\Big\|_{F}^{2}.
  \end{equation}
  By using the equation $\|\textbf{X}_{\{k\}}\|_{F} = \|\mathcal{X}\|_{F}$, we rewrite \eqref{Gsubproblem} as the following problem:
  \begin{equation}
  \begin{split}
  \textbf{G}_{n\{n\}}^{k+1} & = \argmin_{\textbf{G}_{n\{n\}}} \beta_{n}L(\textbf{G}_{n\{n\}})+\frac{\eta^{k}}{2}\Big\|\textbf{G}_{n\{n\}}-\textbf{X}_{\{n\}}^{k}+\frac{\textbf{H}_{n\{n\}}^{k}}{\eta^{k}}\Big\|_{F}^{2},\\
                      & = \argmin_{\textbf{G}_{n\{n\}}} \beta_{n}\sum_{j}\textrm{log}(\sigma_{j}(\textbf{G}_{n\{n\}})+\varepsilon) +\frac{\eta^{k}}{2}\Big\|\textbf{G}_{n\{n\}}-\textbf{X}_{\{n\}}^{k}+\frac{\textbf{H}_{n\{n\}}^{k}}{\eta^{k}}\Big\|_{F}^{2}.
  \end{split}
  \end{equation}
  According to the work \cite{Gong2013General,Xie2016Multispectral}, $\mathcal{G}_{n}$ has the closed-form solution
  \begin{equation}\label{Gsubproblem solver}
  \mathcal{G}_{n}^{k+1} = \textrm{fold}_{\{n\}}\big[ \textbf{U}_{n}\textbf{S}_{\frac{\beta_{n}}{\eta^{k}},\varepsilon}(\Sigma_{n}) \textbf{V}_{n}^{T}\big],
  \end{equation}
  where $\textbf{U}_{n}\Sigma_{n} \textbf{V}_{n}^{T}$ is the singular value decomposition (SVD) of $\textbf{X}_{\{n\}}^{k}-\frac{\textbf{H}_{n\{n\}}^{k}}{\eta}$ and $\textbf{S}_{\frac{\beta_{n}}{\eta^{k}},\varepsilon}(\Sigma_{k})$ is the thresholding operator defined as
  \begin{equation}
  \textbf{S}_{\frac{\beta_{n}}{\eta^{k}},\varepsilon}(x)=
  \left\{
  \begin{array}{ll}
      0, & \textrm{if} \ c_{2}\leq 0, \\
      \textrm{sign}(x)\Big(\frac{c_{1}+\sqrt{c_{2}}}{2}\Big), & \textrm{if} \ c_{2}> 0, \\
  \end{array}
  \right.
  \end{equation}
  with $c_{1}=|x|-\varepsilon$, $c_{2}=(c_{1})^{2}-4\big(\frac{\beta_{n}}{\eta^{k}}-\varepsilon|x|\big)$. This thresholding operator shrinks less the larger singular values while more the smaller ones \cite{Zheng2019Mixed}. The calculation of $\mathcal{G}_{n}$ mainly involves the SVD of the matrix with size $p_{n} \times q_{n}$ ($p_{n}=\prod_{d=l}^{l+n-1}m_{d}$,  $q_{n}=\prod_{d=l+n}^{l-1}m_{d}$, $n=1,\ldots,l$, and $l=\lceil j/2 \rceil$), whose complexity is $O\big(\textrm{min}\big(p_{n}^{2} q_{n},\ p_{n} q_{n}^{2}\big)\big)$.

   (2)\ \textbf{$\mathcal{X}$-subproblem.} \ The optimal $\mathcal{X}$ is the solution of the following quadratic problem:
  \begin{equation}\label{Xsubproblem}
  \mathcal{X}^{k+1} = \argmin \ \mathcal{I}_{\Omega}(\mathcal{X})+ \sum_{n=1}^{\lceil j/2 \rceil}\frac{\eta^{k}}{2}\Big\|\mathcal{G}_{n}^{k+1}-\mathcal{X}+\frac{\mathcal{H}_{n}^{k}}{\eta^{k}}\Big\|_{F}^{2}.
  \end{equation}
  Then $\mathcal{X}$ can be calculated by
  \begin{equation}\label{Xsubproblem solver}
  \mathcal{X}^{k+1} = \mathcal{P}_{\Omega^{c}}\Bigg( \sum_{n=1}^{\lceil j/2 \rceil}(\mathcal{G}_{n}^{k+1}+\mathcal{H}_{n}^{k+1}/\eta^{k})\Bigg)+\mathcal{P}_{\Omega}(\mathcal{T}).
  \end{equation}
  The cost of computing $\mathcal{X}$ is $O(\prod_{n=1}^{\lceil j/2 \rceil}m_{n})$.

The proposed ADMM-based algorithm is summarized in Algorithm \ref{algorithm}. At each iteration, the total cost of Algorithm \ref{algorithm} is

\[
O\bigg(\sum_{n=1}^{\lceil j/2 \rceil}\textrm{min}\big(p_{n}^{2} q_{n},\ p_{n} q_{n}^{2}\big)\bigg),
\]
where $p_{n}=\prod_{d=\lceil j/2 \rceil}^{\lceil j/2 \rceil+n-1}m_{d}$,  $q_{n}=\prod_{d=\lceil j/2 \rceil+n}^{\lceil j/2 \rceil-1}m_{d}$, $n=1,\ldots,\lceil j/2 \rceil$.

\renewcommand{\algorithmicrequire}{\textbf{Input:}} 
\renewcommand{\algorithmicensure}{\textbf{Output:}} 

\begin{algorithm}
\caption{ADMM-based solver for \eqref{re our model}.}\label{algorithm}
\begin{algorithmic}[1]
      \Require the observed tensor $\mathcal{T}$, index set $\Omega$, parameters $\eta$ and $\varepsilon$.
      \State \textbf{Initialization:} $\mathcal{X}=\mathcal{T}$, $\mathcal{H}_{n}=0$, and $k_{max}=500$. \vspace{1mm}
	  \State \textbf{While} not satisfying the stopping condition, \textbf{do}\vspace{1mm}
	  \State  \quad  \textbf{for} $k=1$ to $\lceil j/2 \rceil$ \textbf{do};
	  \State  \quad \quad \quad update $\mathcal{G}_{n}$ via \eqref{Gsubproblem solver};
      \State  \quad  \textbf{end for};
	  \State  \quad  update $\mathcal{X}$ via \eqref{Xsubproblem solver};
	  \State  \quad  update $\mathcal{H}_{n}$ via \eqref{iterative scheme};
      \State  \quad  update $\eta^{k+1} = 1.1 \eta^{k}$;
	  \State \textbf{end while}
	  \Ensure restored tensor $\mathcal{X}$.
\end{algorithmic}
\end{algorithm}

\subsection{Convergence}
In this subsection, we present the convergence of Algorithm \ref{algorithm}. Following, we first briefly review the framework and the convergence of ADMM for solving nonconvex and nonsmooth optimization problems \cite{Wang2019Global}. In \cite{Wang2019Global}, the authors considered the optimization problem:
\begin{equation}\label{gengral ADMM}
\begin{split}
\min_{x, y} & \quad \mathcal{E}(\textbf{x})+\mathcal{F}(\textbf{y})\\
\textrm{s.t.} & \quad \textbf{A}\textbf{x}+\textbf{B}\textbf{y}=0,
\end{split}
\end{equation}
where $\mathcal{E}(\textbf{x})$ is continuous, proper, possibly nonsmooth, $\textbf{x}\in \mathbb{R}^{m_{1}}$ is the variable with the corresponding coefficient $\textbf{A}\in \mathbb{R}^{l\times m_{1}}$, $\mathcal{F}(\textbf{y})$ is proper and differentiable, $\textbf{y}\in \mathbb{R}^{m_{2}}$ is the variable with the corresponding coefficient $\textbf{B}\in \mathbb{R}^{l\times m_{2}}$. $\mathcal{E}$ and $\mathcal{F}$ can be possibly nonconvex functions. By introducing the auxiliary multiplier $\textbf{z}\in \mathbb{R}^{l}$, we obtain the augmented Lagrangian function of \eqref{gengral ADMM} as
\[
\mathcal{L}_{\eta}(\textbf{x},\textbf{y},\textbf{z})=\mathcal{E}(\textbf{x})+\mathcal{F}(\textbf{y})+\langle \textbf{z},\textbf{A}\textbf{x}+\textbf{B}\textbf{y}\rangle+\frac{\eta}{2}\|\textbf{A}\textbf{x}+\textbf{B}\textbf{y}\|_{2}^{2},
\]
where $\eta > 0$ is a penalty parameter. Denoting by $k$ the iteration index, according to ADMM \cite{Wu2010Augmented}, the iterative way to solve \eqref{gengral ADMM} is
\begin{equation}\label{general iter}
\left\{
\begin{array}{ll}
\vspace{0.3cm}
\textbf{x}^{k+1}=\arg\min_{\textbf{x}}\mathcal{L}_{\eta}(\textbf{x},\textbf{y}^{k},\textbf{z}^{k}),\\
\vspace{0.3cm}
\textbf{y}^{k+1}=\arg\min_{\textbf{y}}\mathcal{L}_{\eta}(\textbf{x}^{k+1},\textbf{y},\textbf{z}^{k}),\\
\textbf{z}^{k+1}=\textbf{z}^{k}+\eta(\textbf{A}\textbf{x}^{k+1}+\textbf{B}\textbf{y}^{k+1}).\\
\end{array}
\right.
\end{equation}

The following theorem \cite{Wang2019Global} presents the convergence result of the nonsmooth and nonconvex ADMM.

\newtheorem{theorem}{\textbf{Theorem}}
\begin{theorem} \cite{Wang2019Global} \label{lemma convergence}
Suppose that the following assumptions A1$-$A5 hold. Then, for any initial guess and sufficiently large $\eta$, the sequence $(\textbf{x}^{k}, \textbf{y}^{k}, \textbf{z}^{k})$ generated by \eqref{general iter} converges to the stationary point of $\mathcal{L}_{\eta}$.

A1 (\textbf{coercivity}) Define the nonempty feasible set $\mathcal{D}=\{(\textbf{x},\textbf{y})\in \mathbb{R}^{m_{1}+m_{2}}:\textbf{A}\textbf{x}+\textbf{B}\textbf{y}=0\}$. $\mathcal{E}(\textbf{x})+\mathcal{F}(\textbf{y})$ is coercive over $\mathcal{D}$, i.e., $\mathcal{E}(\textbf{x})+\mathcal{F}(\textbf{y})\rightarrow \infty$ if $(\textbf{x},\textbf{y})\in \mathcal{D}$ and $\|(\textbf{x},\textbf{y})\|_{2}\rightarrow \infty$.

A2 (\textbf{feasibility}) $Im(\textbf{A})\subseteq Im(\textbf{B})$, where $Im(\cdot)$ denotes the image of a matrix;

A3 (\textbf{Lipschitz sub-minimization paths})

\quad (a) For any $\textbf{x}$, $H:Im(\textbf{B})\rightarrow \mathbb{R}^{l}$ obeying $H(\textbf{u})=\arg\min_{\textbf{y}}\{\mathcal{E}(\textbf{x})+\mathcal{F}(\textbf{y}):\textbf{B}\textbf{y}=\textbf{u}\}$ is a Lipschitz continuous map,

\quad (b) For any $\textbf{y}$, $G:Im(\textbf{A})\rightarrow \mathbb{R}^{l}$ obeying $G(\textbf{u})=\arg\min_{\textbf{x}}\{\mathcal{E}(\textbf{x})+\mathcal{F}(\textbf{y}):\textbf{A}\textbf{x}=\textbf{u}\}$ is a Lipschitz continuous map;

A4 (\textbf{objective-$\mathcal{E}$ regularity}) $\mathcal{E}$ is Lipschitz differentiable.

A5 (\textbf{objective-$\mathcal{F}$ regularity}) $\mathcal{F}$ is lower semi-continuous or $\sup\{\|d\|:\textbf{x}\in X, \textbf{d}\in \partial\mathcal{F}(\textbf{x})\}$ is bound for any bound set $X$;
\end{theorem}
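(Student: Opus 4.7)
The plan is to prove this along the standard nonconvex-ADMM template: establish a sufficient decrease property for the augmented Lagrangian $\mathcal{L}_\eta$ along the iterates, show $\mathcal{L}_\eta$ is bounded from below using the coercivity assumption, conclude that successive iterate differences are square-summable and the sequence is bounded, then pass to a limit along a convergent subsequence and verify that the limit satisfies the KKT conditions for \eqref{gengral ADMM}.

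The core step, and the main obstacle, is the sufficient decrease lemma. The difficulty is that although the $\textbf{x}$- and $\textbf{y}$-updates each decrease $\mathcal{L}_\eta$ by the optimality of their respective subproblems, the dual update $\textbf{z}^{k+1}=\textbf{z}^k+\eta(\textbf{A}\textbf{x}^{k+1}+\textbf{B}\textbf{y}^{k+1})$ increases $\mathcal{L}_\eta$ by $\eta^{-1}\|\textbf{z}^{k+1}-\textbf{z}^k\|_2^2$. First I would control the dual increment by a primal increment. Using A2 (range inclusion) and the optimality condition of the $\textbf{y}$-subproblem, the dual variable $\textbf{z}^{k+1}$ can be expressed in terms of $\nabla\mathcal{F}(\textbf{y}^{k+1})$, and A4 (Lipschitz differentiability of $\mathcal{E}$ — as in \cite{Wang2019Global}, one actually needs this on the smooth block, which here is $\mathcal{F}$) together with A3(a) gives $\|\textbf{z}^{k+1}-\textbf{z}^k\|_2 \le C\|\textbf{y}^{k+1}-\textbf{y}^k\|_2$ for some constant $C$. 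This is what allows the $\textbf{y}$-step descent (which is at least quadratic in $\|\textbf{y}^{k+1}-\textbf{y}^k\|_2$ thanks to the $\frac{\eta}{2}$ quadratic term) to dominate the dual-step ascent once $\eta$ is chosen large enough, yielding
\begin{equation*}
\mathcal{L}_\eta(\textbf{x}^{k+1},\textbf{y}^{k+1},\textbf{z}^{k+1}) \le \mathcal{L}_\eta(\textbf{x}^k,\textbf{y}^k,\textbf{z}^k) - c\bigl(\|\textbf{x}^{k+1}-\textbf{x}^k\|_2^2 + \|\textbf{y}^{k+1}-\textbf{y}^k\|_2^2\bigr)
\end{equation*}
for some $c>0$.

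With sufficient descent in hand, the next step is the lower bound. A similar expression of $\textbf{z}^{k+1}$ via the subproblem optimality, combined with A1 (coercivity of $\mathcal{E}+\mathcal{F}$ over the feasible set), implies that $\mathcal{L}_\eta(\textbf{x}^k,\textbf{y}^k,\textbf{z}^k) \ge \mathcal{E}(\textbf{x}^k)+\mathcal{F}(\textbf{y}^k) - O(1)$, and rewriting through feasibility plus the Lipschitz control shows that $\mathcal{L}_\eta$ stays bounded below; A1 then forces $(\textbf{x}^k,\textbf{y}^k,\textbf{z}^k)$ to remain bounded. Telescoping the descent inequality yields $\sum_k (\|\textbf{x}^{k+1}-\textbf{x}^k\|_2^2 + \|\textbf{y}^{k+1}-\textbf{y}^k\|_2^2) < \infty$, and hence successive differences (including $\textbf{z}^{k+1}-\textbf{z}^k$, via the Lipschitz bound) tend to zero and the constraint residual $\textbf{A}\textbf{x}^k+\textbf{B}\textbf{y}^k\to 0$.

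Finally, for any cluster point $(\textbf{x}^*,\textbf{y}^*,\textbf{z}^*)$ extracted along a convergent subsequence, I would pass to the limit in the three optimality conditions of \eqref{general iter}: for $\textbf{y}$ the gradient condition closes by continuity of $\nabla\mathcal{F}$ (A4-type regularity), for $\textbf{x}$ I would invoke the outer-semicontinuity of the limiting subdifferential $\partial\mathcal{E}$ together with A5 to get $0\in\partial\mathcal{E}(\textbf{x}^*)+\textbf{A}^\top\textbf{z}^*$, and feasibility $\textbf{A}\textbf{x}^*+\textbf{B}\textbf{y}^*=0$ comes from the vanishing residual. This identifies $(\textbf{x}^*,\textbf{y}^*,\textbf{z}^*)$ as a stationary point of $\mathcal{L}_\eta$. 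The delicate point throughout is the quantitative role of A3: without the Lipschitz sub-minimization-path hypothesis, the dual increment cannot be absorbed into the primal descent, and the whole chain breaks — this is where I expect the technical heavy lifting to be, and I would follow the machinery developed in \cite{Wang2019Global} to make this precise.
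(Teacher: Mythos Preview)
The paper does not prove this theorem at all: Theorem~\ref{lemma convergence} is stated with the citation \cite{Wang2019Global} and used as a black box; no proof is given in the paper. The only argument the paper supplies is for Theorem~\ref{convergence}, where it verifies that the specific model \eqref{constrained model} satisfies assumptions A1--A5 and then invokes Theorem~\ref{lemma convergence}. So there is nothing in the paper to compare your sketch against --- you are, in effect, outlining the proof from \cite{Wang2019Global} itself, whereas the paper simply imports the result.

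Your outline is the standard nonconvex-ADMM template (sufficient decrease, lower bound via coercivity, boundedness and square-summability of increments, subsequential stationarity), and it is broadly faithful to how \cite{Wang2019Global} proceeds. One technical remark: you correctly flag the apparent mismatch in which block carries the Lipschitz-differentiable objective. In the iteration \eqref{general iter} the $\textbf{y}$-block is updated last before the dual, so controlling $\|\textbf{z}^{k+1}-\textbf{z}^k\|$ hinges on the optimality condition of the $\textbf{y}$-subproblem and hence on regularity of $\mathcal{F}$, not $\mathcal{E}$; the paper's labeling of A4 on $\mathcal{E}$ and A5 on $\mathcal{F}$ is at odds with its own preamble (where $\mathcal{E}$ is ``possibly nonsmooth'' and $\mathcal{F}$ is ``differentiable''). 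In the paper's concrete application this happens to cause no harm because the logdet term (assigned to $\mathcal{E}$) is Lipschitz differentiable and the indicator (assigned to $\mathcal{F}$) is lower semi-continuous, but if you were to write out a full proof you would need to untangle which block actually carries the smoothness --- your parenthetical already anticipates this.
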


Next, we present the convergence of Algorithm \ref{algorithm} by proving that it fits the framework of \cite{Wang2019Global}.

\begin{theorem} \label{convergence}
For sufficiently large $\eta$, the sequence $(\mathcal{G}^{k},\mathcal{X}^{k},\mathcal{H}^{k})$ generated by Algorithm \ref{algorithm} converges to the stationary point of the augmented Lagrangian function \eqref{Lagrangian function}.
\end{theorem}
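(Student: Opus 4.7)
The plan is to reduce Theorem \ref{convergence} to Theorem \ref{lemma convergence} by writing the reformulated problem \eqref{constrained model} in the two-block form of \eqref{gengral ADMM} and then verifying assumptions A1--A5 one at a time. Under the identification $\mathbf{x}\leftrightarrow \mathcal{G}$, $\mathcal{E}(\mathbf{x})\leftrightarrow \sum_{n=1}^{\lceil j/2\rceil}\beta_n L(\mathbf{G}_{n\{n\}})$, $\mathbf{A}\leftrightarrow \mathbf{I}$, $\mathbf{y}\leftrightarrow \mathcal{X}$, $\mathcal{F}(\mathbf{y})\leftrightarrow \mathcal{I}_\Omega(\mathcal{X})$, $\mathbf{B}\leftrightarrow -[\mathbf{I};\cdots;\mathbf{I}]$, the constraint $\mathcal{G}-[\mathbf{I};\cdots;\mathbf{I}]\mathcal{X}=0$ is exactly $\mathbf{A}\mathbf{x}+\mathbf{B}\mathbf{y}=0$, and the updates in \eqref{iterative scheme} coincide with \eqref{general iter}. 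Once this dictionary is fixed, the proof becomes a checklist.

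For A1, on the feasible set $\mathcal{G}_n=\mathcal{X}$ the objective collapses to $\sum_n \beta_n L(\mathbf{X}_{\{n\}})+\mathcal{I}_\Omega(\mathcal{X})$; the indicator pins $\mathcal{P}_\Omega(\mathcal{X})=\mathcal{P}_\Omega(\mathcal{T})$, so $\|(\mathcal{G},\mathcal{X})\|_F\to\infty$ forces $\sigma_1(\mathbf{X}_{\{n\}})\to\infty$ for some $n$, and since $\log(\sigma+\varepsilon)\to\infty$ while the $\beta_n$ are nonnegative and sum to one, the objective diverges. For A2, $\mathbf{A}=\mathbf{I}$ and $\mathbf{B}=-[\mathbf{I};\cdots;\mathbf{I}]$ are both (stacked) identities in the relevant direction, so the required image inclusion is immediate. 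For A3, both sub-minimization paths $H$ and $G$ reduce to affine lifts of linear constraints involving these identity-like operators and are therefore globally Lipschitz by direct inspection. A5 is the easiest: $\mathcal{I}_\Omega$ is the indicator of the closed affine set $\{\mathcal{X}:\mathcal{P}_\Omega(\mathcal{X})=\mathcal{P}_\Omega(\mathcal{T})\}$ and is lower semi-continuous.

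The heart of the argument is A4: the logdet surrogate $\mathcal{E}$ must be Lipschitz differentiable. I would handle this via the Lewis--Sendov calculus for spectral functions, writing $\nabla L(\mathbf{M})=\mathbf{U}\,\mathrm{diag}\bigl(1/(\sigma_i(\mathbf{M})+\varepsilon)\bigr)\,\mathbf{V}^\top$ on the SVD $\mathbf{M}=\mathbf{U}\Sigma\mathbf{V}^\top$. Since $\sigma_i+\varepsilon\geq \varepsilon>0$ everywhere, the diagonal factor is uniformly bounded by $1/\varepsilon$, and the off-diagonal Hessian blocks, which involve divided differences $\bigl(\log(\sigma_i+\varepsilon)-\log(\sigma_k+\varepsilon)\bigr)/(\sigma_i-\sigma_k)$, are likewise uniformly bounded. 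Combining A1 with the standard sufficient-descent inequality for the augmented Lagrangian confines the ADMM iterates to a bounded sublevel set, on which $\nabla \mathcal{E}$ is Lipschitz with a constant depending only on $\varepsilon$ and a bound on $\sigma_1$; this delivers A4 and allows Theorem \ref{lemma convergence} to be invoked, yielding convergence of $(\mathcal{G}^k,\mathcal{X}^k,\mathcal{H}^k)$ to a stationary point of $\mathcal{L}_\eta$ for sufficiently large $\eta$. The main obstacle is exactly this Lewis-type Hessian estimate, because the spectral coupling through the orthogonal factors of the SVD is the only place where the ``smooth and well-posed'' character of the $\varepsilon$-shift has to be used quantitatively rather than qualitatively.
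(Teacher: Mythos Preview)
Your proposal is essentially the same approach as the paper: identify \eqref{constrained model} with the two-block template \eqref{gengral ADMM} via the dictionary $\mathbf{x}\leftrightarrow\mathcal{G}$, $\mathbf{y}\leftrightarrow\mathcal{X}$, $\mathbf{A}=\mathbf{I}$, $\mathbf{B}=-[\mathbf{I};\cdots;\mathbf{I}]$, and then tick off A1--A5. The paper's own proof is much terser---it simply asserts coercivity for A1, invokes full column rank of the coefficient matrices for A2 and A3, cites \cite{Fazel2003logDet} for A4, and notes lower semi-continuity of $\mathcal{I}_\Omega$ for A5---so your version is a more fleshed-out execution of the same plan.

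One small point: your treatment of A4 takes an unnecessary detour. You argue that A1 plus the ADMM descent inequality confines the iterates to a bounded set, and then use that boundedness to get a Lipschitz constant depending on both $\varepsilon$ and a bound on $\sigma_1$. In fact the $\varepsilon$-shifted logdet is \emph{globally} Lipschitz differentiable: the scalar second derivative $-1/(\sigma+\varepsilon)^2$ is bounded by $1/\varepsilon^2$, and the divided differences you write down are bounded by $\sup_{\sigma\geq 0}1/(\sigma+\varepsilon)=1/\varepsilon$, independent of $\sigma_1$. So the Lipschitz constant depends only on $\varepsilon$, no boundedness of iterates is needed, and the slight circularity in your argument disappears. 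The paper sidesteps all of this by citing \cite{Fazel2003logDet} directly.
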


\begin{proof}
We reformulate (\ref{constrained model}) as the following matrix-vector multiplication form:
\[
\begin{split}
&\argmin_{\mathcal{X},  \mathcal{G}} \quad \mathcal{E}(\mathcal{G})+\mathcal{I}_{\Omega}(\mathcal{X})\\
\textrm{s.t.}\ &\left(
\begin{array}{cccc}
\textbf{I} & 0          & 0      & 0\\
0          & \textbf{I} & \cdots & 0\\
0          & 0          & \vdots & \vdots\\
0          & 0          & \cdots & \textbf{I}\\
\end{array}
\right)
\left(
\begin{array}{cccc}
\textbf{g}_{1}\\
\textbf{g}_{2}\\
\vdots        \\
\textbf{g}_{\lceil \frac{j}{2} \rceil}\\
\end{array}
\right)
-
\left(
\begin{array}{cccc}
\textbf{I}\\
\textbf{I}\\
\vdots    \\
\textbf{I}\\
\end{array}
\right)\textbf{x}=\textbf{0},
\end{split}
\]
where $\{\textbf{g}_{n}\}_{n=1}^{\lceil j/2 \rceil}$ and $\textbf{x}$ denote the vectorization of $\{\mathcal{G}_{n}\}_{n=1}^{\lceil j/2 \rceil}$ and $\mathcal{X}$, respectively. We can see that the proposed nonconvex model fits the framework of (\ref{gengral ADMM}).

To show the convergence of the proposed algorithm, we verify that our model fits the assumptions A1$-$A5. A1 holds because of the coercivity of $\mathcal{E}(\mathcal{G})+\mathcal{F}(\mathcal{X})$. A2 and A3 hold because both the coefficient matrices of $[\textbf{g}_{1}^{\top};\cdots;\textbf{g}_{\lceil j/2 \rceil}^{\top}]^{\top}$ and $\textbf{x}$ are full column rank. A4 holds because the logdet function is Lipschitz differentiable \cite{Fazel2003logDet}. A5 holds because the indicator function $\mathcal{I}_{\Omega}(\mathcal{X})$ is lower semi-continuous. This completes the proof.
\end{proof}

\section{Experiments}
\label{section:experiments}
\vspace{-0.35cm}
\begin{table}[!th]
\centering
\caption{Summary of compared methods.}
\vspace{-0.2cm}
\label{table:method}
\begin{tabular}{cc}
\hline

\hline
  \vspace{-0.35cm}
  \\
  \multirow{2}{*}{Methods}                     & Low-rankness characterization and
  \\
                                               & additional regularization
  \\
  \hline
  \vspace{-0.25cm}
  \\
  \vspace{0.2cm}
  HaLRTC \cite{Liu2013tensor}                  & Tucker rank
  \\
  \multirow{2}{*}{NSNN \cite{Ji2017A}}         & logdet-based Tucker rank
  \vspace{-0.1cm}
  \\
                                               & approximation
  \vspace{0.2cm}
  \\
  \multirow{2}{*}{LRTC-TV \cite{Li2017LRTCTV}} & Tucker rank with
  \vspace{-0.1cm}
  \\
                                               & anisotropic total variation
  \vspace{0.2cm}
  \\
  SiLTRC-TT \cite{Bengua2017Efficient}         &  TT rank
  \vspace{0.2cm}

  \\
  tSVD \cite{Zhang2017tSVD}                    & tubal rank
  \vspace{0.2cm}
  \\
  \vspace{0.08cm}
  \multirow{2}{*}{KBR \cite{Xie2018Kronecker}} & Kronecker-basis-representation
  \vspace{-0.1cm}
  \\
                                               &  based tensor sparsity measure
  \vspace{0.2cm}
  \\
  TRNN \cite{Huang2019TRNN}                    &  TR rank
  \vspace{0.2cm}
  \\
  \multirow{2}{*}{LogTR}                         & logdet-based TR rank
  \vspace{-0.1cm}
  \\
                                               & approximation
  \\
\hline

\hline
\end{tabular}
\end{table}

In this section, we show the effectiveness of the proposed method on various real-world data including color images, multispectral images (MSIs), and color videos. We compare our method, called tensor completion via \textbf{log}det-based \textbf{t}ensor \textbf{r}ing rank minimization (LogTR), with seven state-of-the-art approaches, namely HaLRTC \cite{Liu2013tensor}, NSNN \cite{Ji2017A}, LRTC-TV \cite{Li2017LRTCTV}, SiLTRC-TT \cite{Bengua2017Efficient}, tSVD \cite{Zhang2017tSVD}, KBR \cite{Xie2018Kronecker}, and TRNN \cite{Huang2019TRNN}, which are summarized in Table \ref{table:method}. All test tensors are scaled into the interval [0, 255]. All the methods are implemented by MATLAB; the simulations are performed on a desktop equipped with Windows 10 64-bit, Intel(R) Core(TM) i7-6700 CPU with 3.40 GHz core, and 8 GB RAM.

\begin{figure}[!t]
\scriptsize\setlength{\tabcolsep}{0.5pt}
\begin{center}
\begin{tabular}{ccccc}
\includegraphics[width=0.19\textwidth]{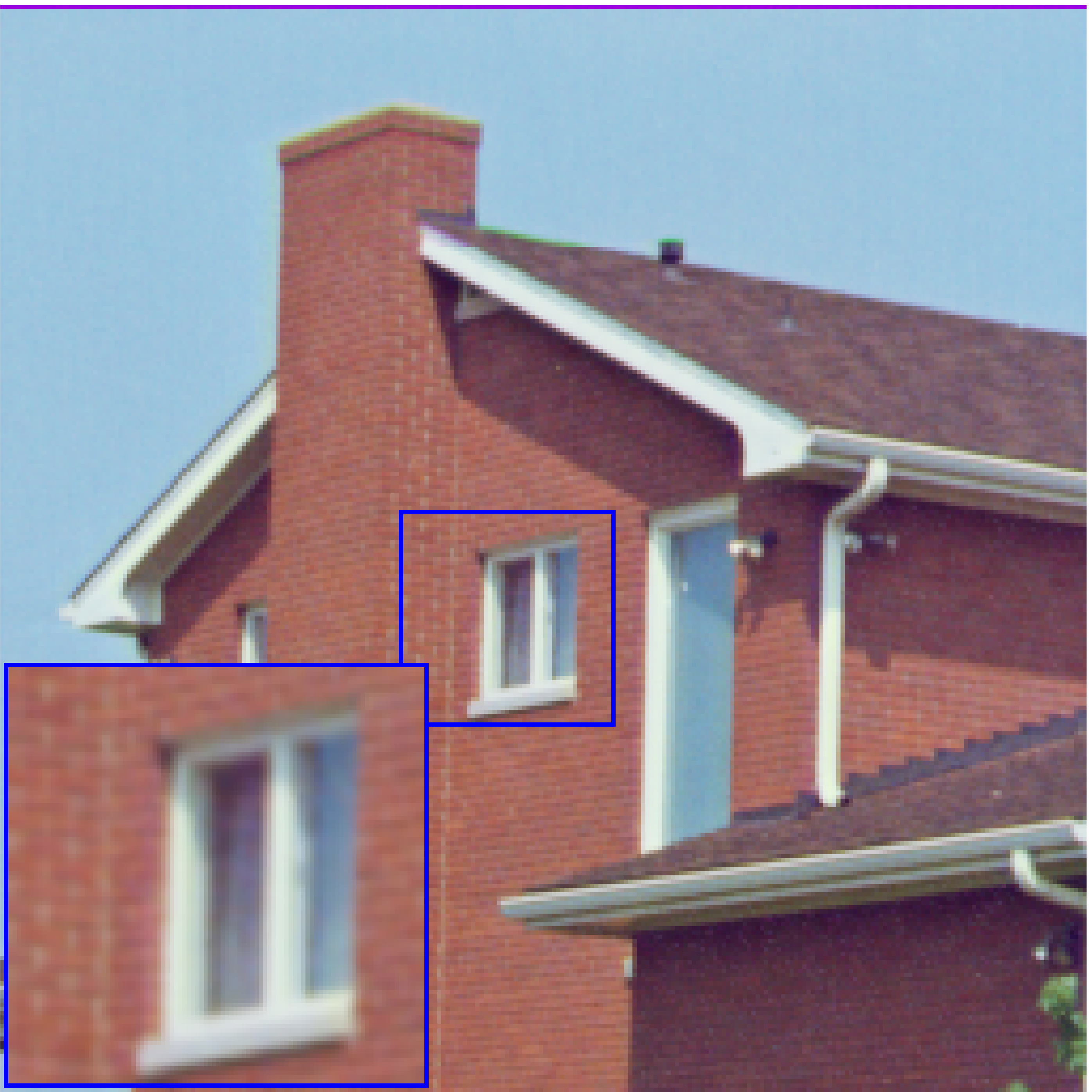}&
\includegraphics[width=0.19\textwidth]{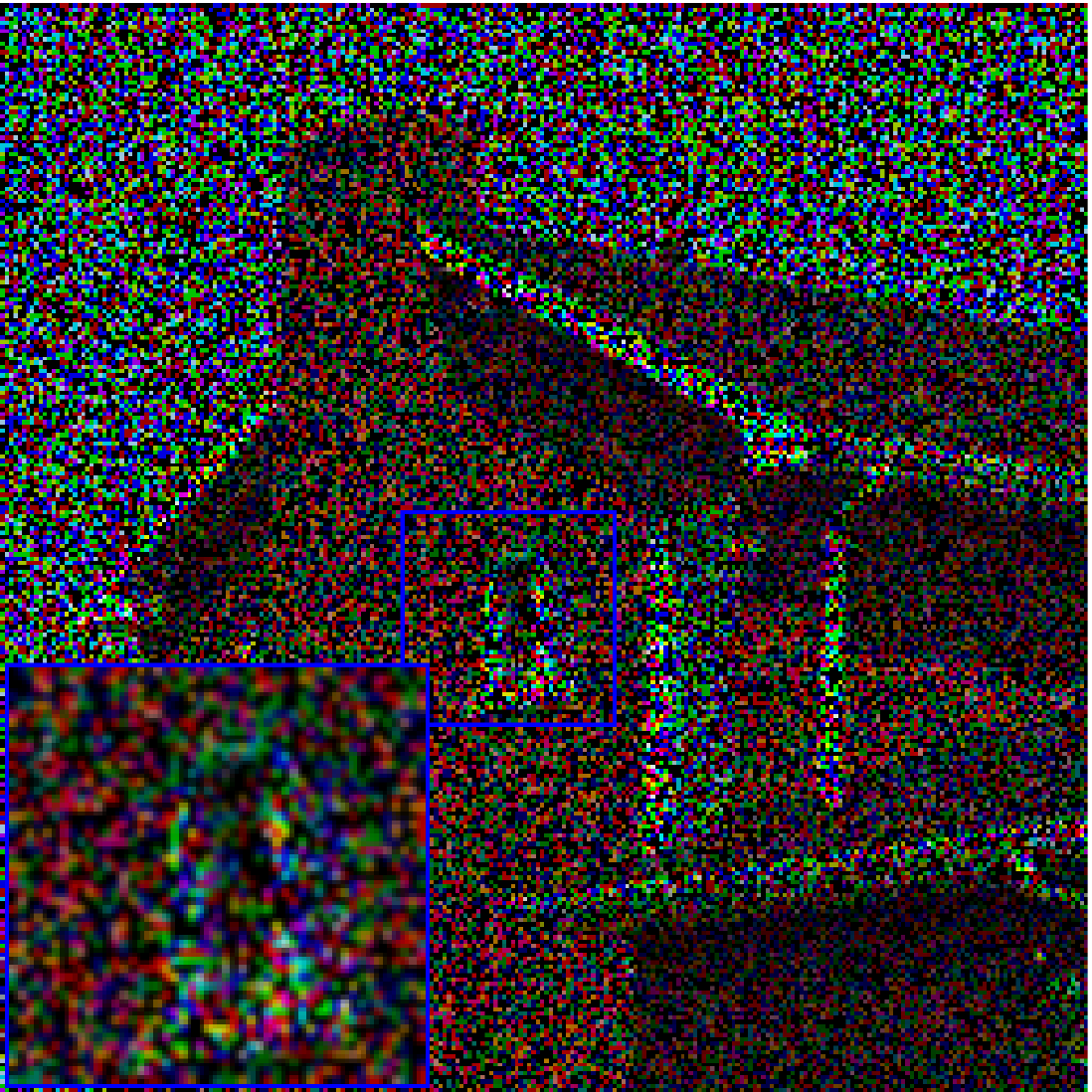}&
\includegraphics[width=0.19\textwidth]{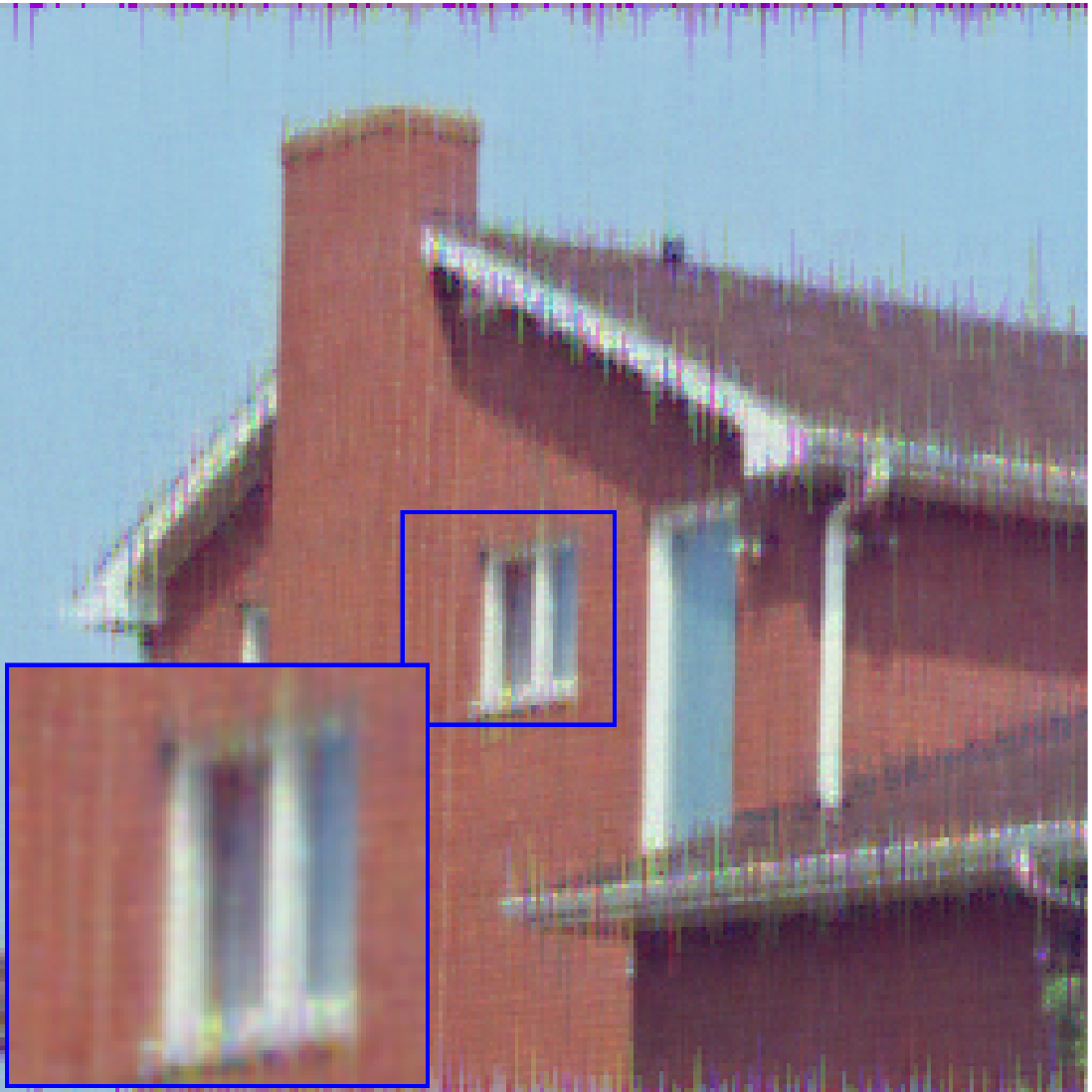}&
\includegraphics[width=0.19\textwidth]{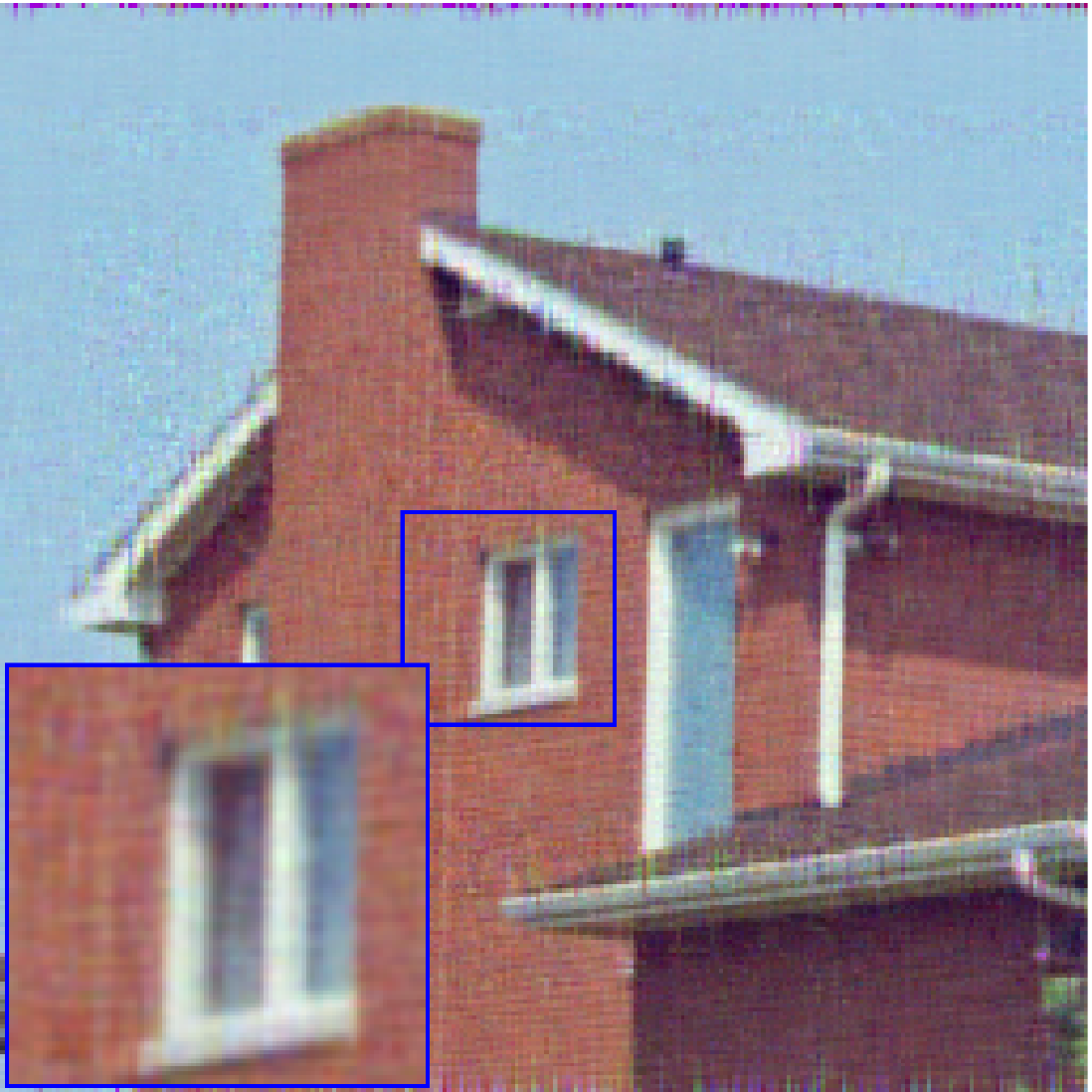}&
\includegraphics[width=0.19\textwidth]{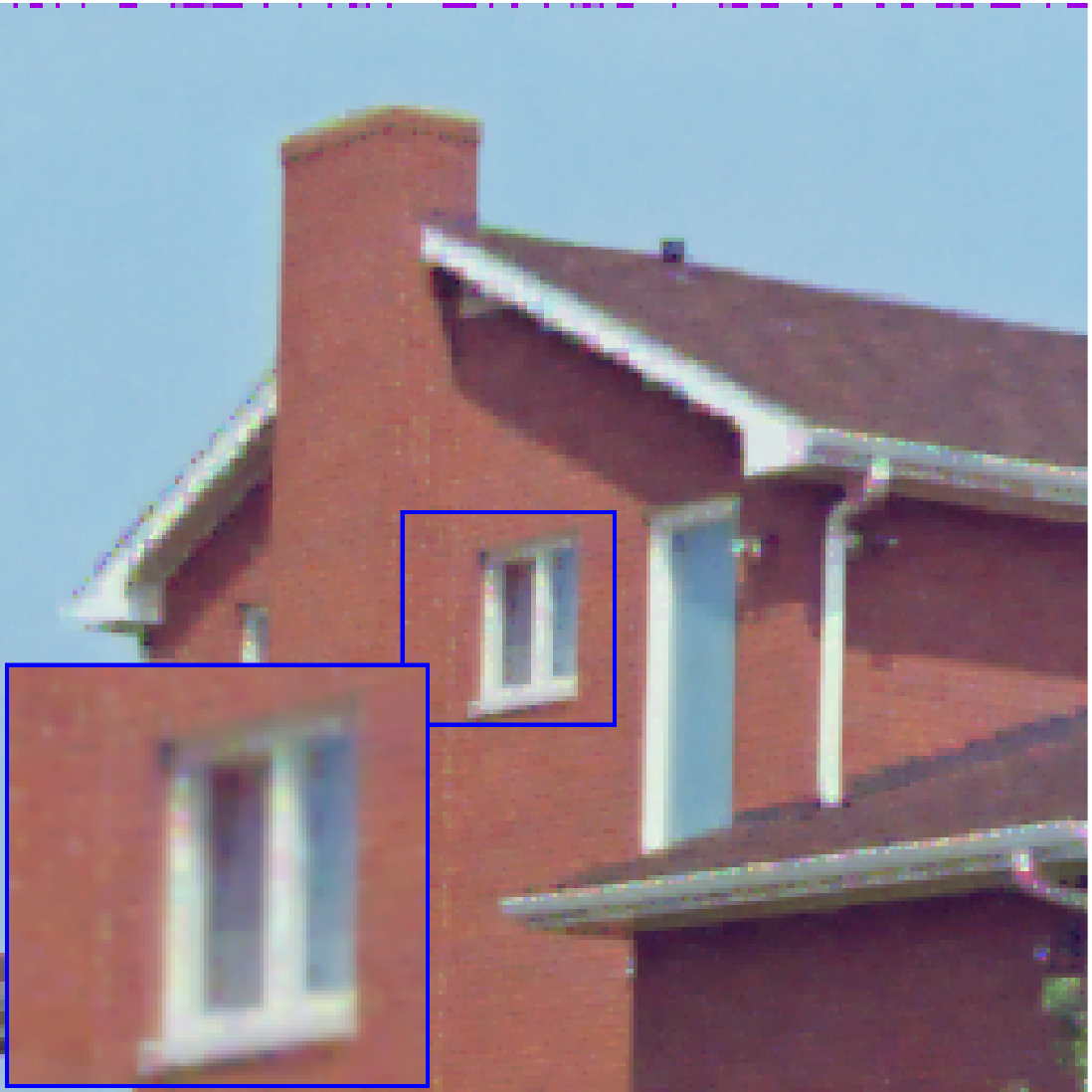}\vspace{0.01cm}\\
(a)Original & (b) Observed & (c) HaLRTC & (d) NSNN & (e) LRTC-TV\\
\includegraphics[width=0.19\textwidth]{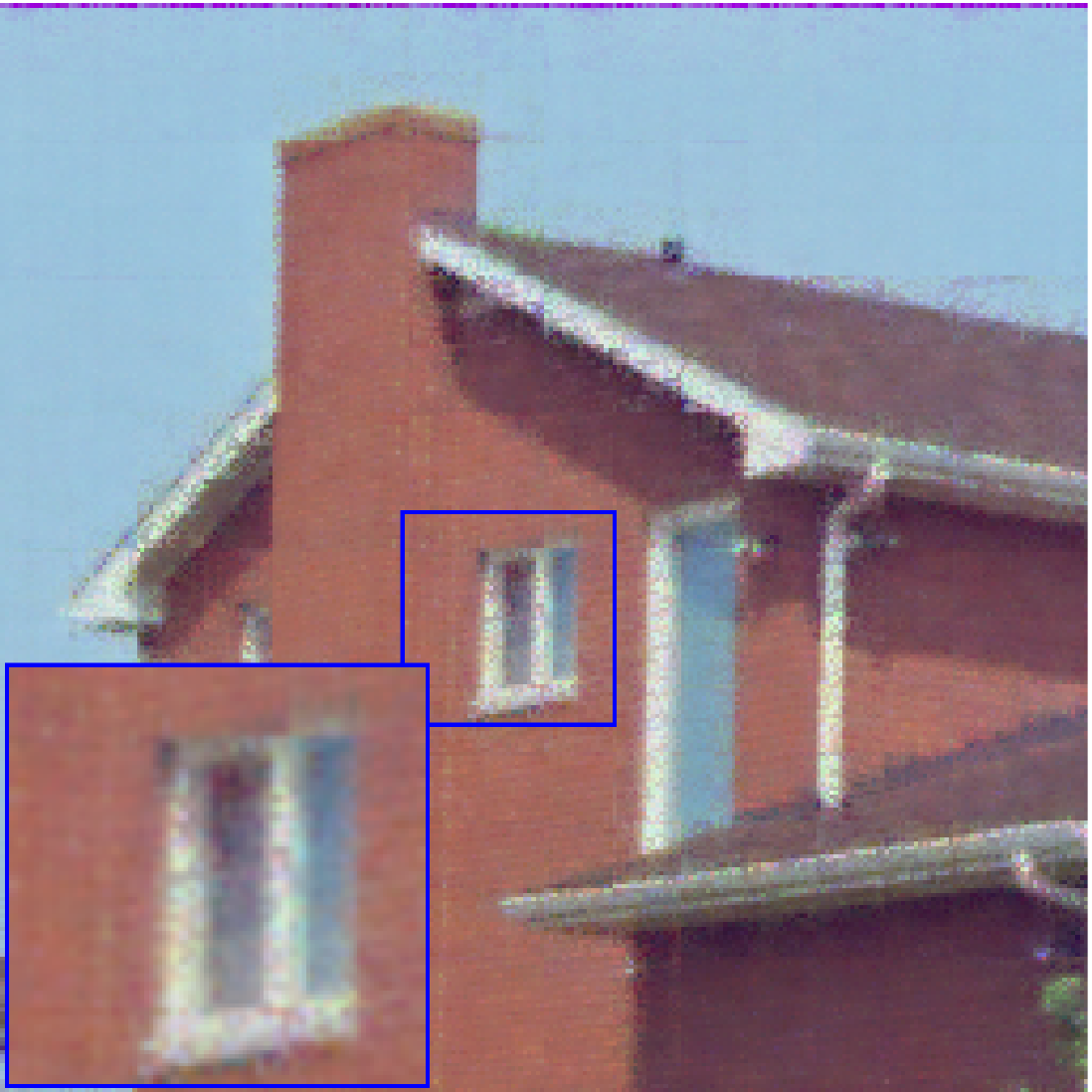}&
\includegraphics[width=0.19\textwidth]{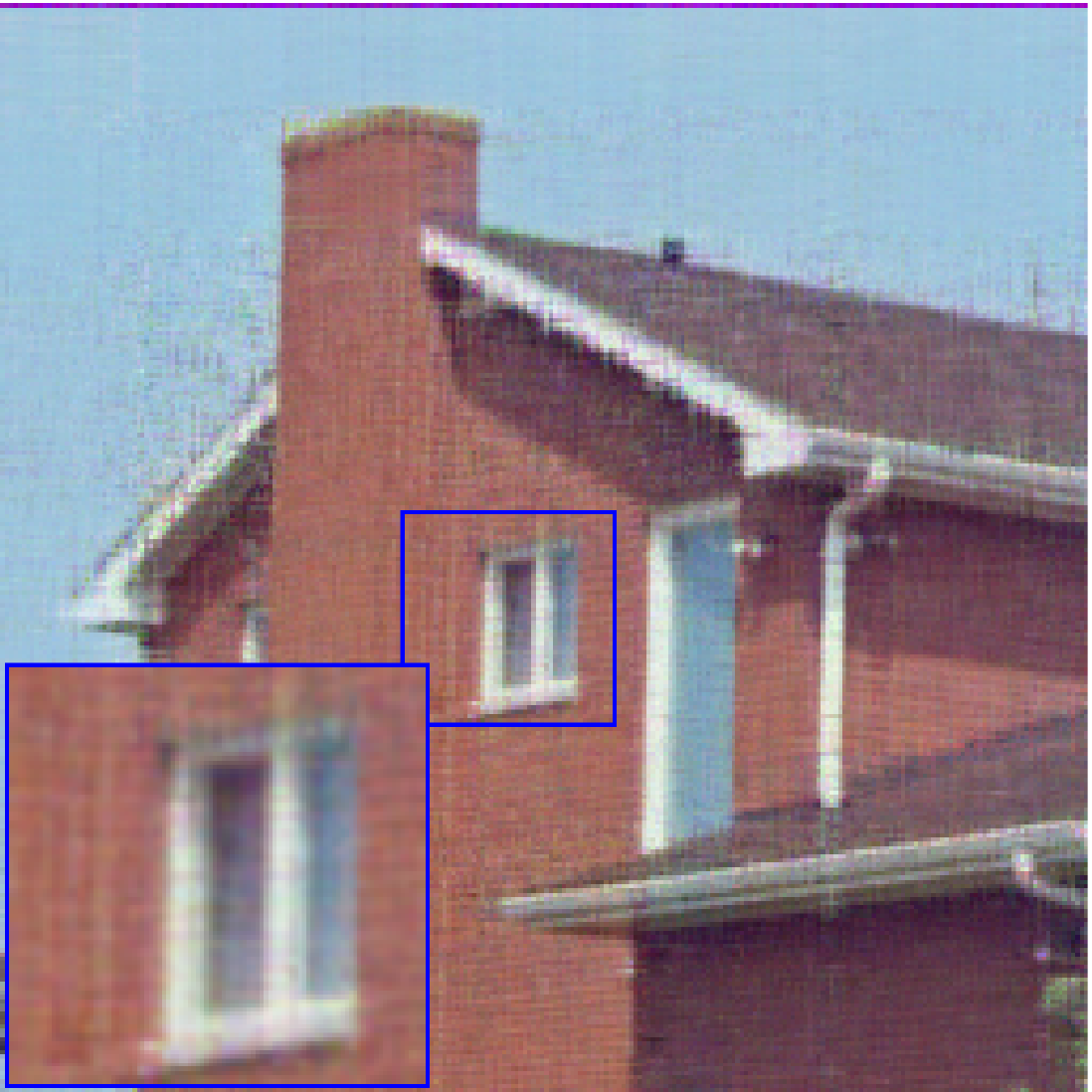}&
\includegraphics[width=0.19\textwidth]{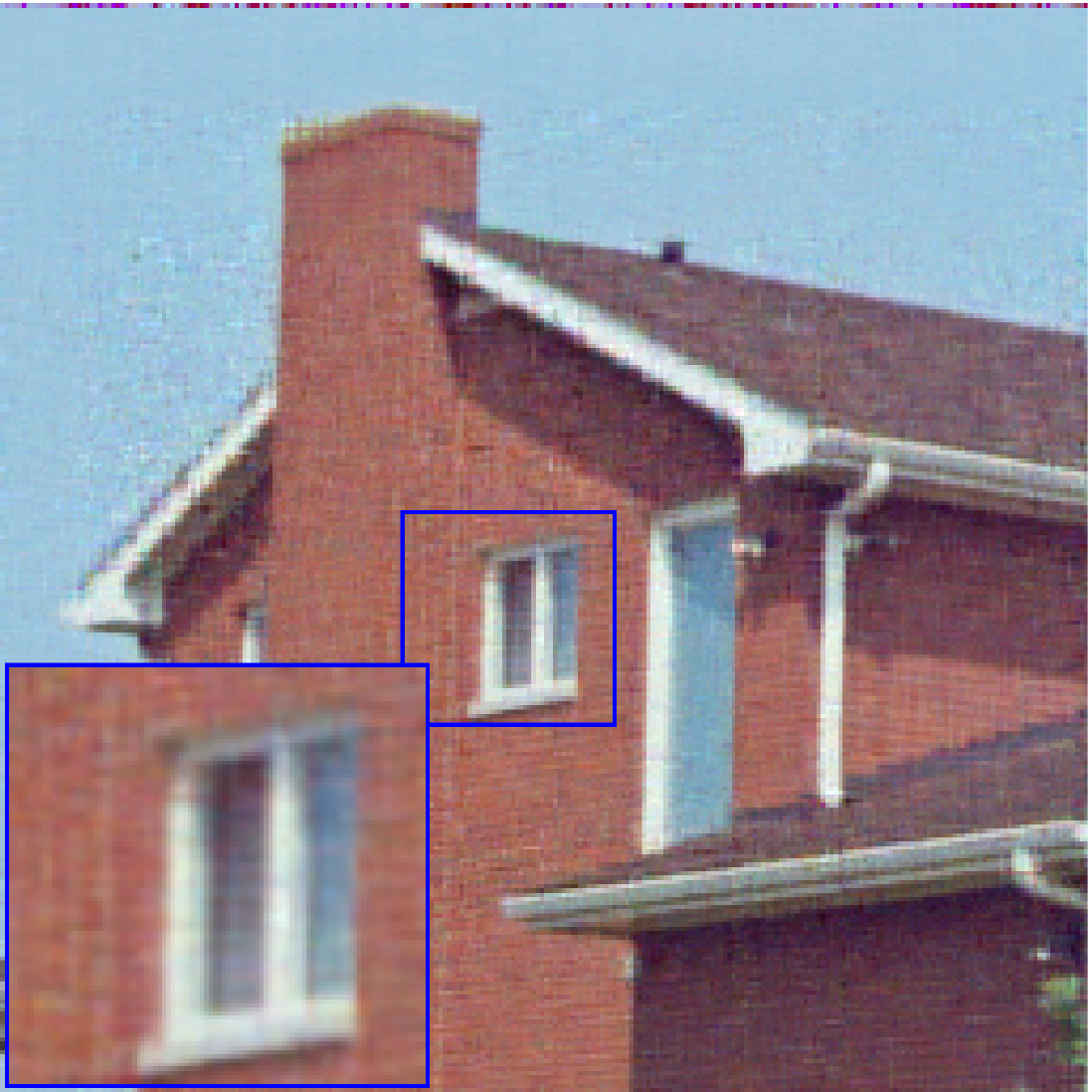}&
\includegraphics[width=0.19\textwidth]{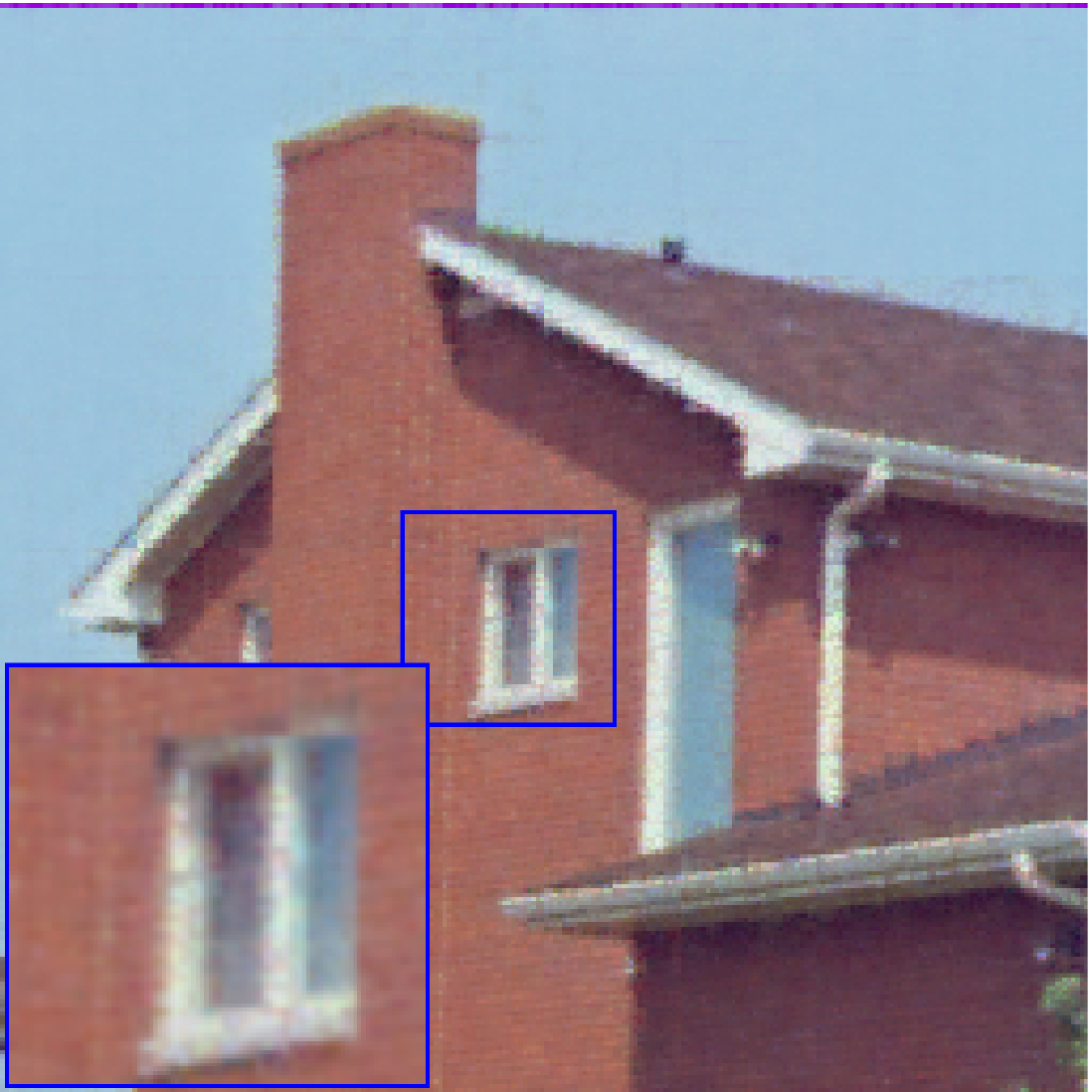}&
\includegraphics[width=0.19\textwidth]{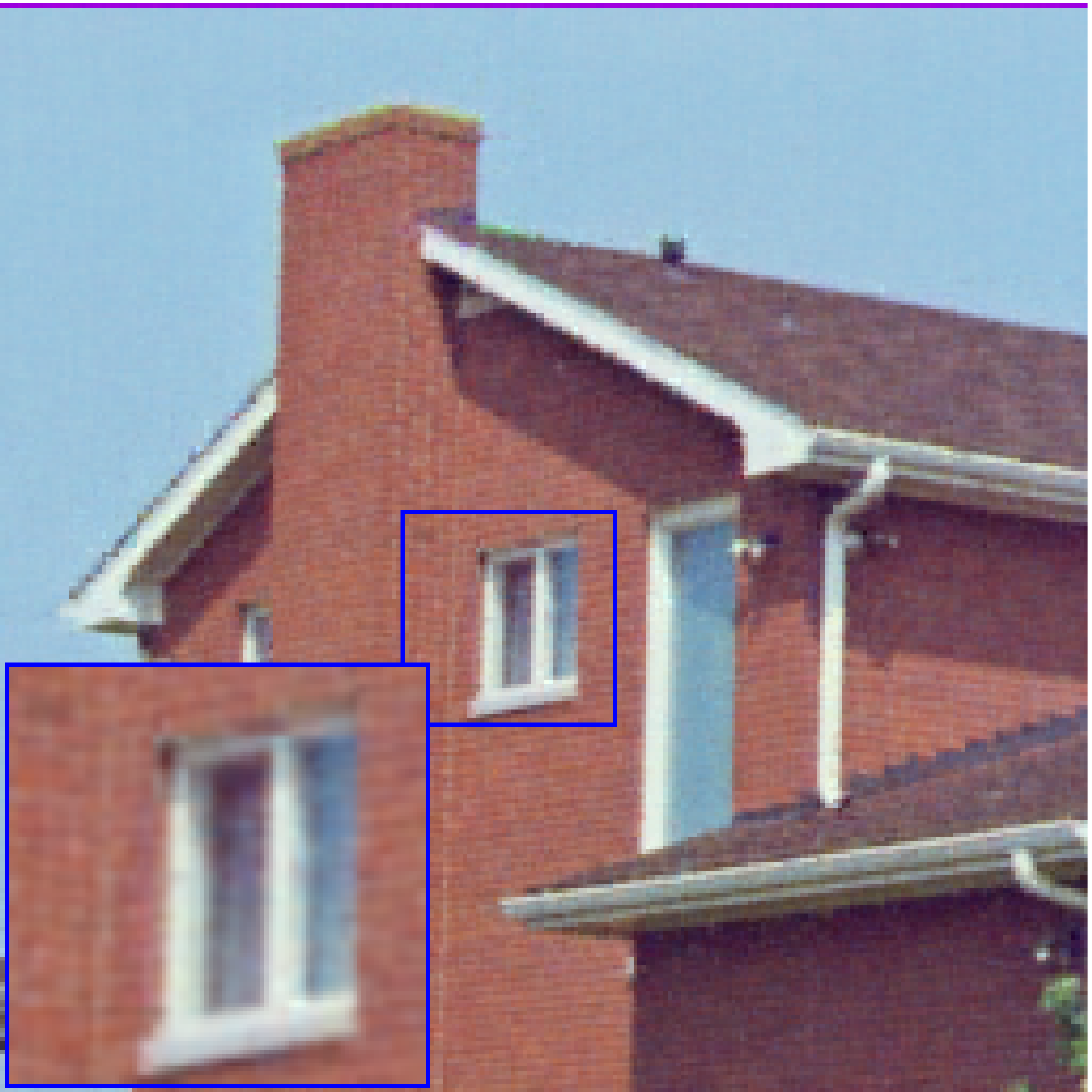}\vspace{0.01cm}\\
(f) SiLRTC-TT & (g) tSVD & (h) KBR & (i) TRNN & (j) LogTR\\
\includegraphics[width=0.19\textwidth]{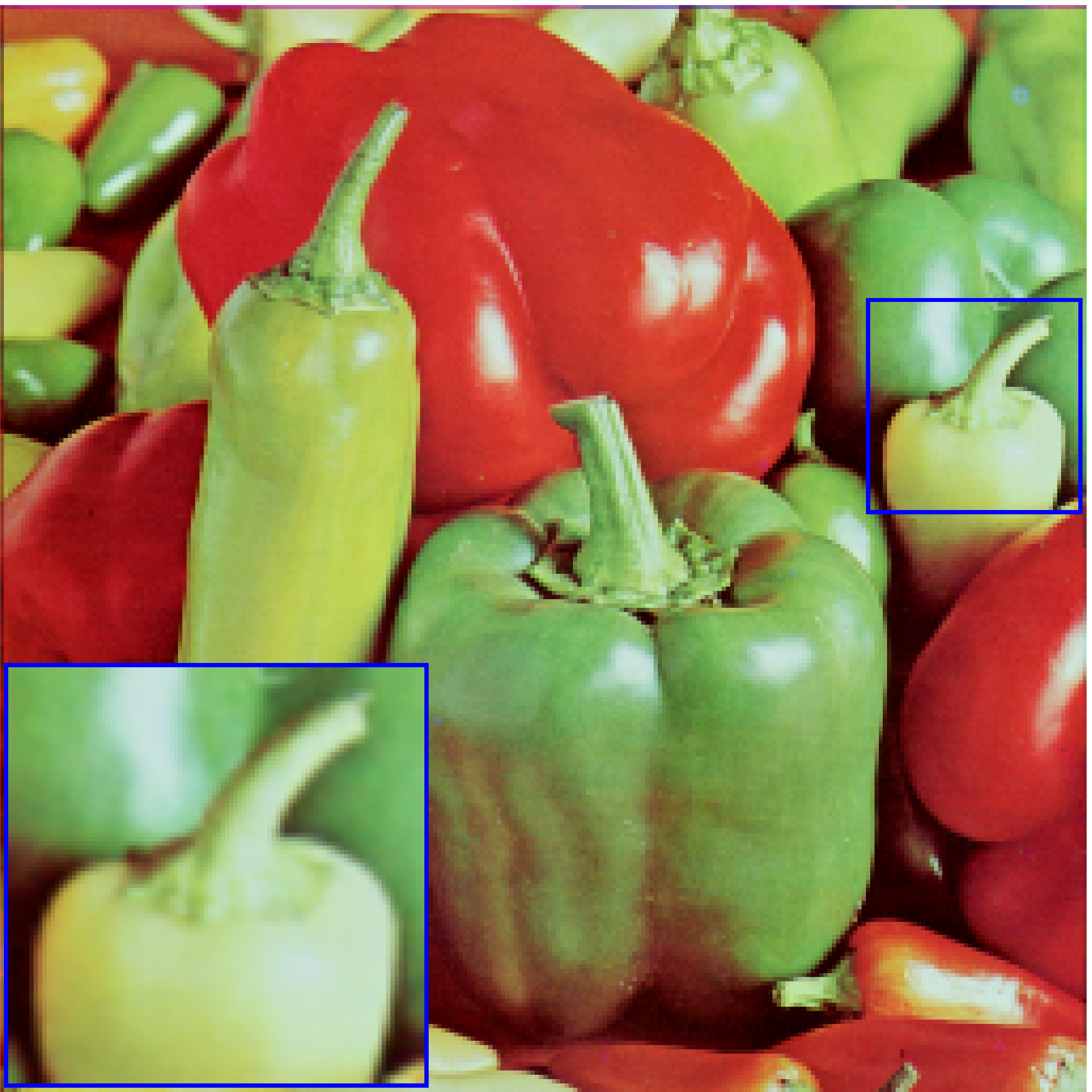}&
\includegraphics[width=0.19\textwidth]{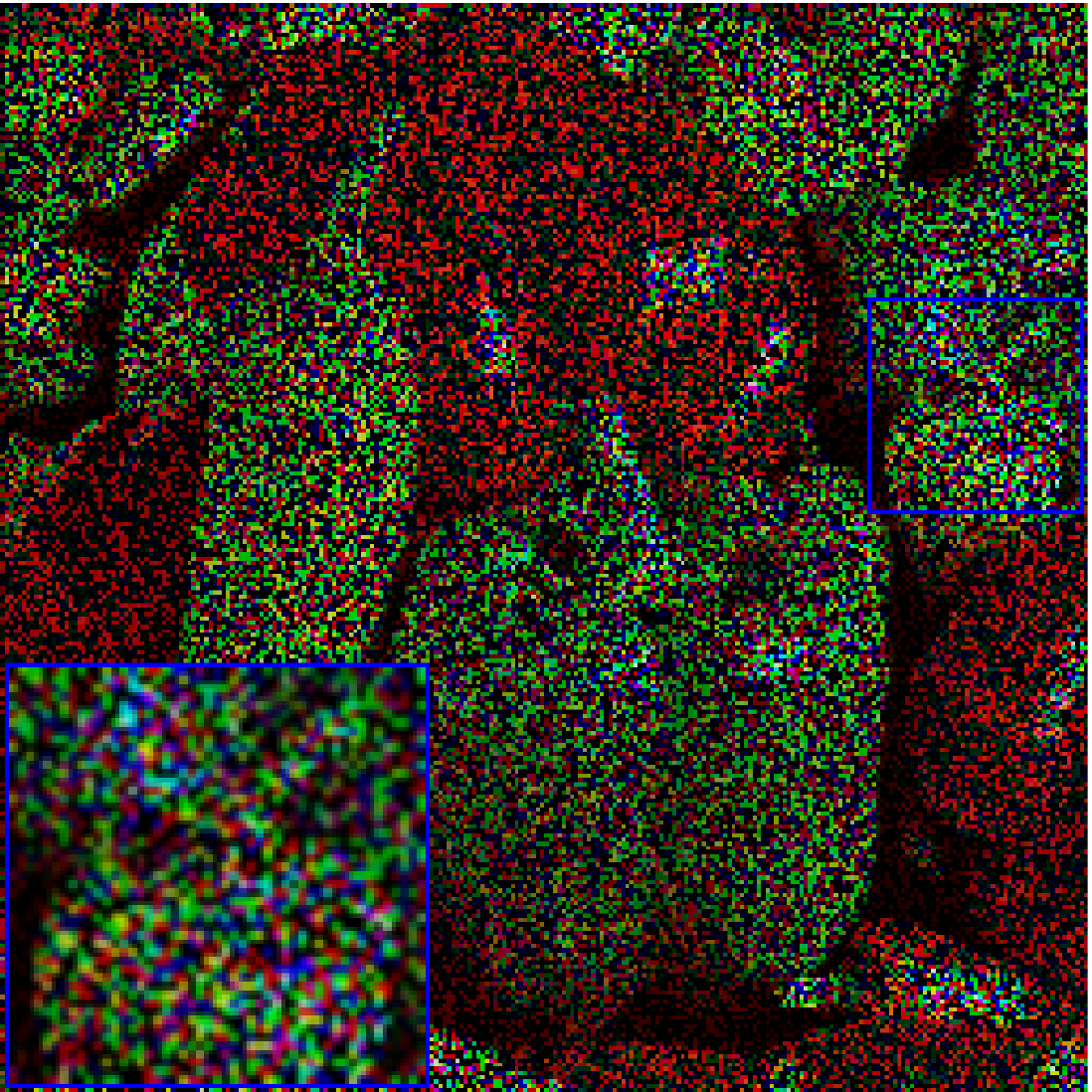}&
\includegraphics[width=0.19\textwidth]{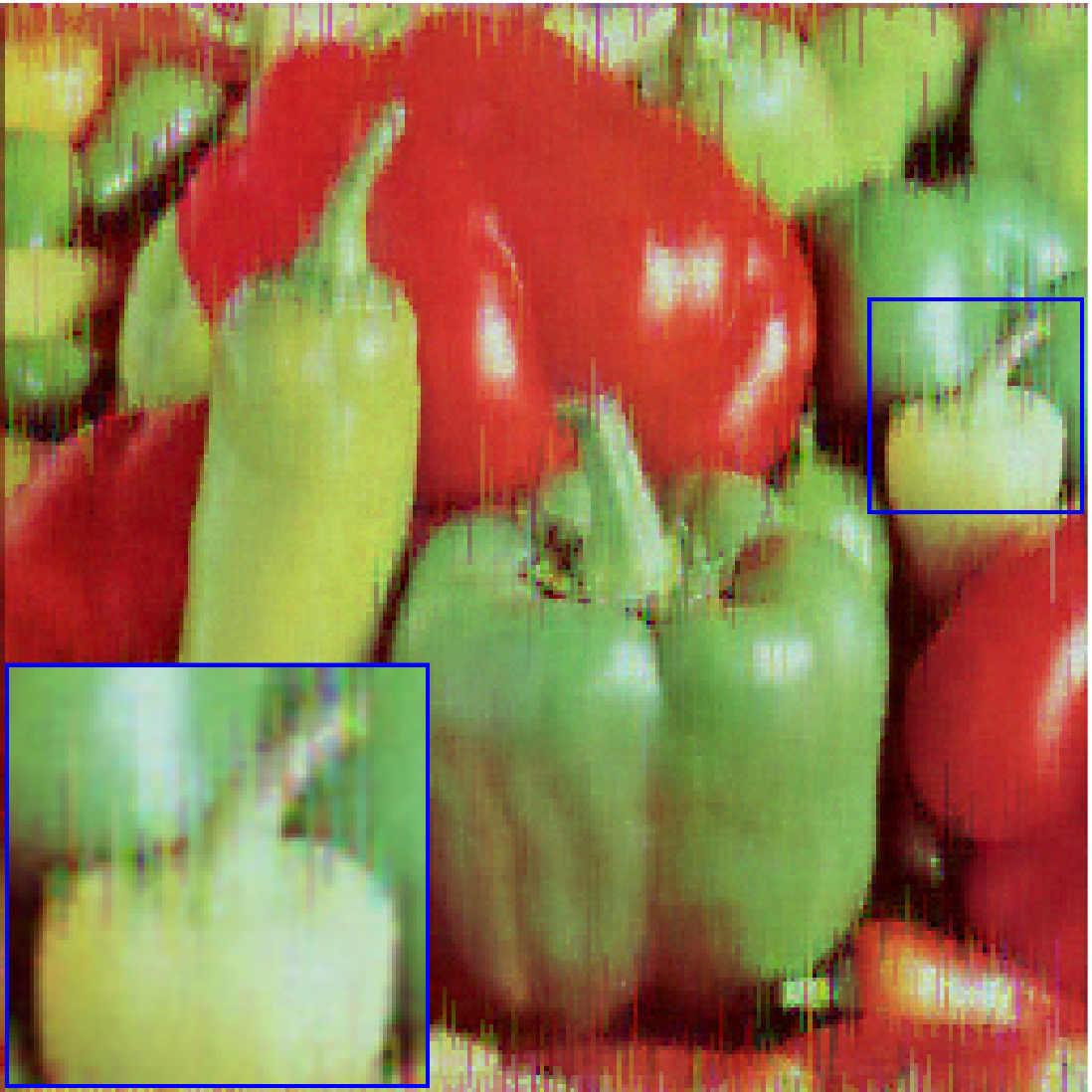}&
\includegraphics[width=0.19\textwidth]{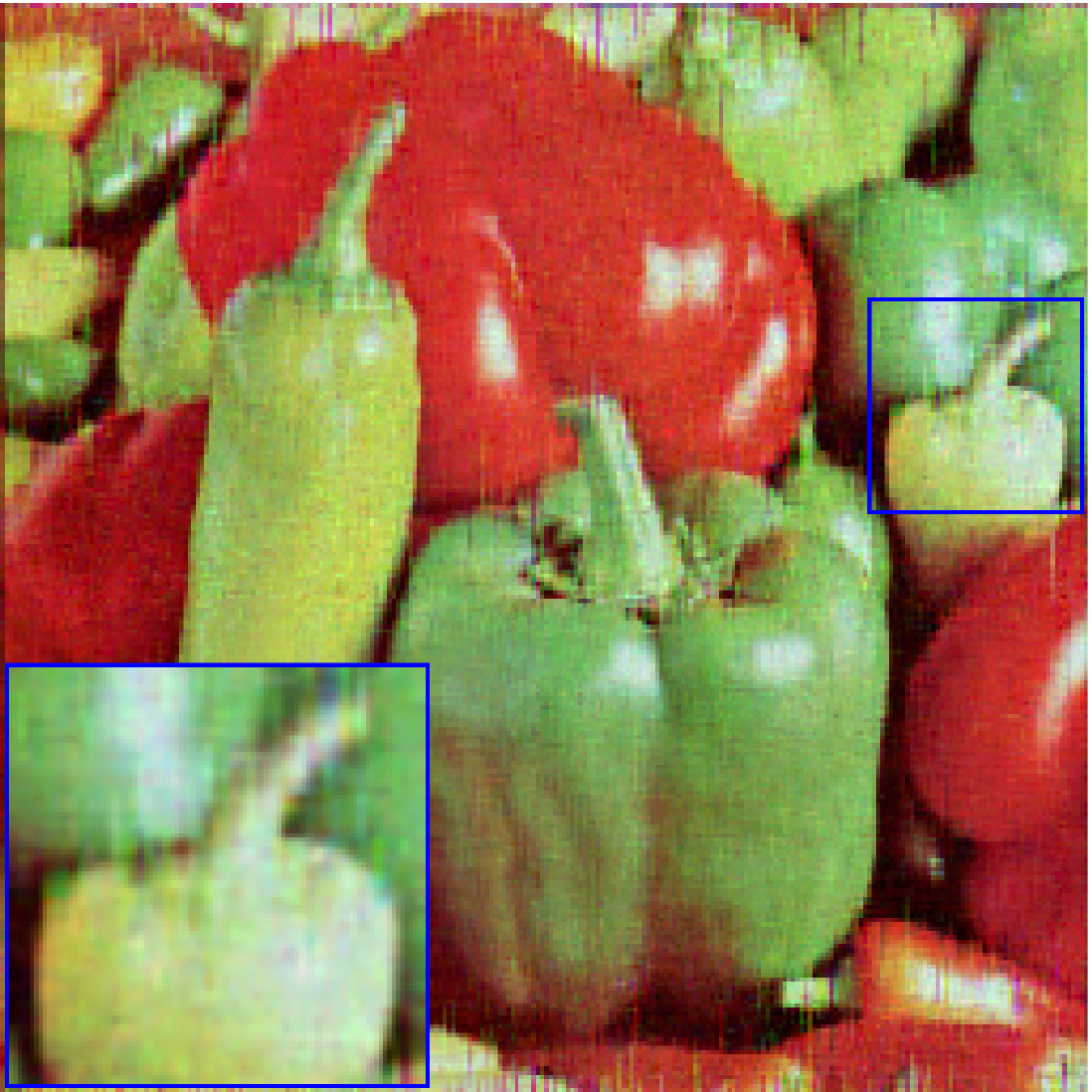}&
\includegraphics[width=0.19\textwidth]{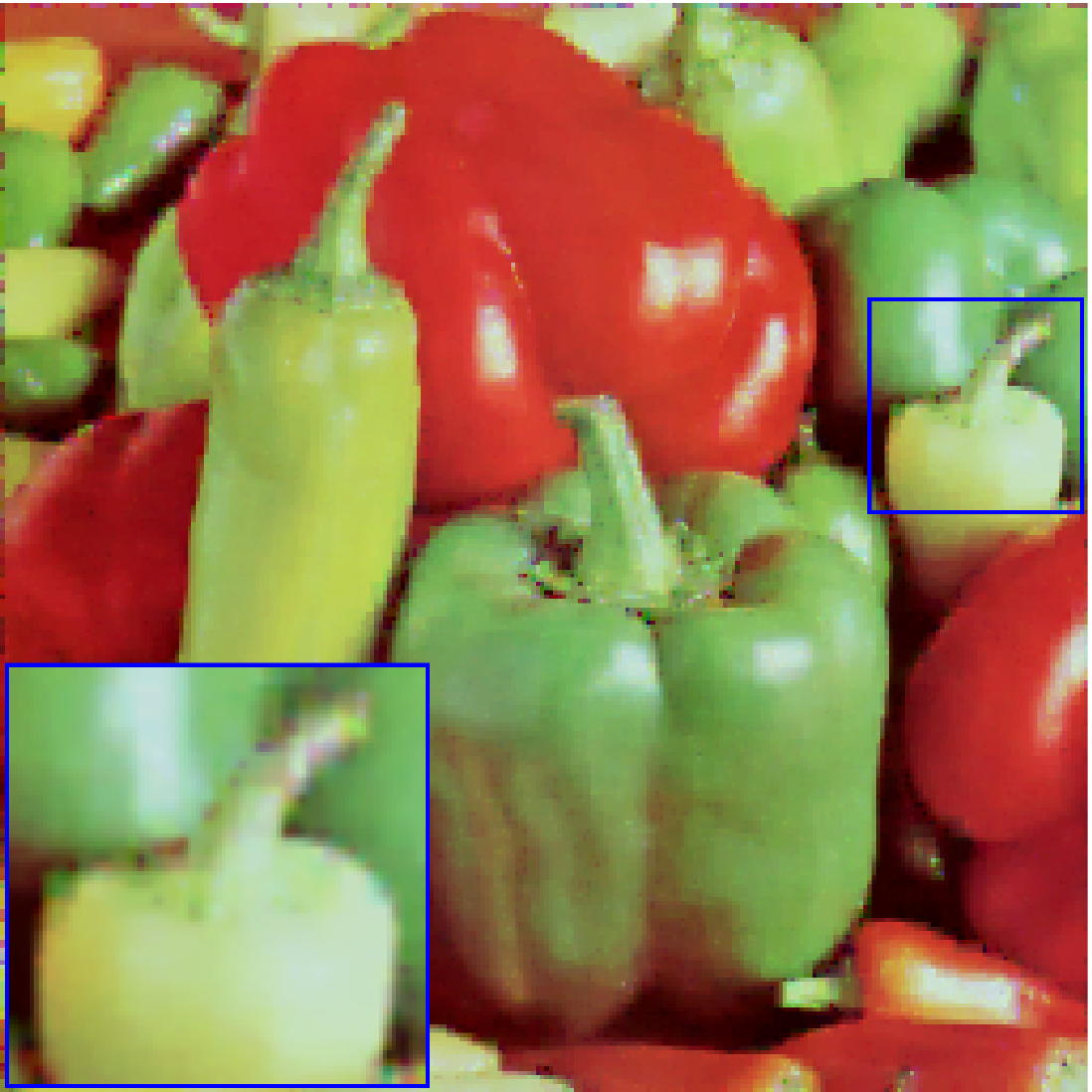}\vspace{0.01cm}\\
(a) Original & (b) Observed & (c) HaLRTC & (d) NSNN & (e) LRTC-TV\\
\includegraphics[width=0.19\textwidth]{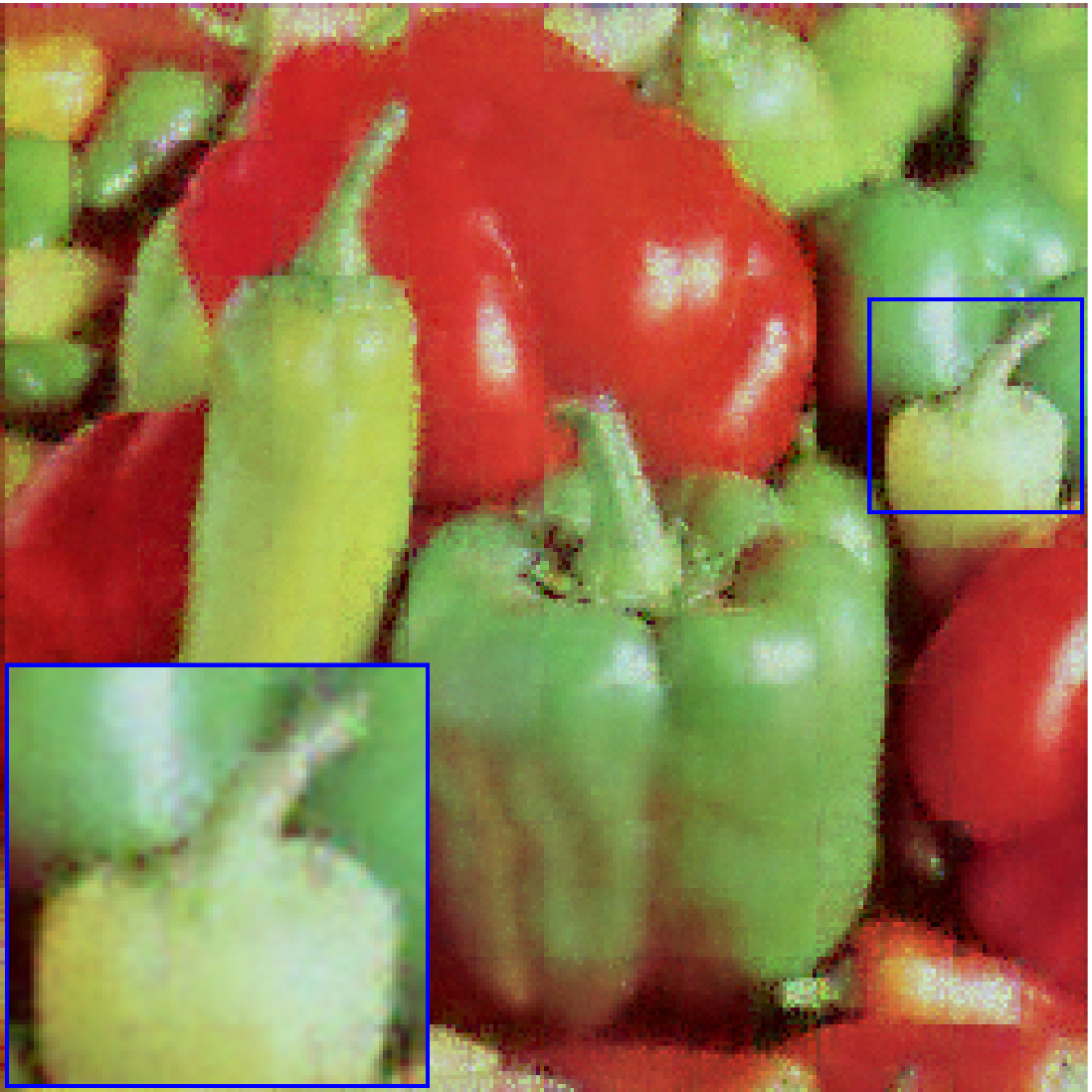}&
\includegraphics[width=0.19\textwidth]{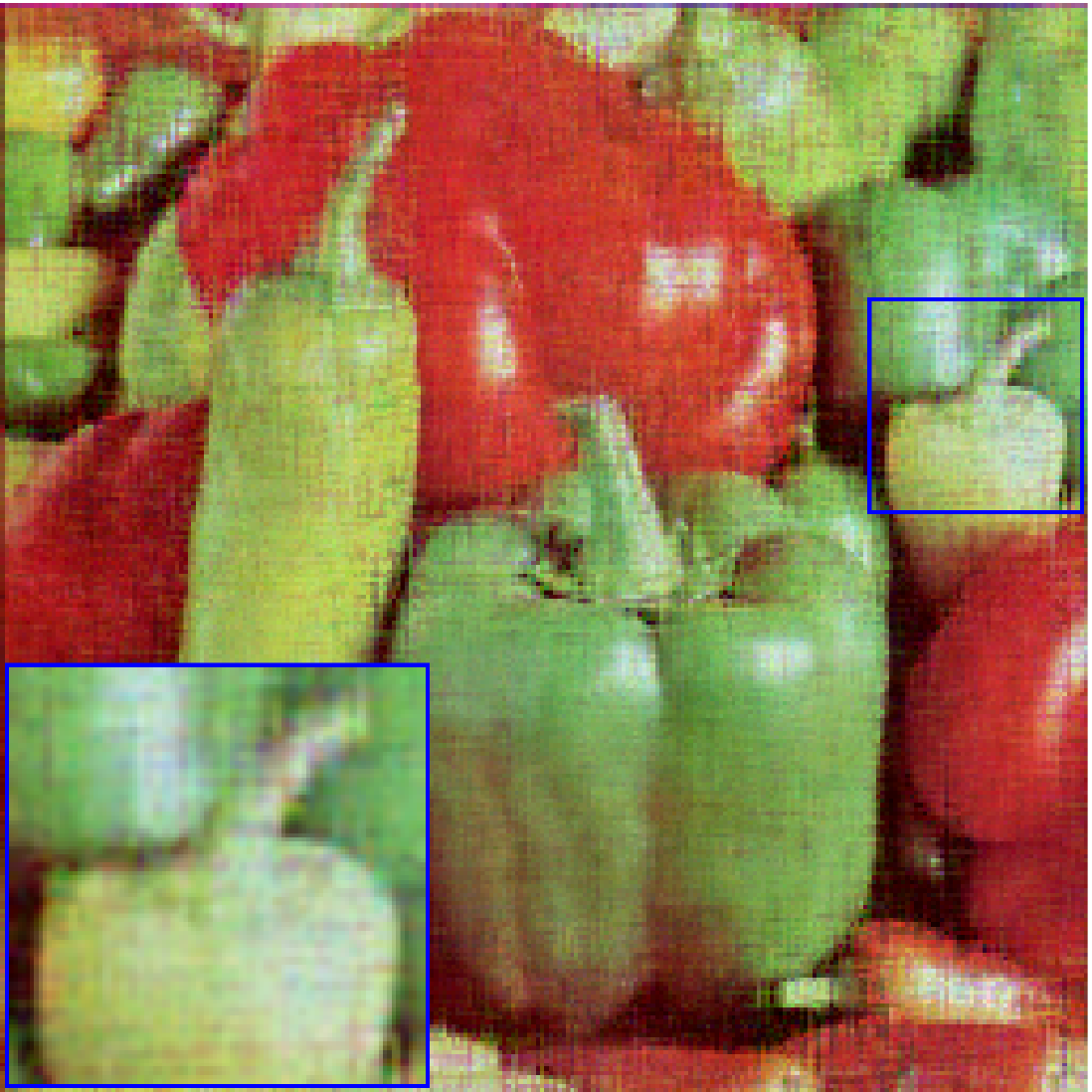}&
\includegraphics[width=0.19\textwidth]{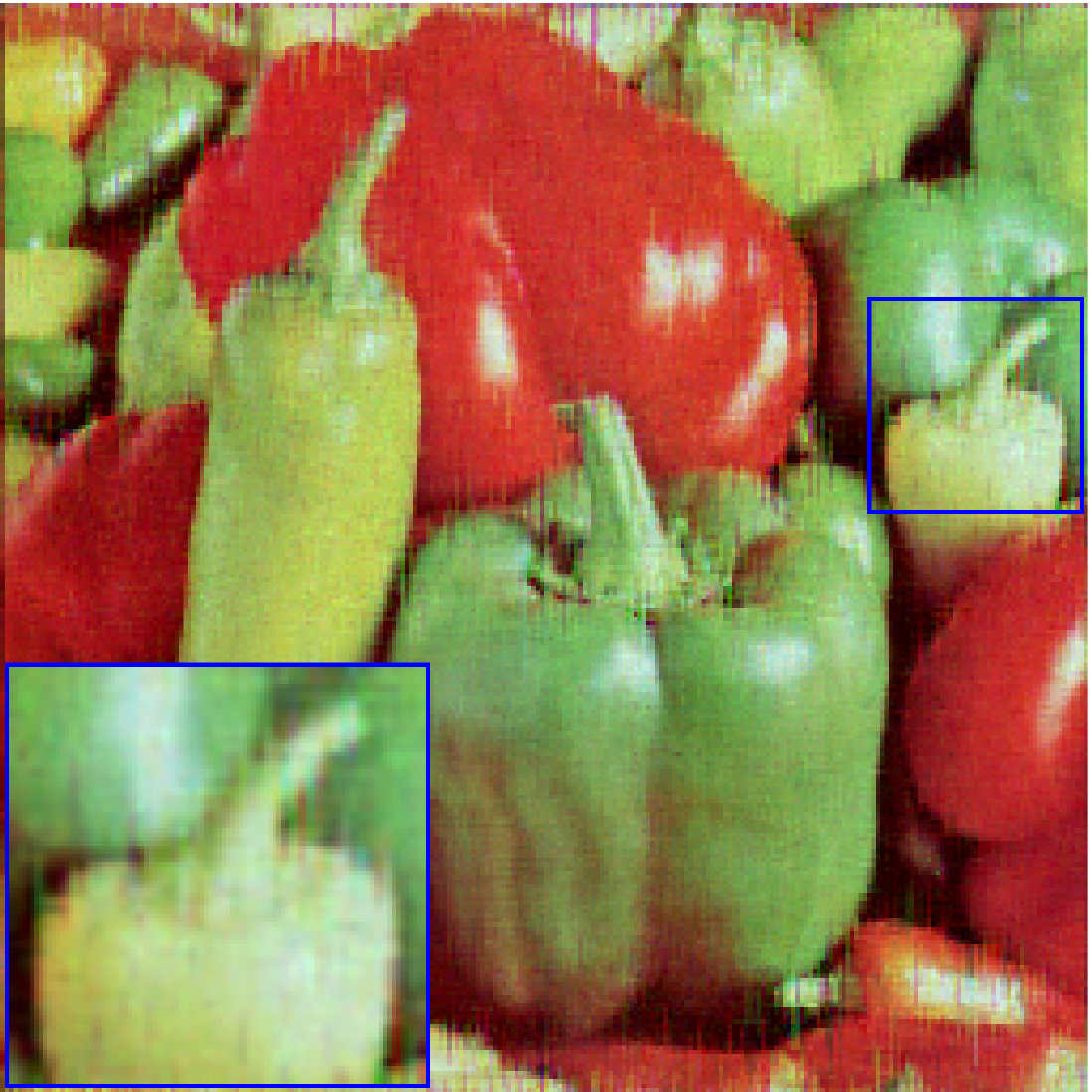}&
\includegraphics[width=0.19\textwidth]{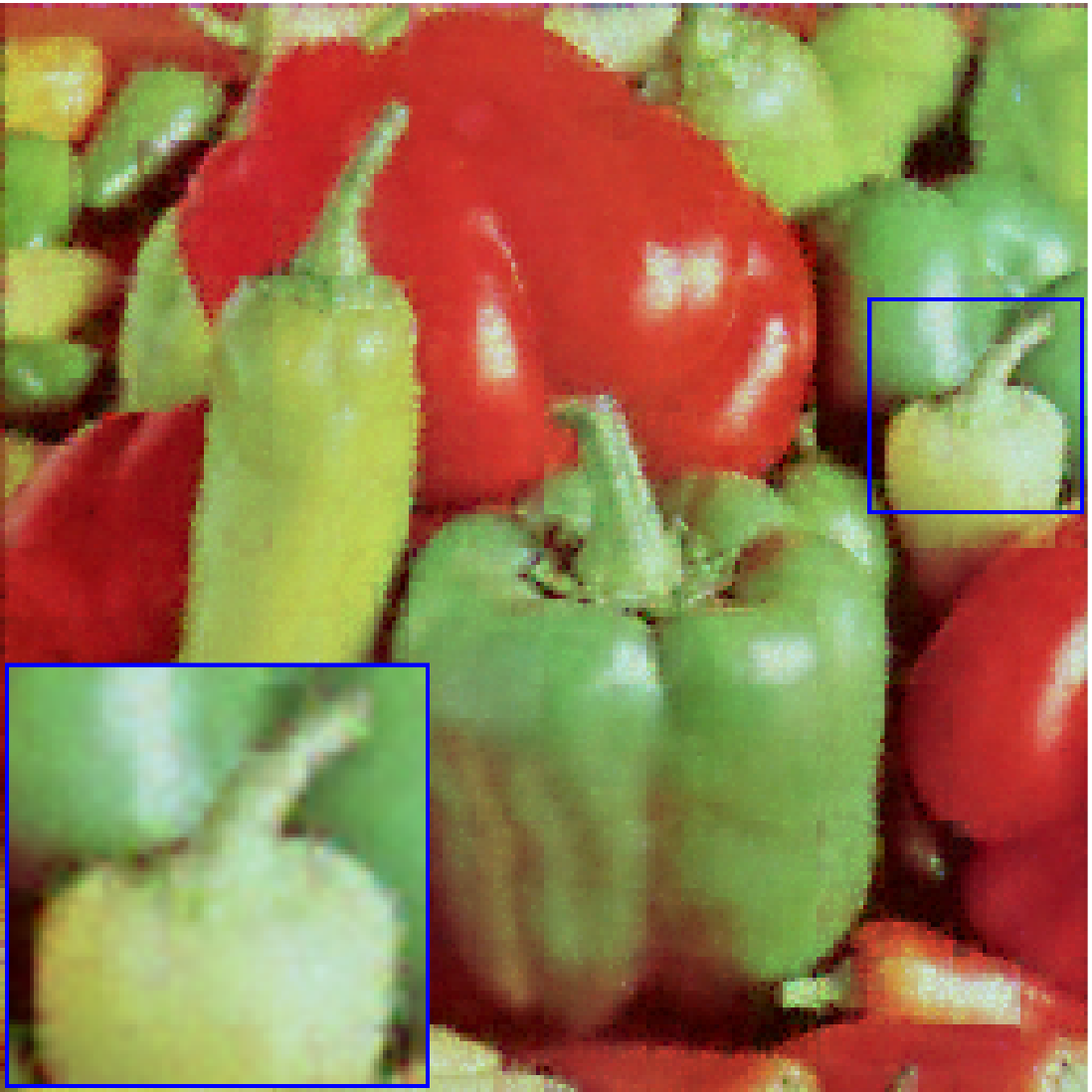}&
\includegraphics[width=0.19\textwidth]{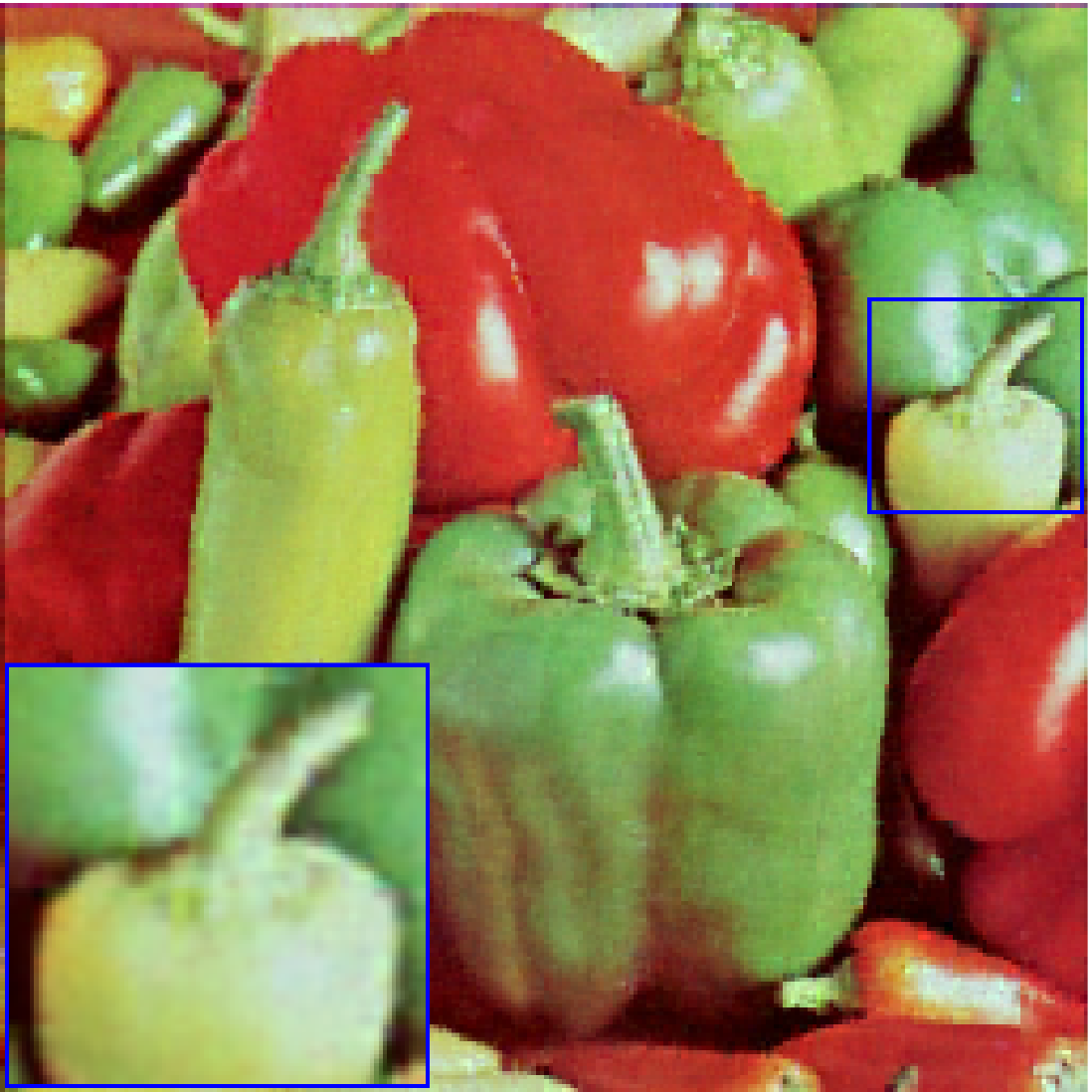}\vspace{0.01cm}\\
(f) SiLRTC-TT & (g) tSVD & (h) KBR & (i) TRNN & (j) LogTR\\
\includegraphics[width=0.19\textwidth]{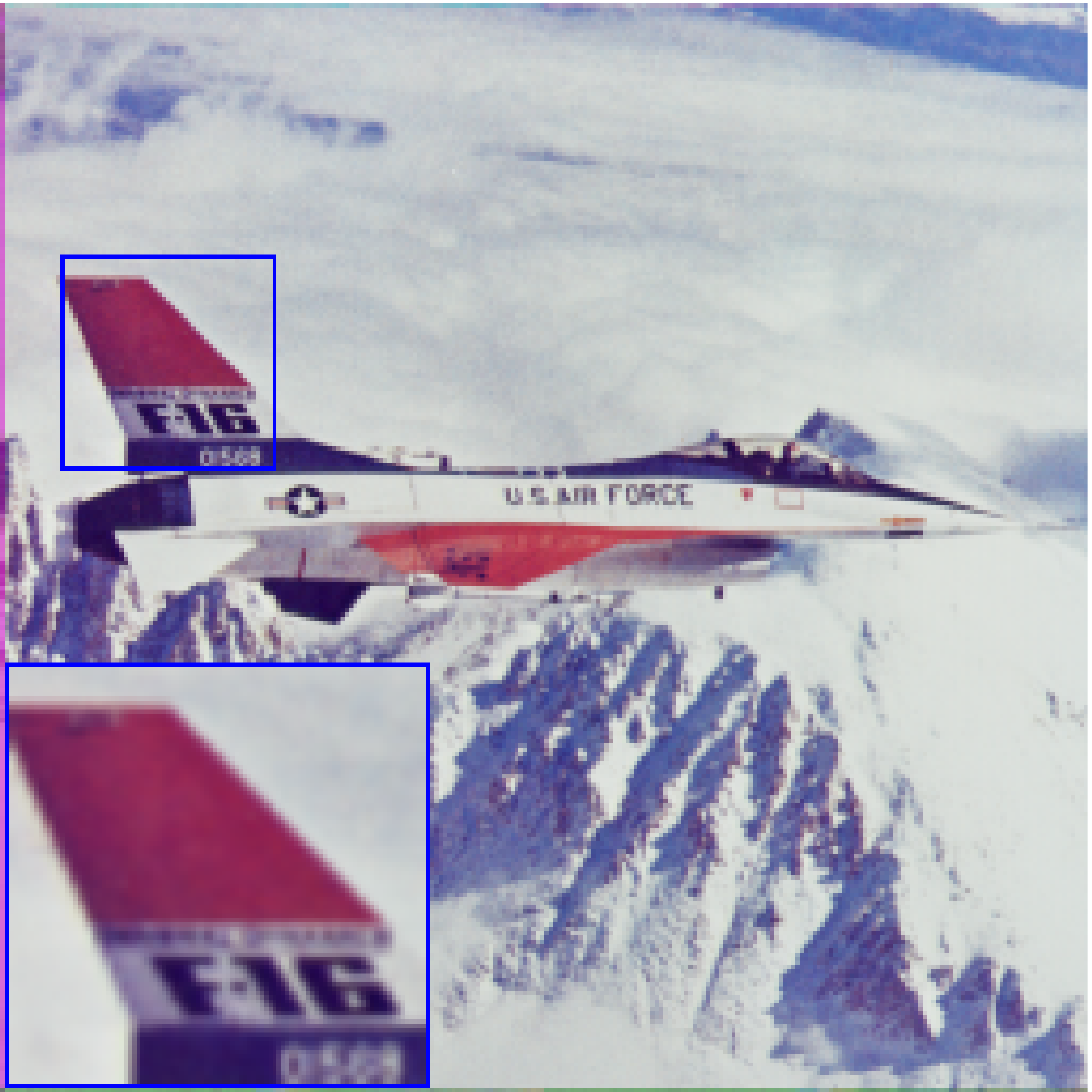}&
\includegraphics[width=0.19\textwidth]{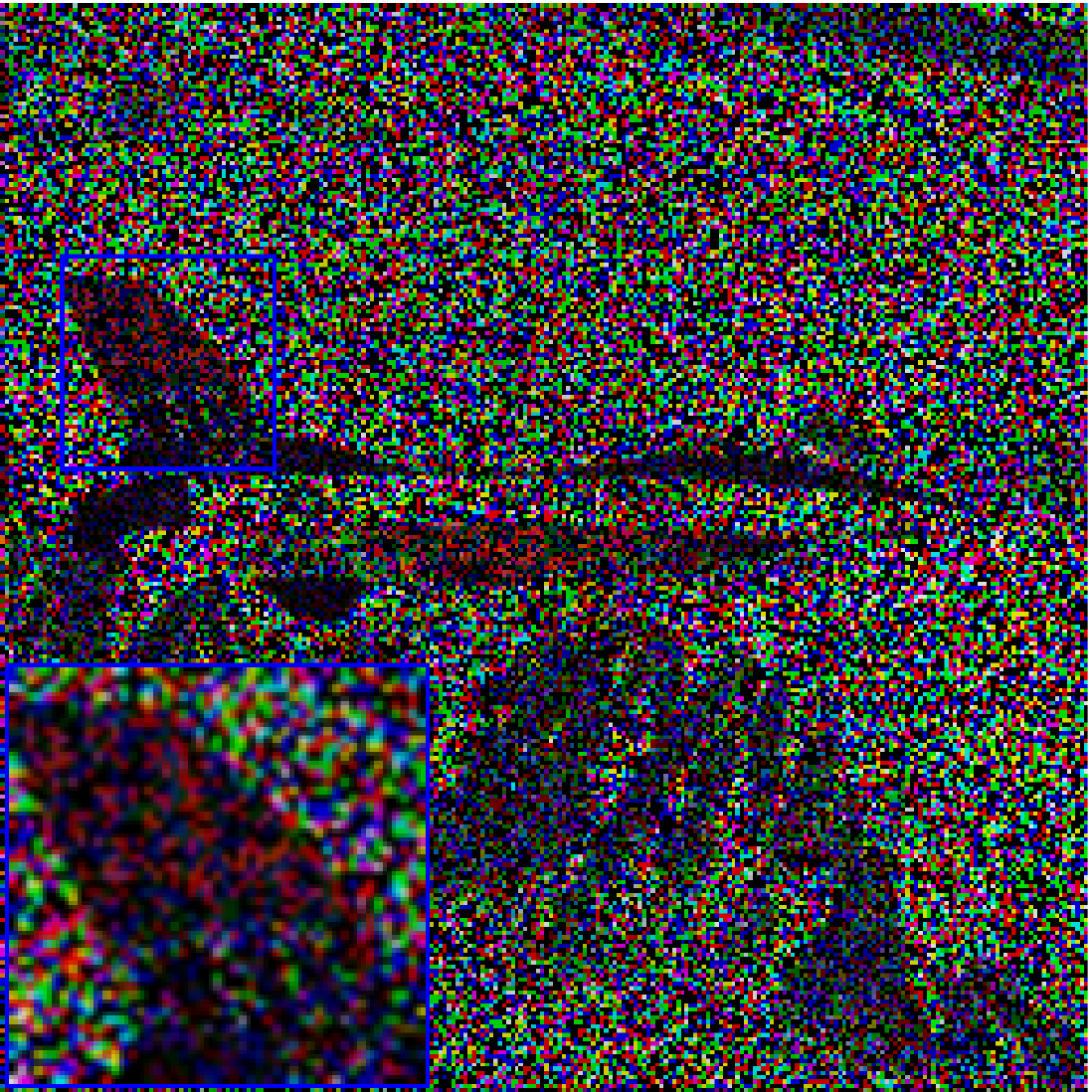}&
\includegraphics[width=0.19\textwidth]{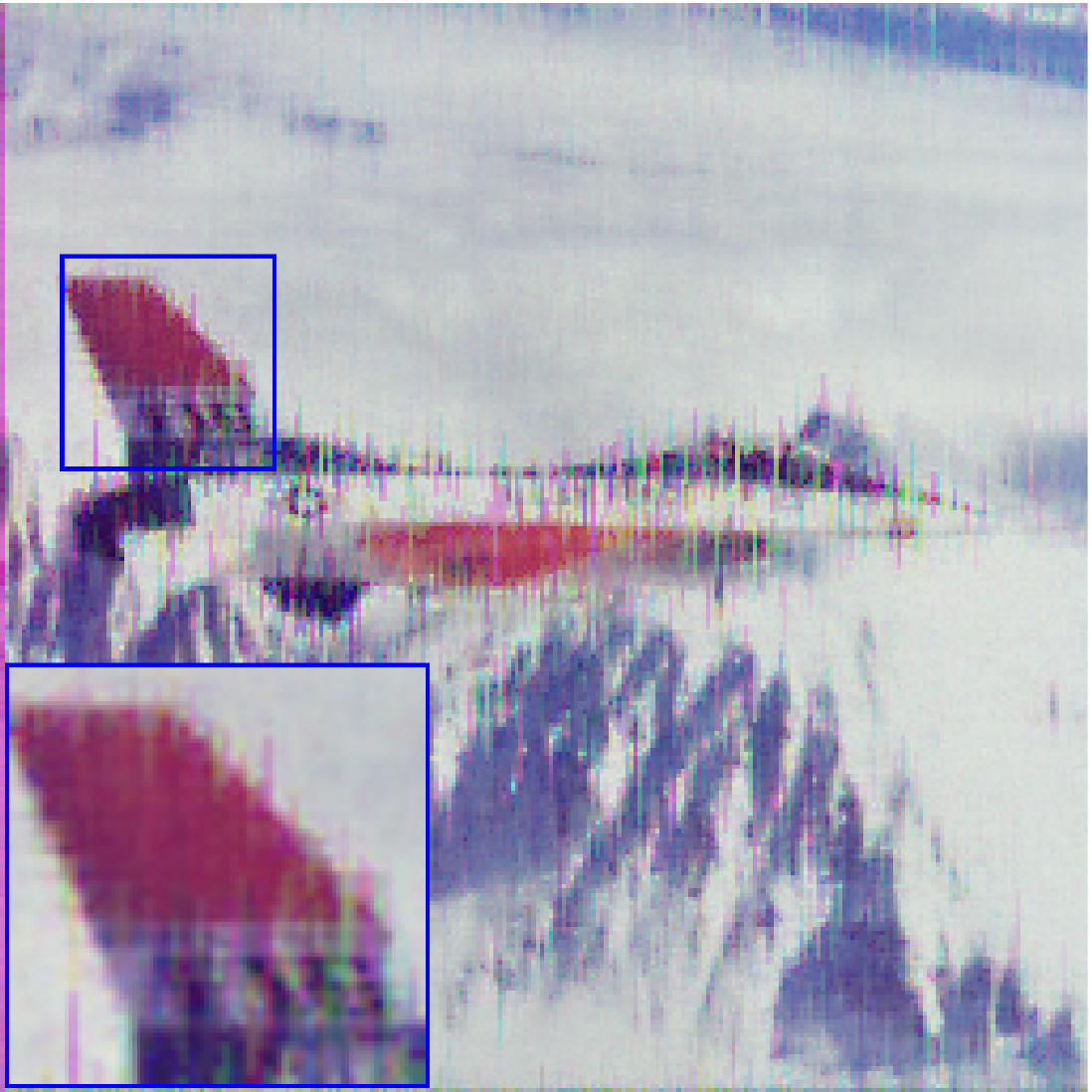}&
\includegraphics[width=0.19\textwidth]{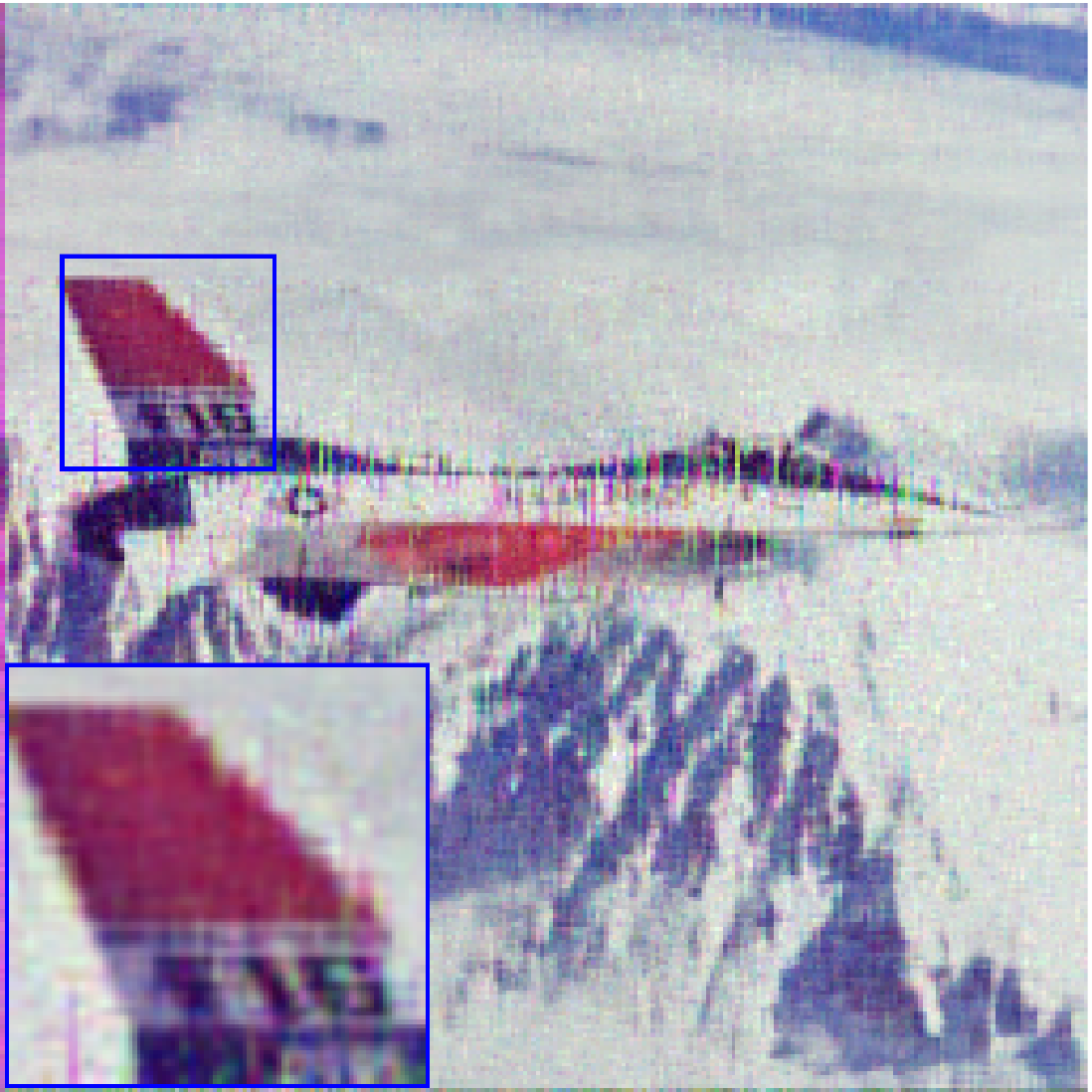}&
\includegraphics[width=0.19\textwidth]{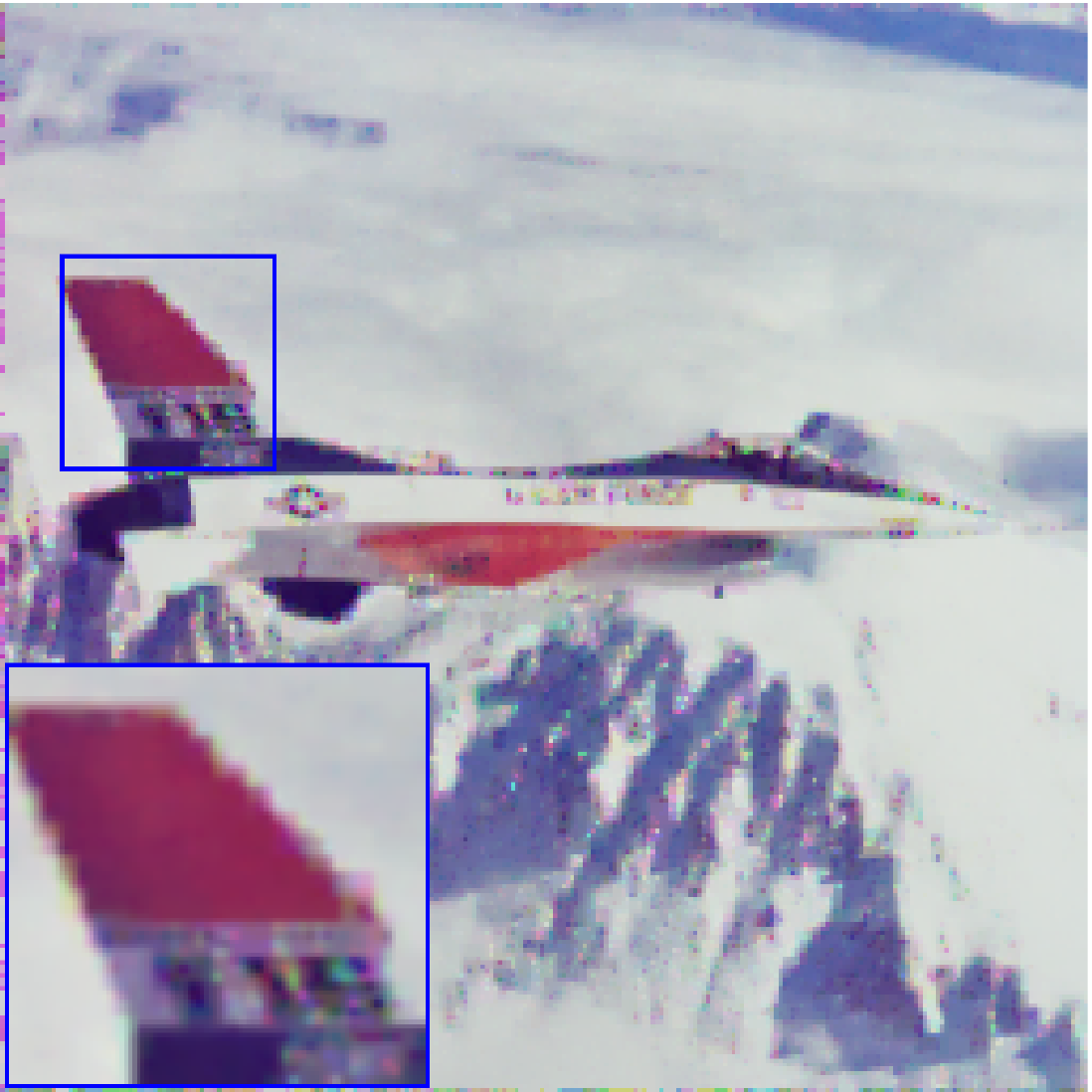}\vspace{0.01cm}\\
(a) Original& (b) Observed & (c) HaLRTC & (d) NSNN & (e) LRTC-TV\\
\includegraphics[width=0.19\textwidth]{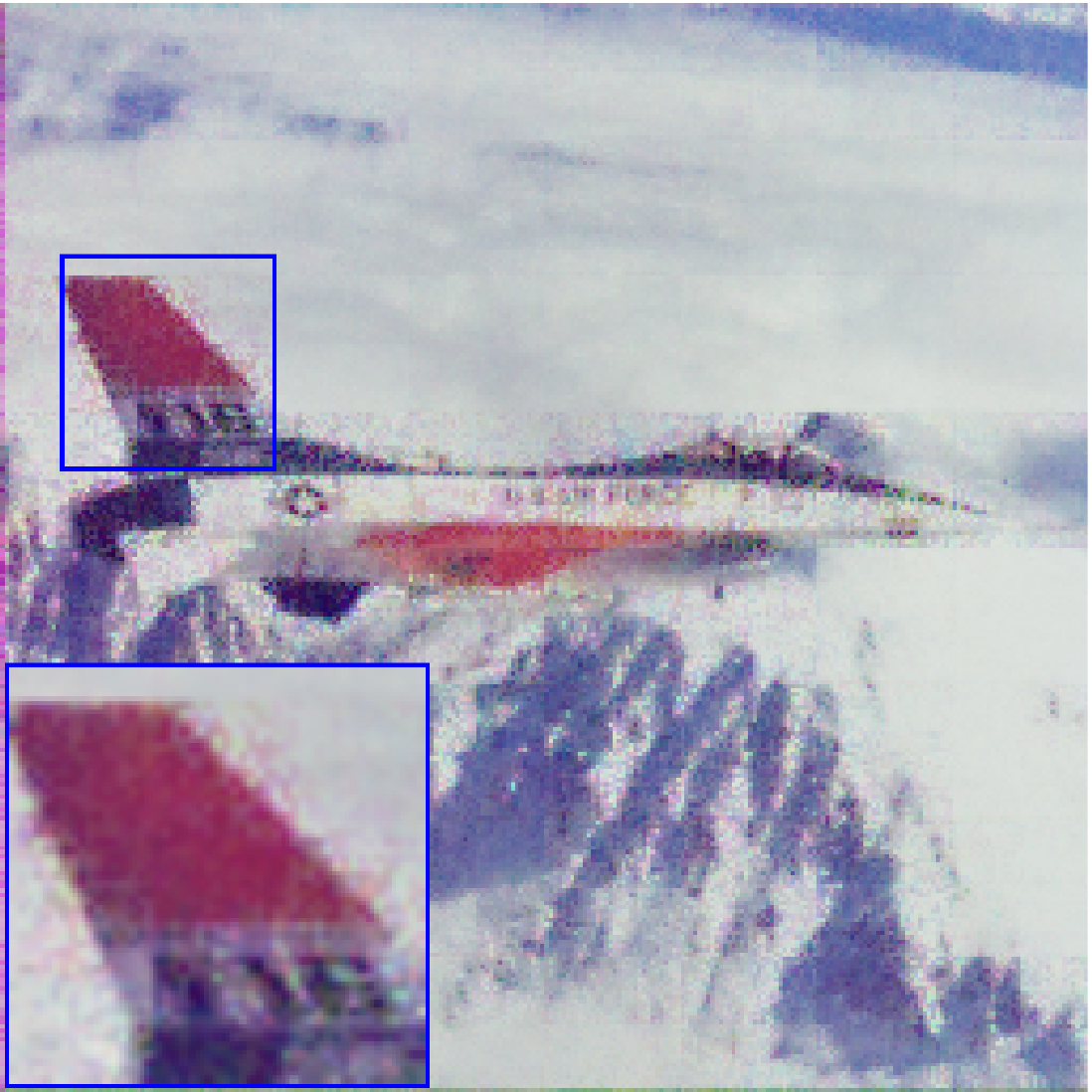}&
\includegraphics[width=0.19\textwidth]{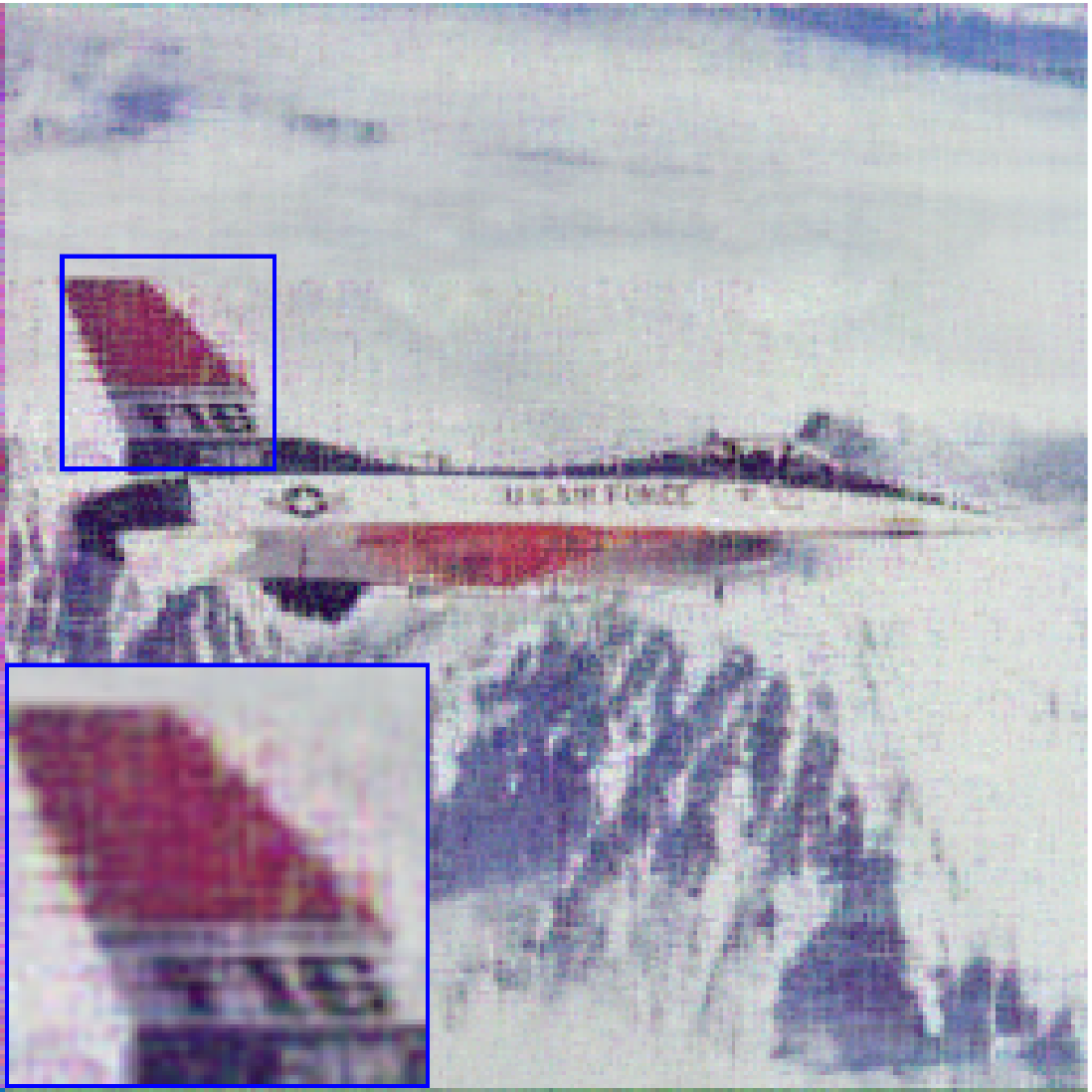}&
\includegraphics[width=0.19\textwidth]{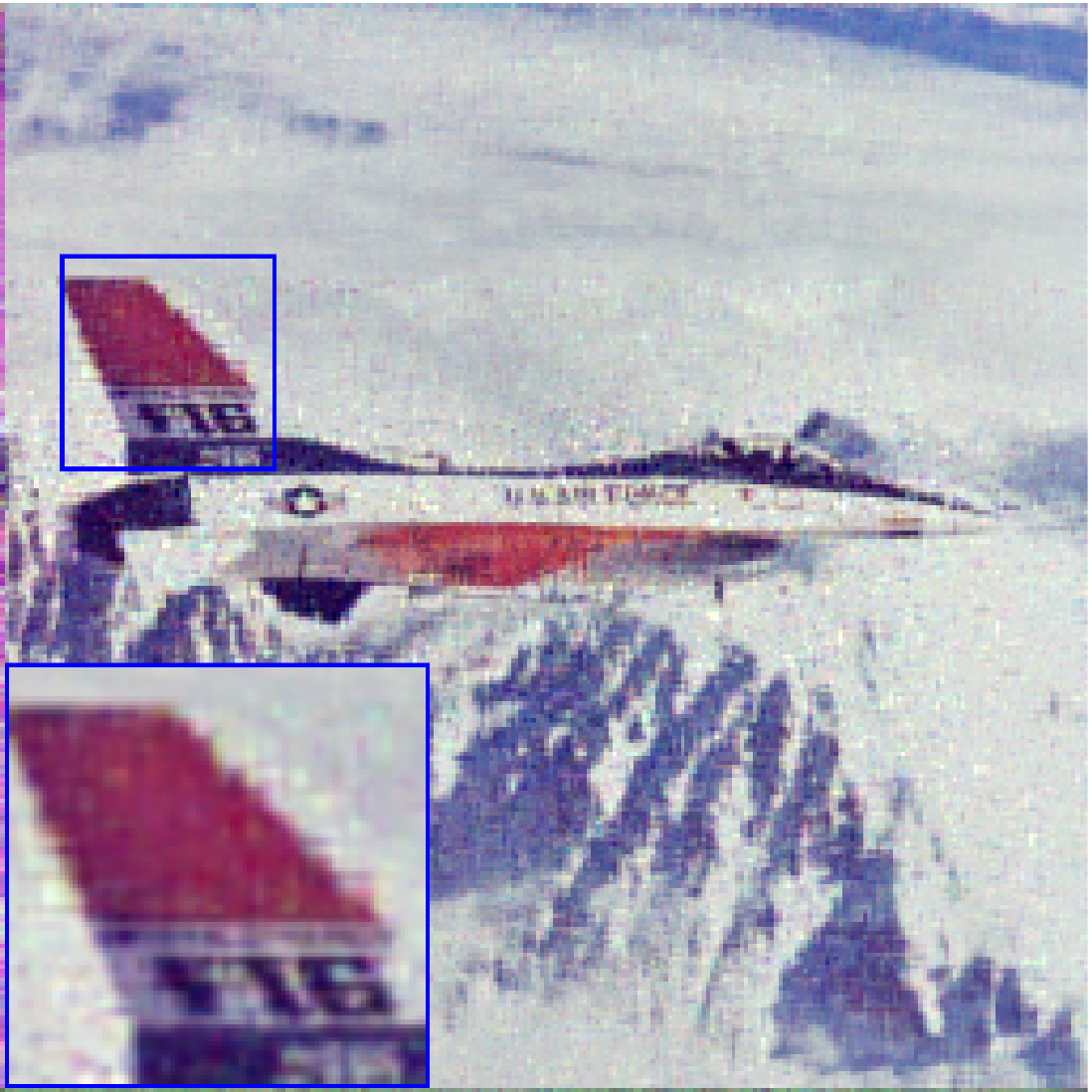}&
\includegraphics[width=0.19\textwidth]{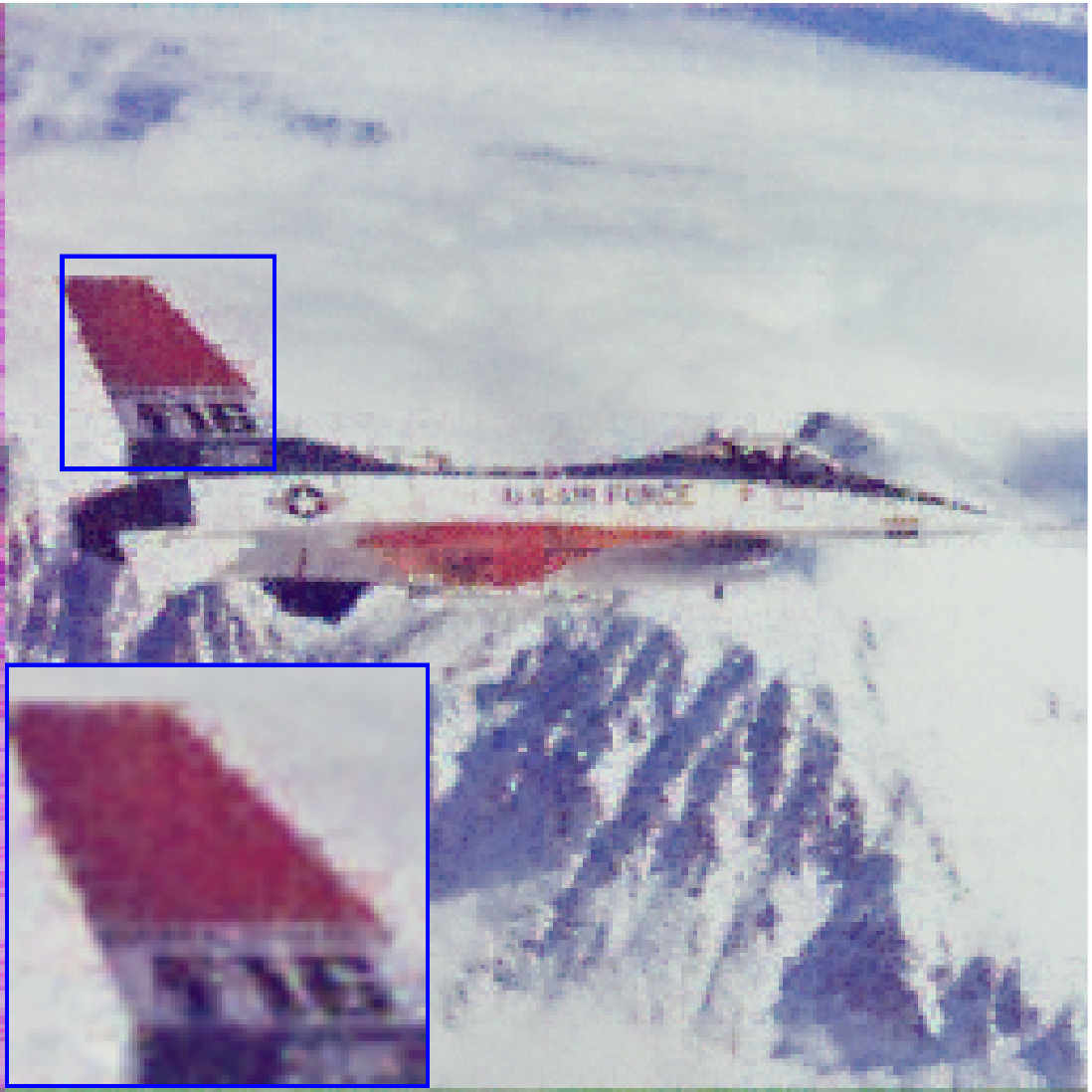}&
\includegraphics[width=0.19\textwidth]{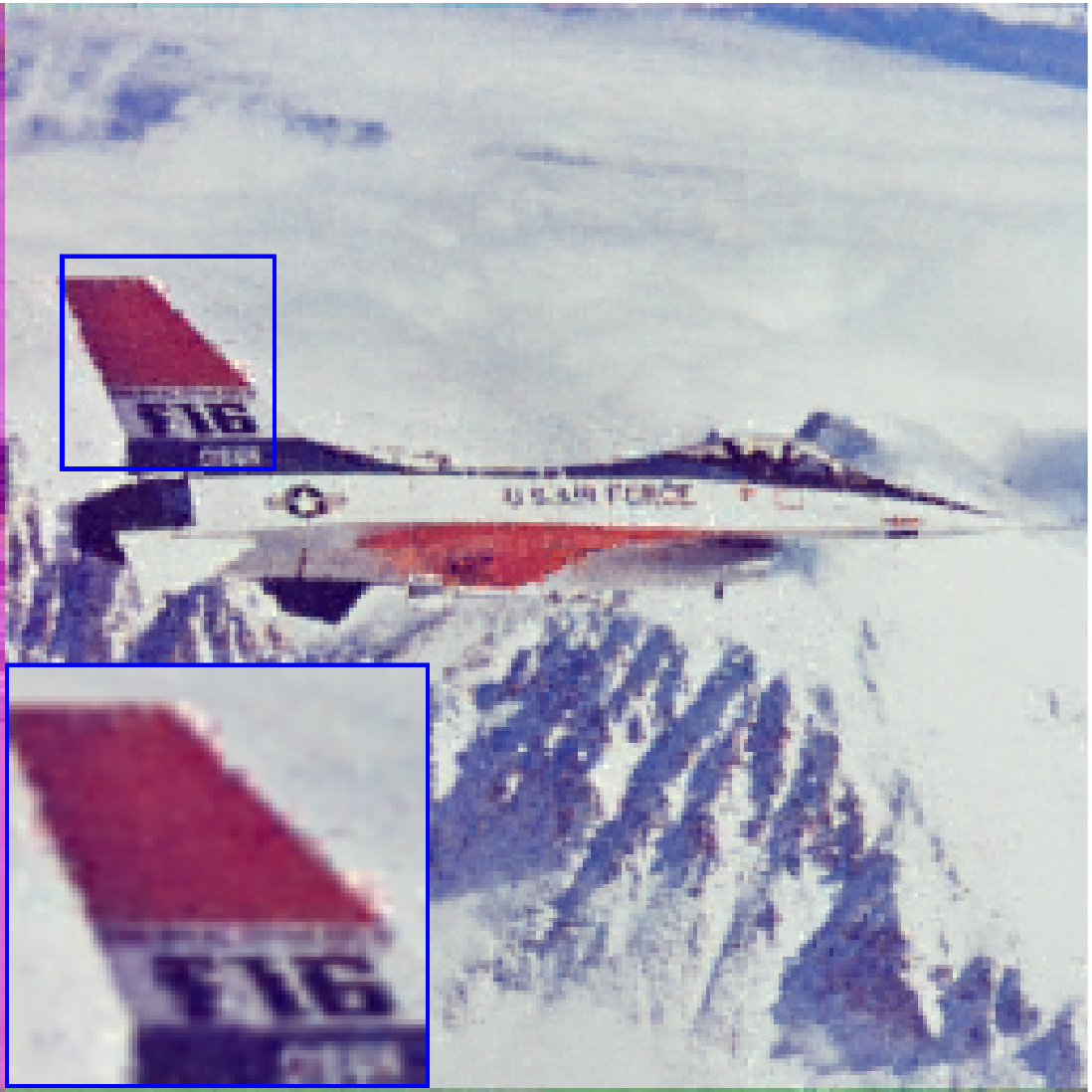}\vspace{0.01cm}\\
(f) SiLRTC-TT & (g) tSVD & (h) KBR & (i) TRNN & (j) LogTR\\
\end{tabular}
\caption{\small{Recovered color images \emph{House}, \emph{Peppers}, and \emph{Airplane} for random missing entries with $SR = 0.3$.}}
  \label{fig:image_random}
  \end{center}\vspace{-0.3cm}
\end{figure}

To evaluate the results, we adopt the peak signal-to-noise ratio (PSNR) (dB) and the structural similarity index (SSIM), which are defined as
\[
\textrm{PSNR}=10\log_{10}\Bigg(\frac{Max_{\textbf{F},\widetilde{\textbf{F}}}^{2}}{\frac{1}{m_{1}m_{2}}\|\textbf{F}-\widetilde{\textbf{F}}\|_{F}^{2}}\Bigg),
\]
\[
\textrm{SSIM}=\frac{(2\mu_{\textbf{F}}\mu_{\widetilde{\textbf{F}}}+c_{1})(2\sigma+c_{2})}
{(\mu_{\textbf{F}}^{2}+\mu_{\widetilde{\textbf{F}}}^{2}+c_{1})(\sigma_{\textbf{F}}^{2}+\sigma_{\widetilde{\textbf{F}}}^{2}+c_{2})},
\]
where $\textbf{F}\in \mathbb{R}^{m_{1}\times m_{2}}$ and $\widetilde{\textbf{F}}\in \mathbb{R}^{m_{1}\times m_{2}}$ are the original grayscale image and the restored grayscale image, respectively, $Max_{\textbf{F},\widetilde{\textbf{F}}}$ is the maximum possible pixel value of $\textbf{F}$ and $\widetilde{\textbf{F}}$, $\mu_{\textbf{F}}$ and $\mu_{\tilde{\textbf{F}}}$ are the mean values of $\textbf{F}$ and $\widetilde{\textbf{F}}$, $\sigma_{\textbf{F}}^{2}$ and $\sigma_{\widetilde{\textbf{F}}}^{2}$ are the standard variances of $\textbf{F}$ and $\widetilde{\textbf{F}}$, $\sigma$ is the covariance of $\widetilde{\textbf{F}}$ and $\textbf{F}$, and $c_{1}$, $c_{2}>0$ are constants. For color images and MSIs, we use the average PSNR and SSIM corresponding to channels or bands as the quality index of the entire result. For color videos, we calculate the PSNR and SSIM values by averaging the two values of all color frames. High PSNR and SSIM values indicate good performance.

\textbf{Parameter setting.} The proposed method involves the following parameters: weights $\{\beta_{n}\}_{n=1}^{\lceil j/2\rceil}$ and the penalty parameter $\eta$. In our model \eqref{our model}, we assign larger weight to $\textbf{X}_{\{n\}}$ with more balanced size, i.e.,
\[
\beta_{n}=\frac{\delta_{n}}{\sum_{n=1}^{\lceil j/2\rceil}\delta_{n}} \ \  \text{with} \ \ \delta_{n}=\min(\Pi_{d=\lceil j/2\rceil}^{\lceil j/2\rceil+n-1}m_{d}, \Pi_{d=\lceil j/2\rceil+n}^{\lceil j/2\rceil}m_{d}).
\]
Besides, we set $\eta^{k+1}=1.1 \eta^{k}$ and select the initial value $\eta^{0}$ from one of values in $\{10^{-9}, 10^{-8}, 10^{-7}, 10^{-6}\}$, to obtain the highest PSNR value.

\begin{table}[!t]\scriptsize
\renewcommand\arraystretch{1.2}
\caption{PSNR and SSIM values of different methods on color image completion with different SRs.}
\vspace{-0.5cm}
\begin{center}
\begin{tabular}{c|c|ccccccccc}
\hline \hline
\multirow{1}{*}{Images}&\multicolumn{1}{c|}{SR} &Method &HaLRTC &NSNN &LRTC-TV & SiLRTC-TT & tSVD &KBR & TRNN & LogTR \\  \hline
\multirow{6}{*}{\emph{House}}
          &\multirow{2}{*}{0.1} & PSNR    &20.05	  &20.16     &23.23   &21.88   &20.61	    &22.44     &23.32       &\textbf{26.94}  \\
          &                     & SSIM    &0.4745	  &0.3770    &0.7226  &0.6132  &0.3806	    &0.4571    &0.6903      &\textbf{0.7334} \\ \cline{2-11}

          &\multirow{2}{*}{0.3} & PSNR    &25.69	  &27.61     &29.04   &27.55   &27.92       &28.40     &29.86       &\textbf{32.96}  \\
          &                     & SSIM    &0.7681	  &0.7204    &0.8649  &0.8138  &0.7435      &0.7536    &0.8684      &\textbf{0.8834} \\ \cline{2-11}

          &\multirow{2}{*}{0.5} & PSNR    &30.11      &32.09     &32.64   &32.34   &32.98      	&33.06     &34.33       &\textbf{37.03}  \\
          &                     & SSIM    &0.8945     &0.8646    &0.9253  &0.9144  &0.8850      &0.8764    &0.9369      &\textbf{0.9415} \\
          \hline
%
%
\multirow{6}{*}{\emph{Peppers}}
          &\multirow{2}{*}{0.1} & PSNR    &18.80	  &18.98     &22.06  &20.40     &17.06	        &18.98     &21.42       &\textbf{24.07}  \\
          &                     & SSIM    &0.4493	  &0.3842    &\textbf{0.7118}  &0.5622   &0.2469	    &0.4198    &0.6363      &0.7045 \\ \cline{2-11}

          &\multirow{2}{*}{0.3} & PSNR    &24.93	  &25.91     &28.04   &23.52   &23.79       	&26.00     &27.19       &30.36  \\
          &                     & SSIM    &0.7748	  &0.6864    &\textbf{0.8916}  &0.7140   &0.6080        &0.7323    &0.8513      &0.8823 \\ \cline{2-11}

          &\multirow{2}{*}{0.5} & PSNR    &29.41      &30.63     &31.76  &30.25    &28.96      	&30.70     &31.39       &\textbf{34.00 } \\
          &                     & SSIM    &0.8994     &0.8395    &\textbf{0.9452}  &0.9123   &0.8207      	&0.8665    &0.9313      &0.9395 \\
          \hline
\multirow{6}{*}{\emph{Airplane}}
          &\multirow{2}{*}{0.1} & PSNR    &19.52	  &19.50     &22.35   &20.82   &19.91	    &20.69     &22.47       &\textbf{25.23}  \\
          &                     & SSIM    &0.5012	  &0.4082    &0.7149  &0.6018  &0.4301	    &0.4079    &0.7020      &\textbf{0.7897} \\
          \cline{2-11}
          &\multirow{2}{*}{0.3} & PSNR    &24.41	  &25.40     &26.94   &25.92   &25.89       &27.46     &28.36       &\textbf{31.42}  \\
          &                     & SSIM    &0.7786	  &0.7061    &0.8806  &0.8257  &0.7383      &0.7522    &0.8952      &\textbf{0.9277} \\
          \cline{2-11}
          &\multirow{2}{*}{0.5} & PSNR    &28.75      &31.68     &30.55   &30.60   &30.54      	&32.64     &33.18       &\textbf{36.78}  \\
          &                     & SSIM    &0.9028     &0.8786    &0.9436  &0.9301  &0.8857      &0.8951    &0.9601      &\textbf{0.9721} \\
          \hline
\multirow{6}{*}{\emph{Barbara}}
          &\multirow{2}{*}{0.1} & PSNR    &19.59	  &19.66     &22.32   &20.90   &18.98	    &20.52     &22.10       &\textbf{24.74}  \\
          &                     & SSIM    &0.4435	  &0.3999    &0.6381  &0.5318  &0.3572	    &0.4110    &0.6225      &\textbf{0.7224} \\
          \cline{2-11}
          &\multirow{2}{*}{0.3} & PSNR    &25.24	  &26.14     &27.23   &26.30   &25.65       &26.80     &28.04       &\textbf{31.09}  \\
          &                     & SSIM    &0.7643	  &0.7242    &0.8384  &0.8035  &0.7244      &0.7424    &0.8638      &\textbf{0.9063} \\
          \cline{2-11}
          &\multirow{2}{*}{0.5} & PSNR    &29.35      &31.30     &30.36   &30.78   &31.15      	&31.99     &33.08       &\textbf{36.60}  \\
          &                     & SSIM    &0.8924     &0.8834    &0.9134  &0.9224  &0.8980      &0.8979    &0.9530      &\textbf{0.9696} \\
          \hline
\multirow{6}{*}{\emph{Monarch}}
          &\multirow{2}{*}{0.1} & PSNR    &17.25	  &17.02     &18.59   &18.33   &17.18	    &17.37     &19.11       &\textbf{21.58}  \\
          &                     & SSIM    &0.4996	  &0.4144    &0.6806  &0.5902  &0.3458	    &0.3437    &0.6739      &\textbf{0.7729} \\
          \cline{2-11}
          &\multirow{2}{*}{0.3} & PSNR    &21.11	  &21.75     &23.40   &22.68   &22.47       &24.03     &24.87       &\textbf{28.90}  \\
          &                     & SSIM    &0.7666	  &0.7161    &0.8846  &0.8223  &0.6923      &0.7668    &0.8908      &\textbf{0.9369} \\
          \cline{2-11}
          &\multirow{2}{*}{0.5} & PSNR    &25.22      &27.90     &27.87   &27.81   &27.95      	&29.76     &30.42       &\textbf{34.81}  \\
          &                     & SSIM    &0.9001     &0.8890    &0.9548  &0.9338  &0.8784      &0.9072    &0.9634      &\textbf{0.9804} \\
          \hline \hline
\end{tabular}\label{table:image}
\end{center}
\end{table}

We terminate our algorithm when the following stopping condition holds:
\[
\frac{\|\mathcal{X}^{k+1}-\mathcal{X}^{k}\|_{F}}{\|\mathcal{X}^{k}\|_{F}}\leq 10^{-4}.
\]
Also, we set the maximum iterations as 500. All parameters corresponding to compared methods are carefully tuned according to the reference papers' suggestion. In the experiments, SiLRTC-TT, TRNN, and LogTR use VDT to transform a low-order tensor to a high-order one, while HaLRTC, NSNN, LRTC-TV, tSVD, and KBR are performed directly on the original data.

\begin{figure}[!t]
\scriptsize\setlength{\tabcolsep}{0.5pt}
\begin{center}
\begin{tabular}{ccc}
\includegraphics[width=0.95\textwidth]{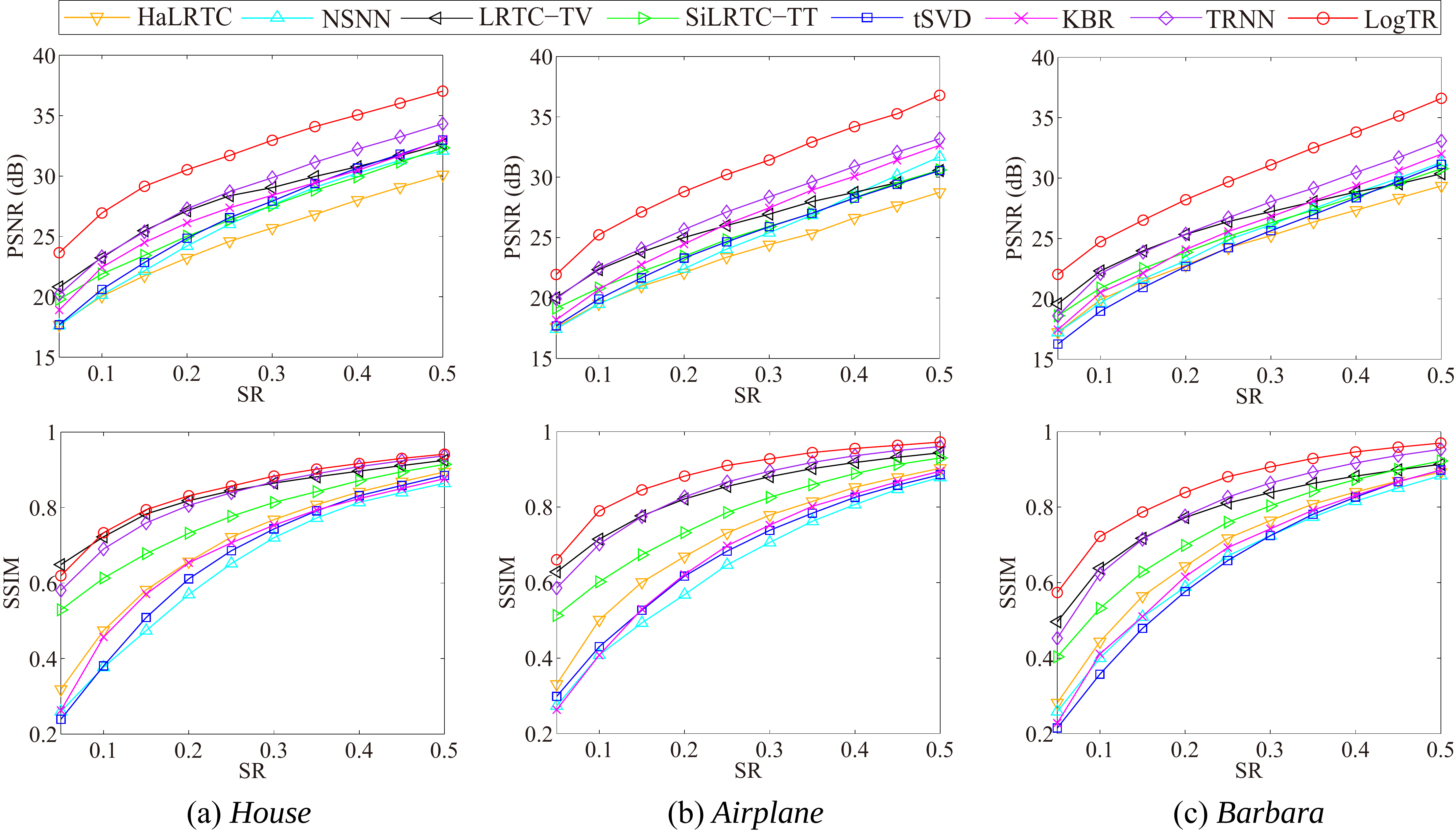}
\end{tabular}
\caption{PSNR and SSIM values of different methods on color images completion with different SRs.}
  \label{fig:image_ps_ss}
  \end{center}\vspace{-0.3cm}
\end{figure}

%
\subsection{Color images}
In this subsection, we test the proposed method using five color images of size $256\times 256 \times 3$. We test two sampling cases, including random missing entries and structural missing entries. In general, the later is more challenging than the former. We first use VDT to transform a third-order color image into a ninth-order tensor, whose size is $4 \times 4 \times 4 \times 4 \times 4 \times 4  \times 4 \times 4 \times 3$.

\textbf{Random missing.} We randomly sample the incomplete images using sampling rates (SRs) from 0.05 to 0.5. In Figure \ref{fig:image_random}, we show the performance of all methods under random missing case with $SR=0.3$. The results obtained by both HaLRTC and NSNN have undesirable artifacts. Although achieving better results than HaLRTC and NSNN, the results by LRTC-TV are over-smooth and lose many details. SiLRTC-TT and TRNN create some block-artifacts, and tSVD and KBR exhibit many artifacts, such as the eaves area of \emph{House} and the tail of \emph{Airplane}. While LogTR performs better than the compared methods in keeping the smoothness of backgrounds and clear structures; please see the zoom-in regions of recovered images.

Table \ref{table:image} lists the recovered PSNR and SSIM values by different methods. We label the best values for each quality index in bold. Figure \ref{fig:image_ps_ss} shows the PSNR and SSIM curves under different SRs. We can see that the proposed method achieves higher PSNR and SSIM values than all compared methods in most cases.

\begin{figure}[!th]
\scriptsize\setlength{\tabcolsep}{0.5pt}
\begin{center}
\begin{tabular}{ccccc}
\includegraphics[width=0.19\textwidth]{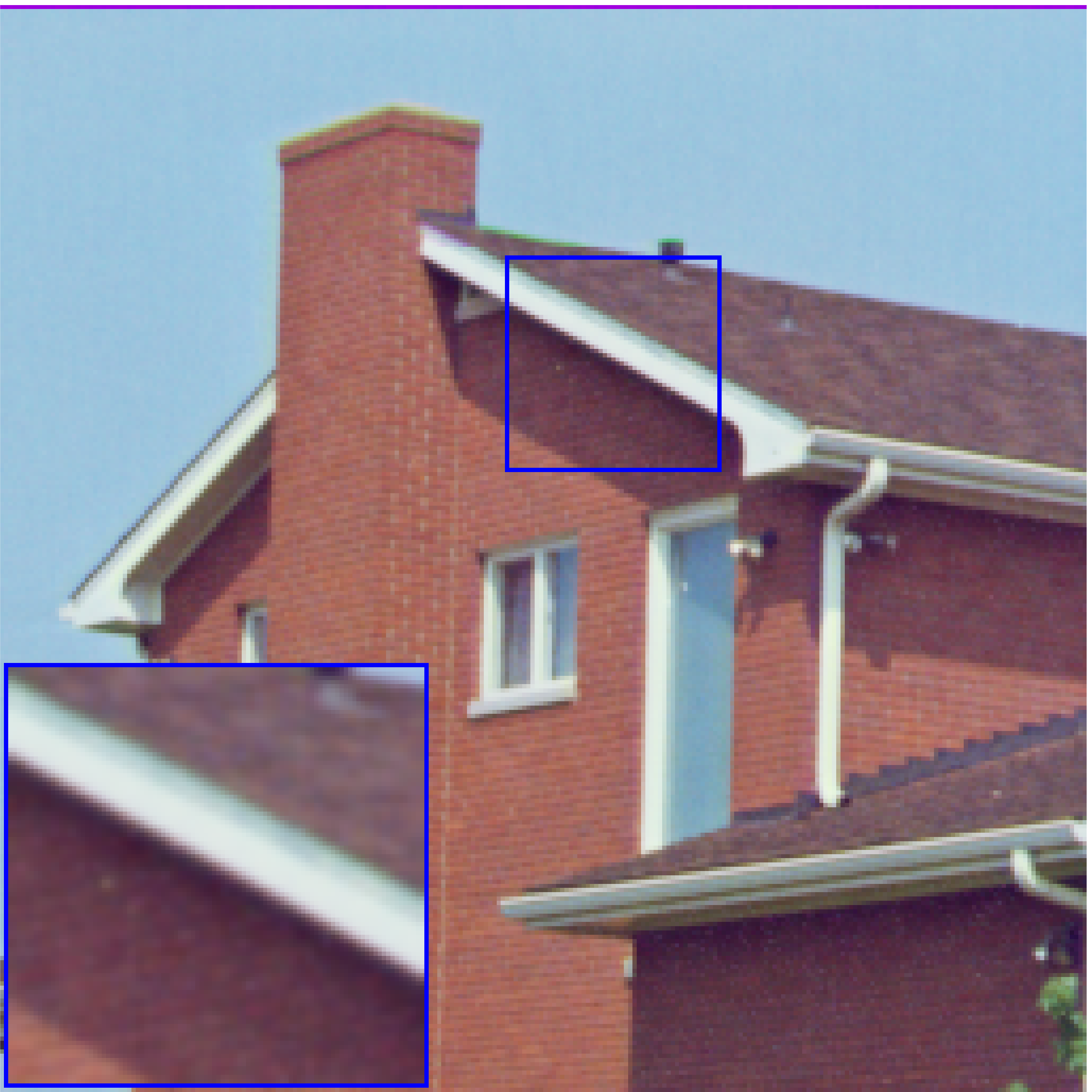}&
\includegraphics[width=0.19\textwidth]{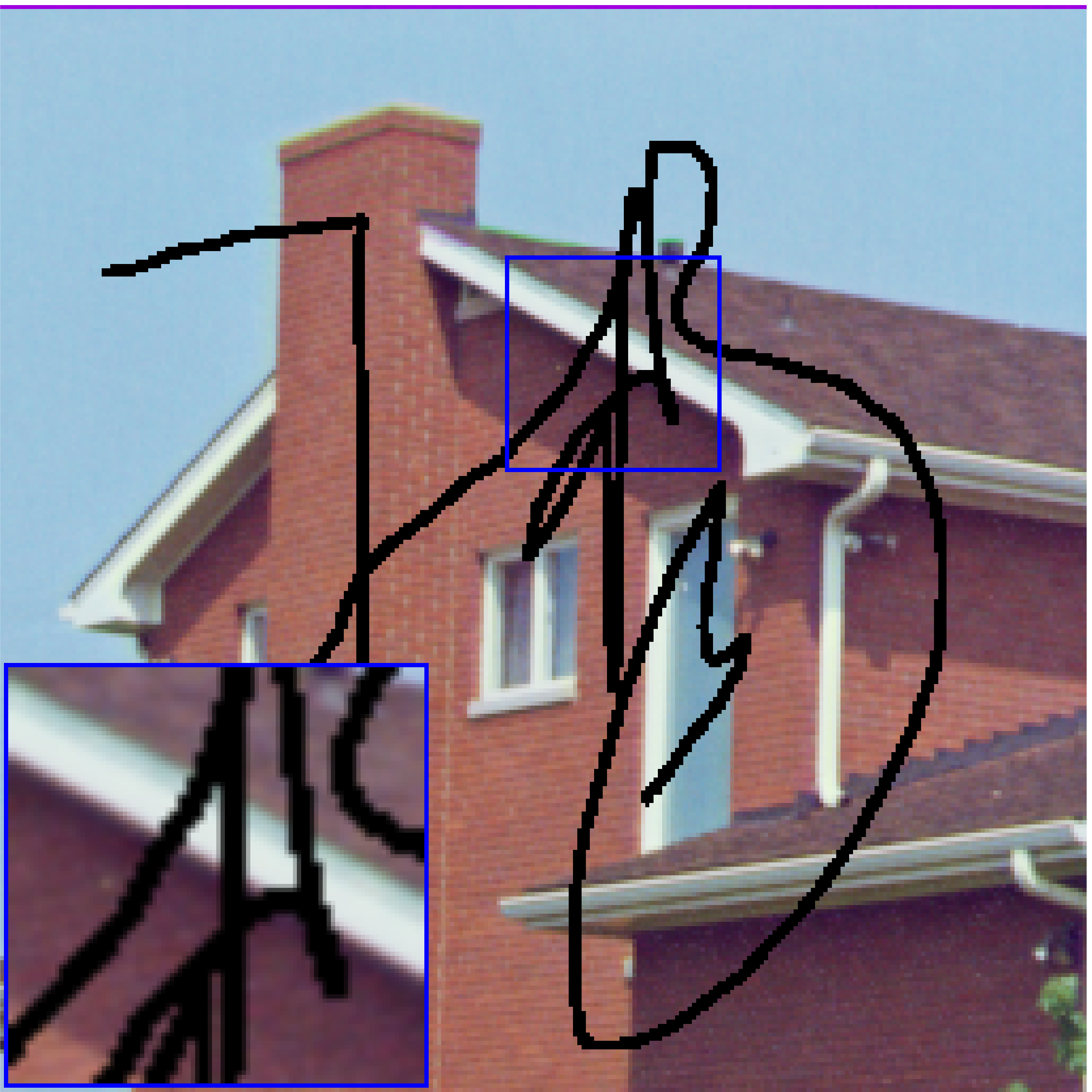}&
\includegraphics[width=0.19\textwidth]{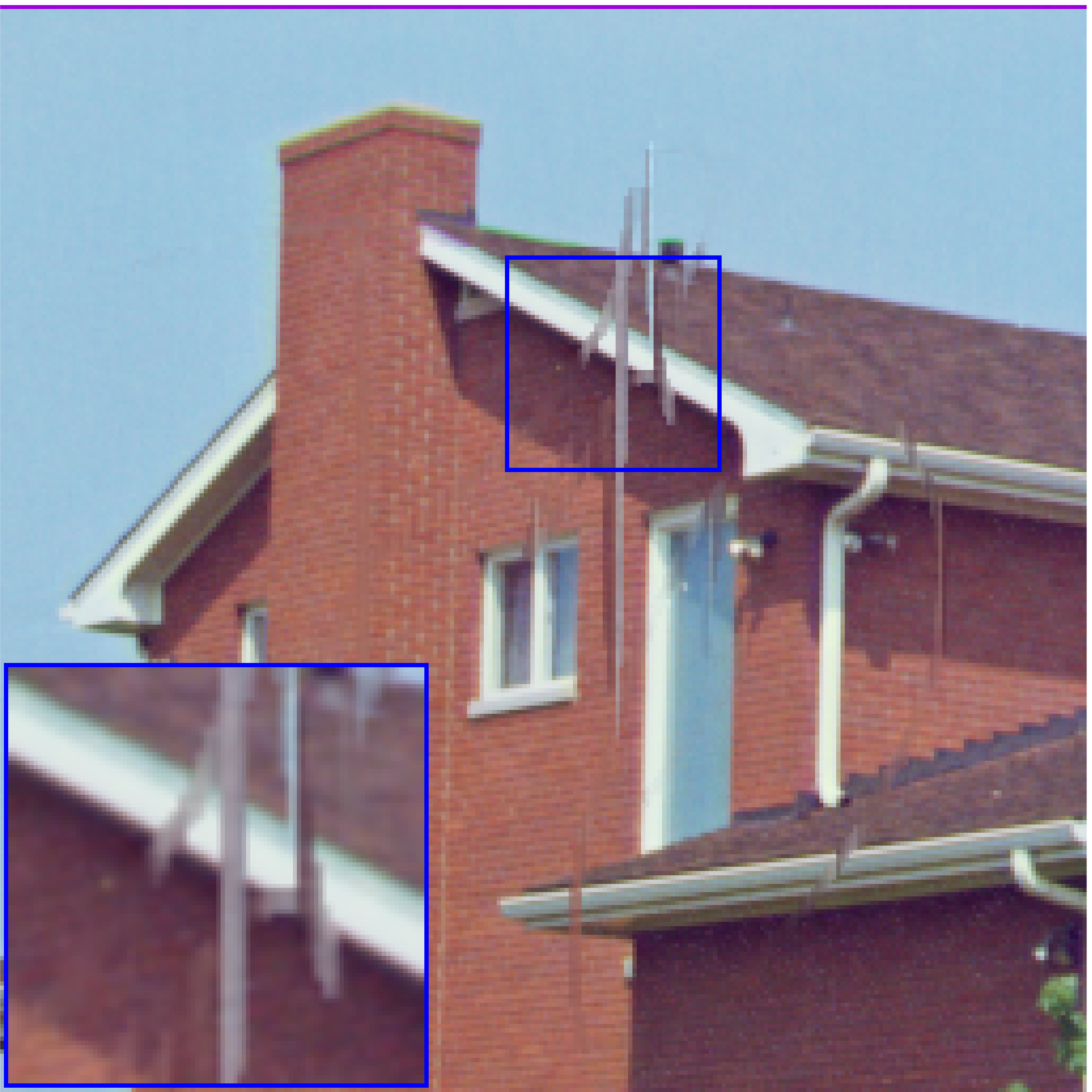}&
\includegraphics[width=0.19\textwidth]{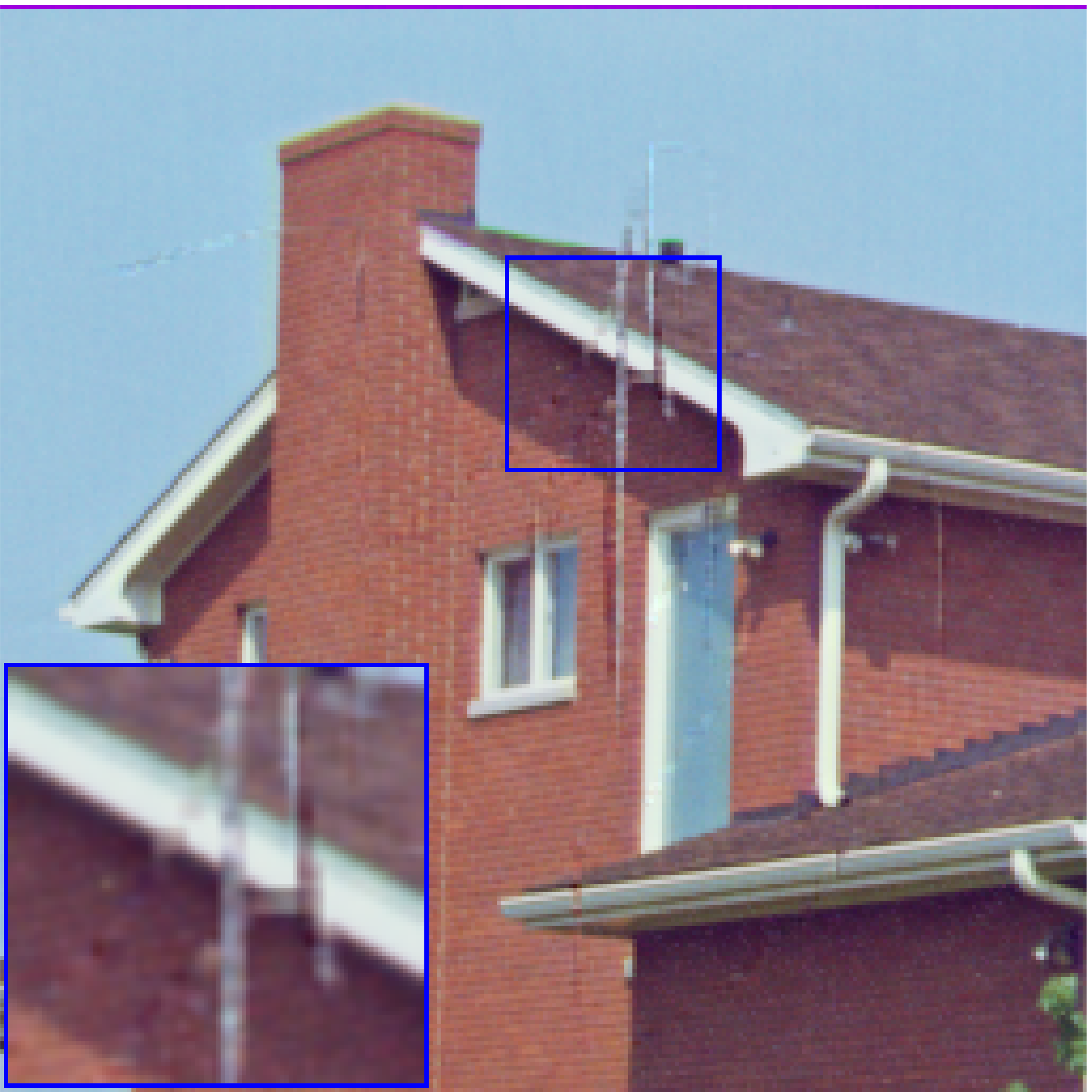}&
\includegraphics[width=0.19\textwidth]{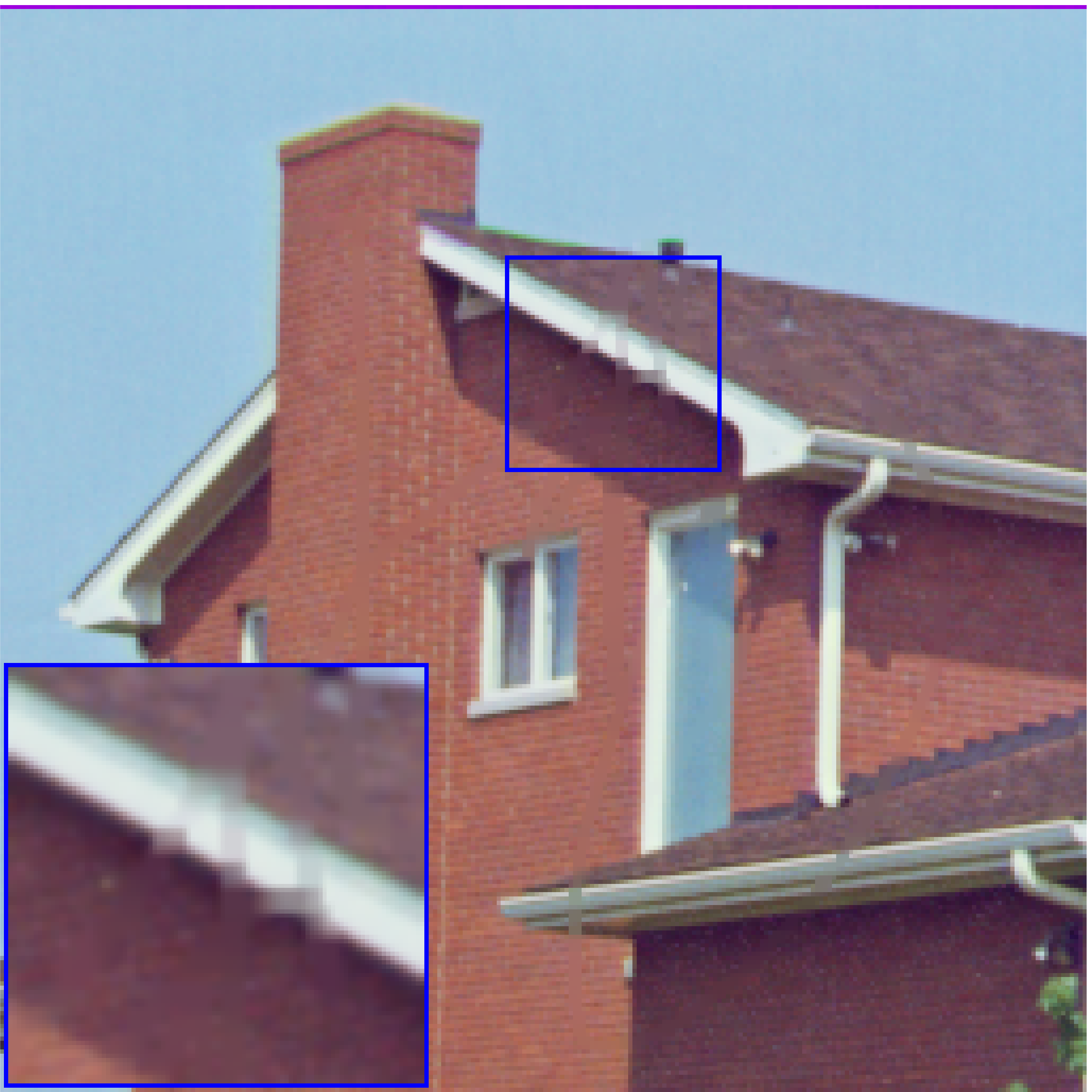}\vspace{0.01cm}\\
(a) (PSNR, SSIM) & (b) Observed & (c) (33.20, 0.9643) & (d) (34.85, 0.9644) & (e) (40.94, 0.9857)\\
\includegraphics[width=0.19\textwidth]{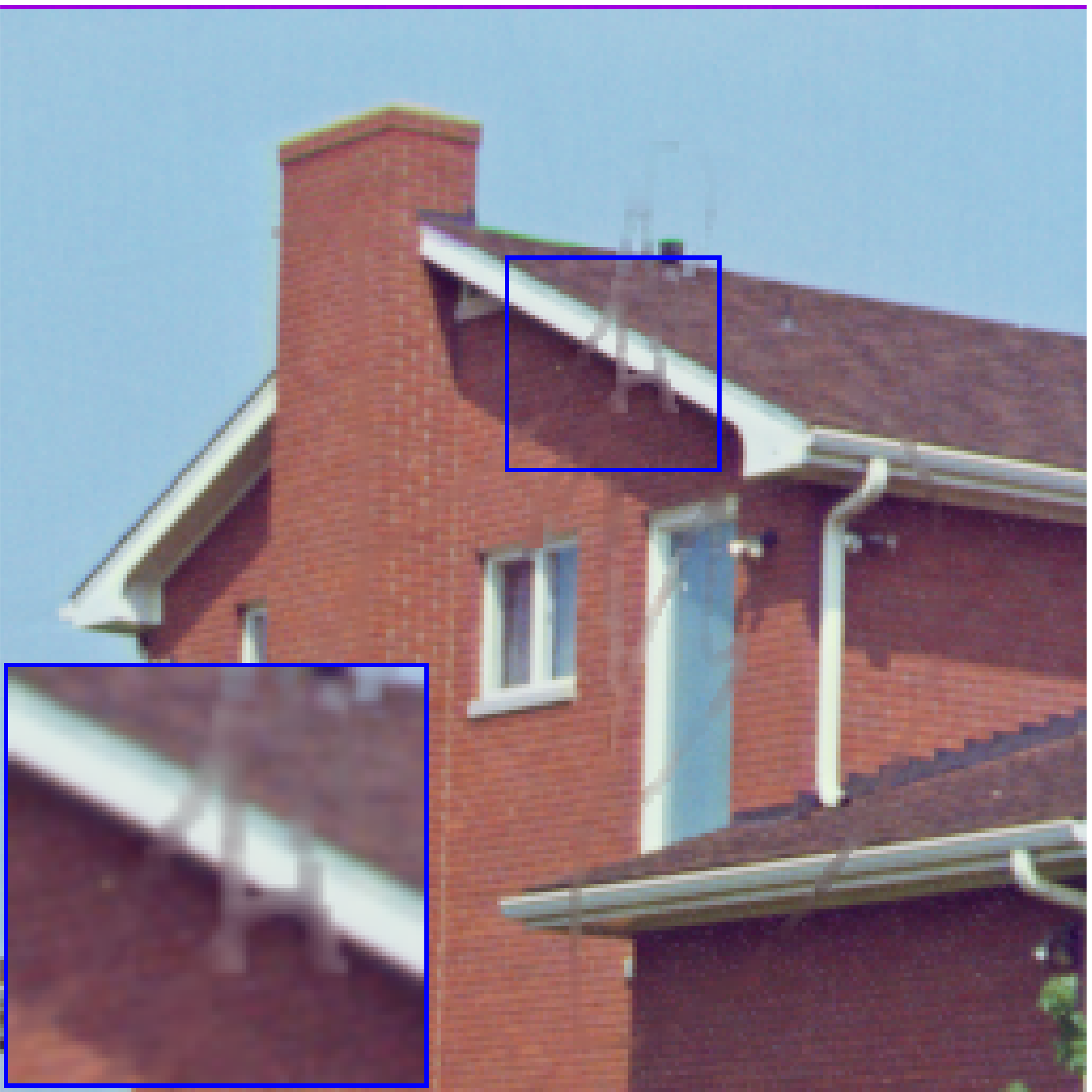}&
\includegraphics[width=0.19\textwidth]{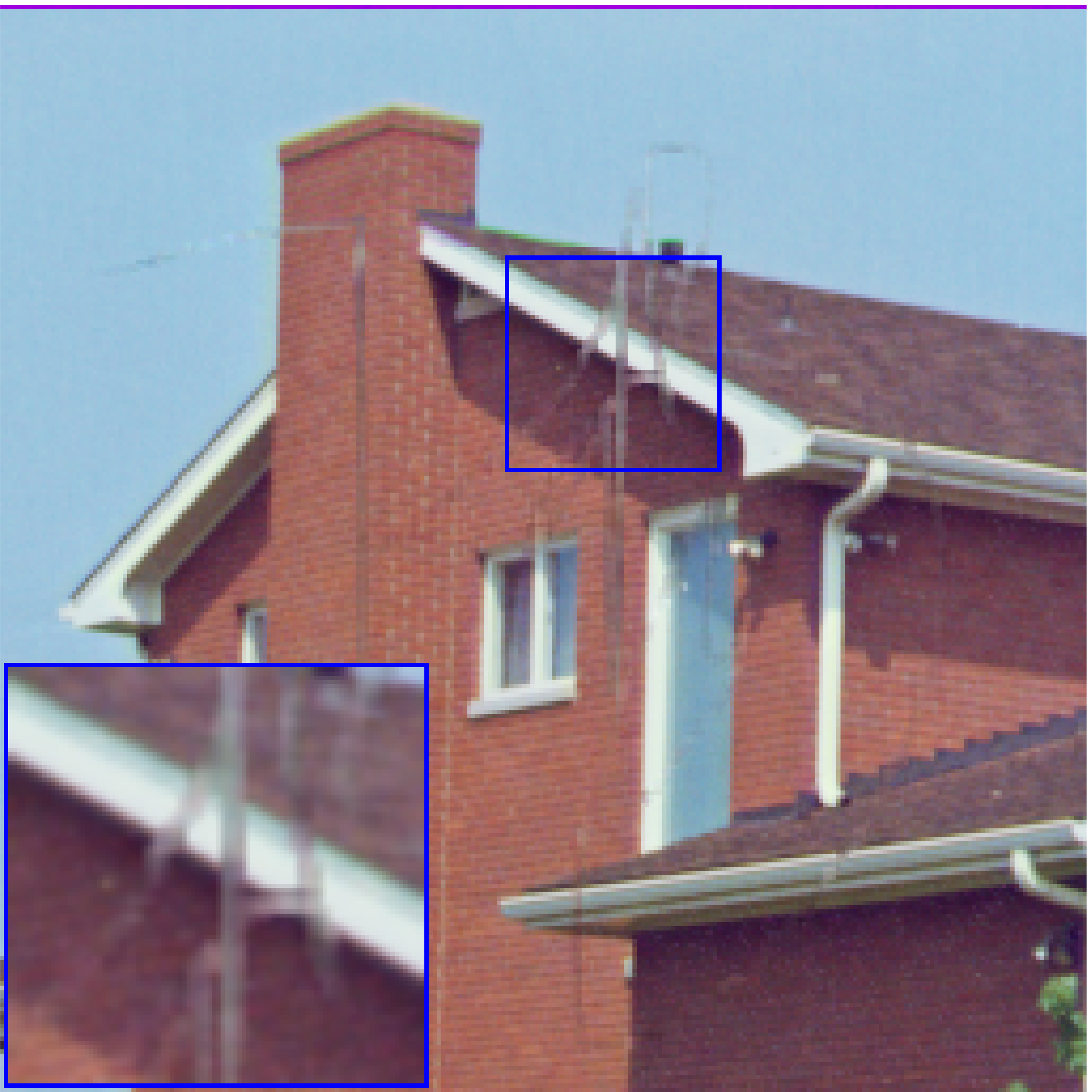}&
\includegraphics[width=0.19\textwidth]{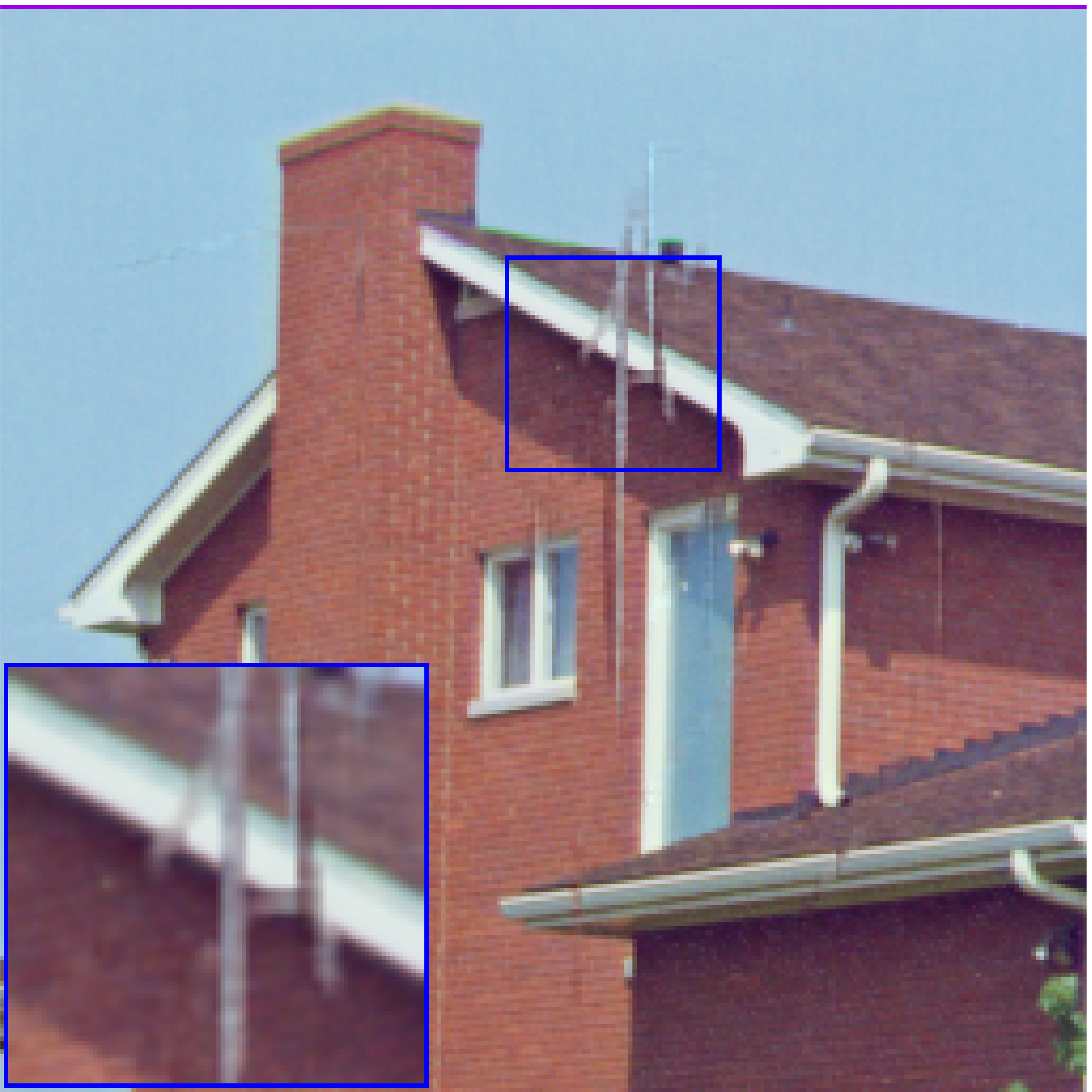}&
\includegraphics[width=0.19\textwidth]{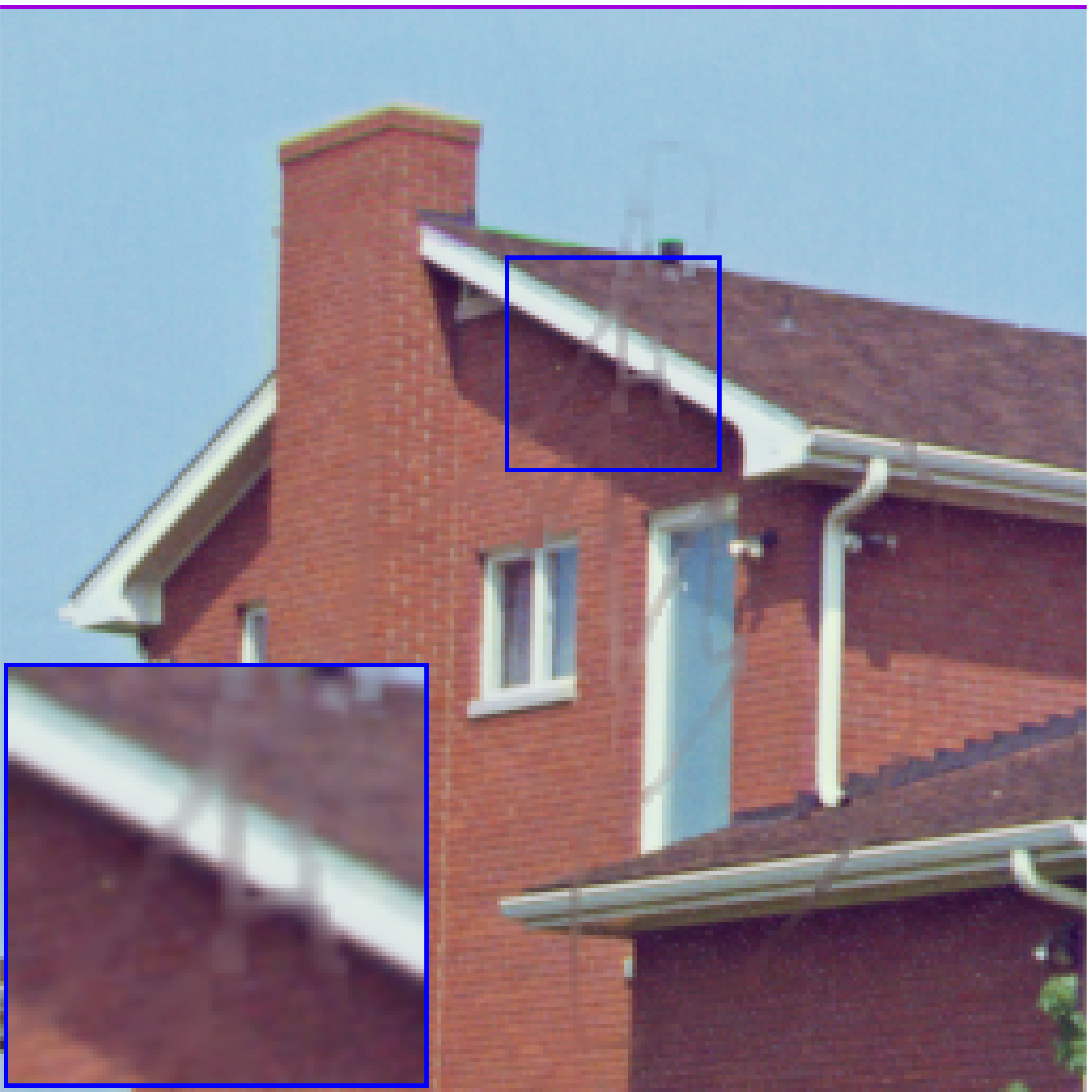}&
\includegraphics[width=0.19\textwidth]{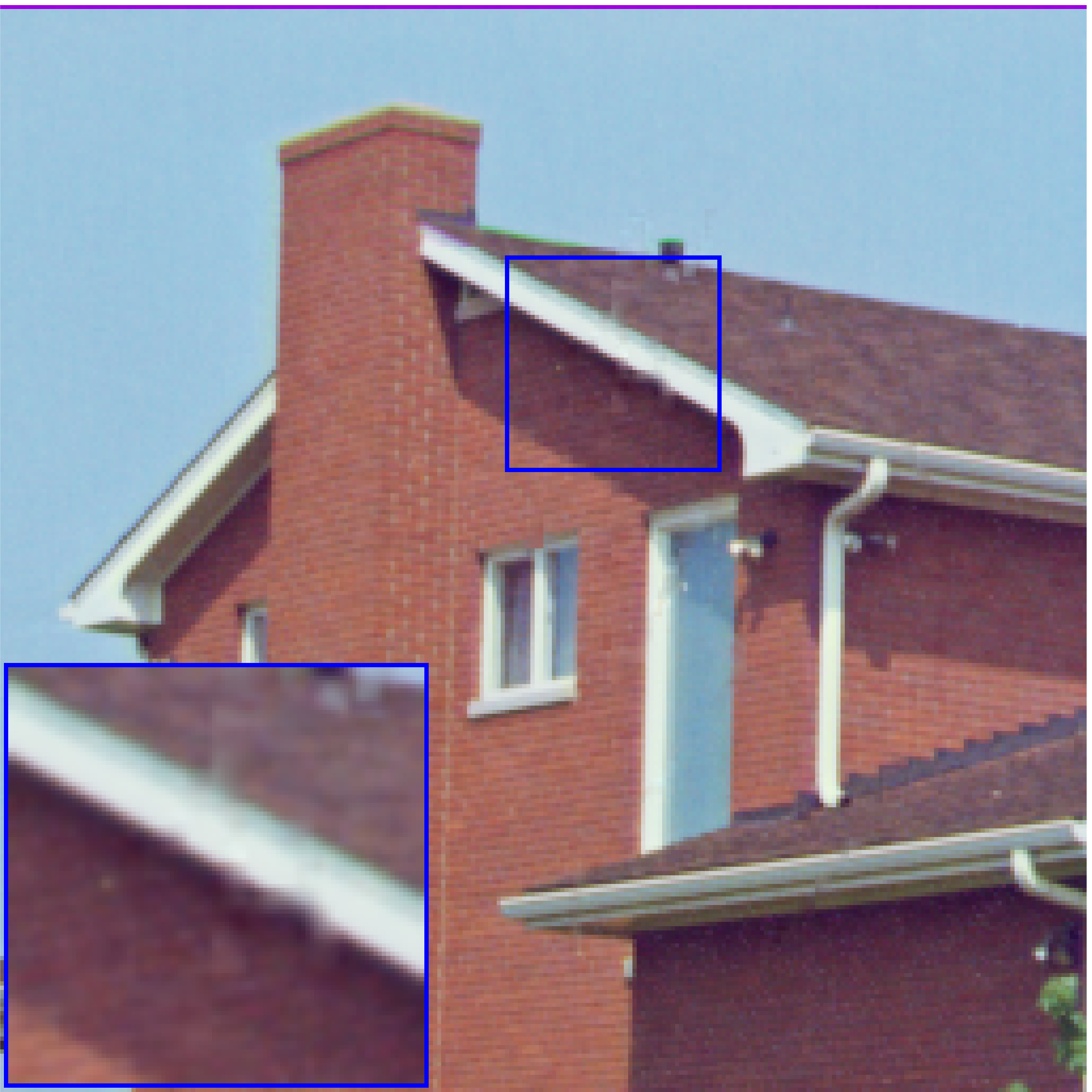}\vspace{0.01cm}\\
(f) (37.60, 0.9786) & (g) (36.14, 0.9681) & (h) (34.61, 0.9677) & (i) (38.71, 0.9813) & (j) (\textbf{42.33}, \textbf{0.9863})\\
\includegraphics[width=0.19\textwidth]{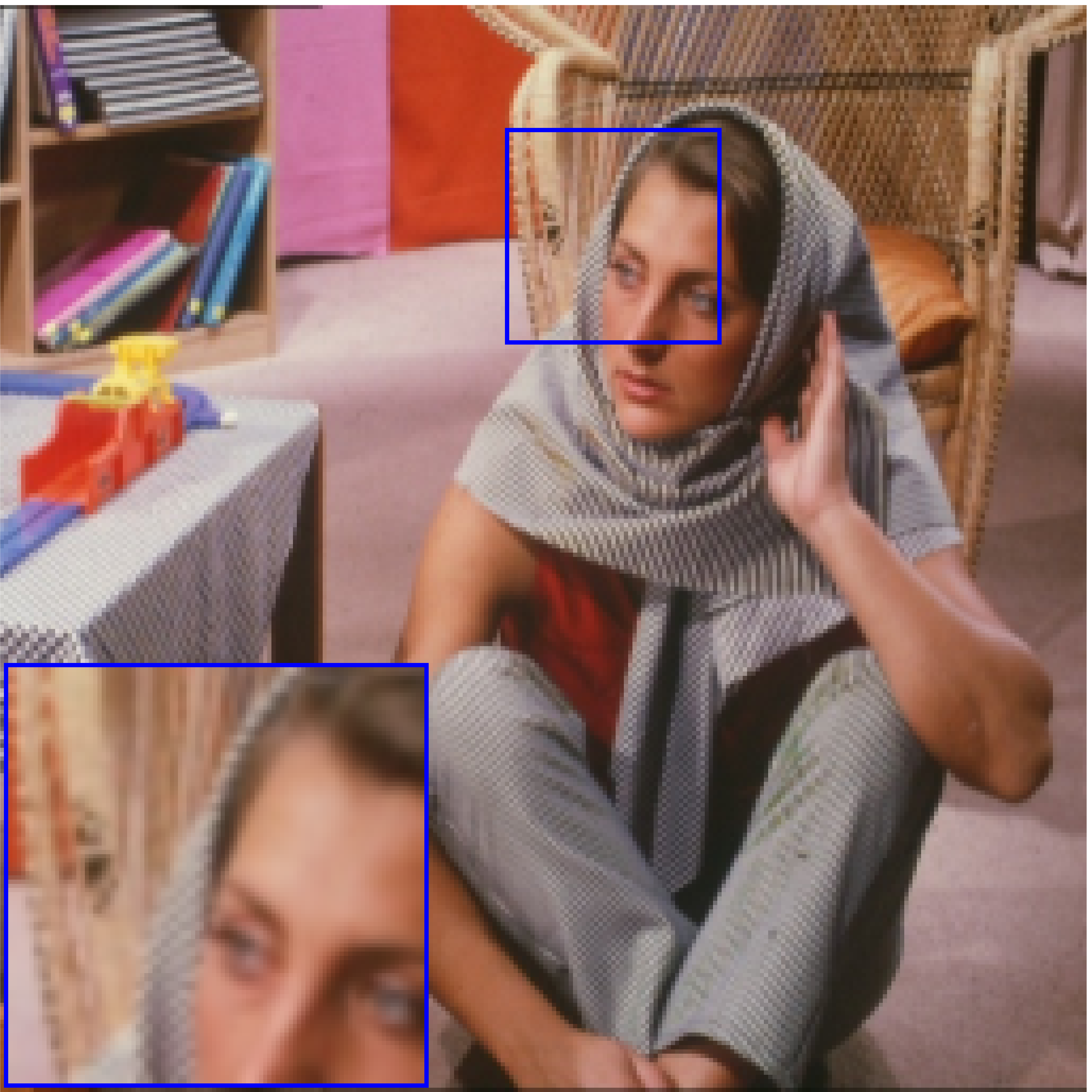}&
\includegraphics[width=0.19\textwidth]{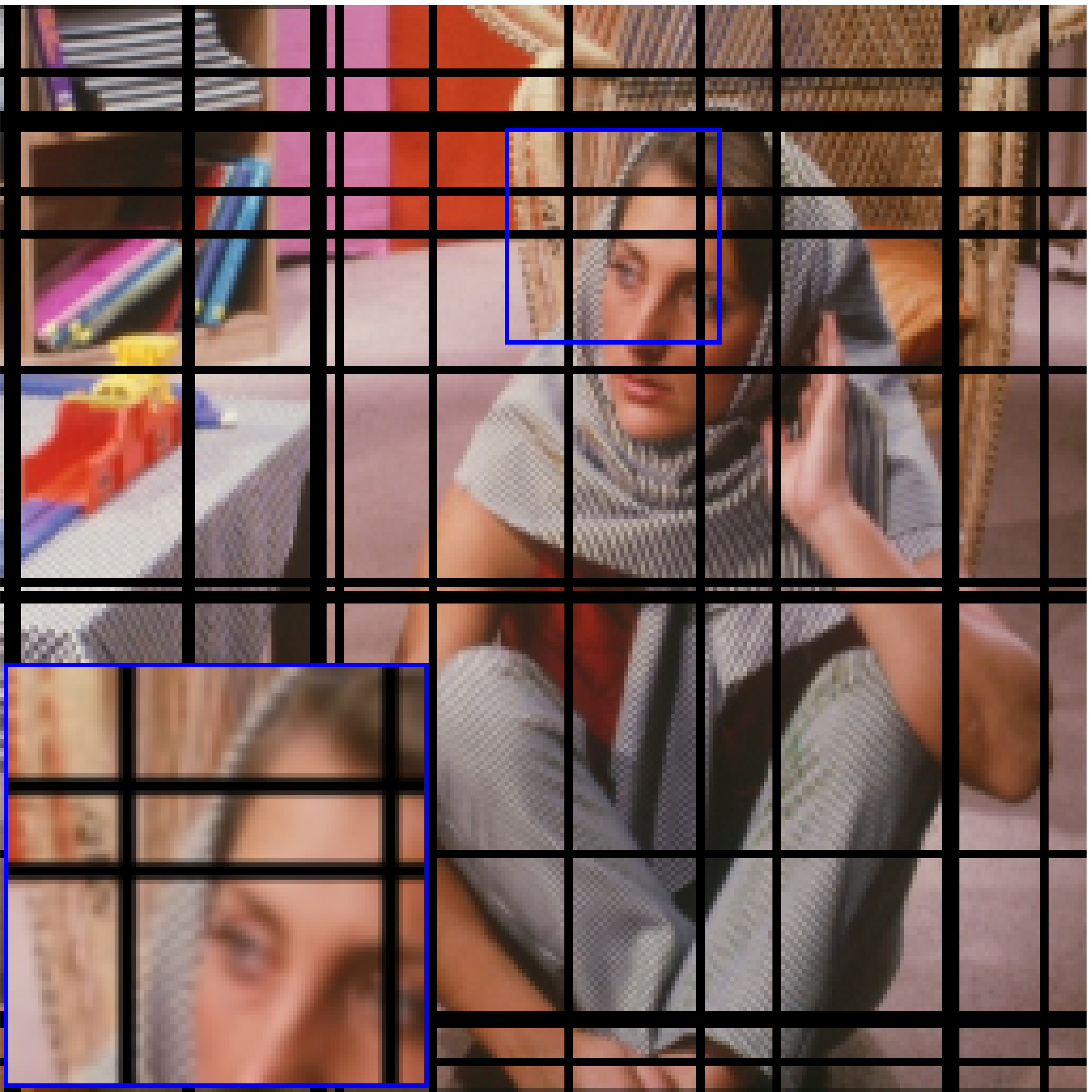}&
\includegraphics[width=0.19\textwidth]{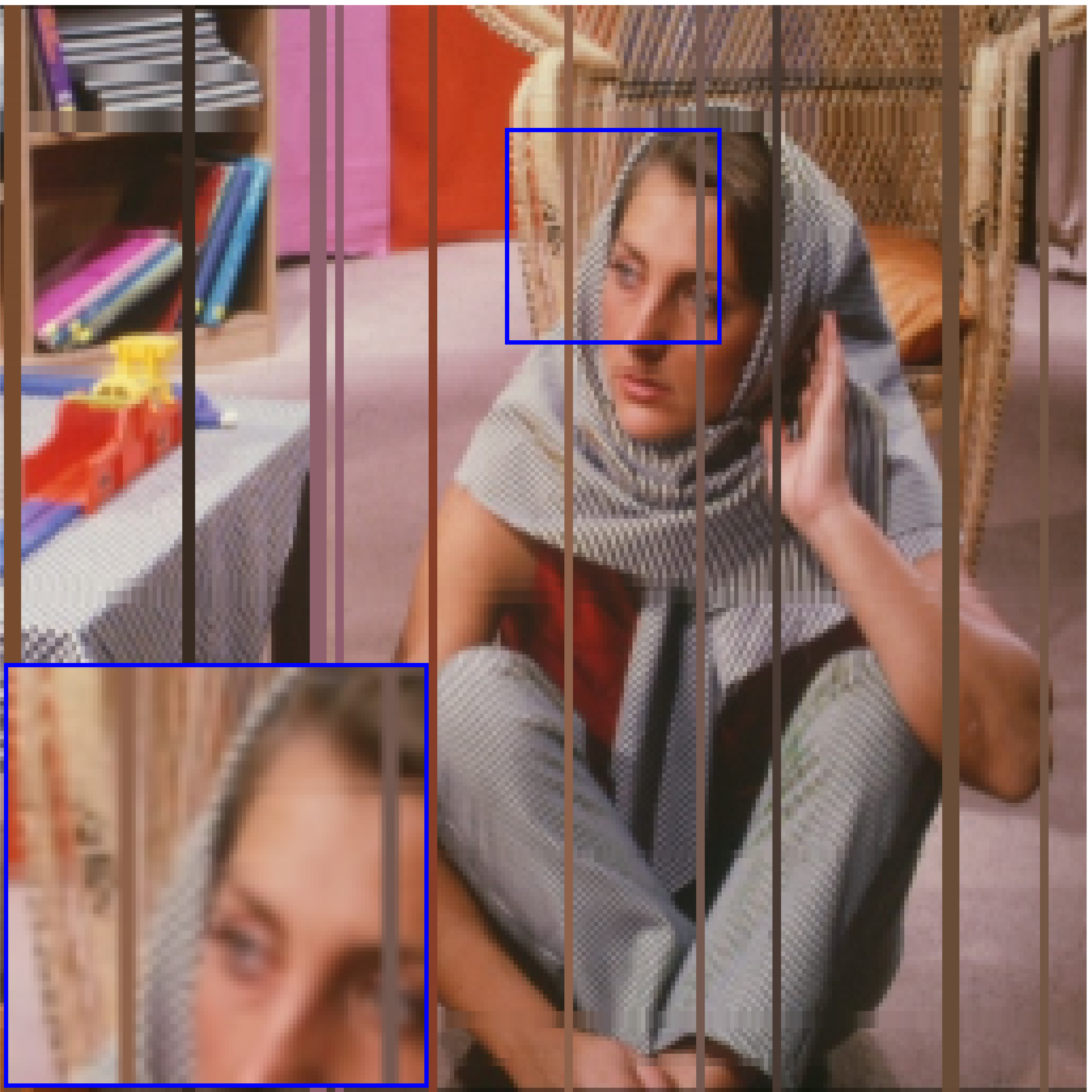}&
\includegraphics[width=0.19\textwidth]{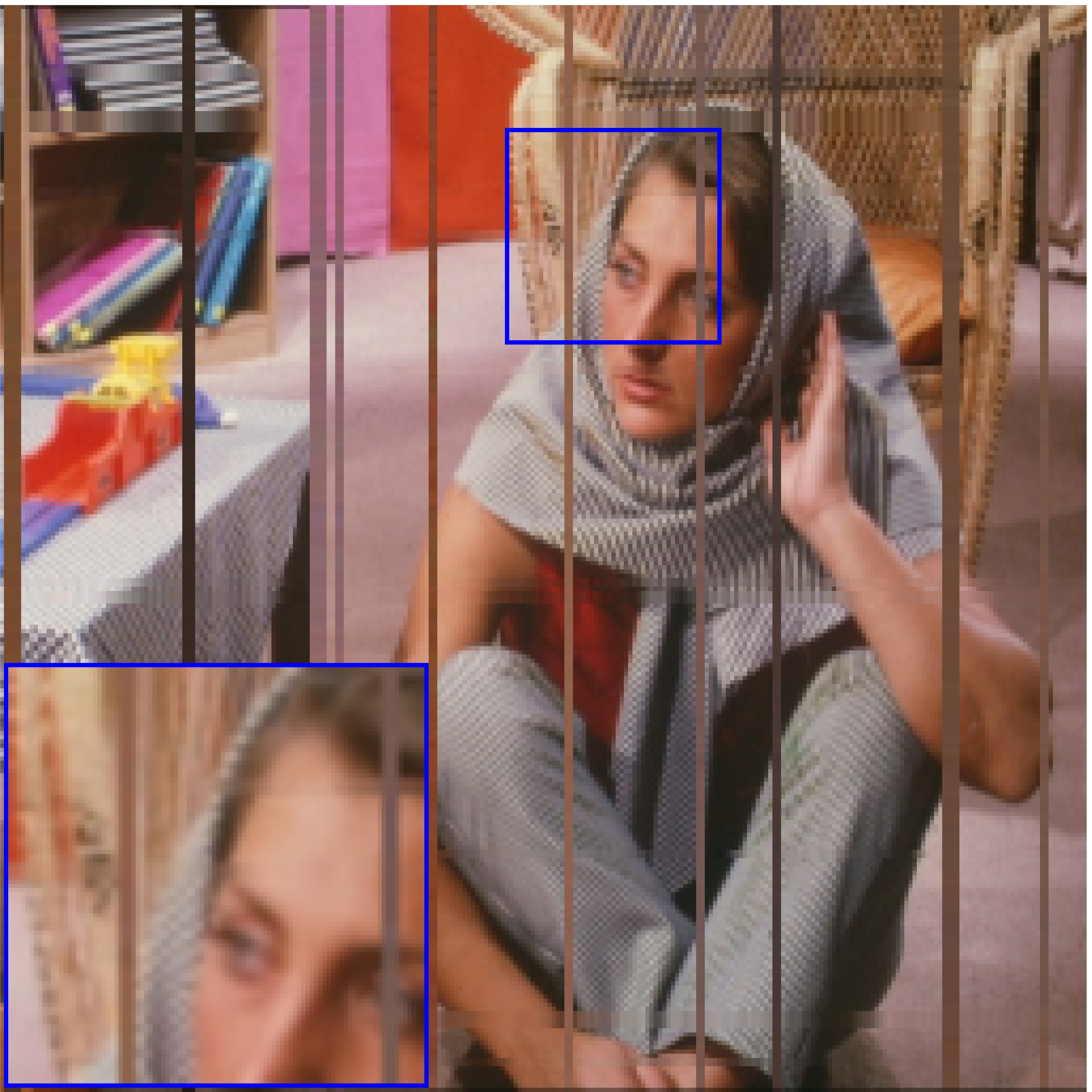}&
\includegraphics[width=0.19\textwidth]{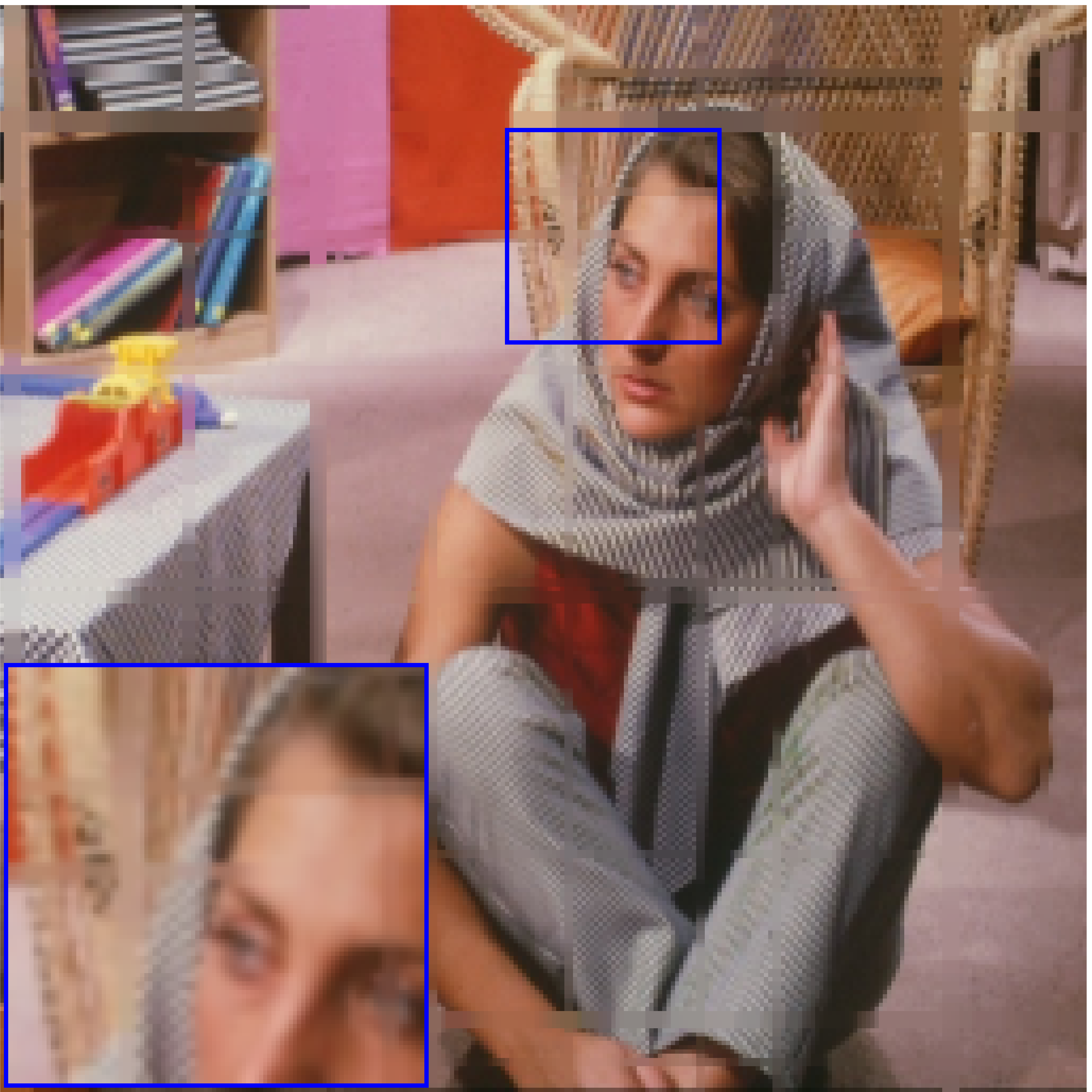}\vspace{0.01cm}\\
(a) (PSNR, SSIM) & (b) Observed & (c) (22.47, 0.8193) & (d) (22.49, 0.8160) & (e) (28.91, 0.9342)\\
\includegraphics[width=0.19\textwidth]{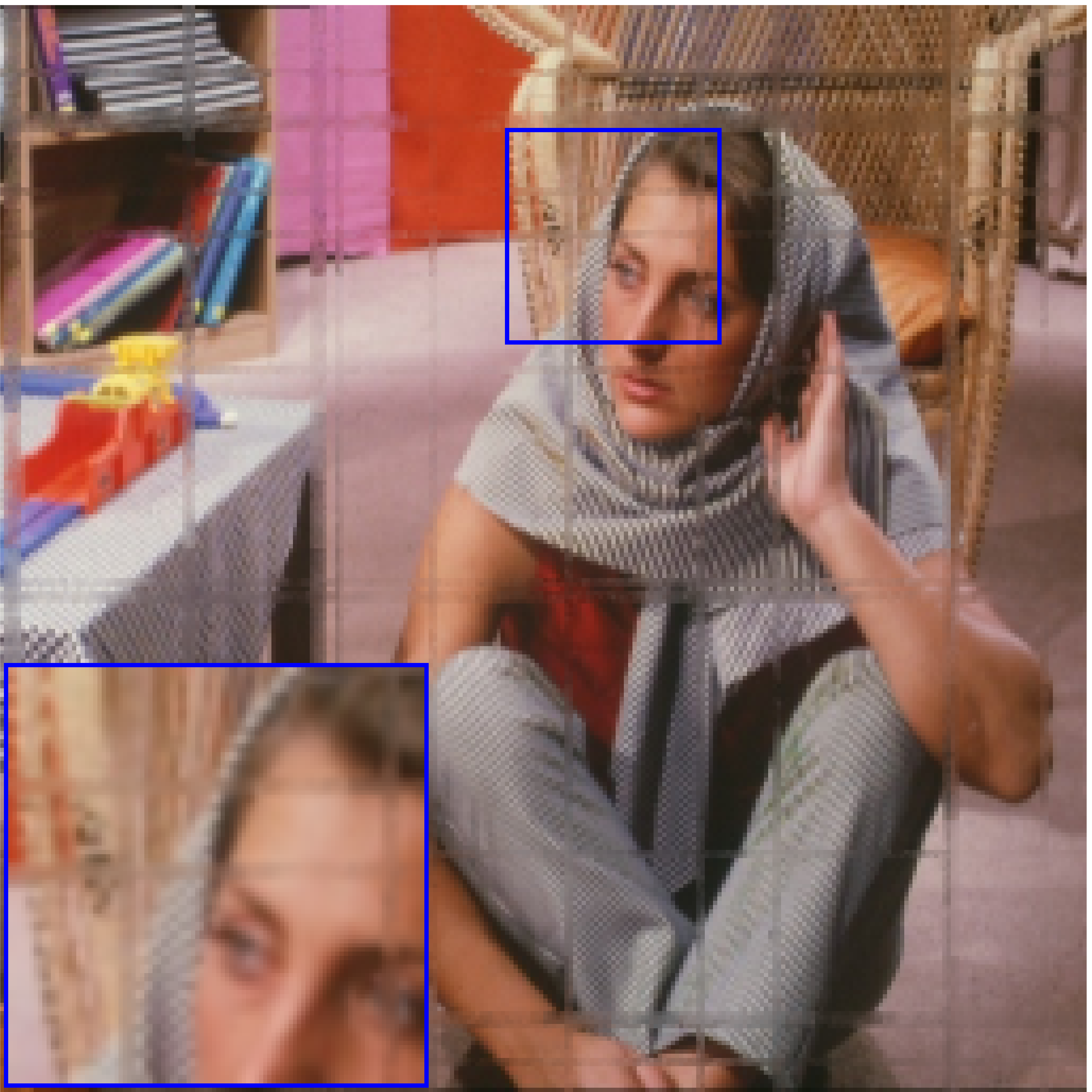}&
\includegraphics[width=0.19\textwidth]{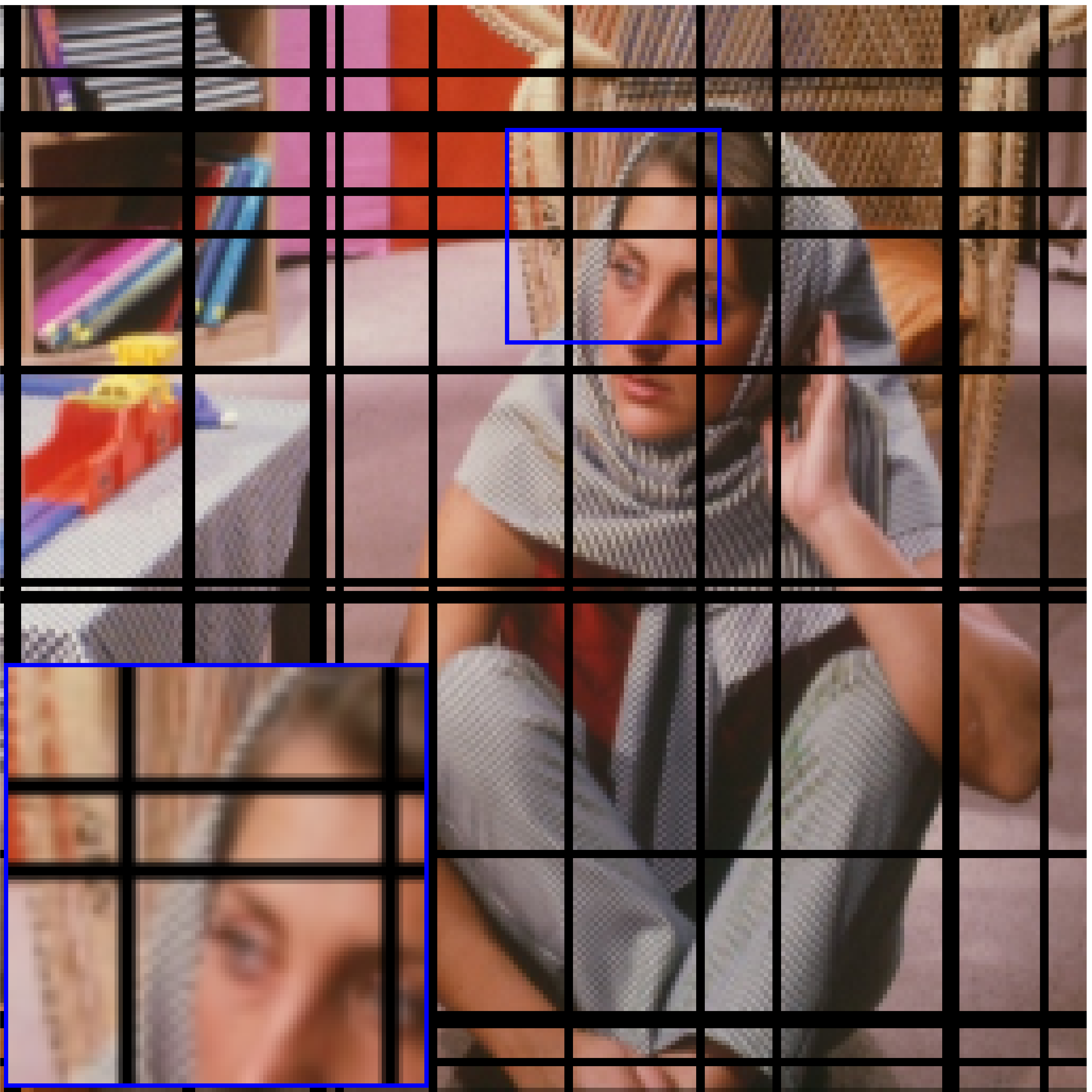}&
\includegraphics[width=0.19\textwidth]{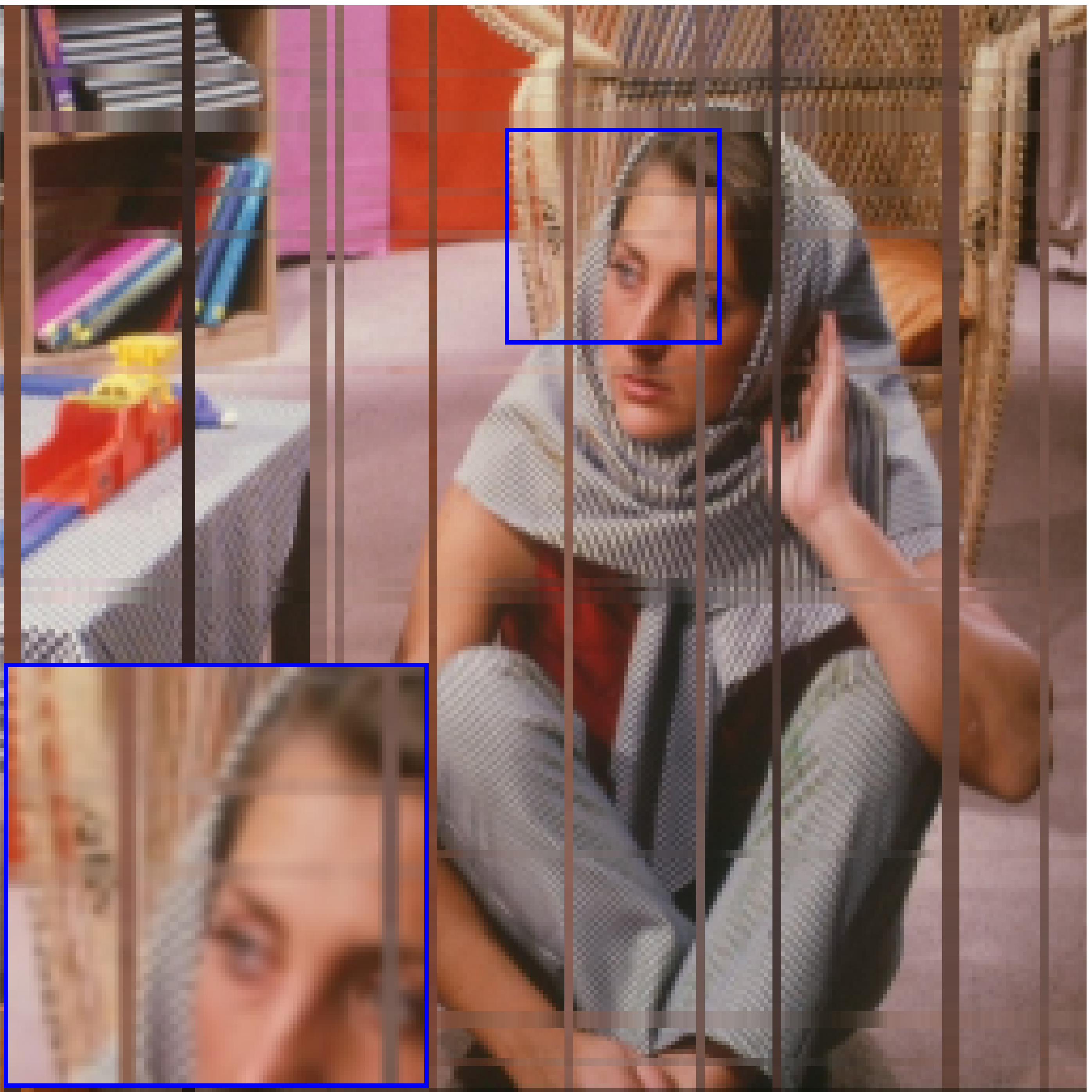}&
\includegraphics[width=0.19\textwidth]{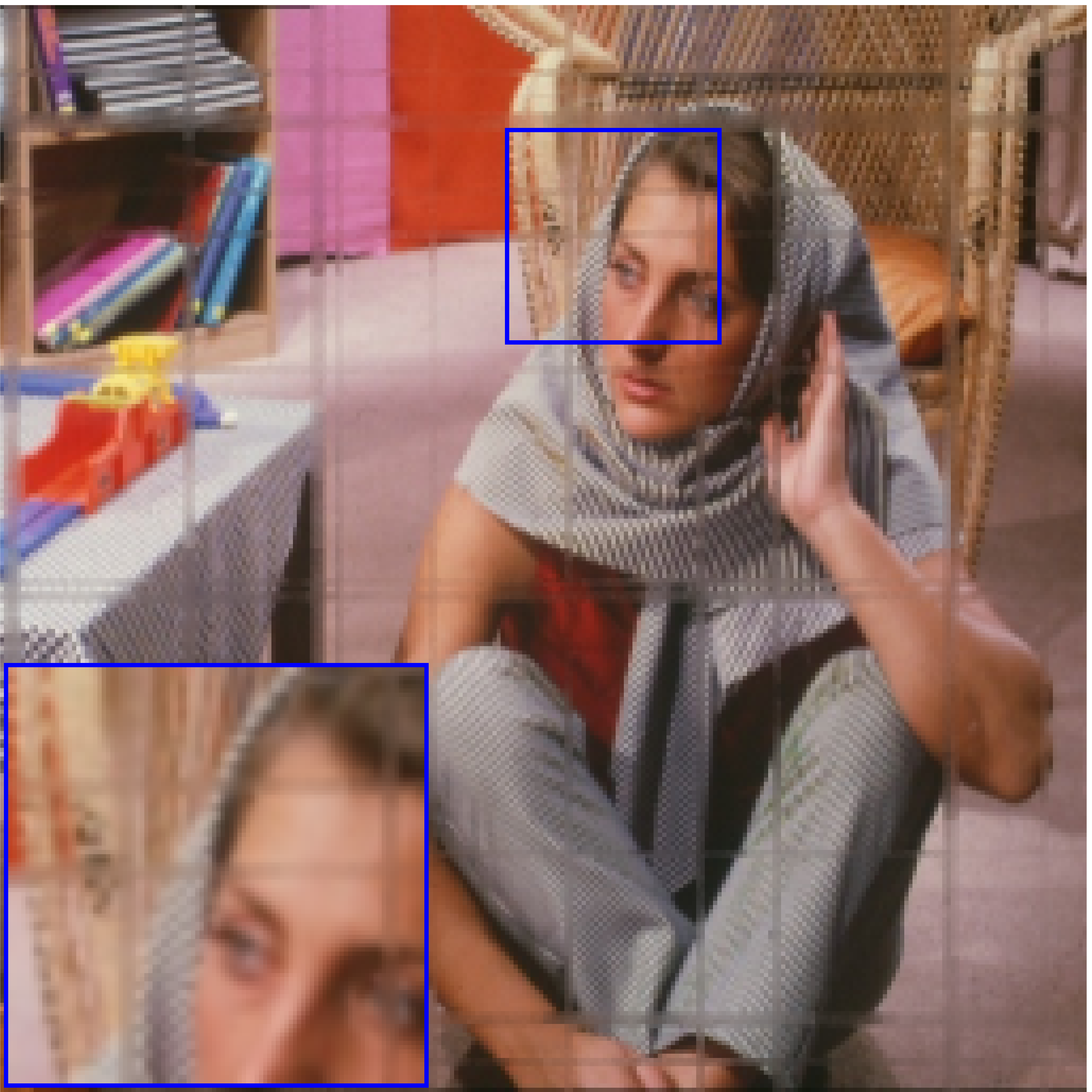}&
\includegraphics[width=0.19\textwidth]{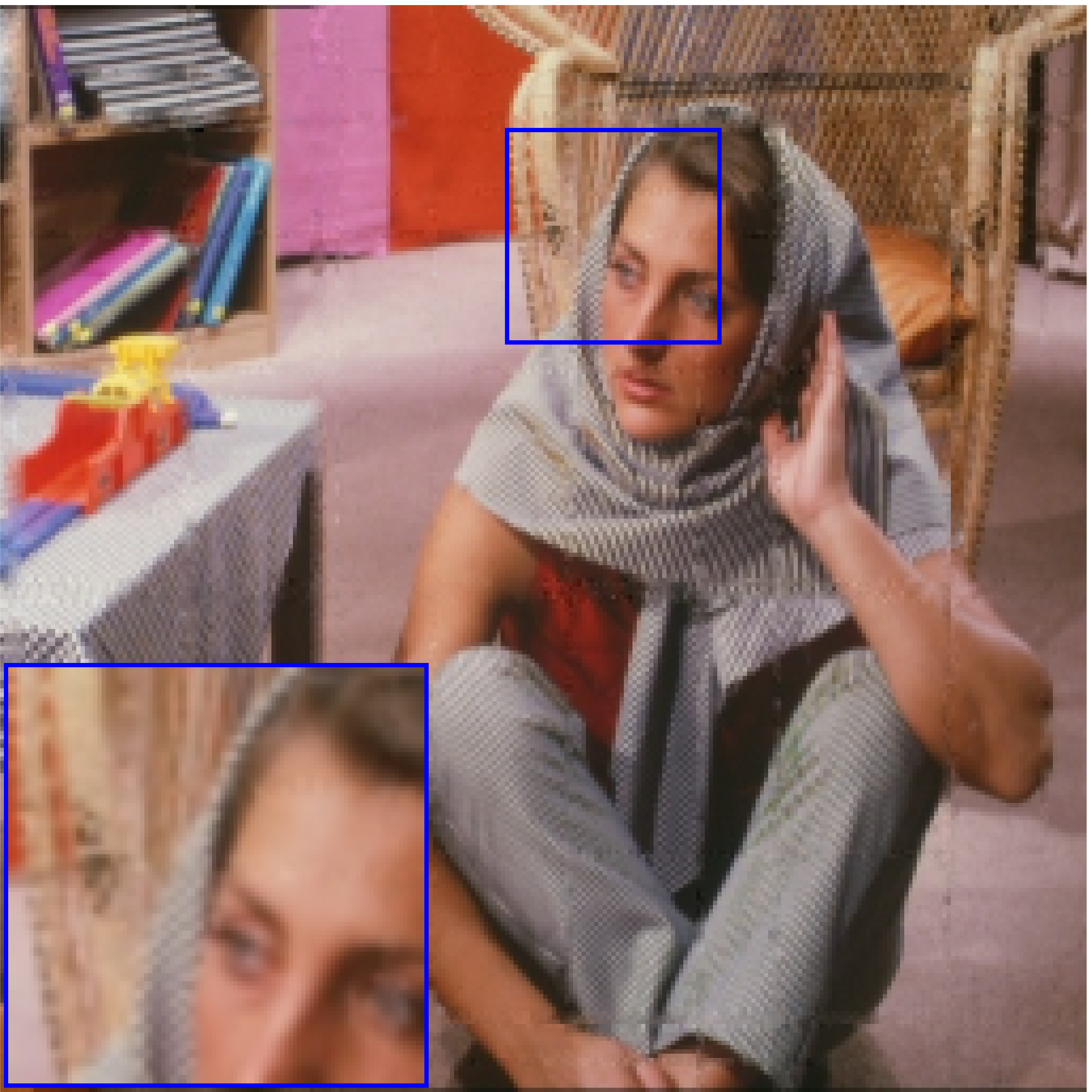}\vspace{0.01cm}\\
(f) (28.89, 0.9141) & (g) (13.64, 0.5462) & (h) (22.46, 0.8080) & (i) (29.52, 0.9228) & (j) (\textbf{30.98}, \textbf{0.9432})\\
\includegraphics[width=0.19\textwidth]{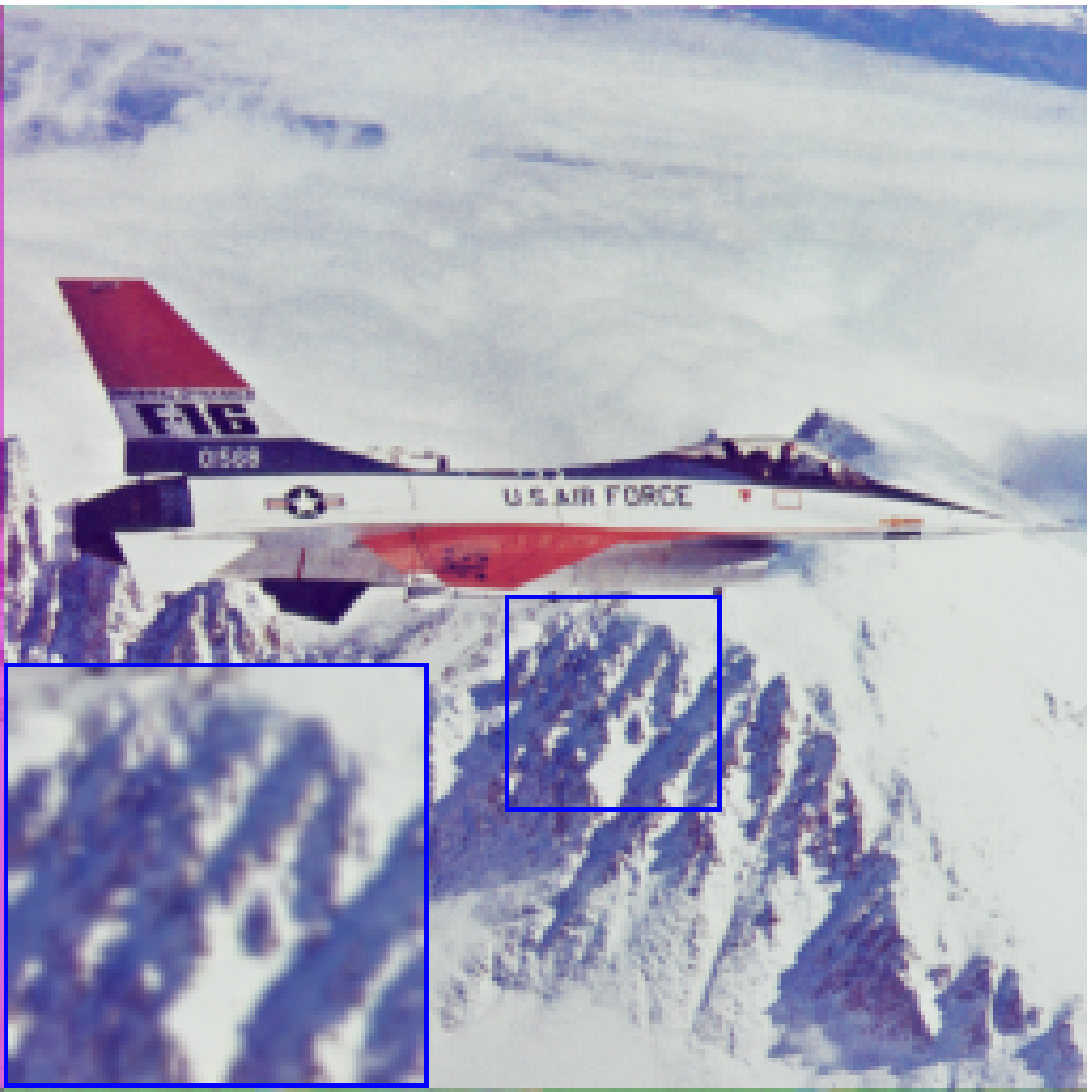}&
\includegraphics[width=0.19\textwidth]{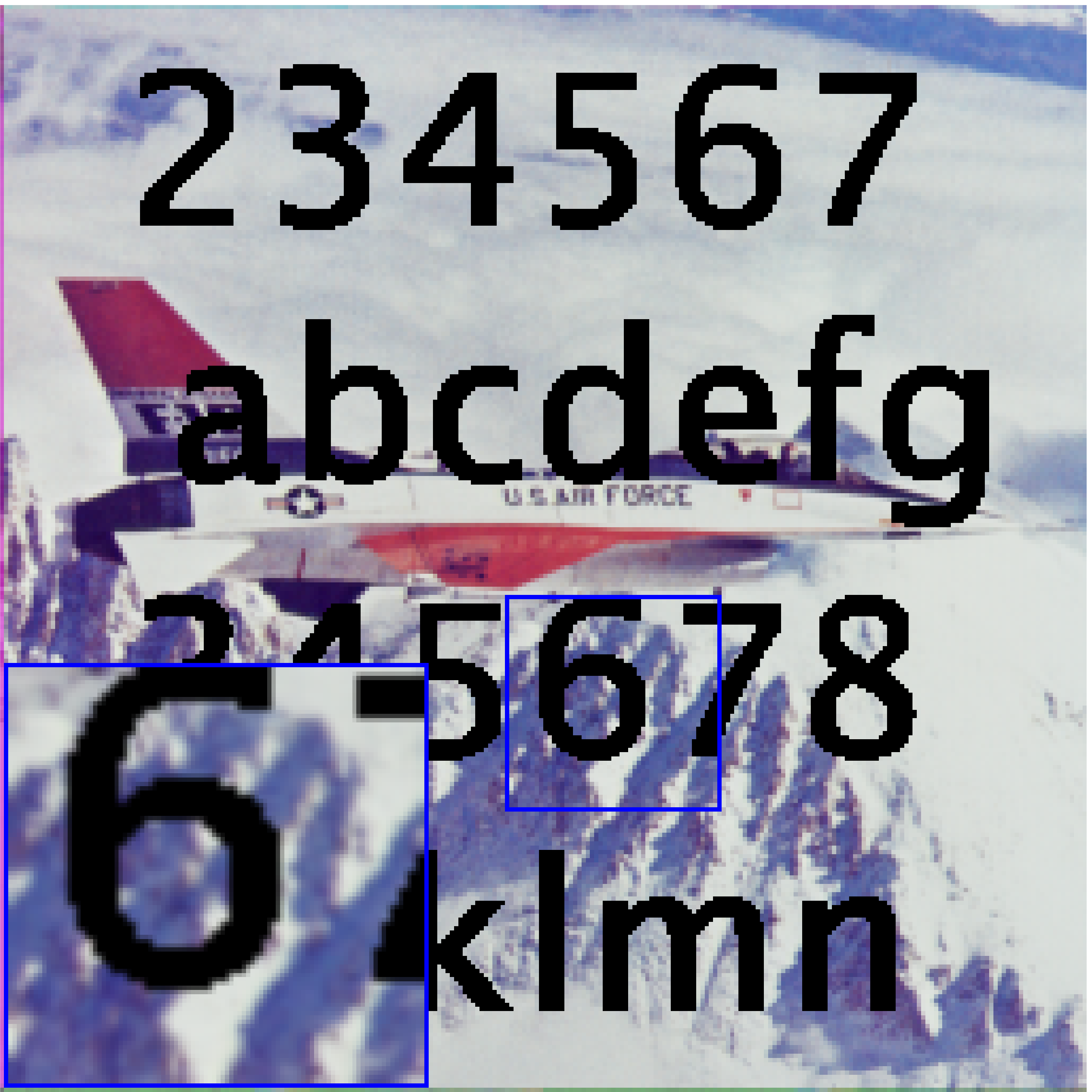}&
\includegraphics[width=0.19\textwidth]{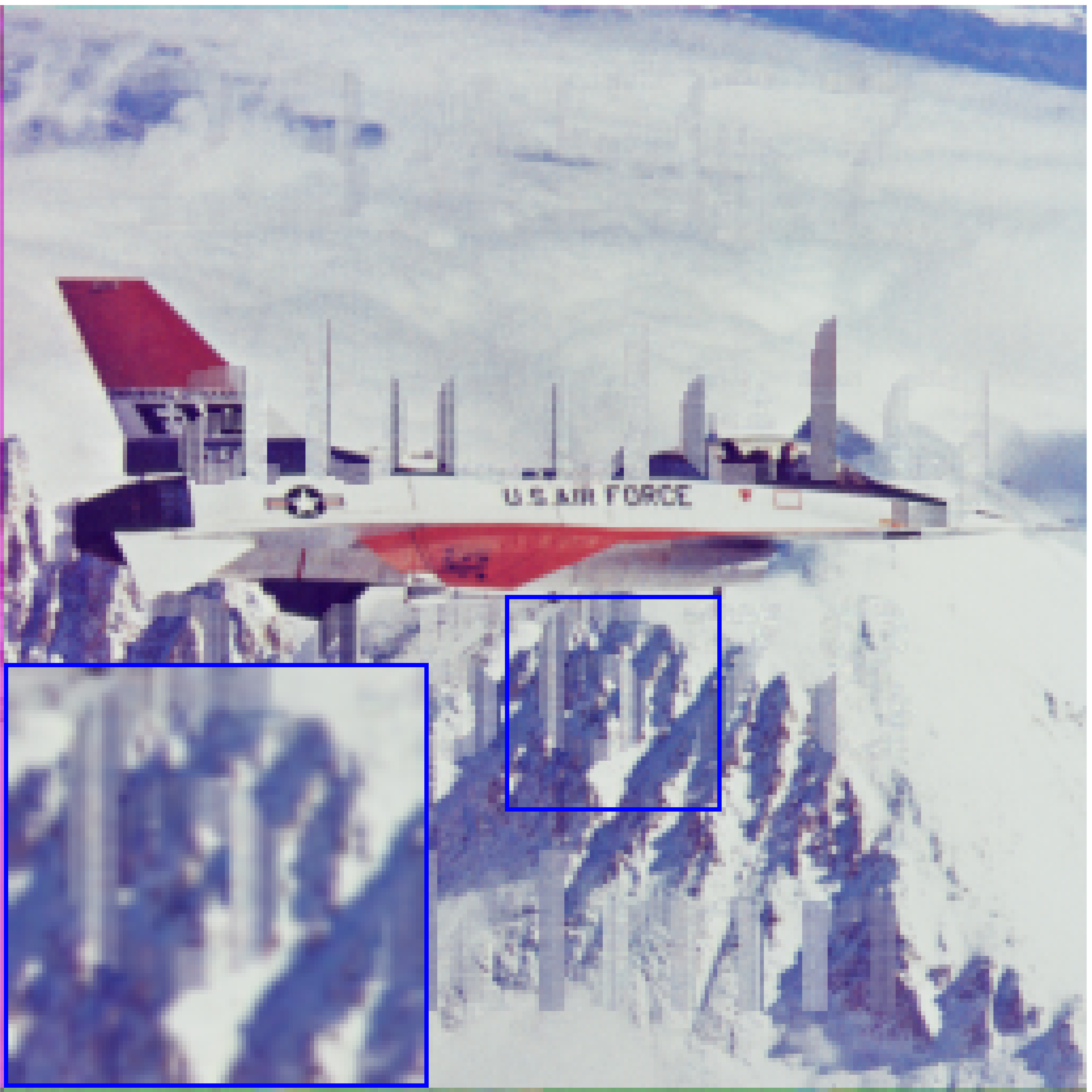}&
\includegraphics[width=0.19\textwidth]{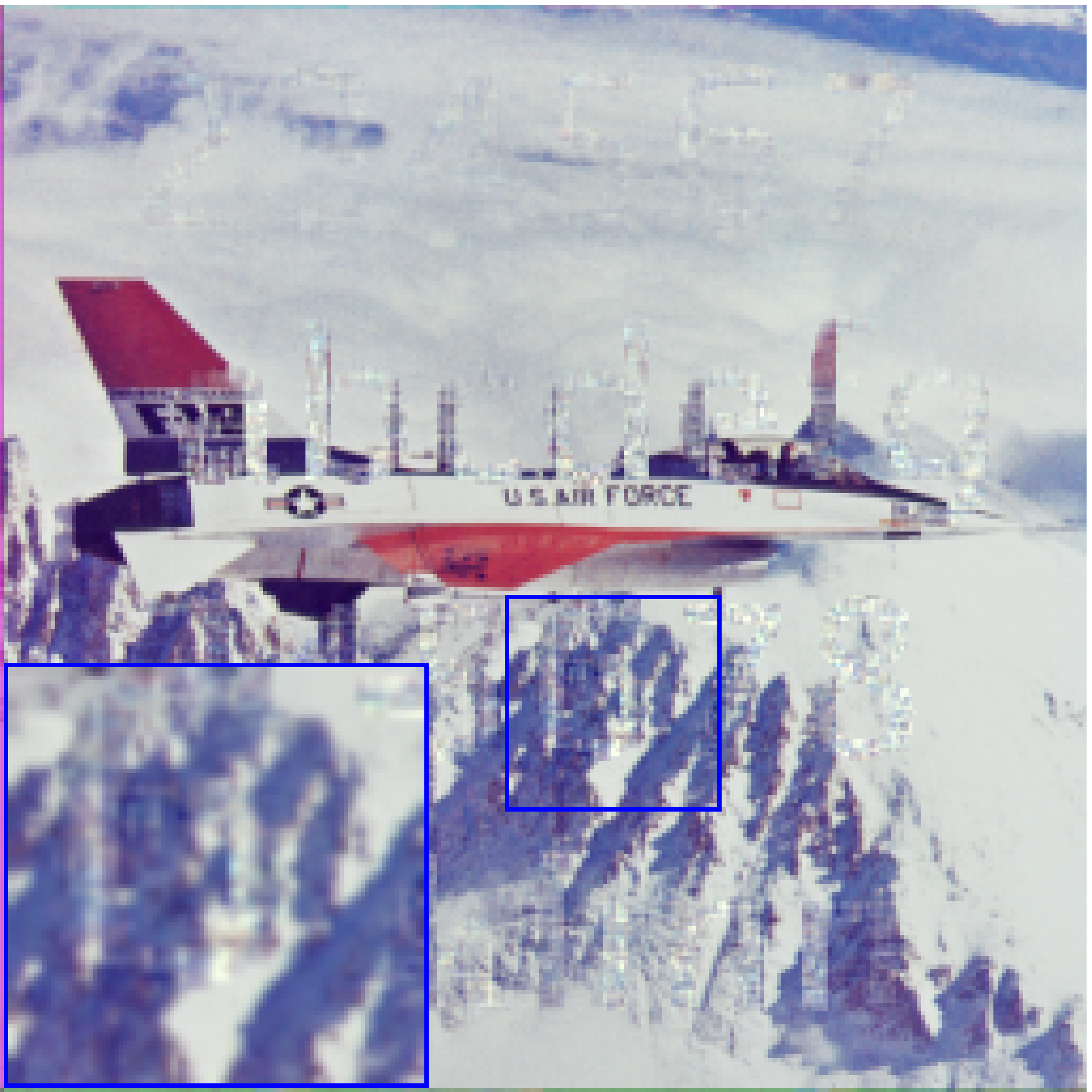}&
\includegraphics[width=0.19\textwidth]{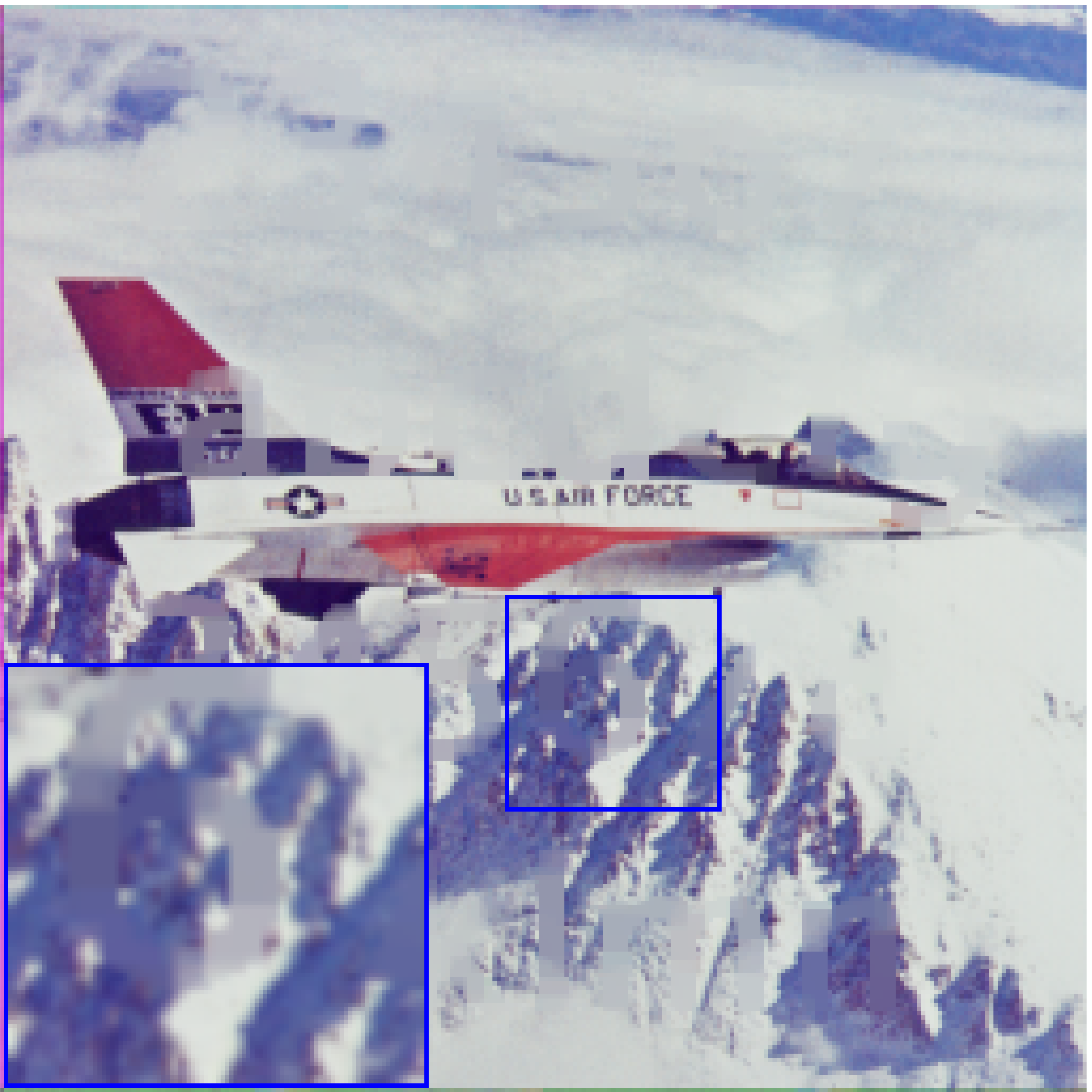}\vspace{0.01cm}\\
(a) (PSNR, SSIM) & (b) Observed & (c) (26.14, 0.9011) & (d) (25.93, 0.8763) & (e) (28.82, 0.9426)\\
\includegraphics[width=0.19\textwidth]{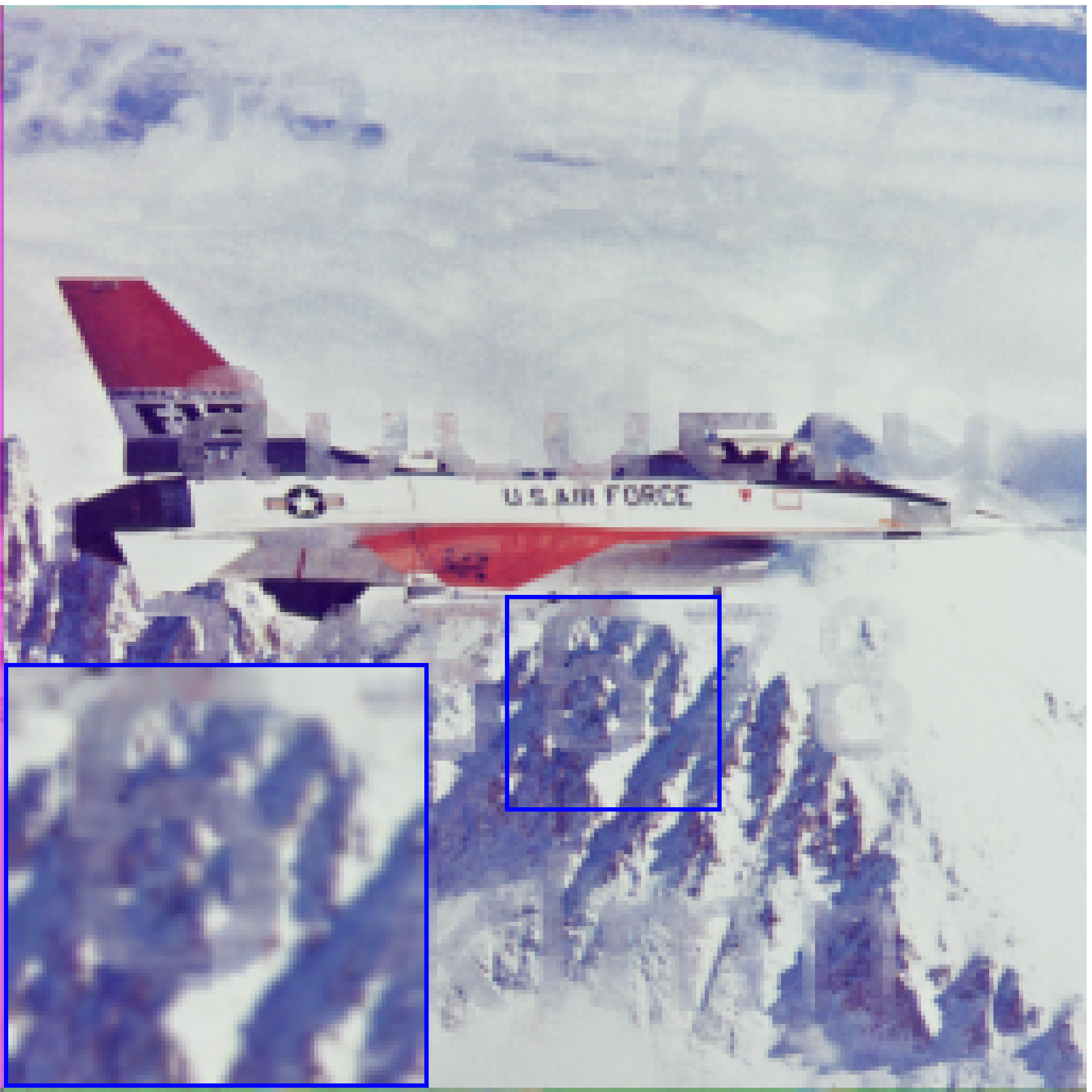}&
\includegraphics[width=0.19\textwidth]{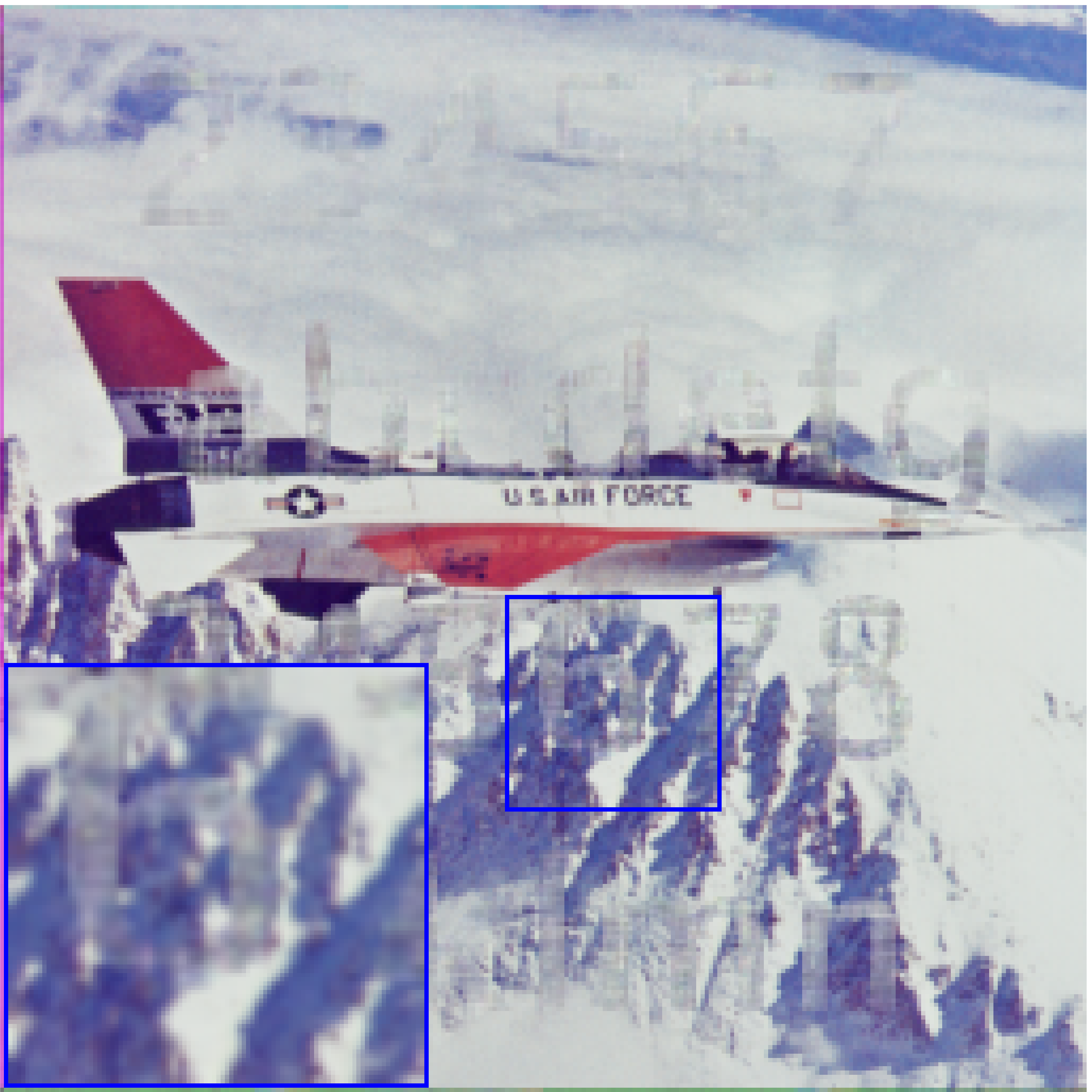}&
\includegraphics[width=0.19\textwidth]{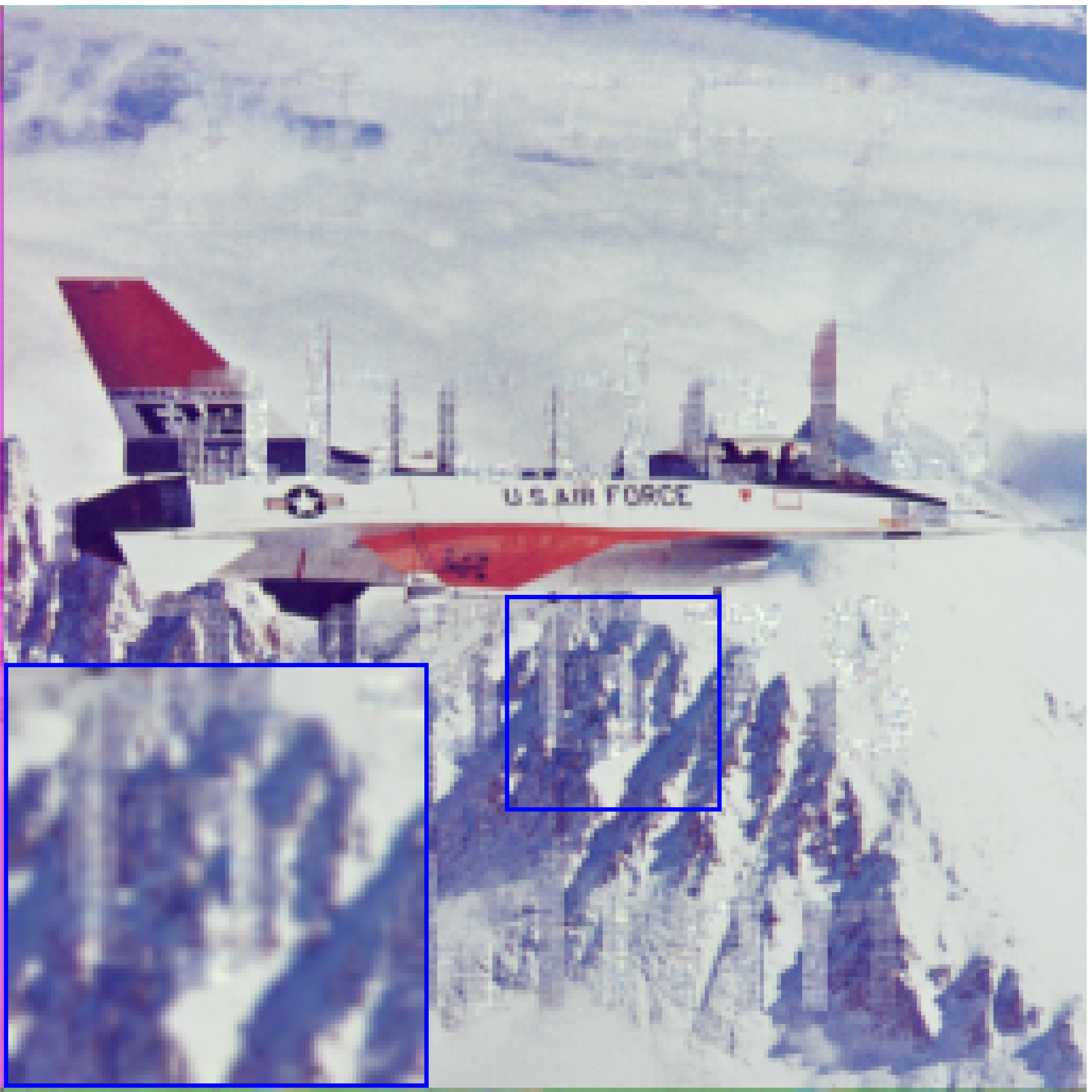}&
\includegraphics[width=0.19\textwidth]{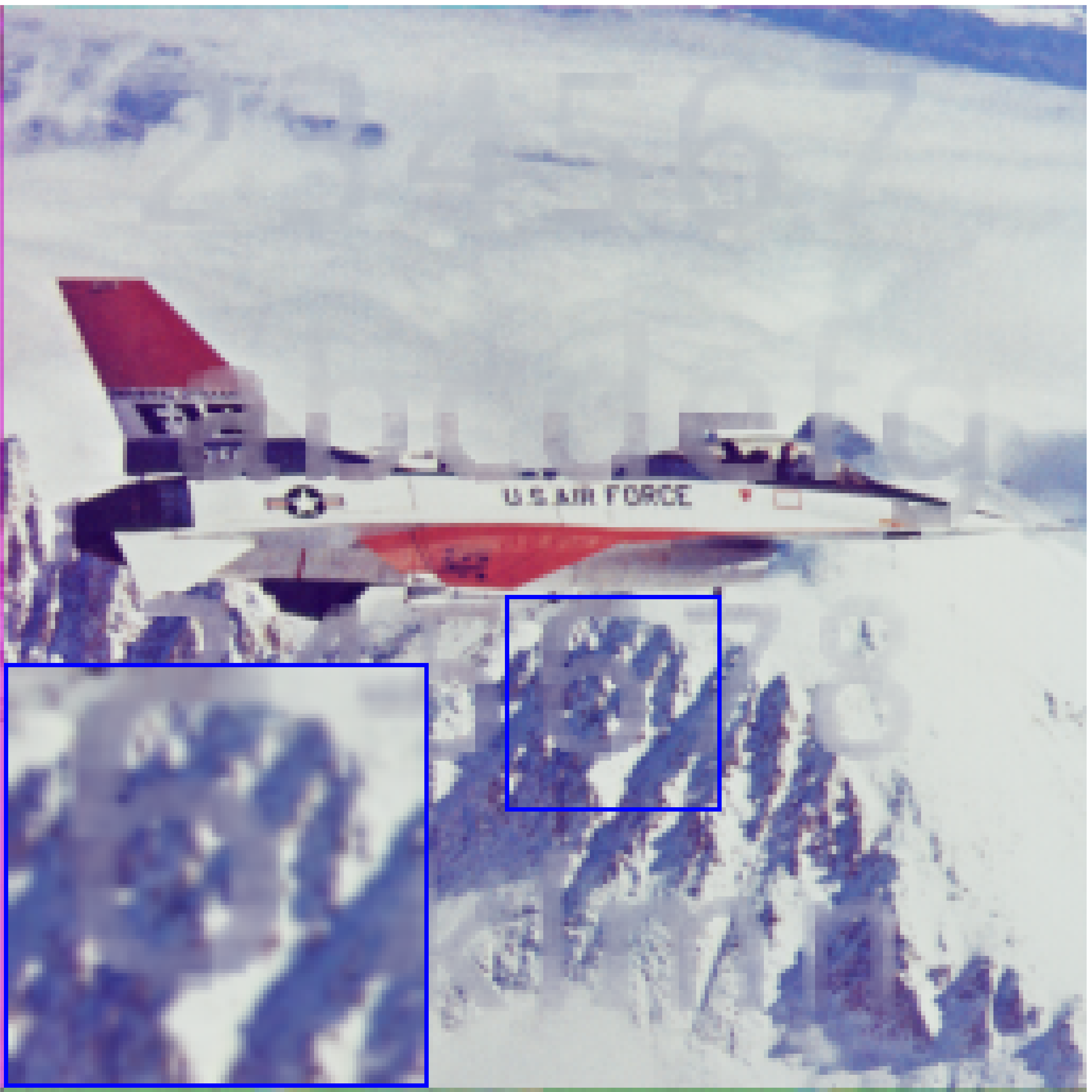}&
\includegraphics[width=0.19\textwidth]{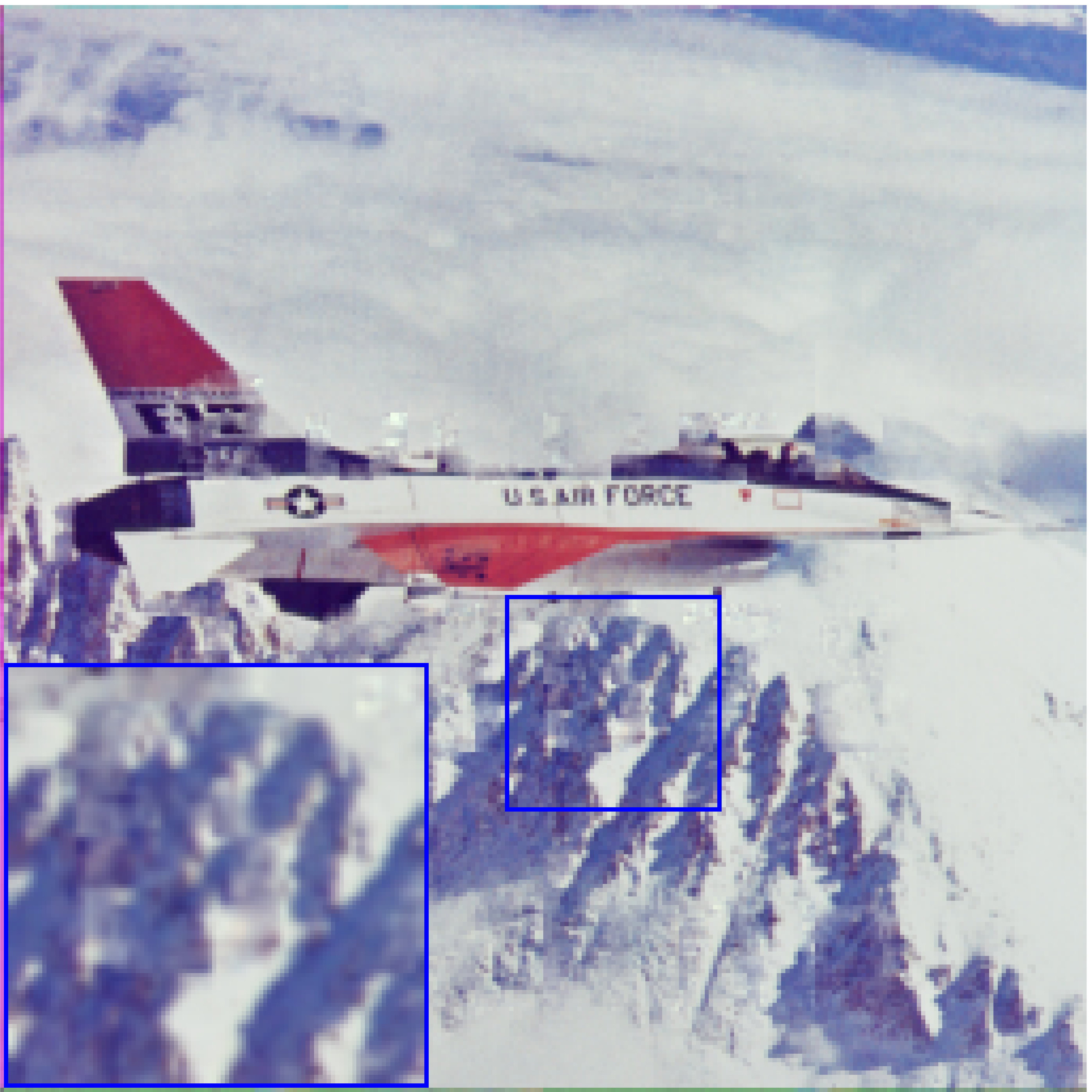}\vspace{0.01cm}\\
(f) (28.41, 0.9201) & (g) (27.40, 0.8868) & (h) (26.41, 0.8930) & (i) (28.56, 0.9215) & (j) (\textbf{29.58}, \textbf{0.9398})
\end{tabular}
\caption{\small{Recovered color images \emph{House}, \emph{Barbara}, and \emph{Airplane} for structural missing entries. (a) original data, (b) observed data, results by (c) HaLRTC, (d) NSNN, (e) LRTC-TV, (f) SiLRTC-TT, (g) tSVD, (h) KBR, (i) TRNN, and (j) LogTR.}}
  \label{fig:image_structural}
  \end{center}\vspace{-0.3cm}
\end{figure}

\begin{figure}[!t]
\scriptsize\setlength{\tabcolsep}{0.5pt}
\begin{center}
\begin{tabular}{ccccc}
\includegraphics[width=0.19\textwidth]{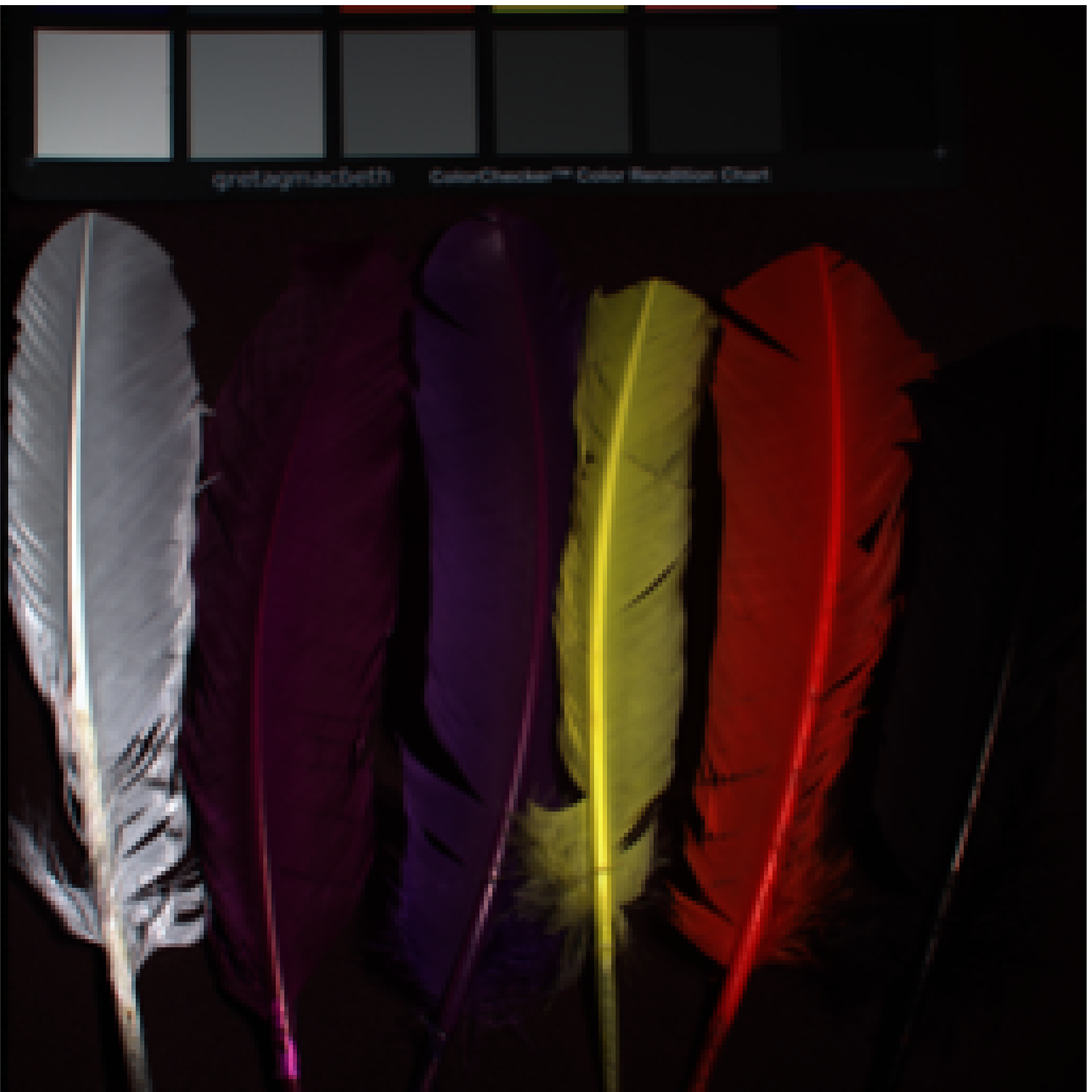}&
\includegraphics[width=0.19\textwidth]{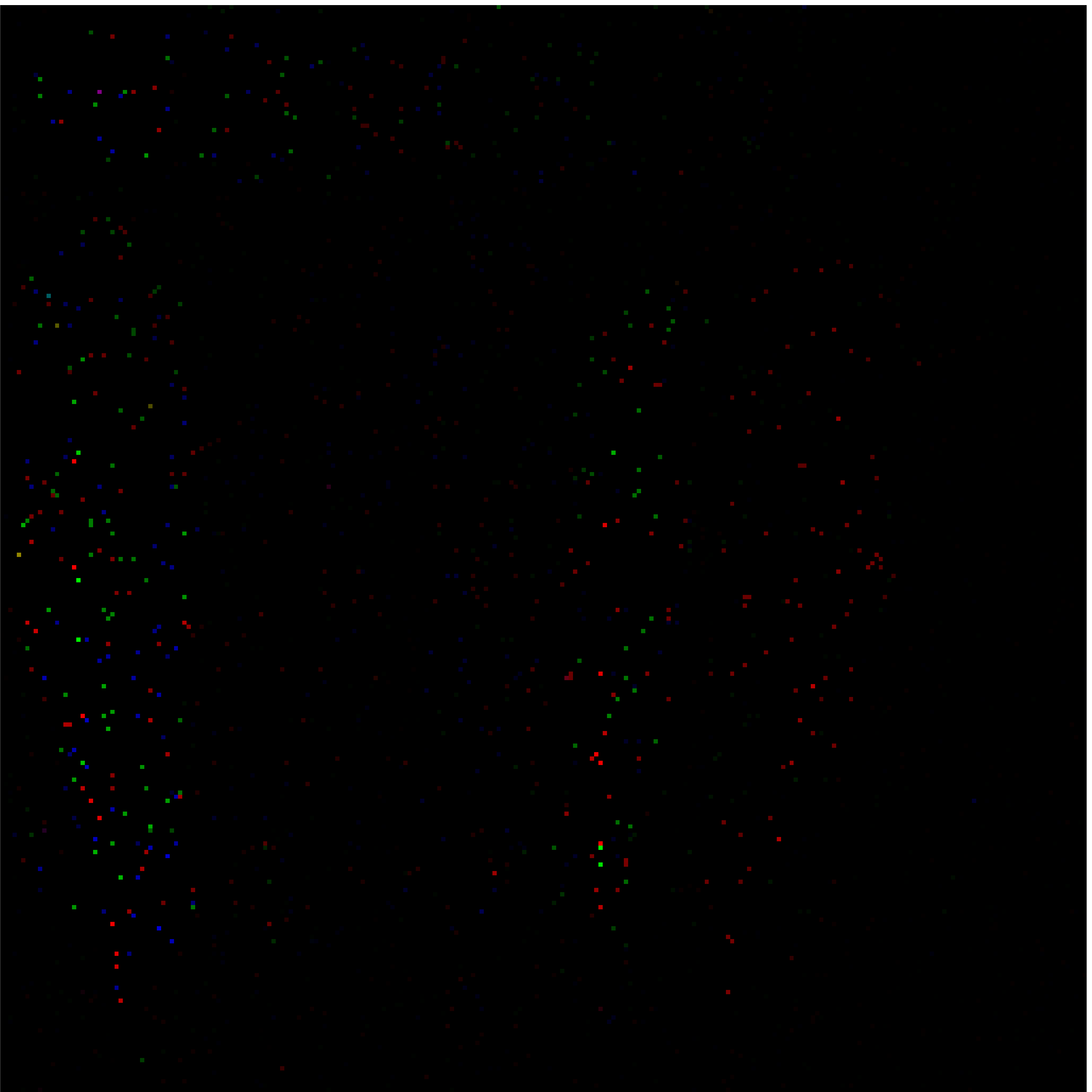}&
\includegraphics[width=0.19\textwidth]{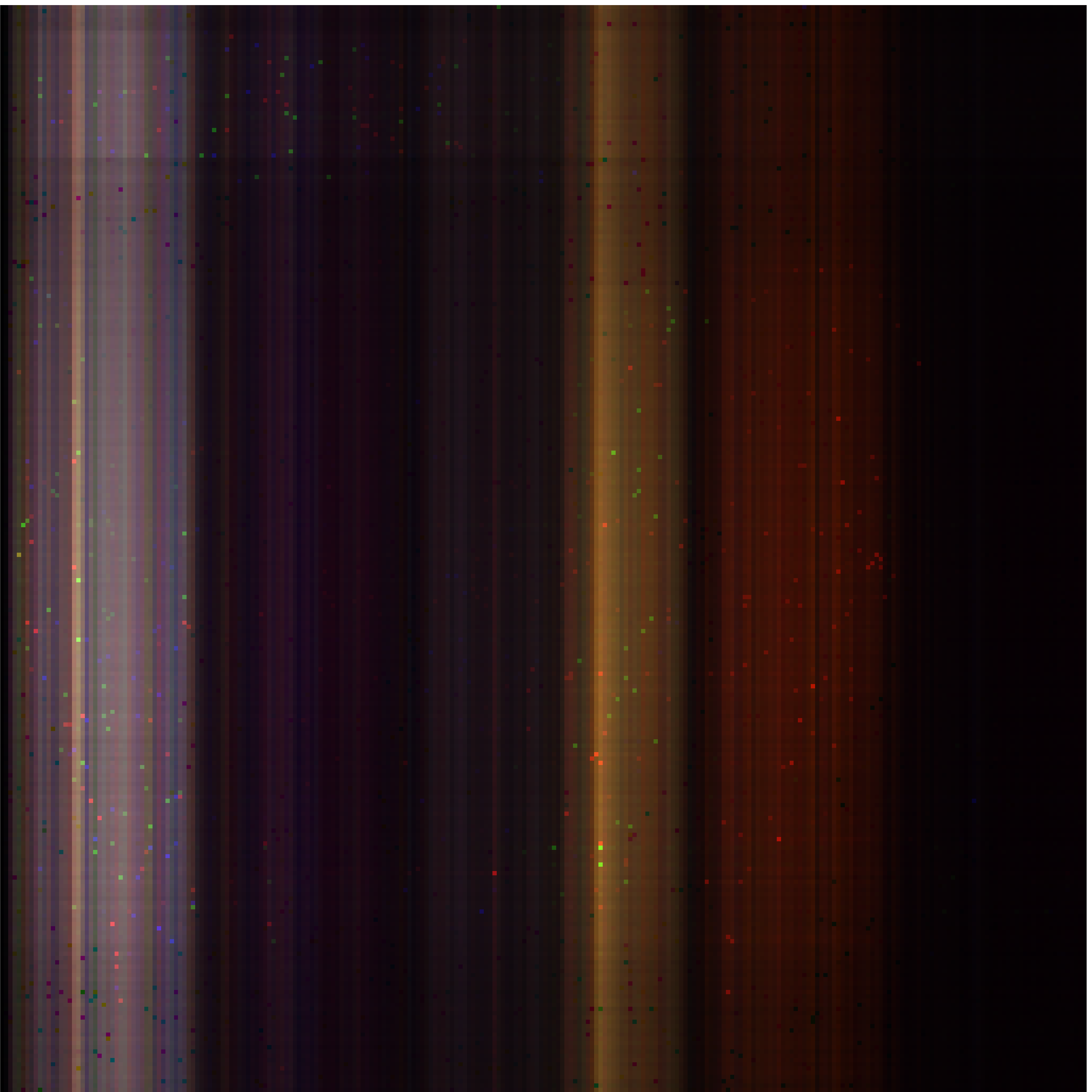}&
\includegraphics[width=0.19\textwidth]{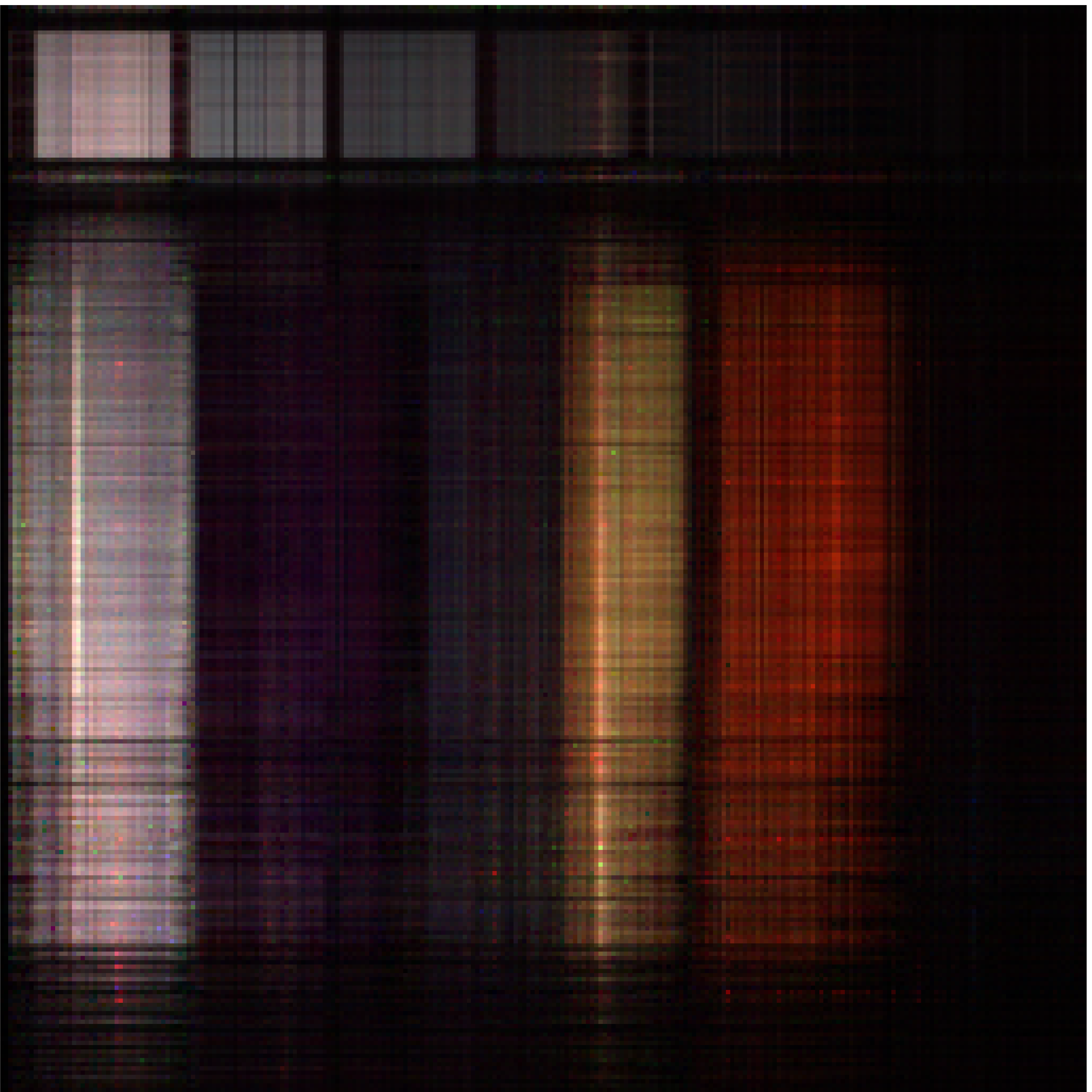}&
\includegraphics[width=0.19\textwidth]{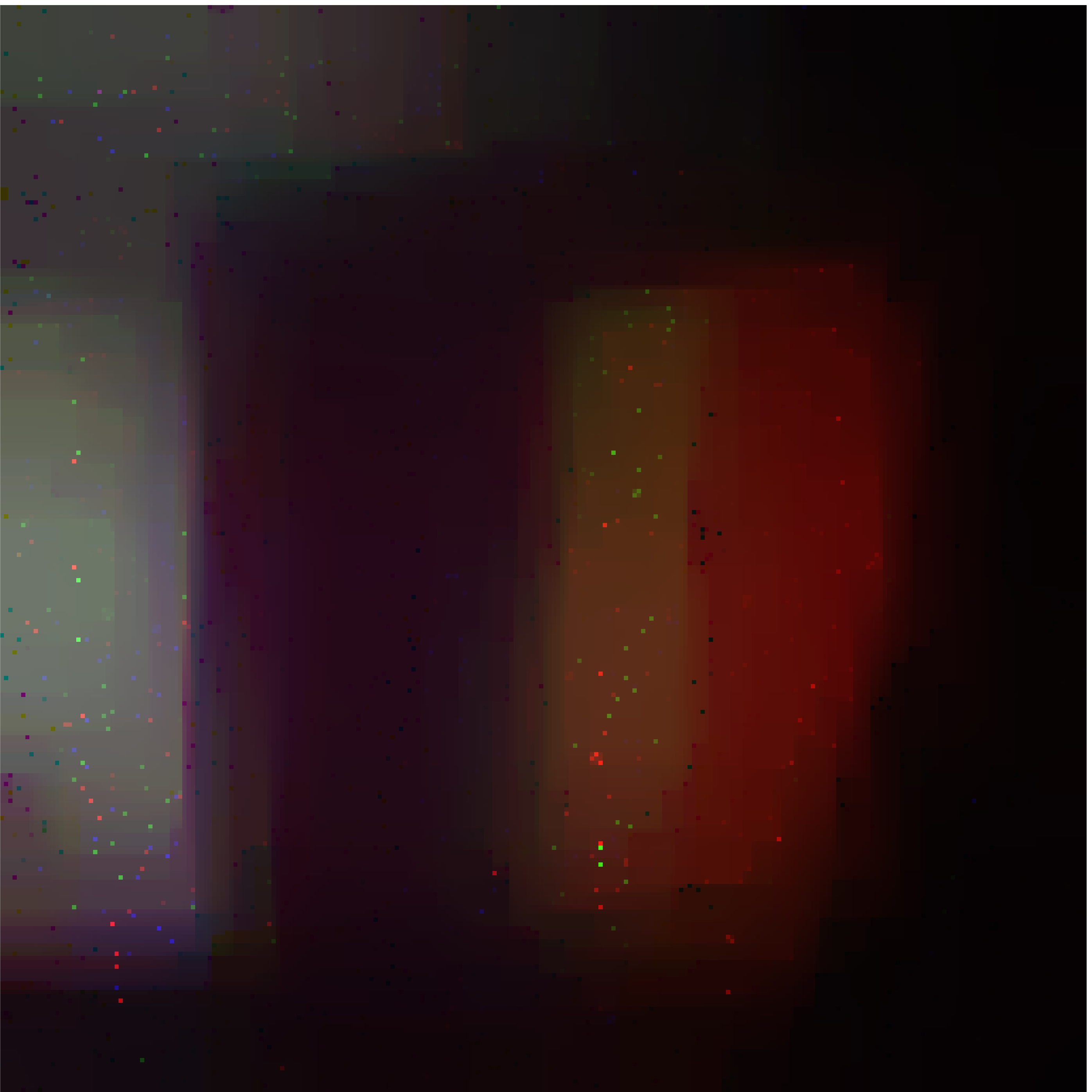}\vspace{0.01cm}\\
(a) Original & (b) Observed & (c) HaLRTC & (d) NSNN & (e) LRTC-TV\\
\includegraphics[width=0.19\textwidth]{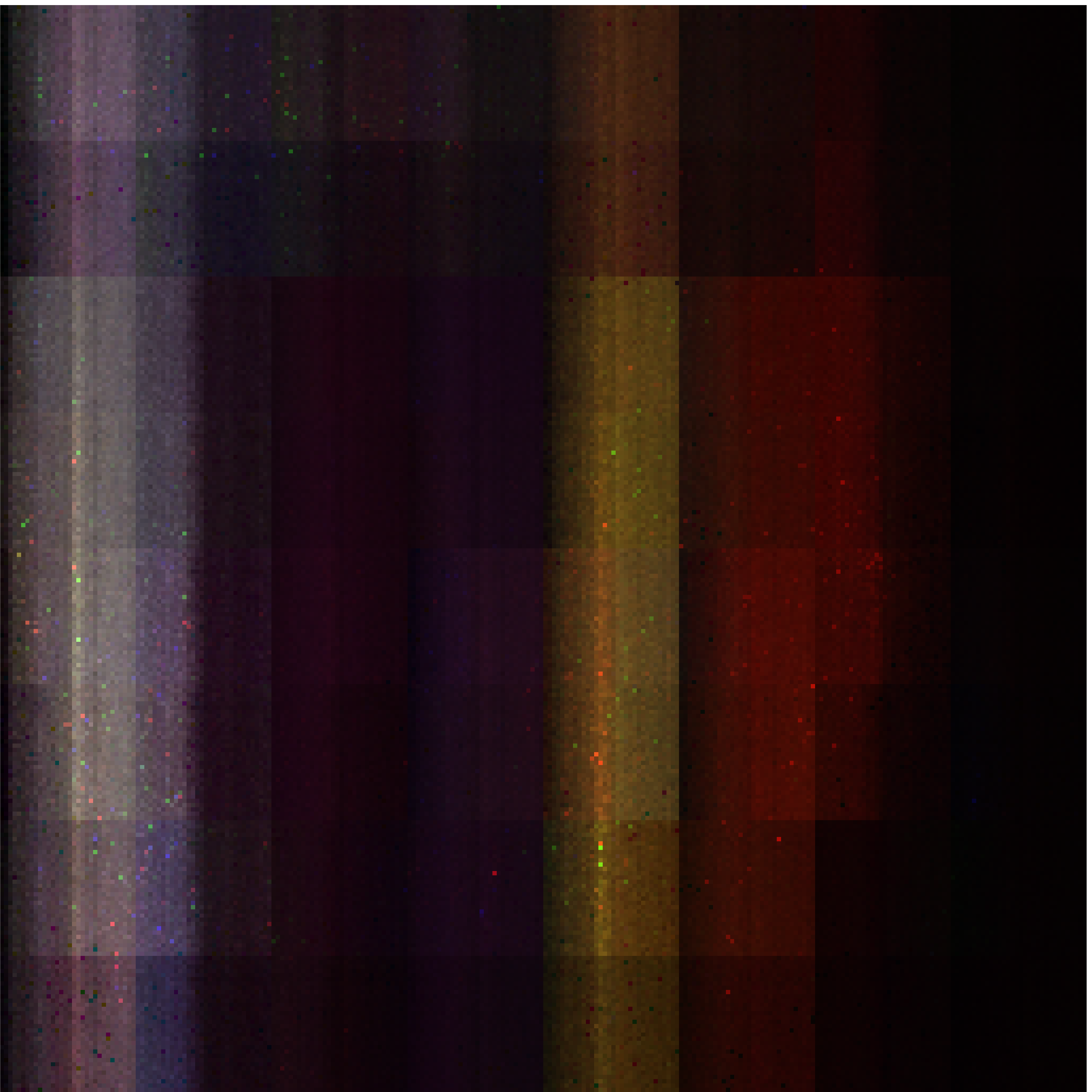}&
\includegraphics[width=0.19\textwidth]{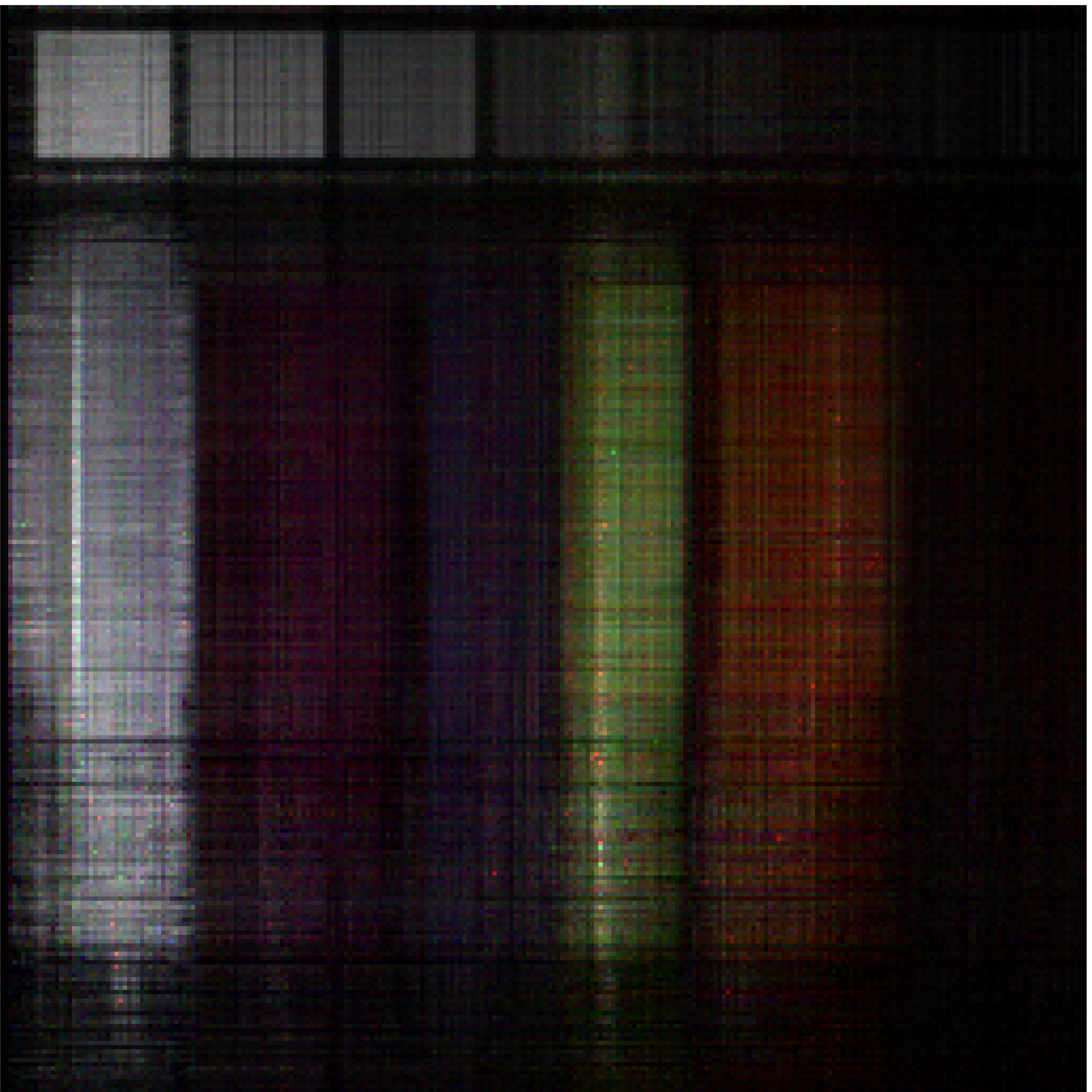}&
\includegraphics[width=0.19\textwidth]{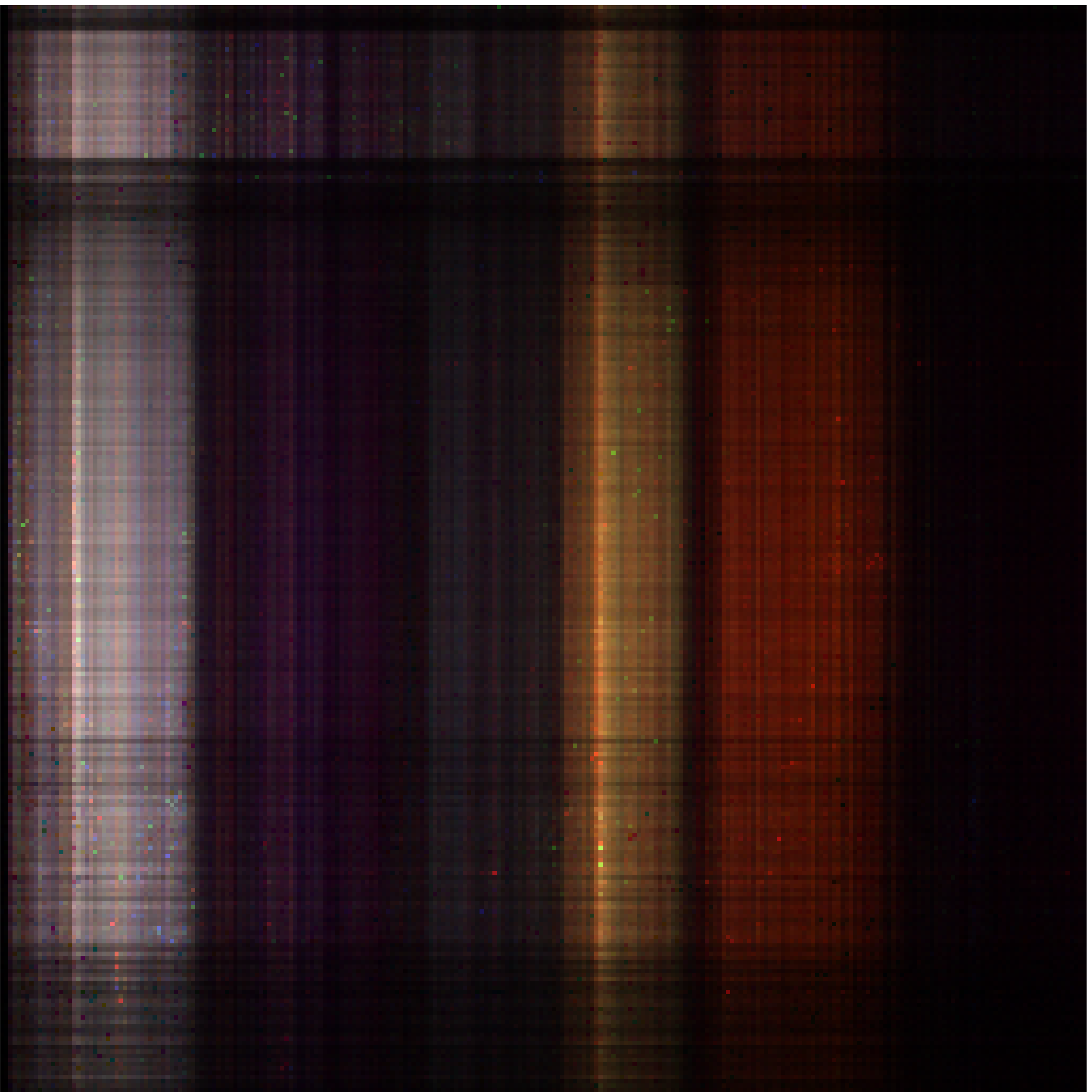}&
\includegraphics[width=0.19\textwidth]{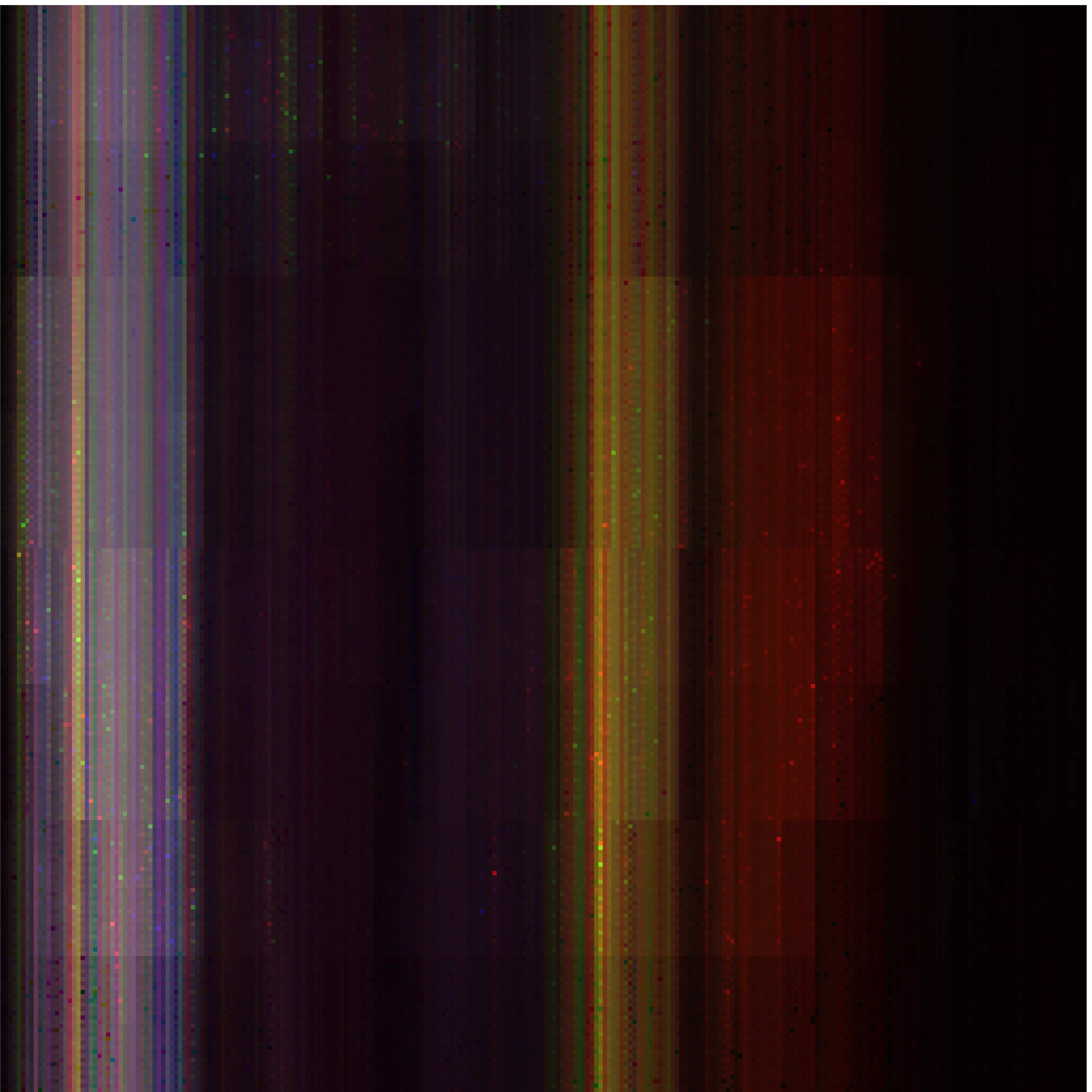}&
\includegraphics[width=0.19\textwidth]{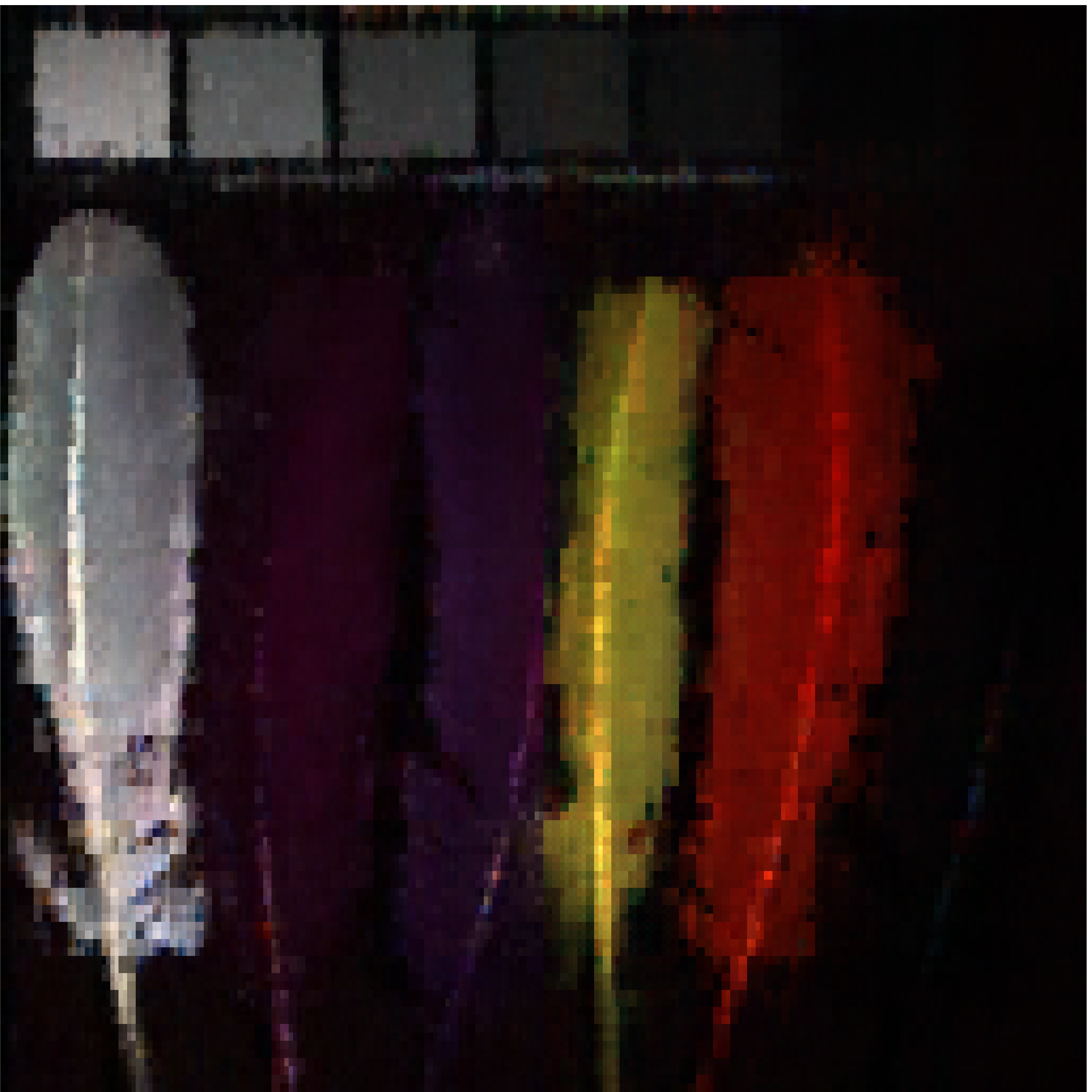}\vspace{0.01cm}\\
(f) SiLRTC-TT & (g) tSVD & (h) KBR & (i) TRNN & (j) LogTR\\
\includegraphics[width=0.19\textwidth]{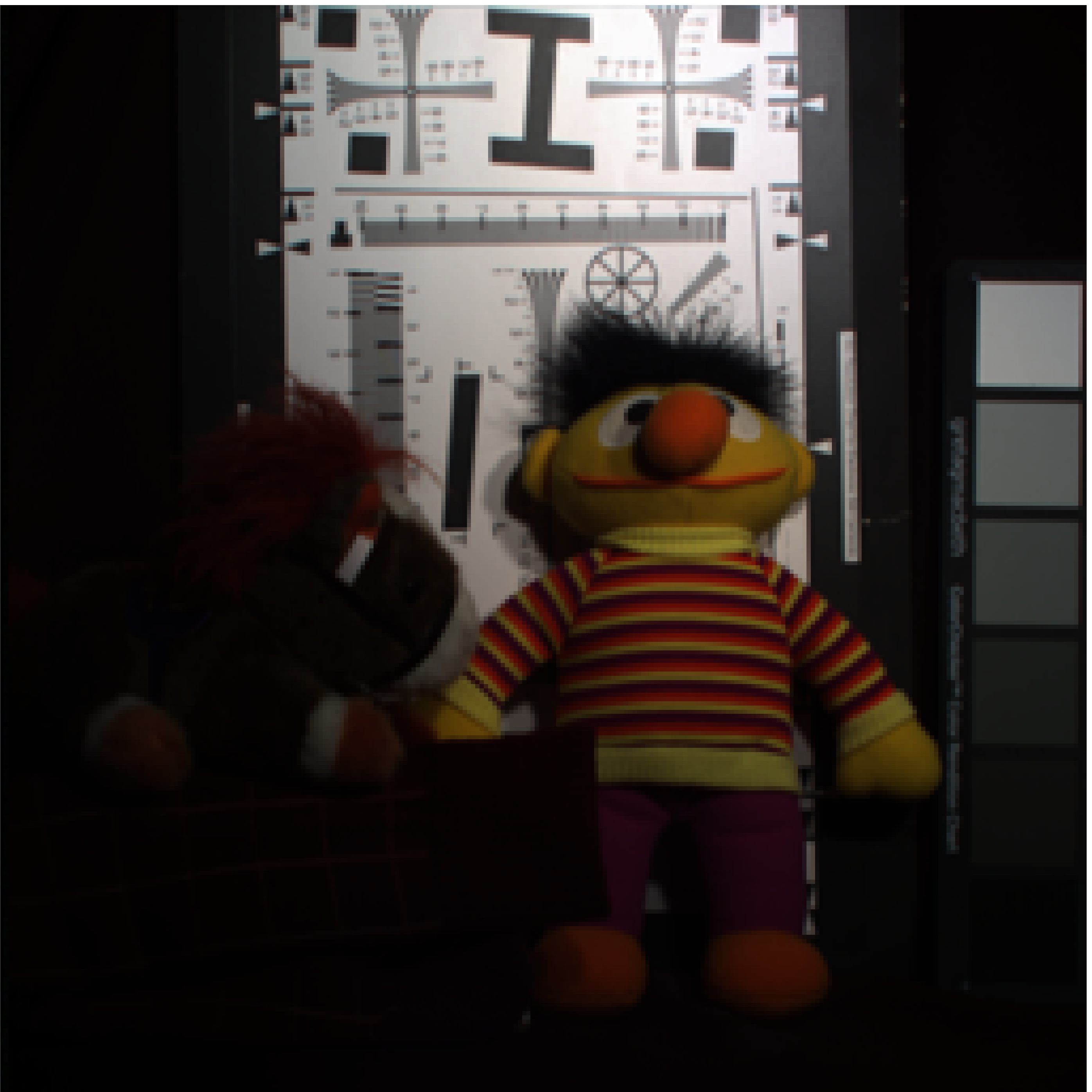}&
\includegraphics[width=0.19\textwidth]{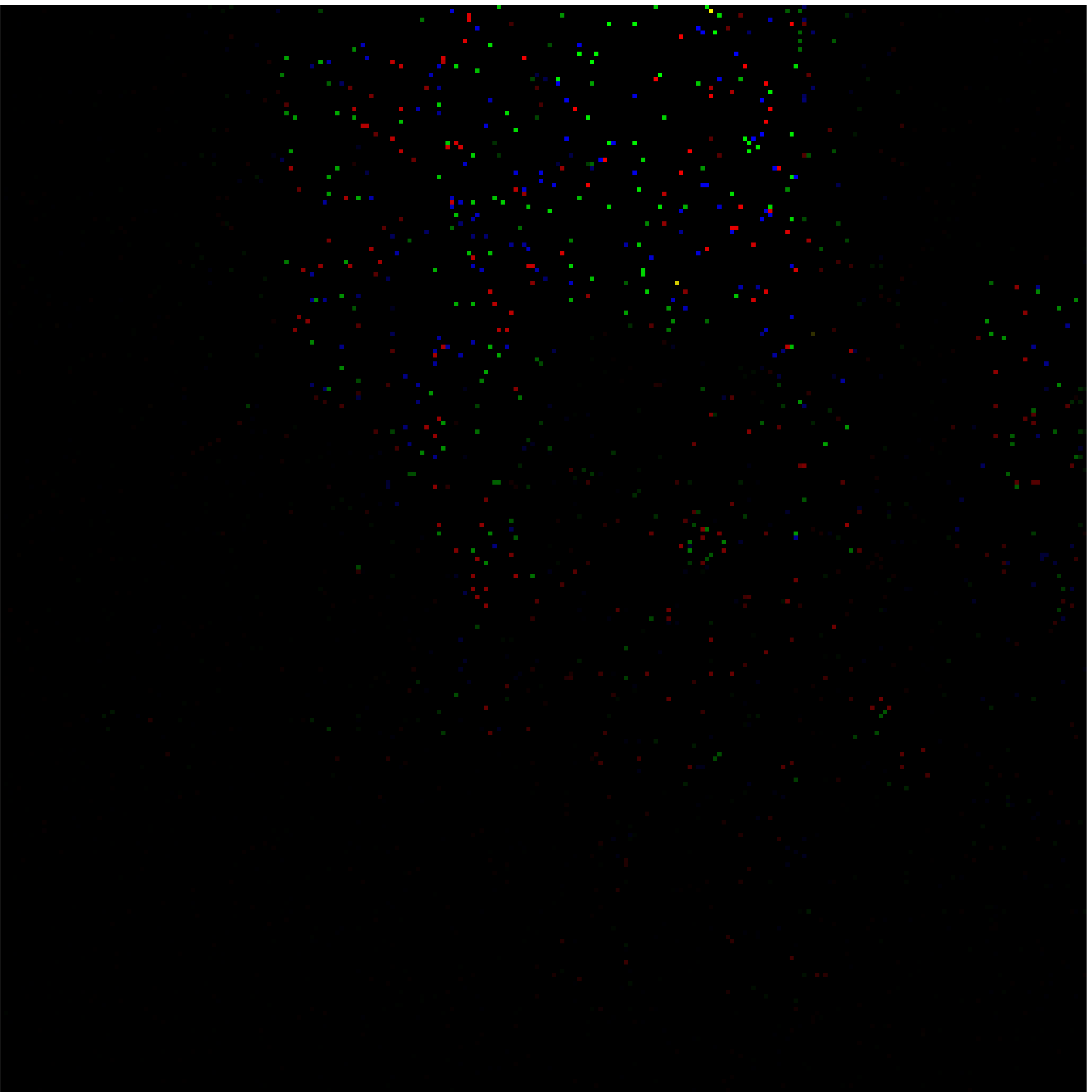}&
\includegraphics[width=0.19\textwidth]{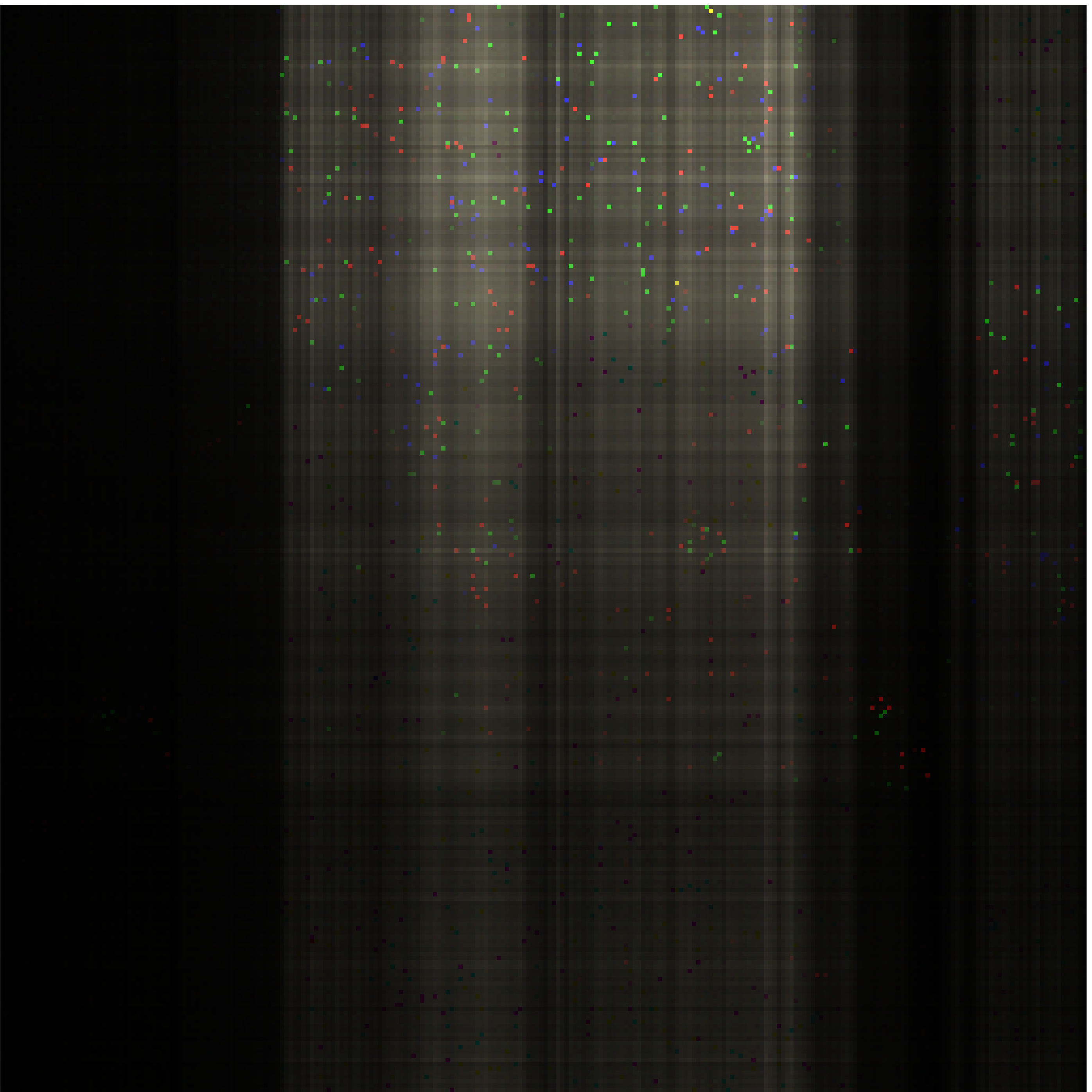}&
\includegraphics[width=0.19\textwidth]{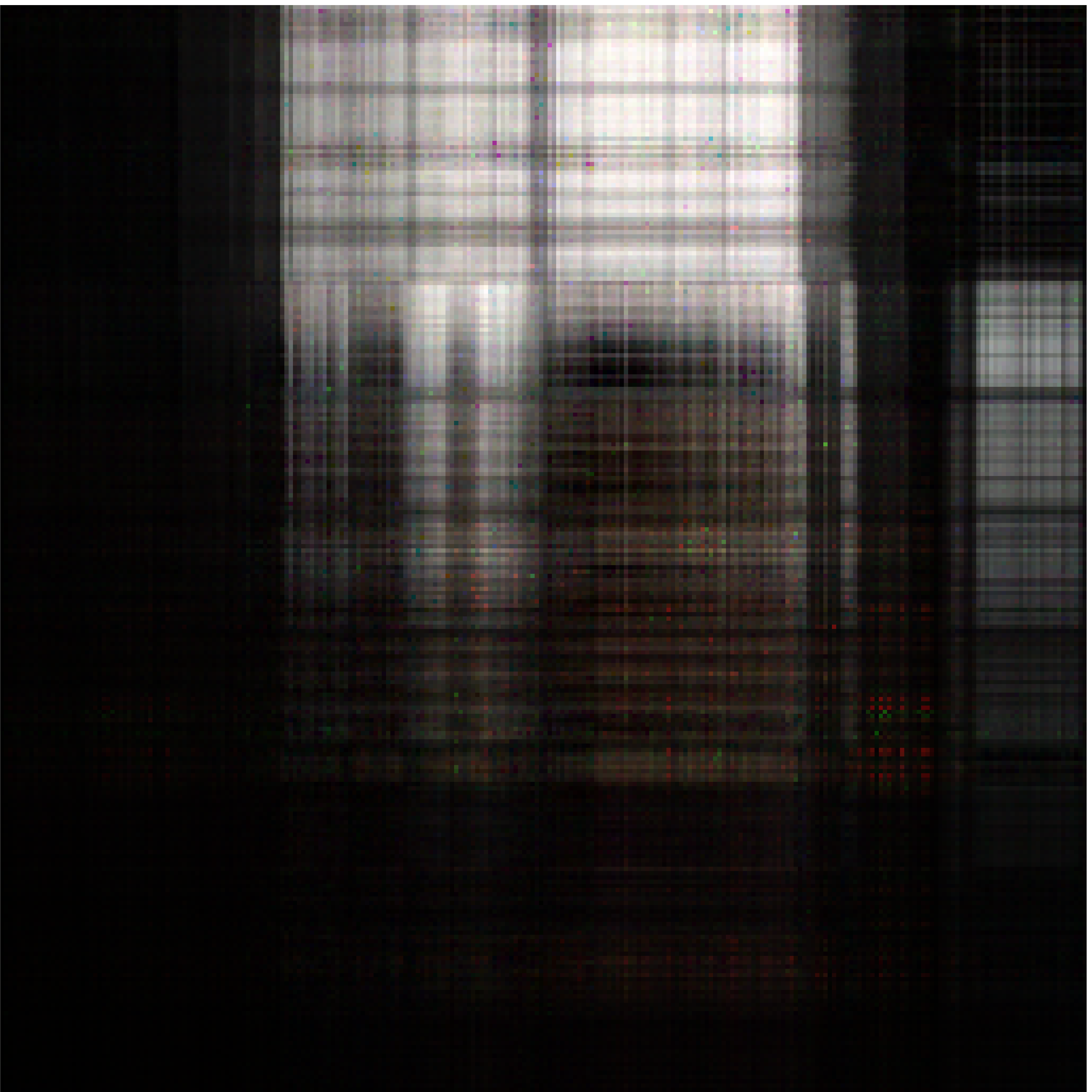}&
\includegraphics[width=0.19\textwidth]{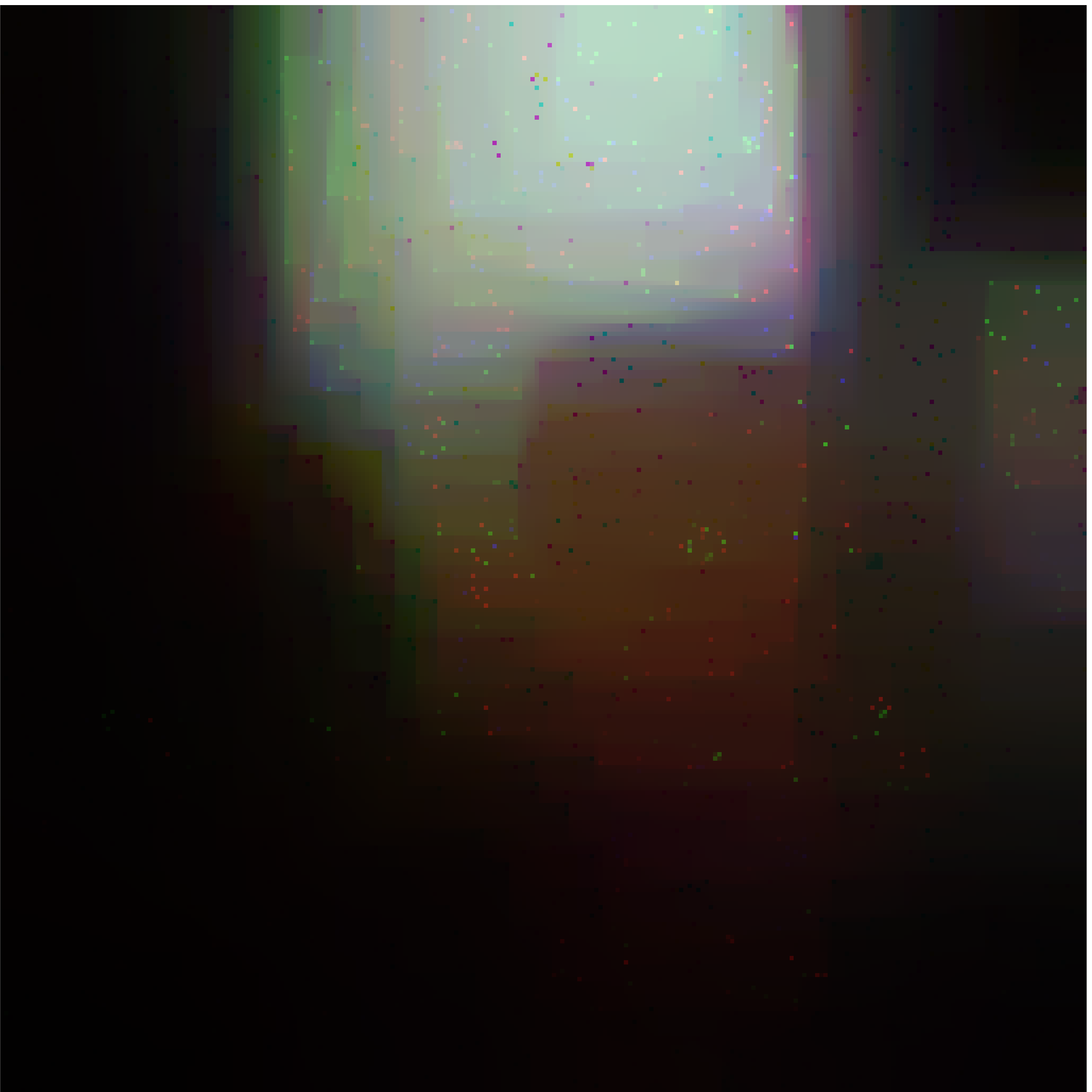}\vspace{0.01cm}\\
(a) Original& (b) Observed & (c) HaLRTC & (d) NSNN & (e) LRTC-TV\\
\includegraphics[width=0.19\textwidth]{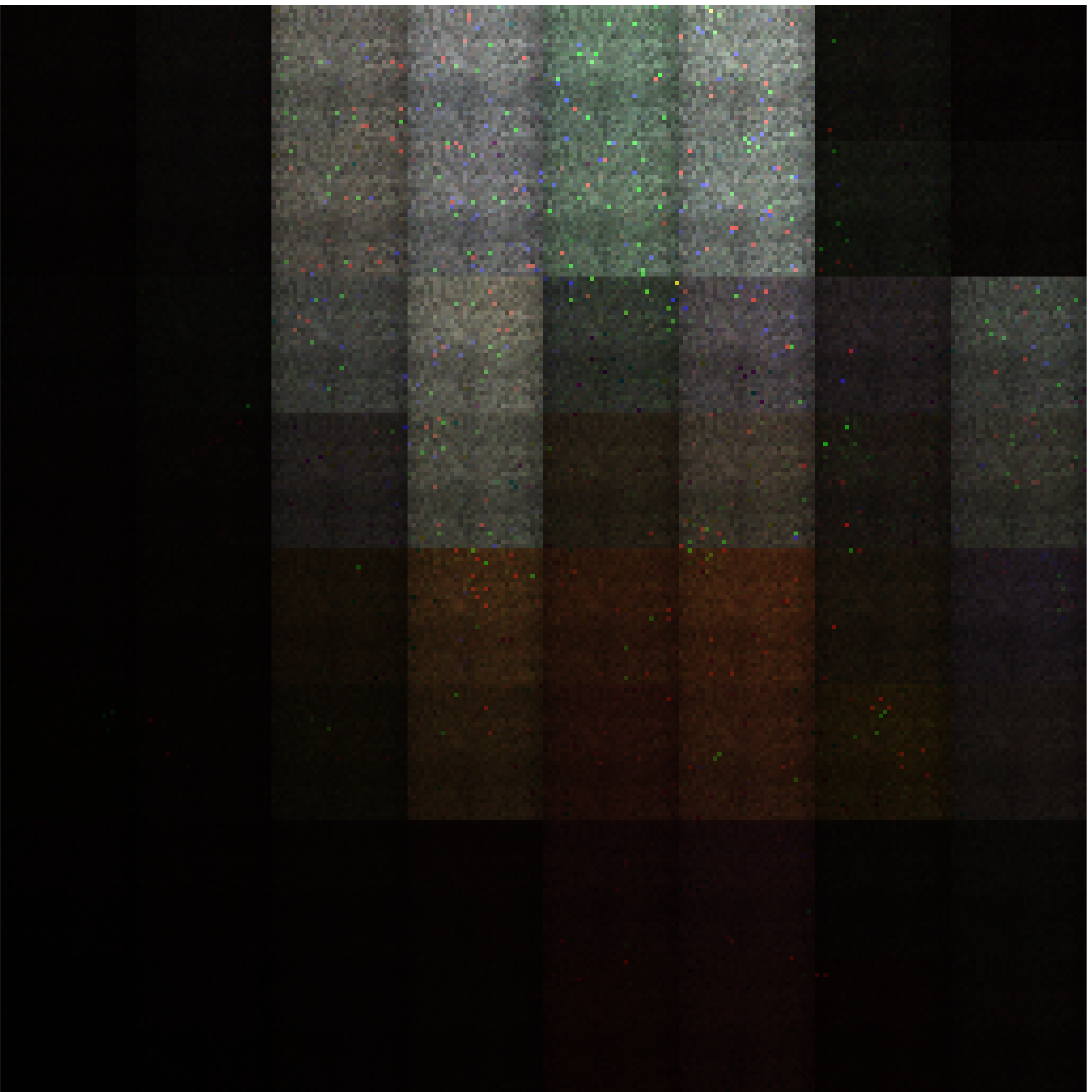}&
\includegraphics[width=0.19\textwidth]{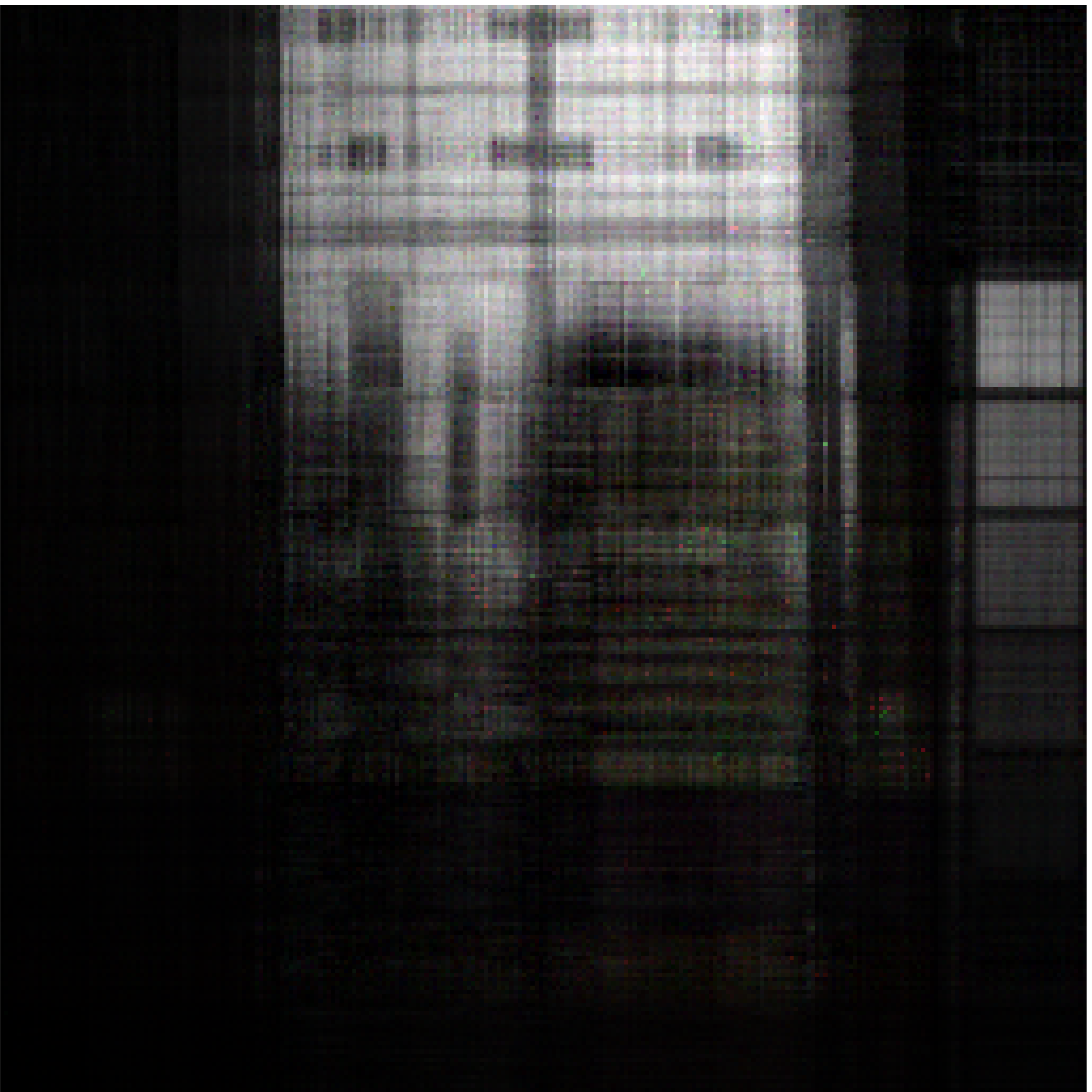}&
\includegraphics[width=0.19\textwidth]{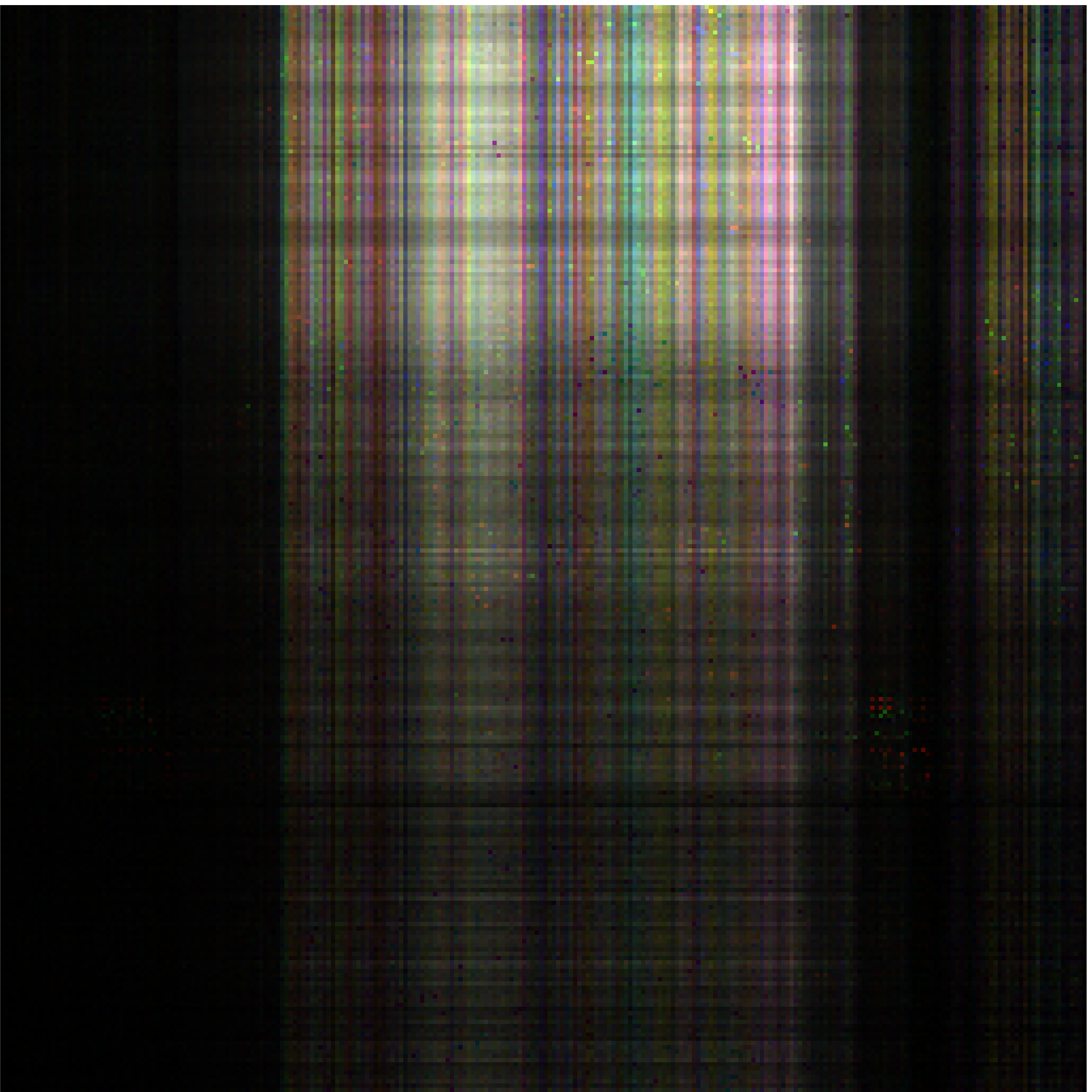}&
\includegraphics[width=0.19\textwidth]{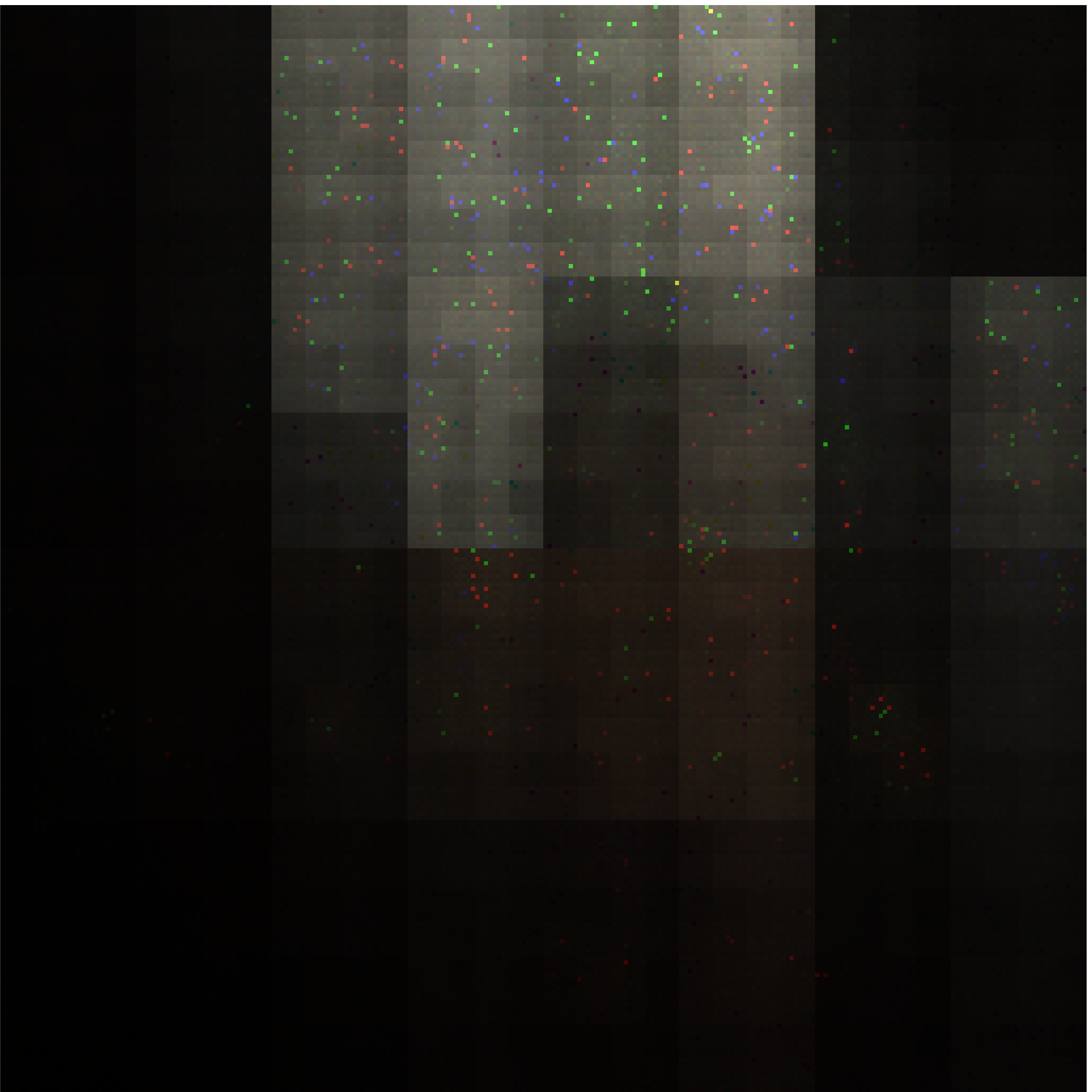}&
\includegraphics[width=0.19\textwidth]{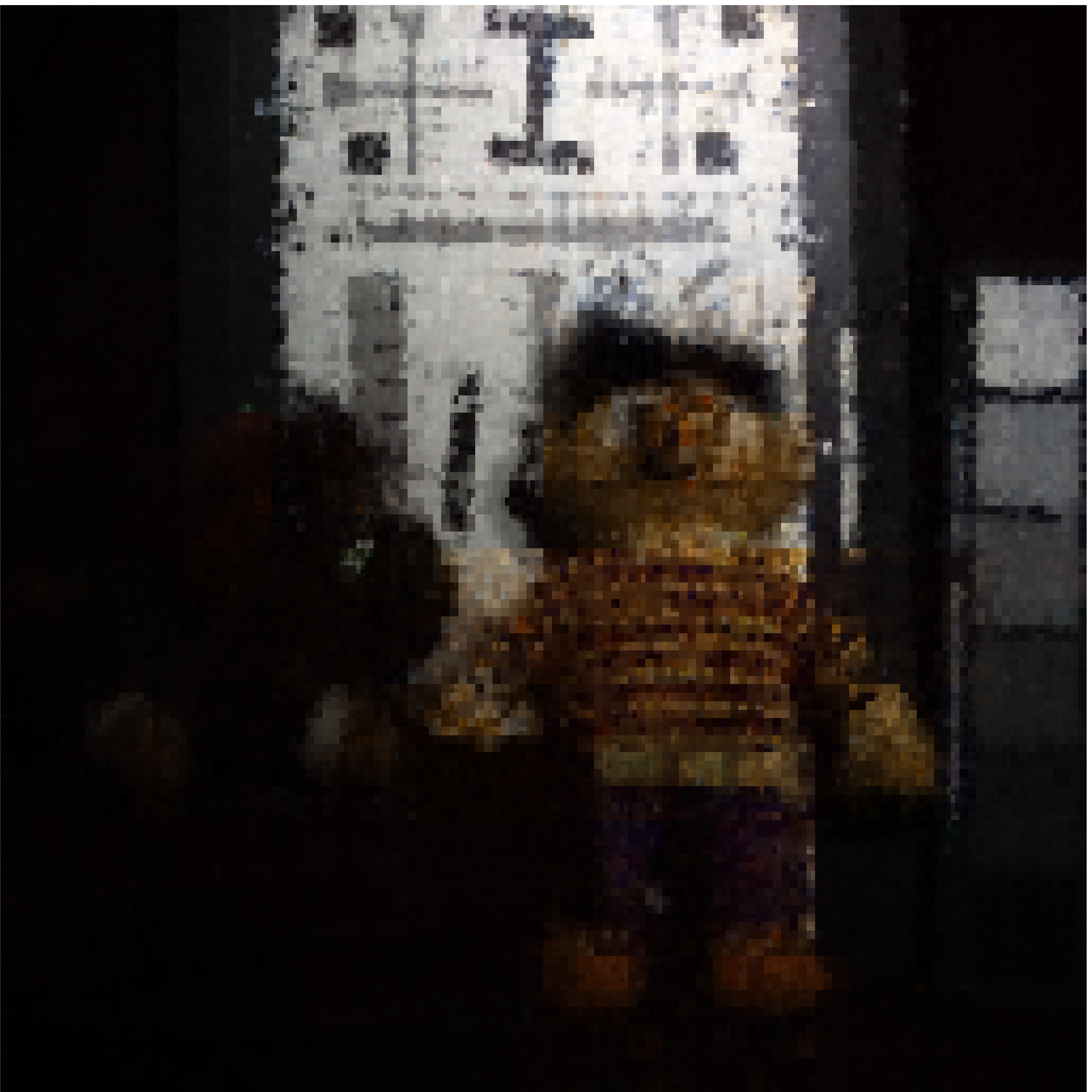}\vspace{0.01cm}\\
(f) SiLRTC-TT & (g) tSVD & (h) KBR & (i) TRNN & (j) LogTR\\
\includegraphics[width=0.19\textwidth]{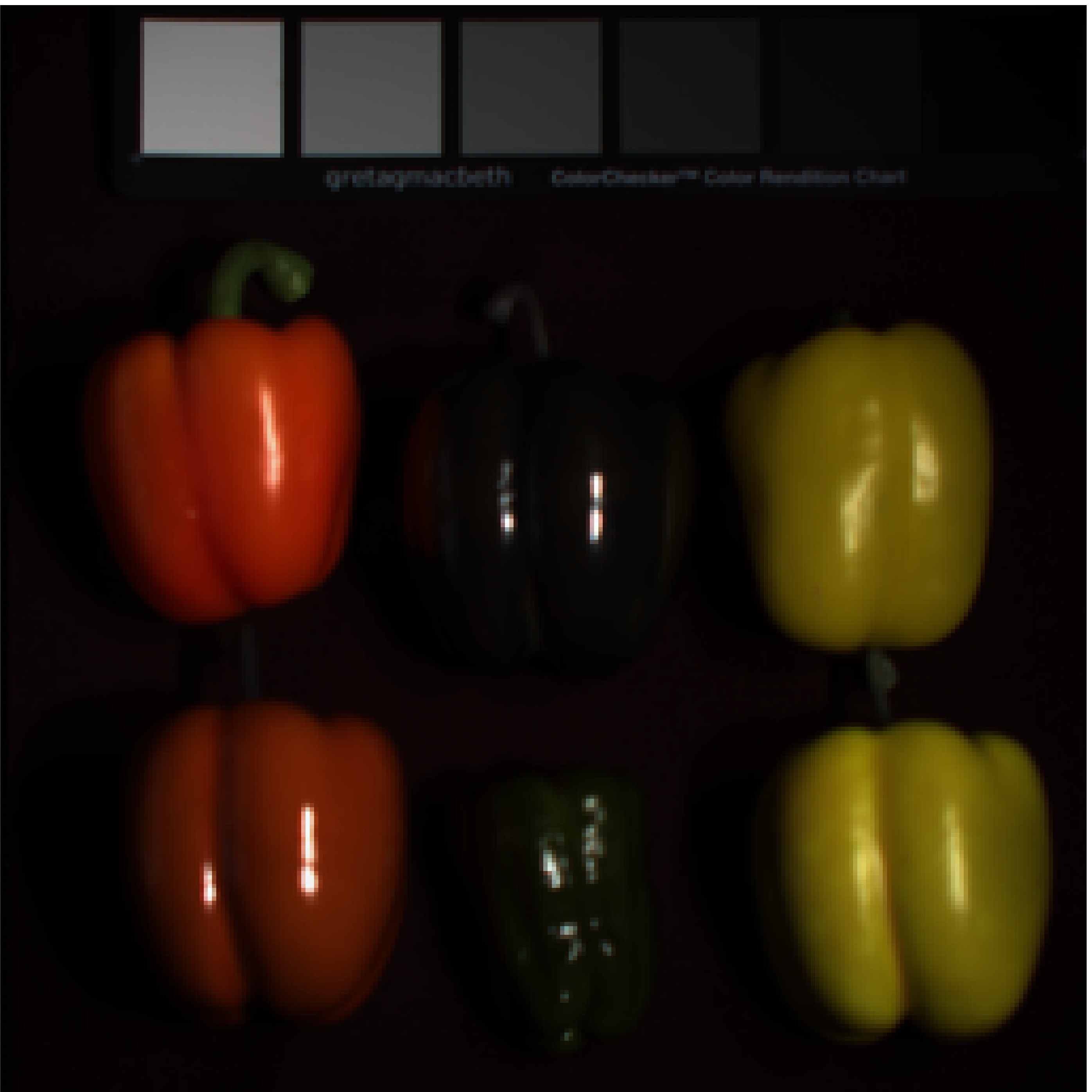}&
\includegraphics[width=0.19\textwidth]{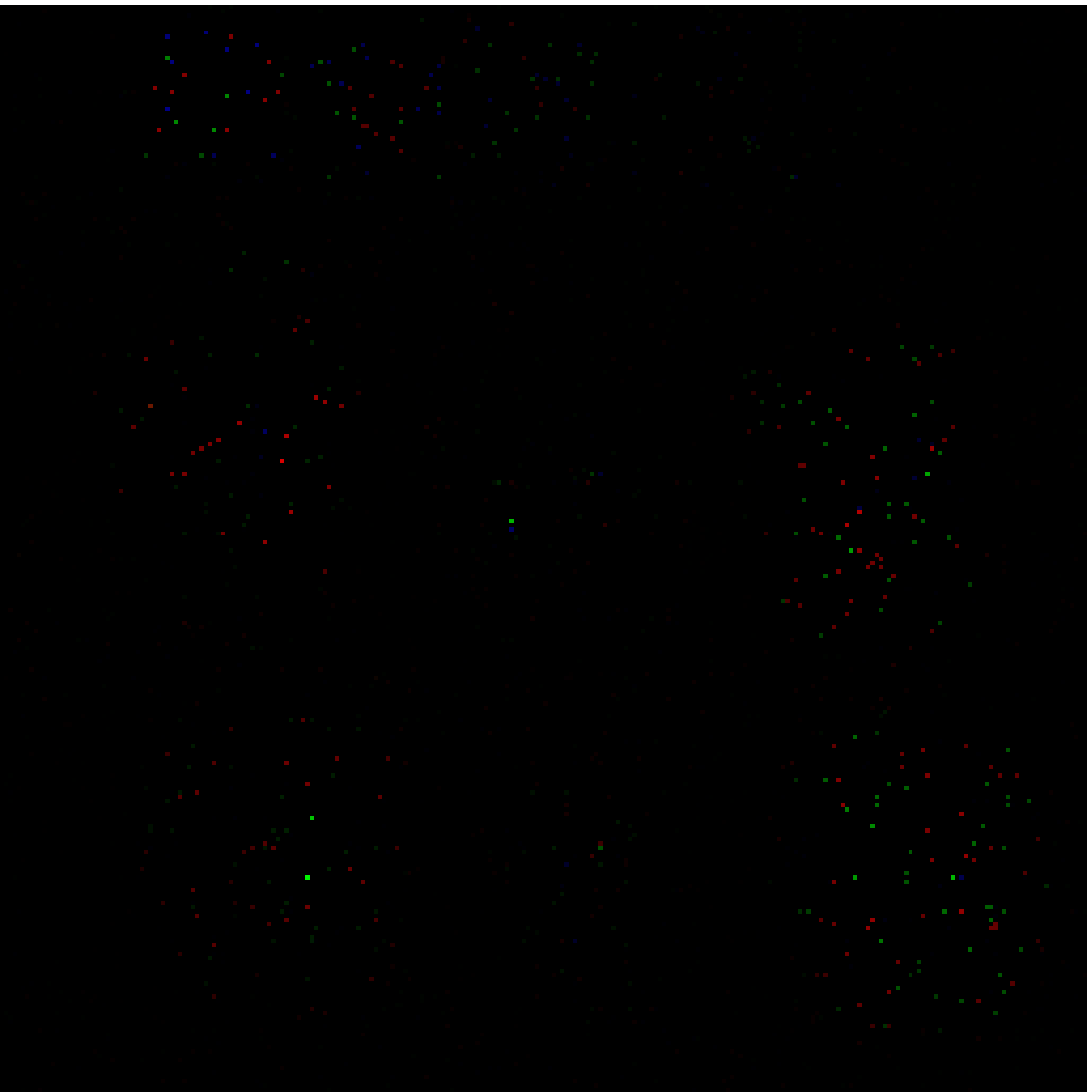}&
\includegraphics[width=0.19\textwidth]{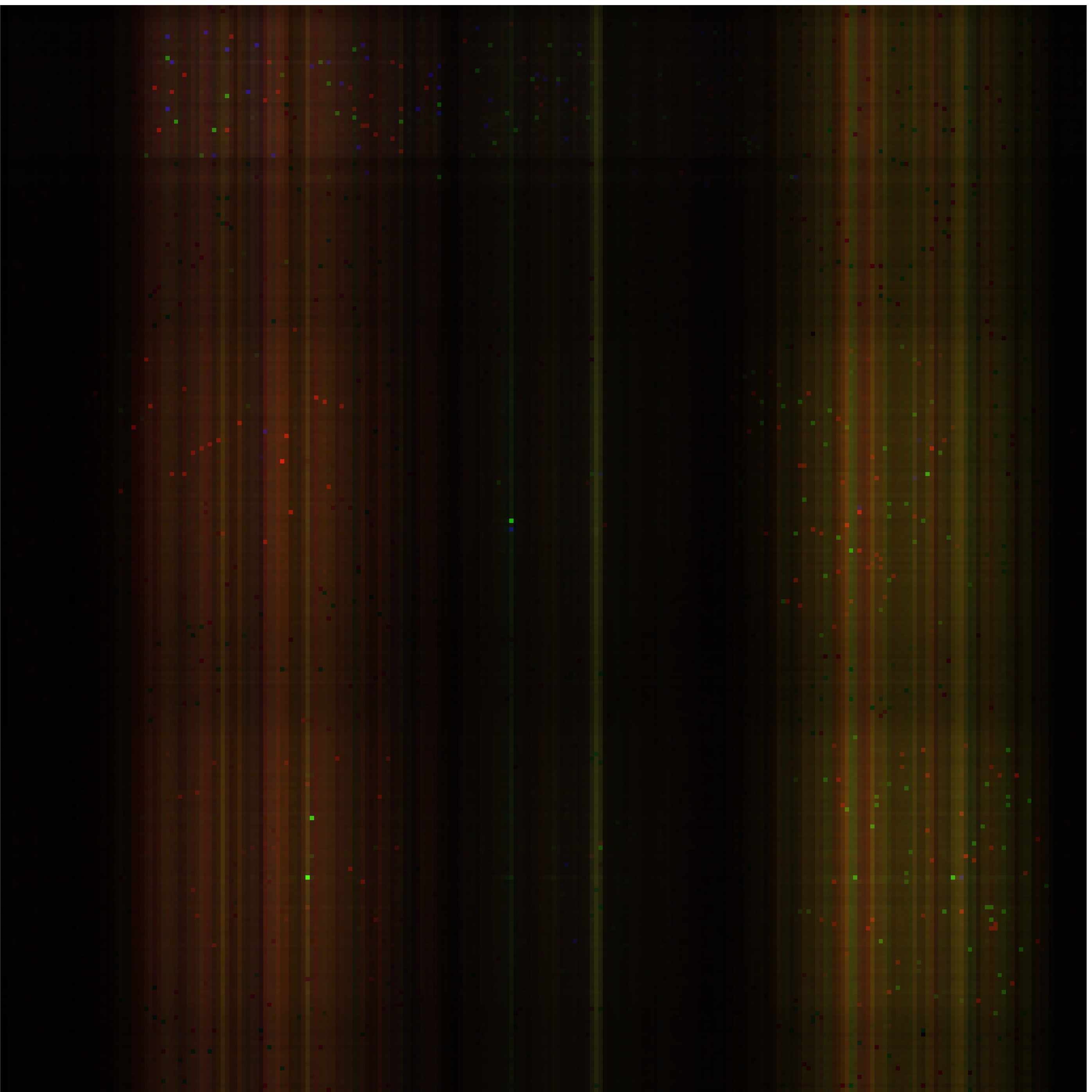}&
\includegraphics[width=0.19\textwidth]{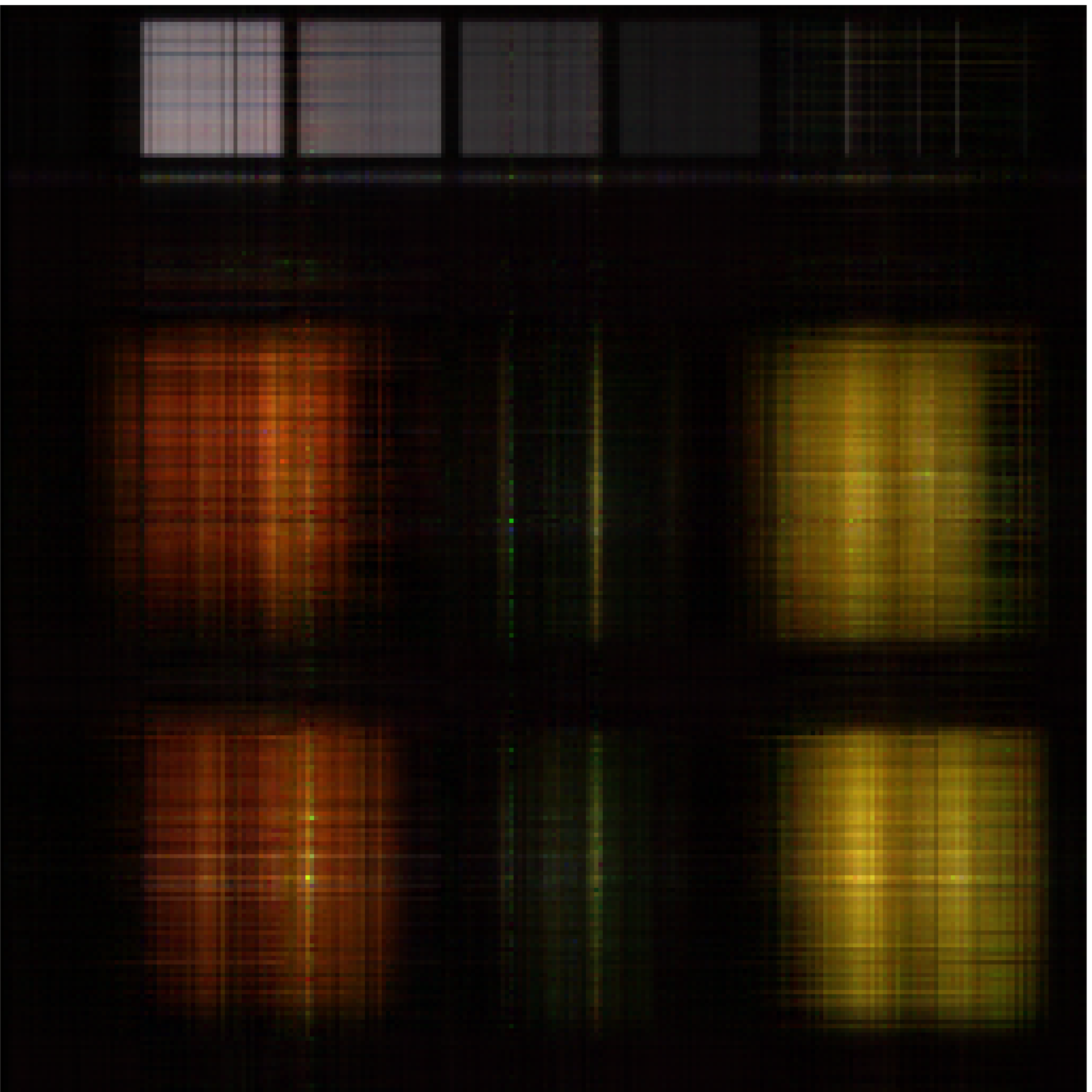}&
\includegraphics[width=0.19\textwidth]{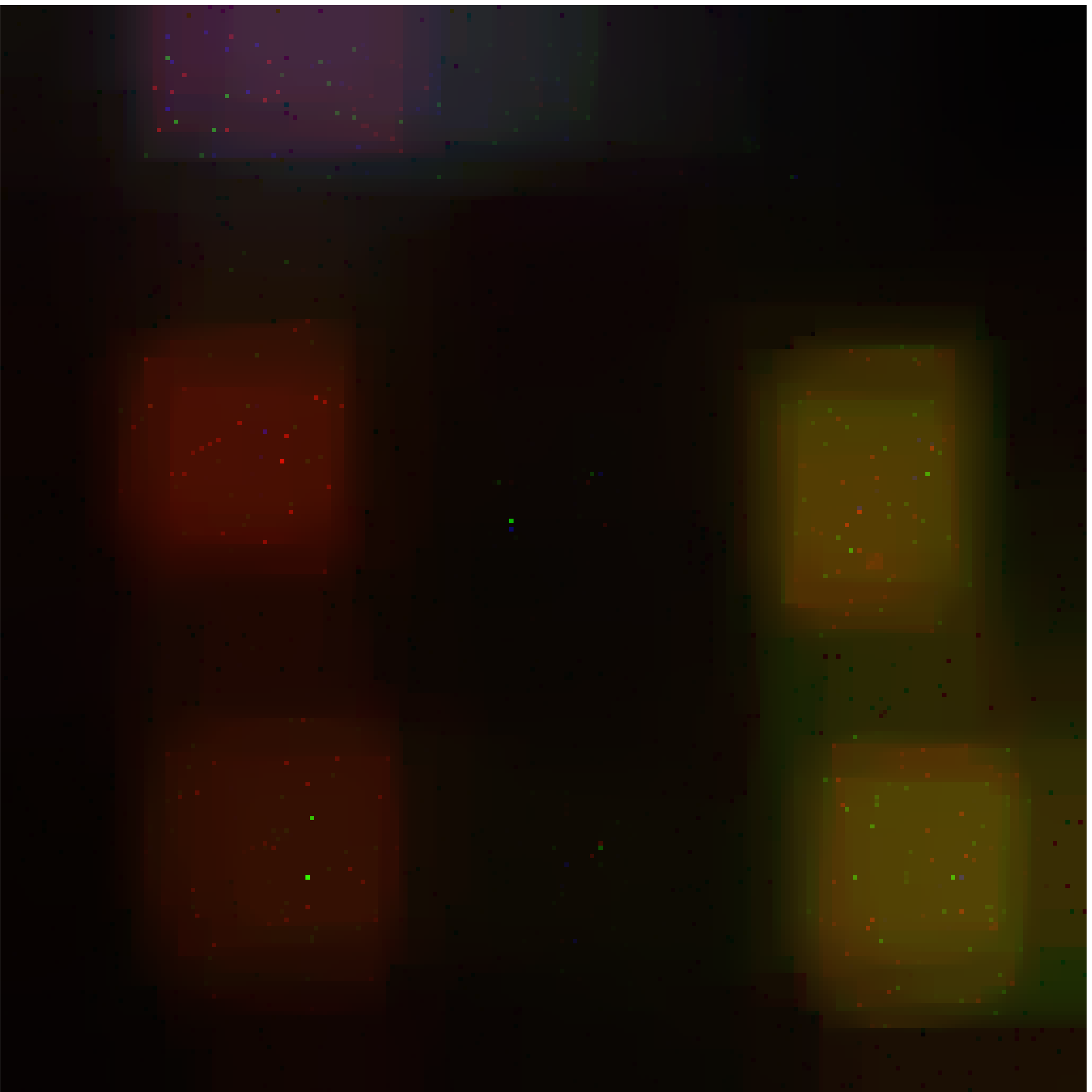}\vspace{0.01cm}\\
(a) Original & (b) Observed & (c) HaLRTC & (d) NSNN & (e) LRTC-TV\\
\includegraphics[width=0.19\textwidth]{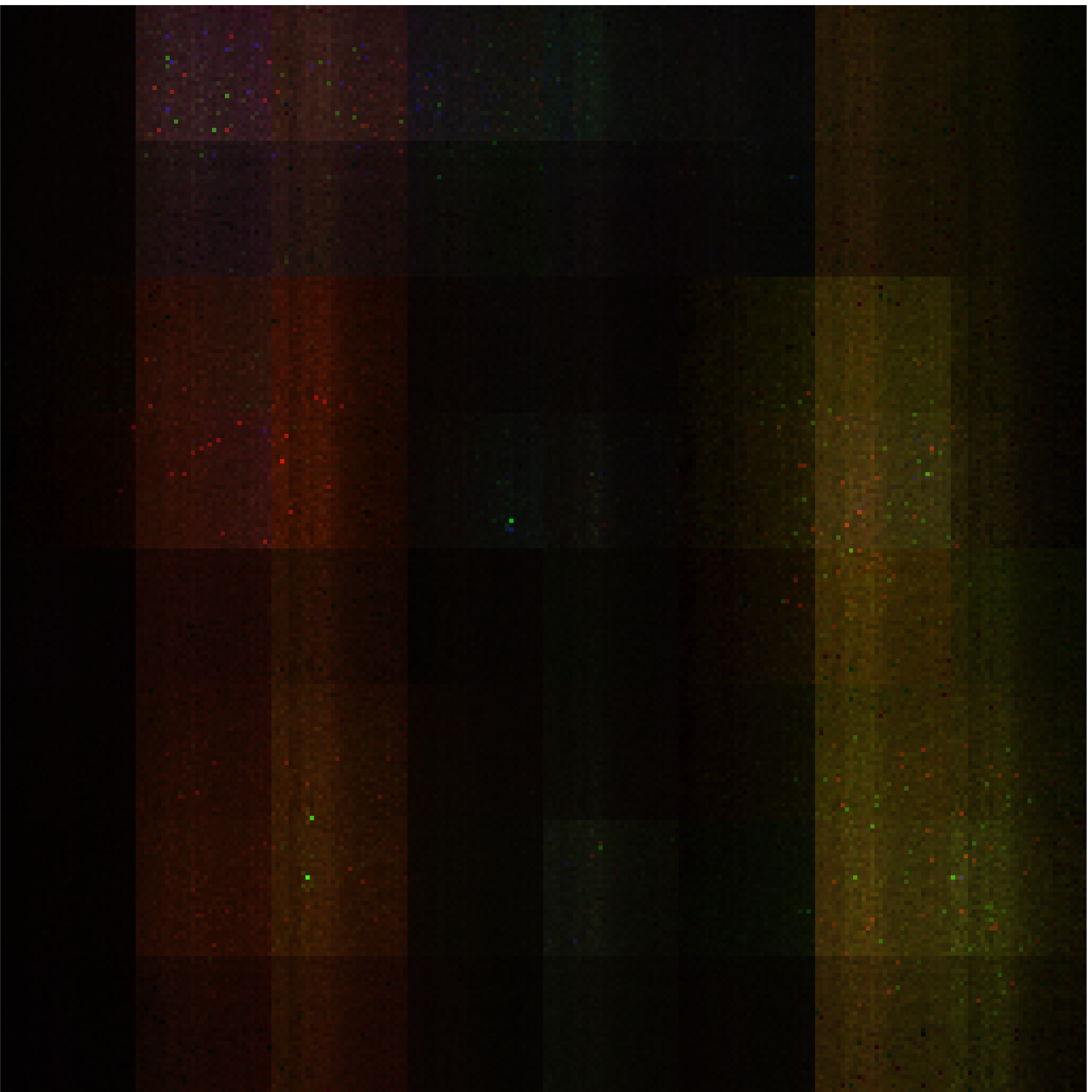}&
\includegraphics[width=0.19\textwidth]{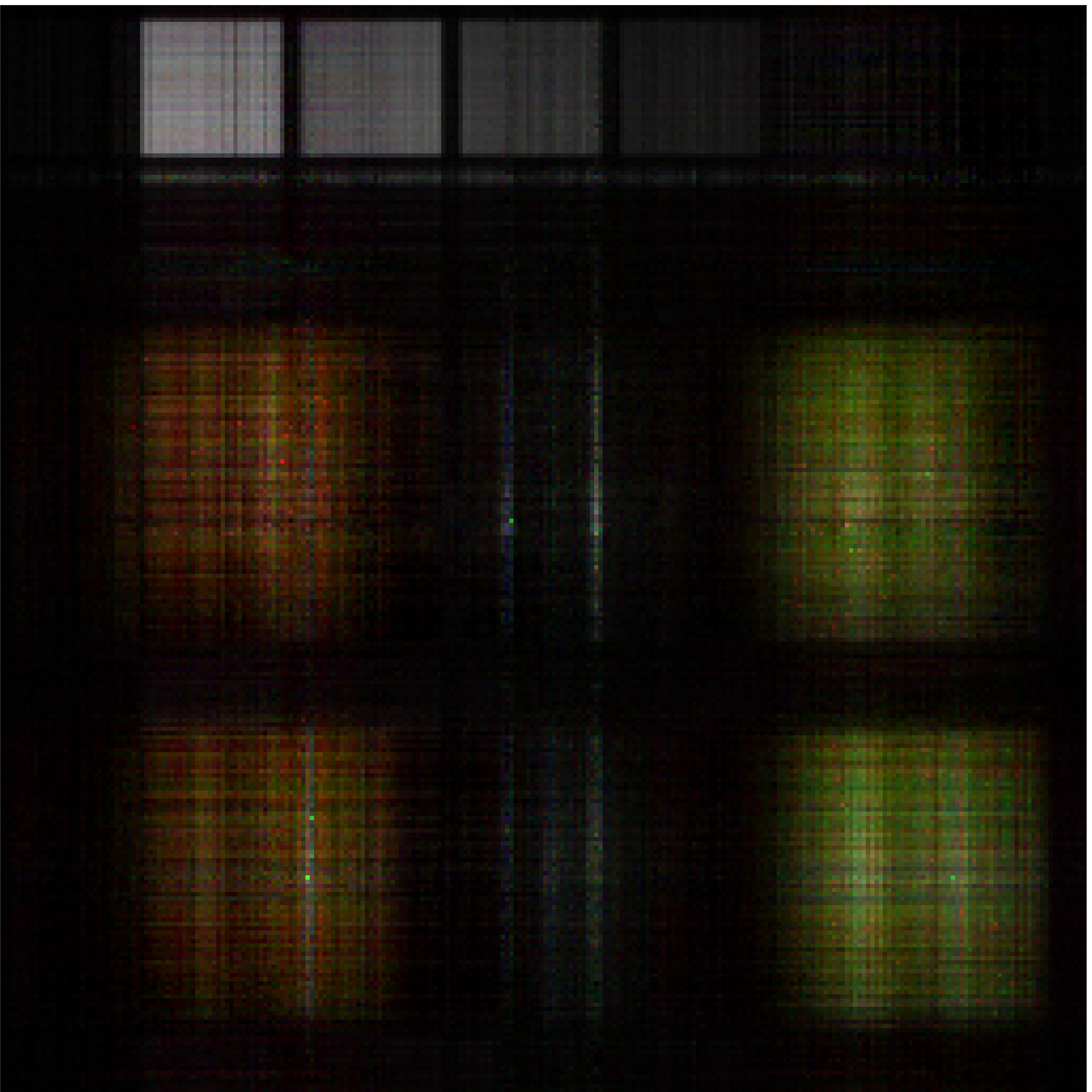}&
\includegraphics[width=0.19\textwidth]{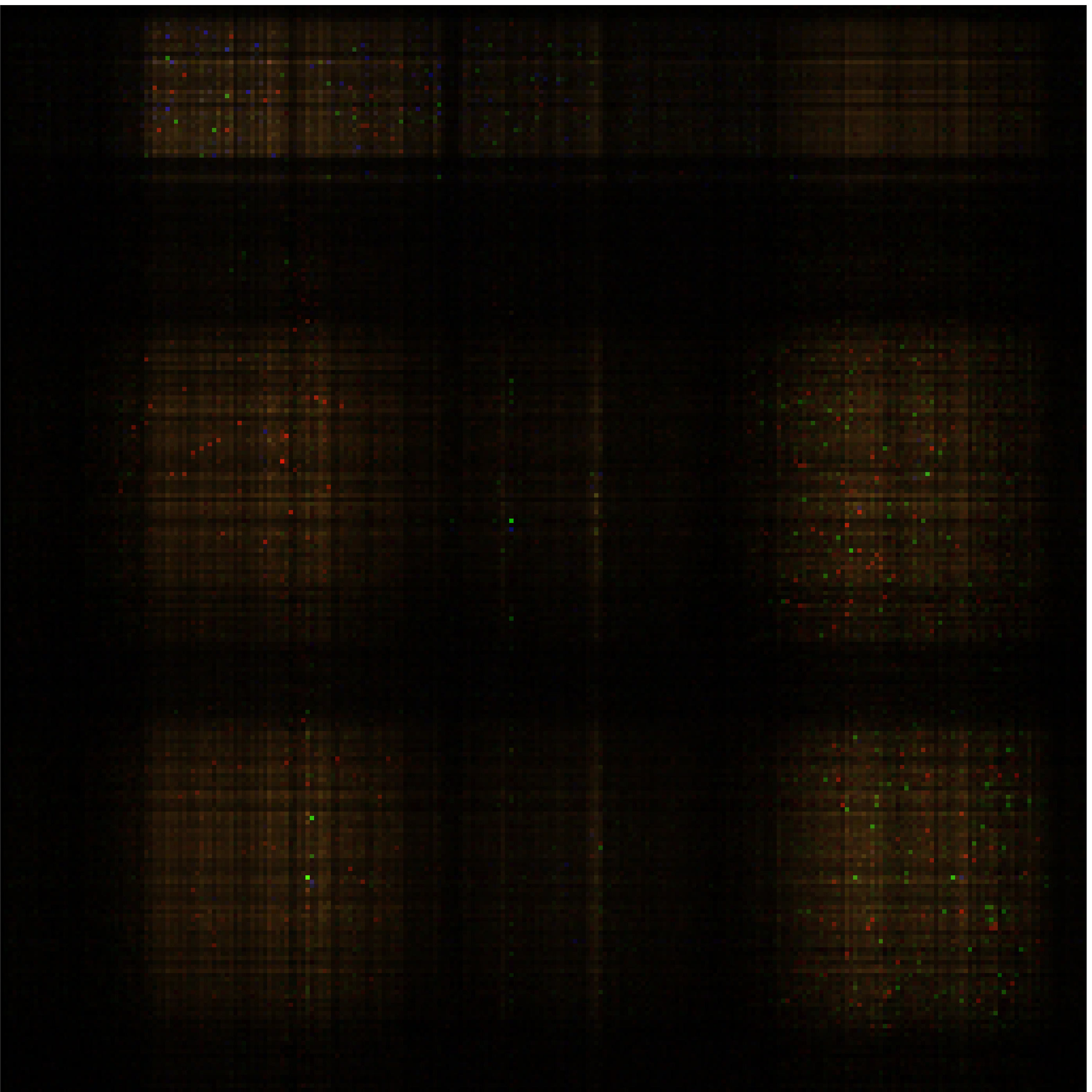}&
\includegraphics[width=0.19\textwidth]{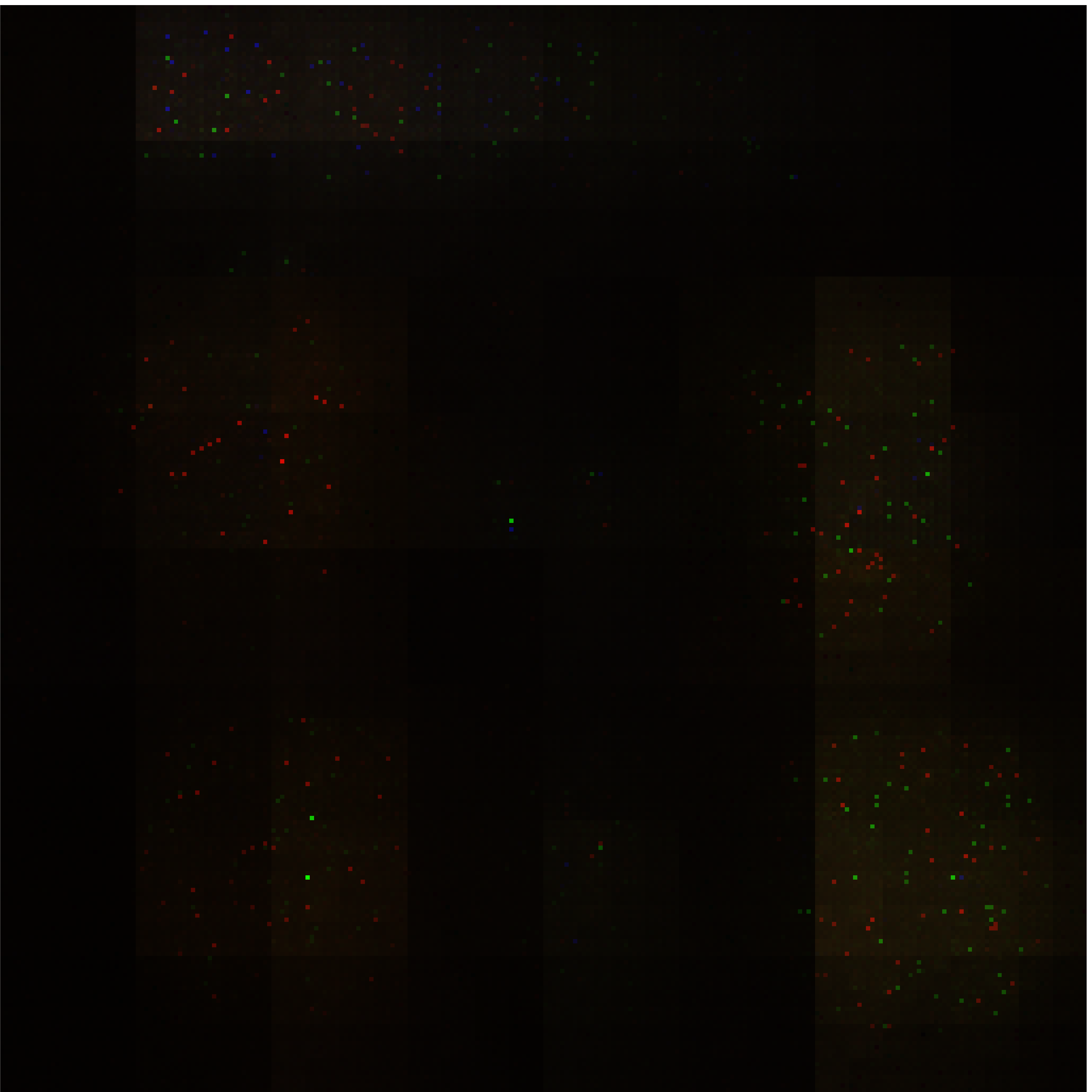}&
\includegraphics[width=0.19\textwidth]{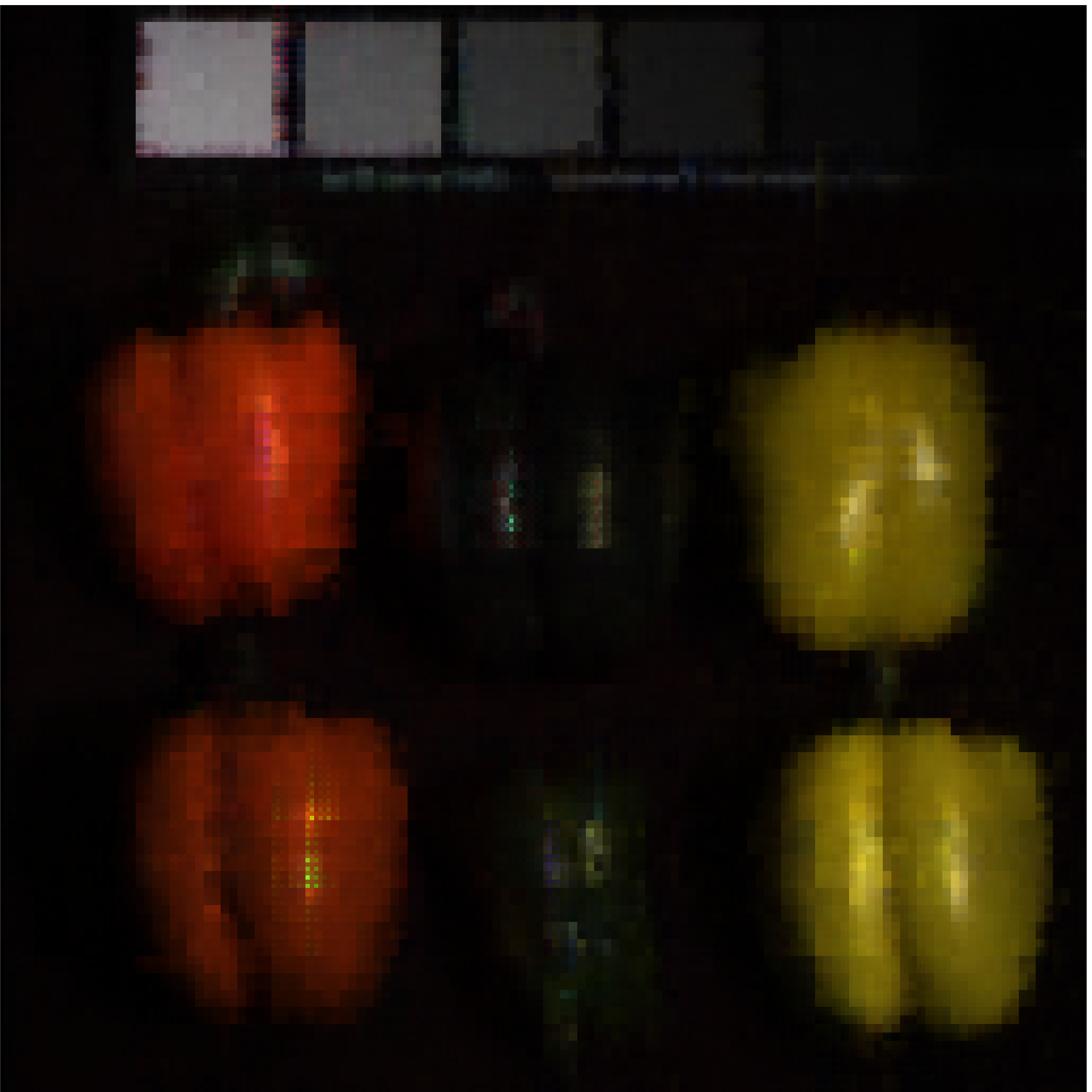}\vspace{0.01cm}\\
(f) SiLRTC-TT & (g) tSVD & (h) KBR & (i) TRNN & (j) LogTR
\end{tabular}
\caption{\small{Recovered MSIs \emph{Feathers}, \emph{Toy}, and \emph{Peppers} for random missing entries with $SR=0.01$. The color image is composed of bands 30, 20, and 10.}}
  \label{fig:msi_01}
  \end{center}\vspace{-0.3cm}
\end{figure}

\begin{figure}[!t]
\scriptsize\setlength{\tabcolsep}{0.5pt}
\begin{center}
\begin{tabular}{ccccc}
\includegraphics[width=0.19\textwidth]{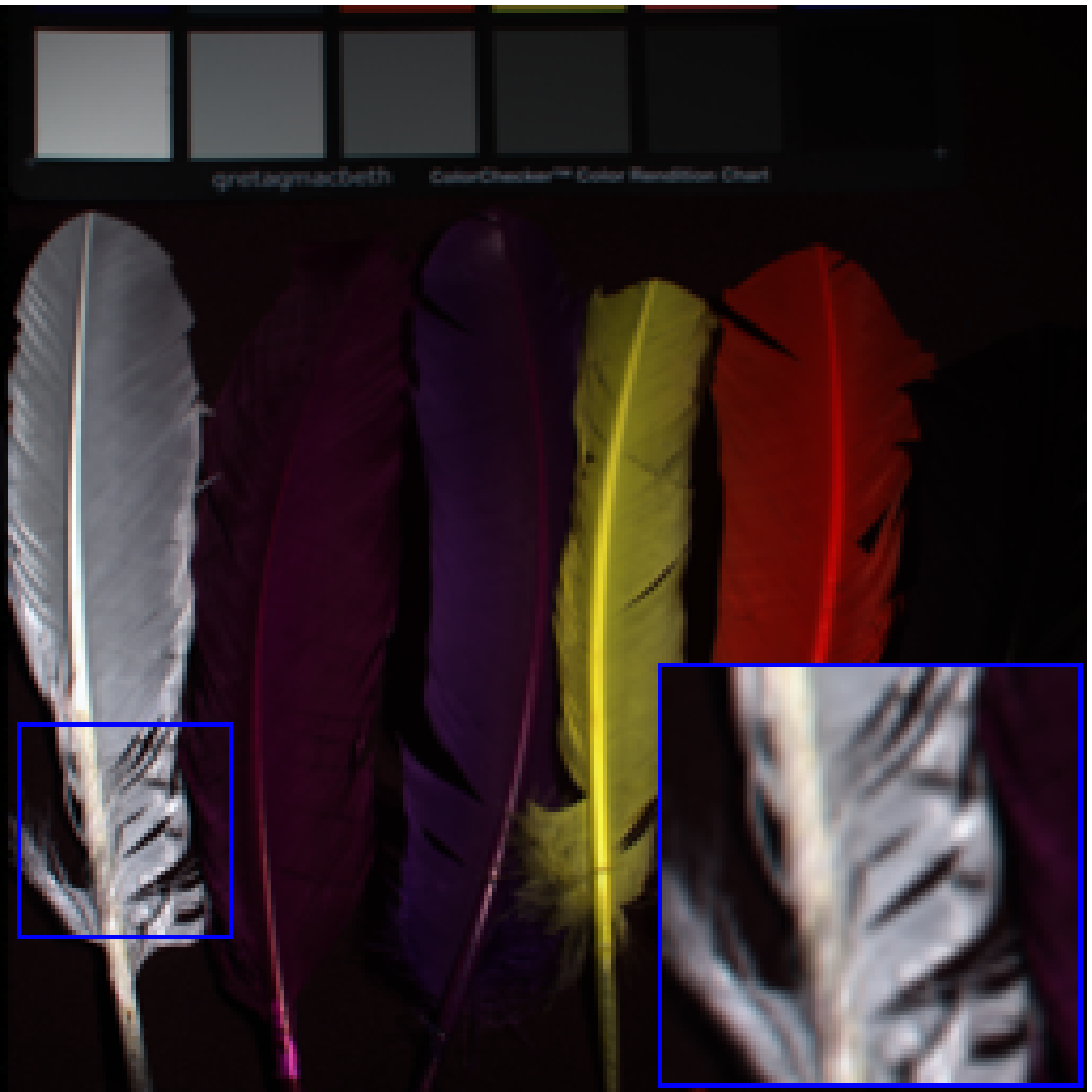}&
\includegraphics[width=0.19\textwidth]{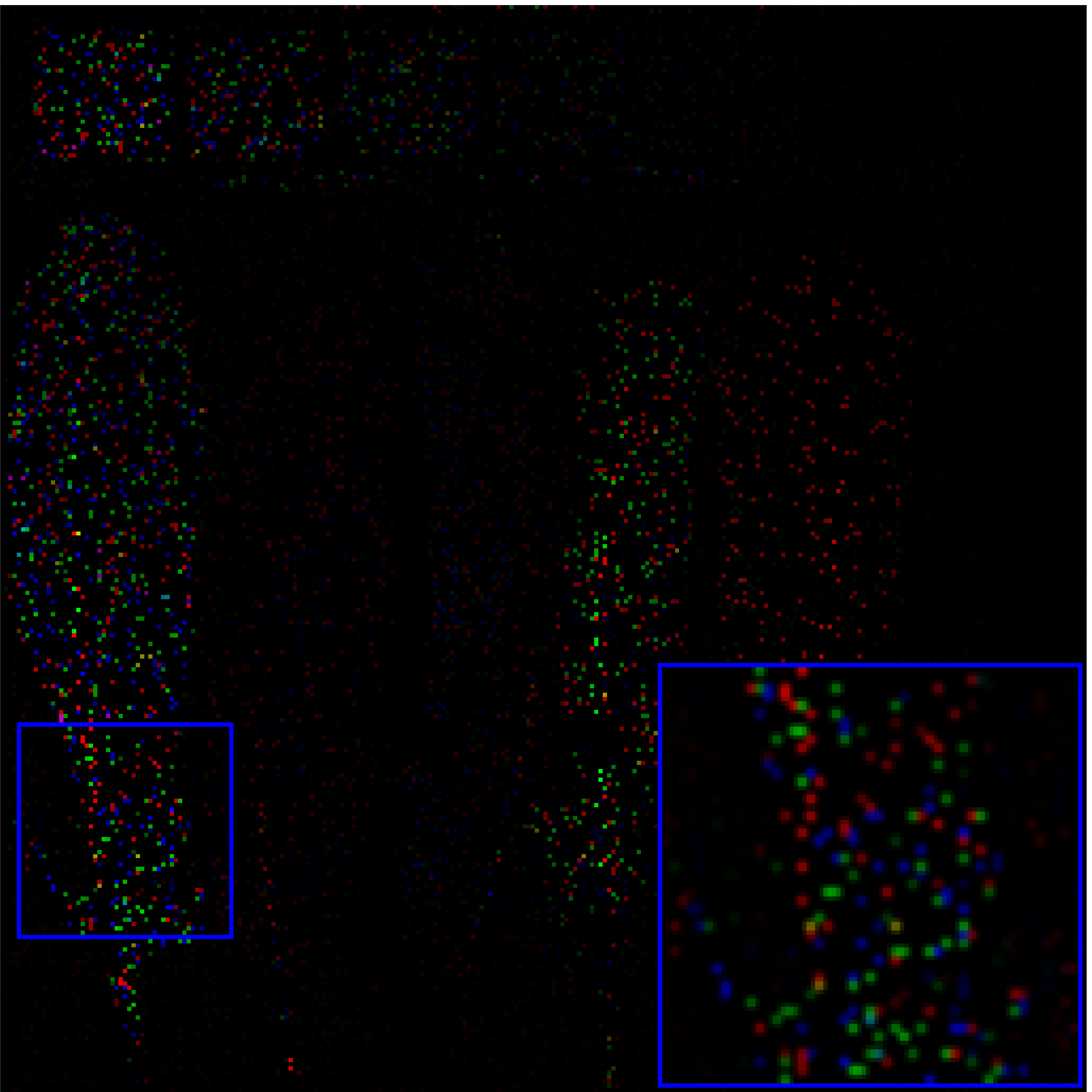}&
\includegraphics[width=0.19\textwidth]{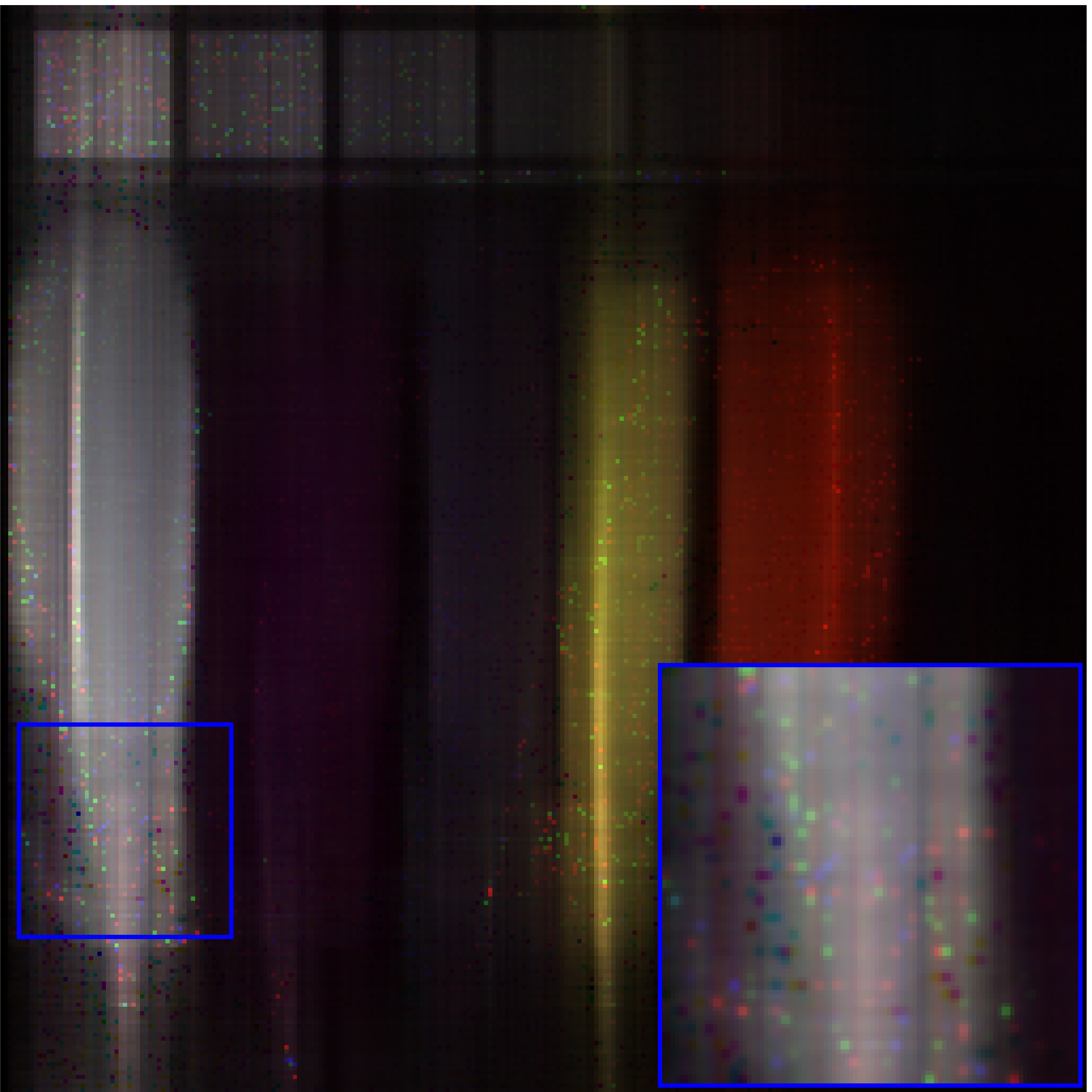}&
\includegraphics[width=0.19\textwidth]{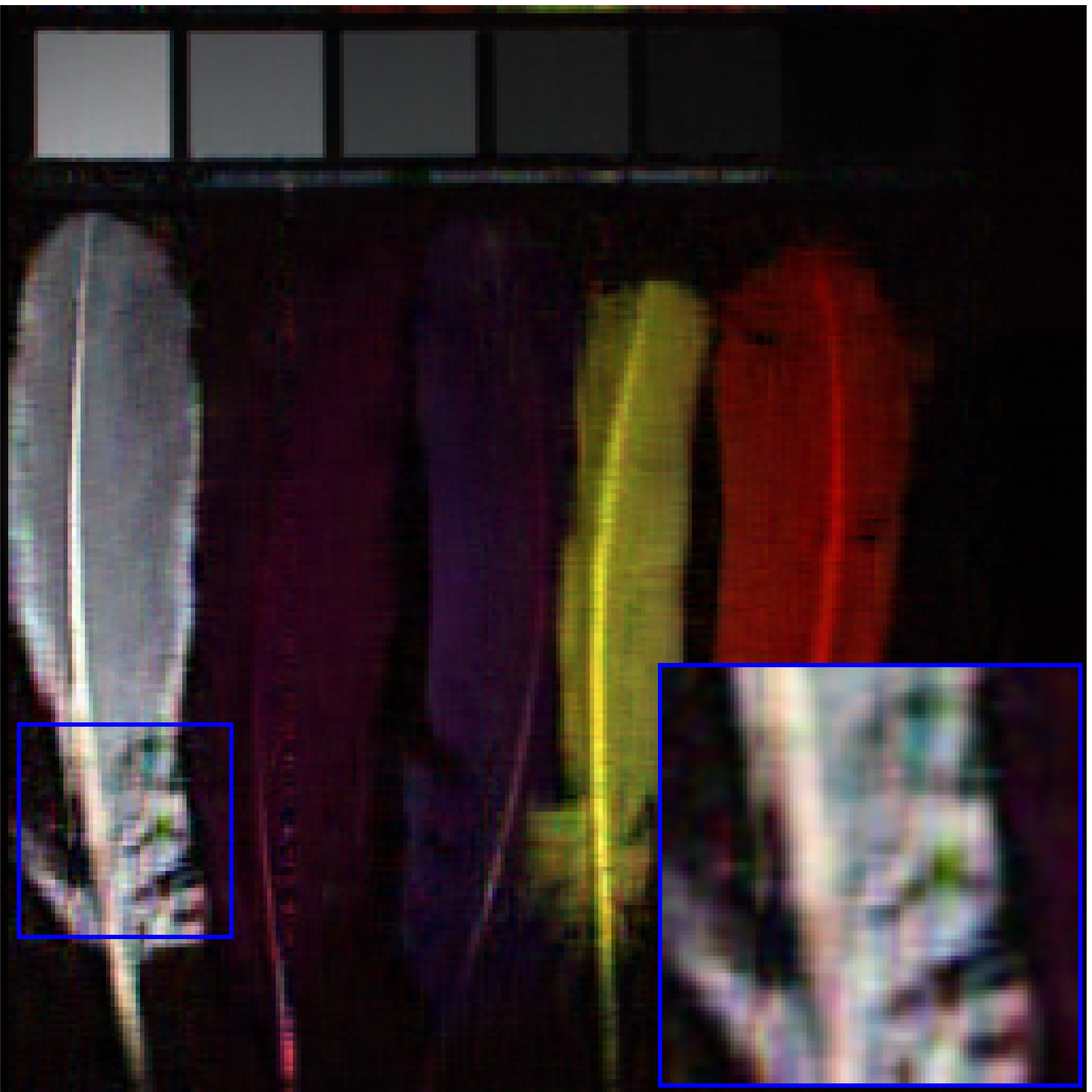}&
\includegraphics[width=0.19\textwidth]{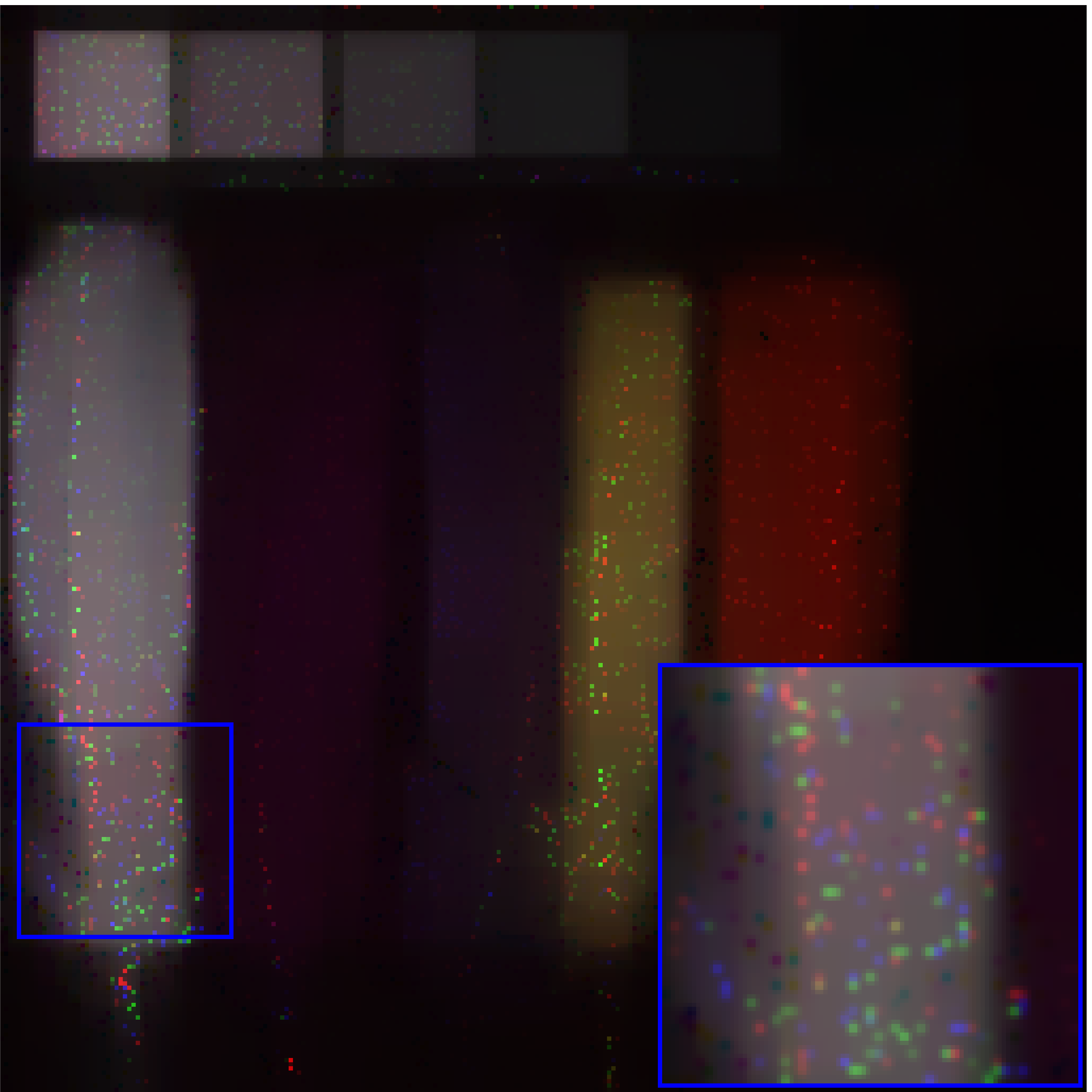}\vspace{0.01cm}\\
(a) Original & (b) Observed & (c) HaLRTC & (d) NSNN & (e) LRTC-TV\\
\includegraphics[width=0.19\textwidth]{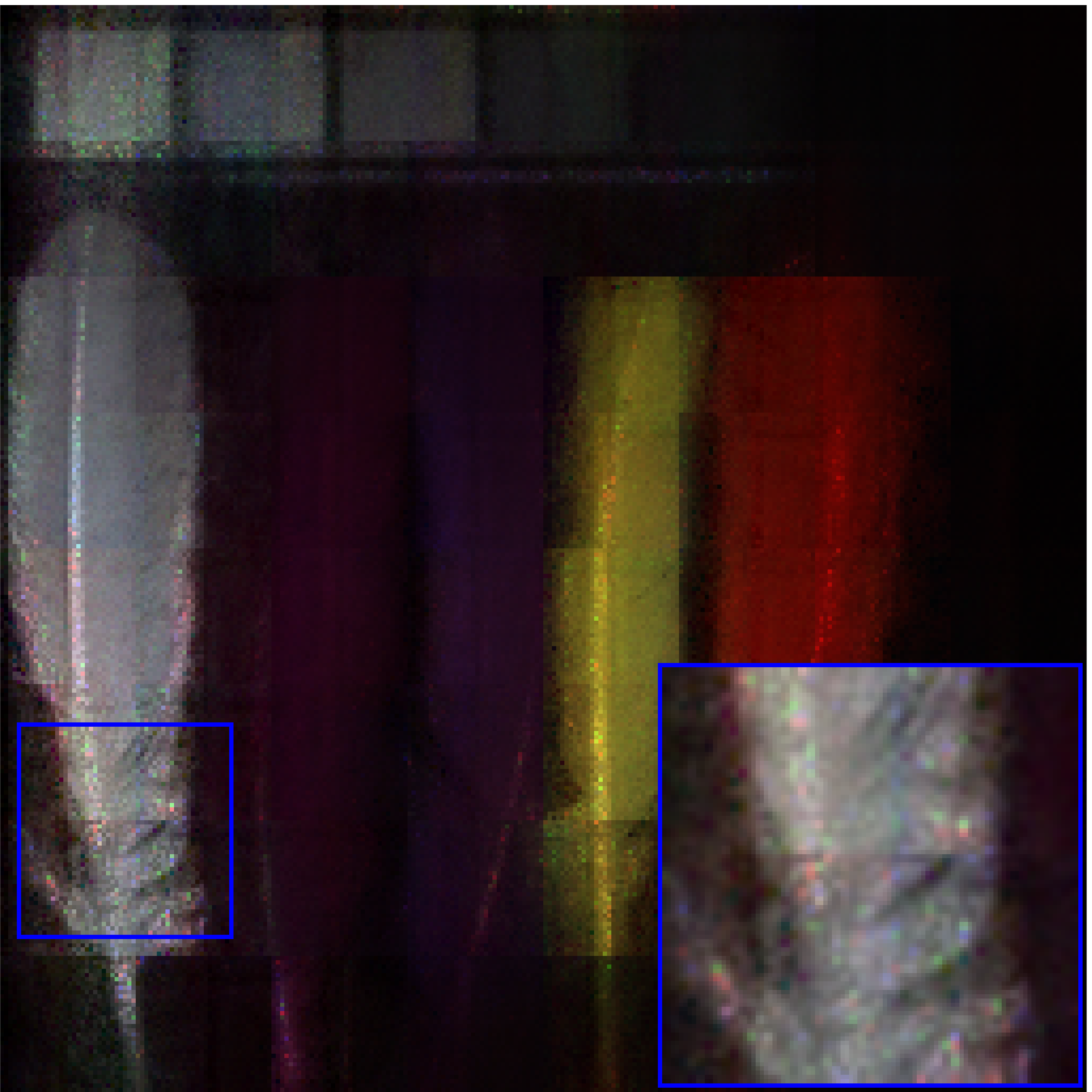}&
\includegraphics[width=0.19\textwidth]{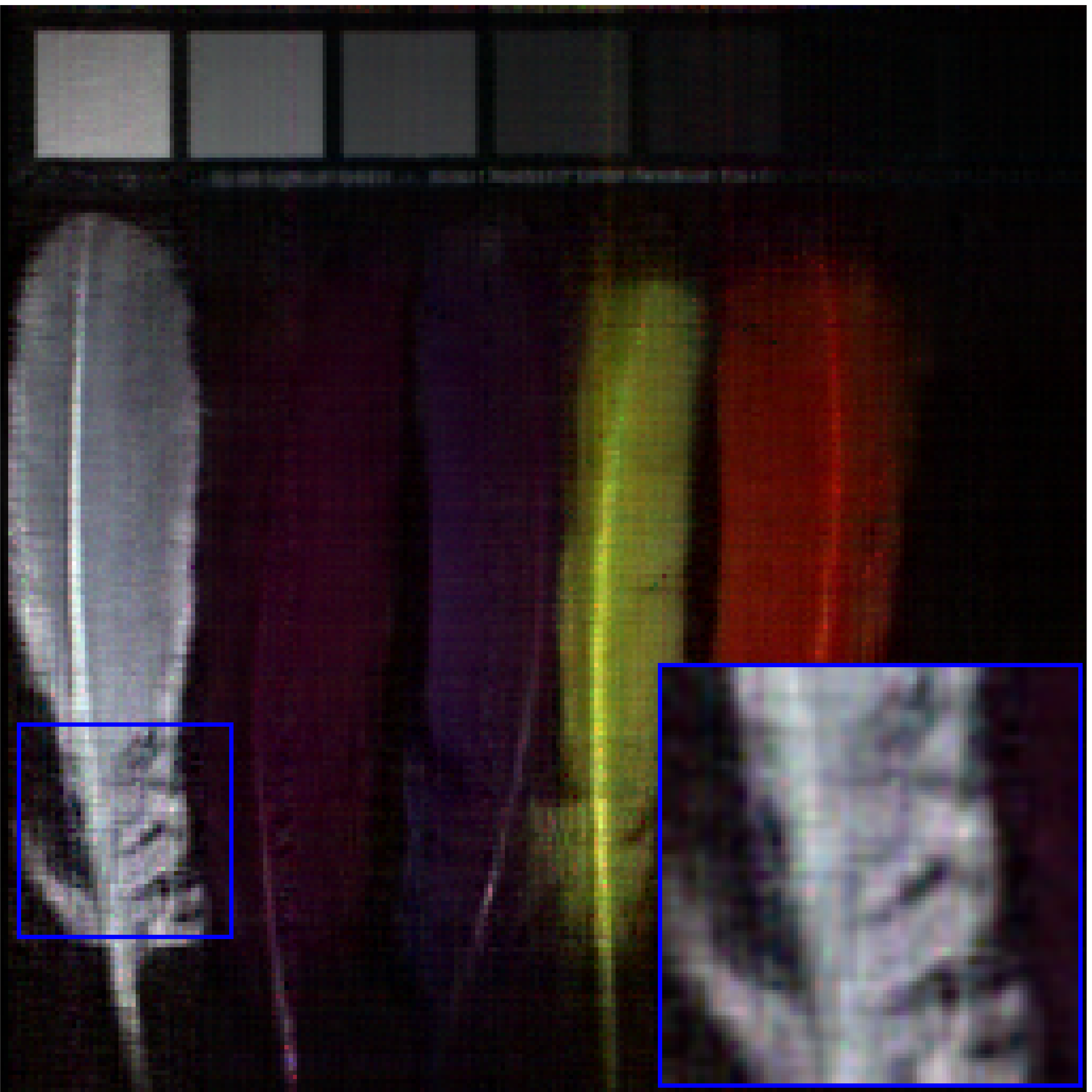}&
\includegraphics[width=0.19\textwidth]{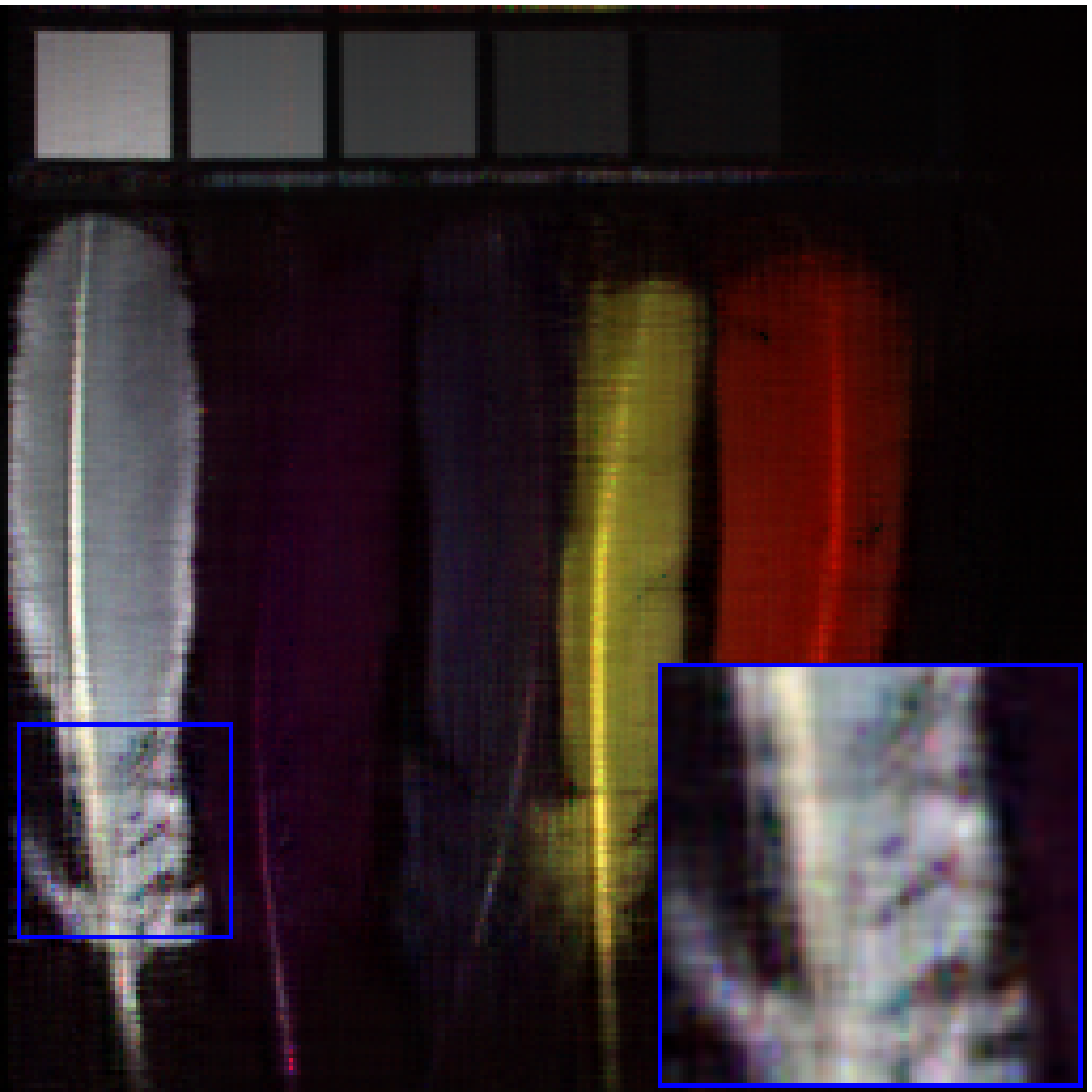}&
\includegraphics[width=0.19\textwidth]{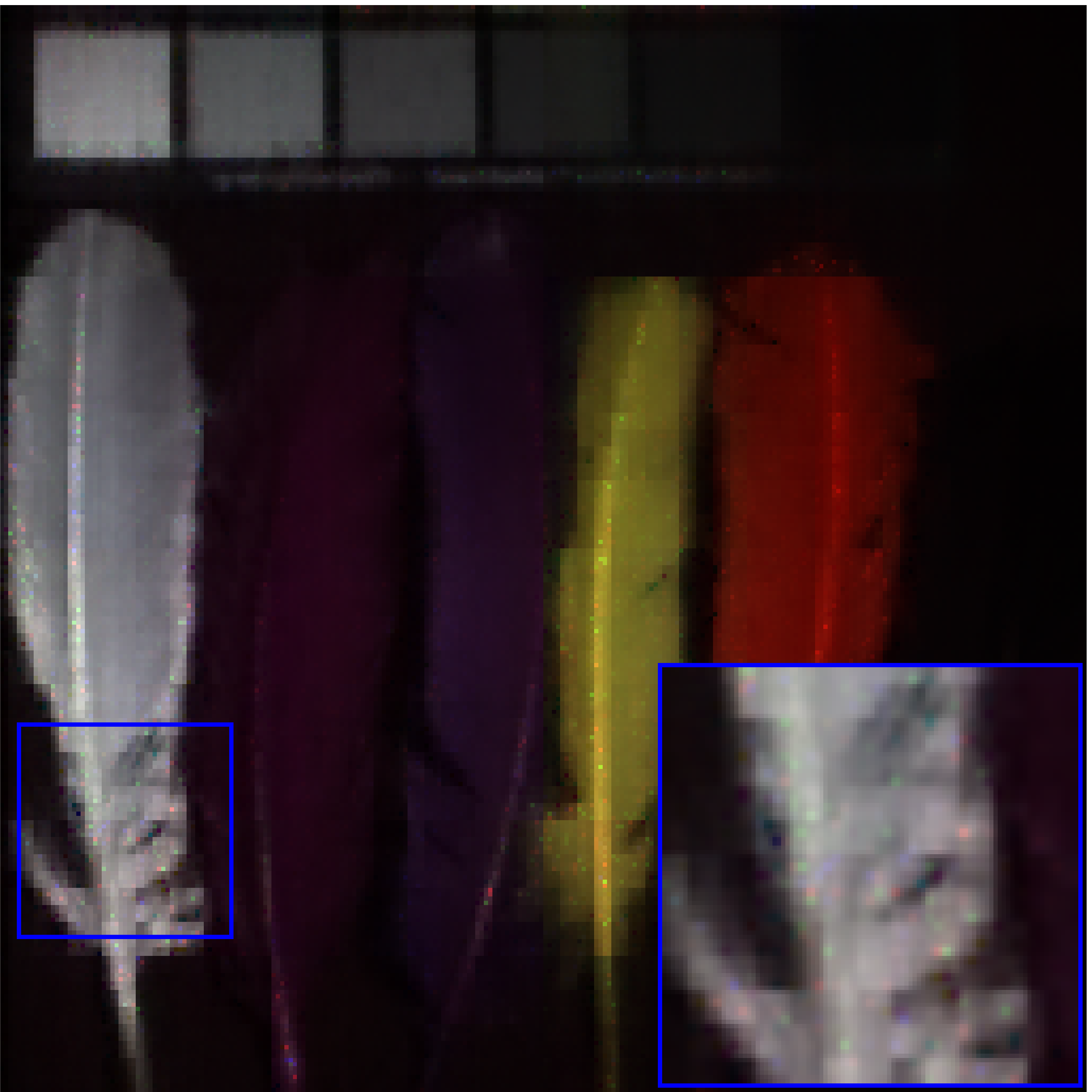}&
\includegraphics[width=0.19\textwidth]{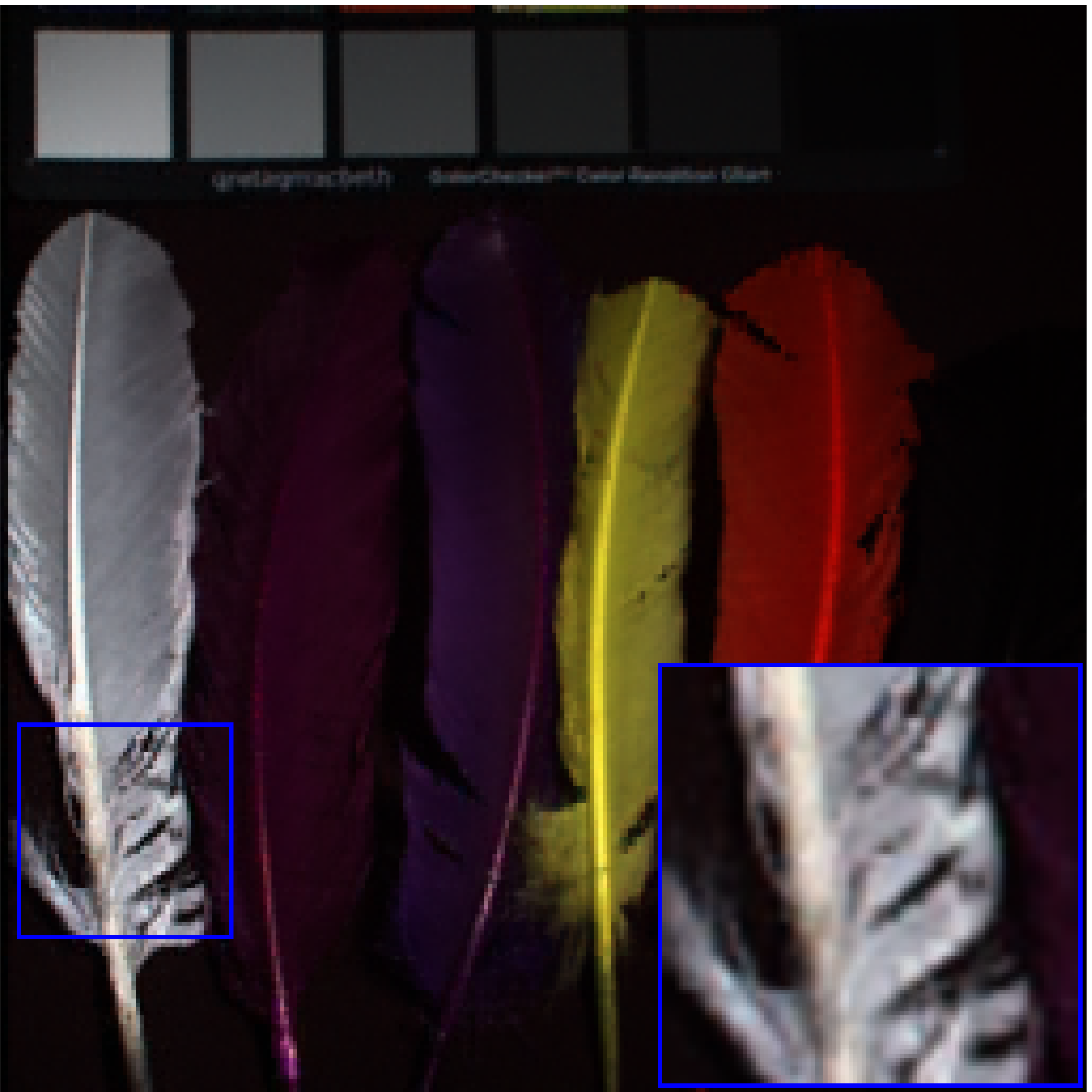}\vspace{0.01cm}\\
(f) SiLRTC-TT & (g) tSVD & (h) KBR & (i) TRNN & (j) LogTR\\
\includegraphics[width=0.19\textwidth]{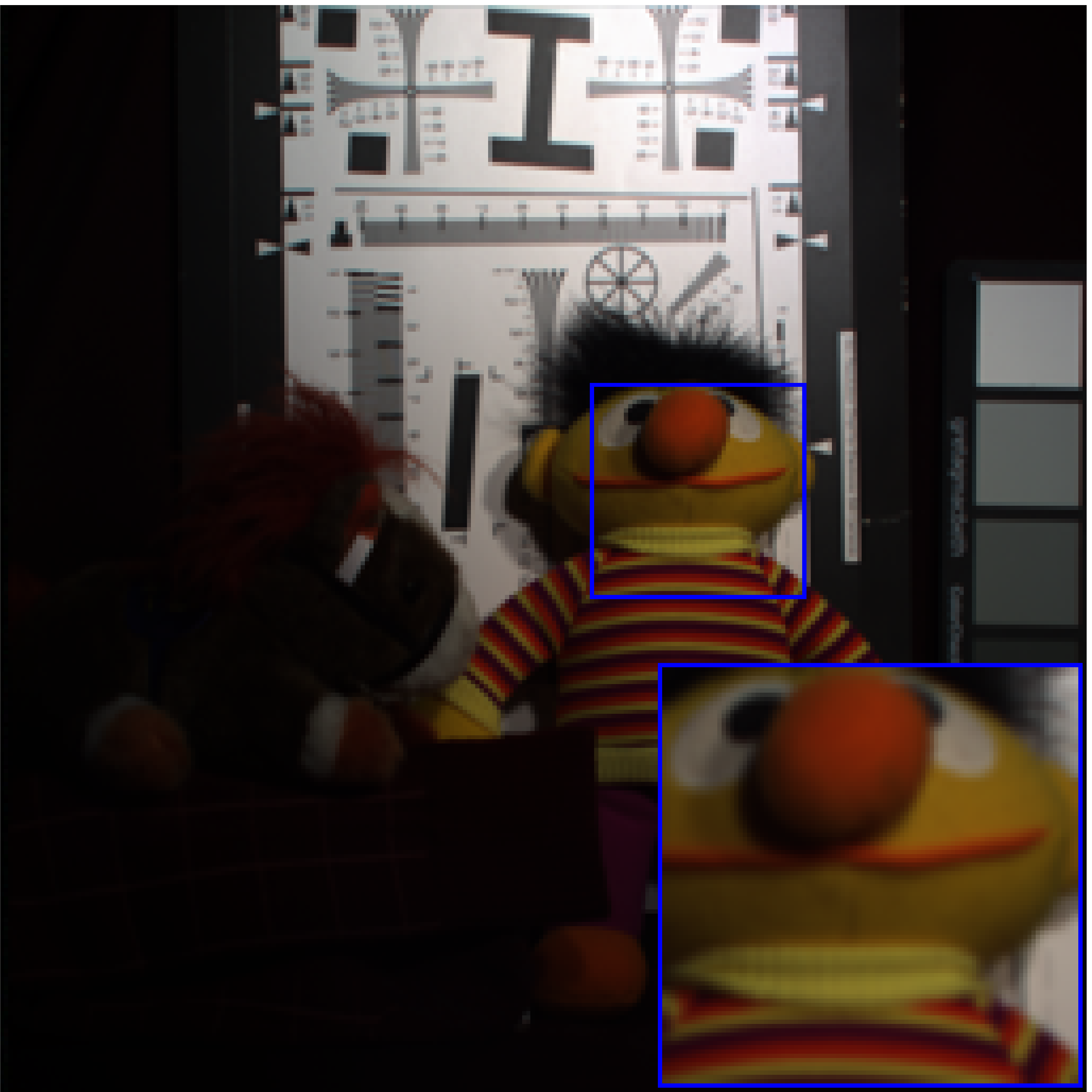}&
\includegraphics[width=0.19\textwidth]{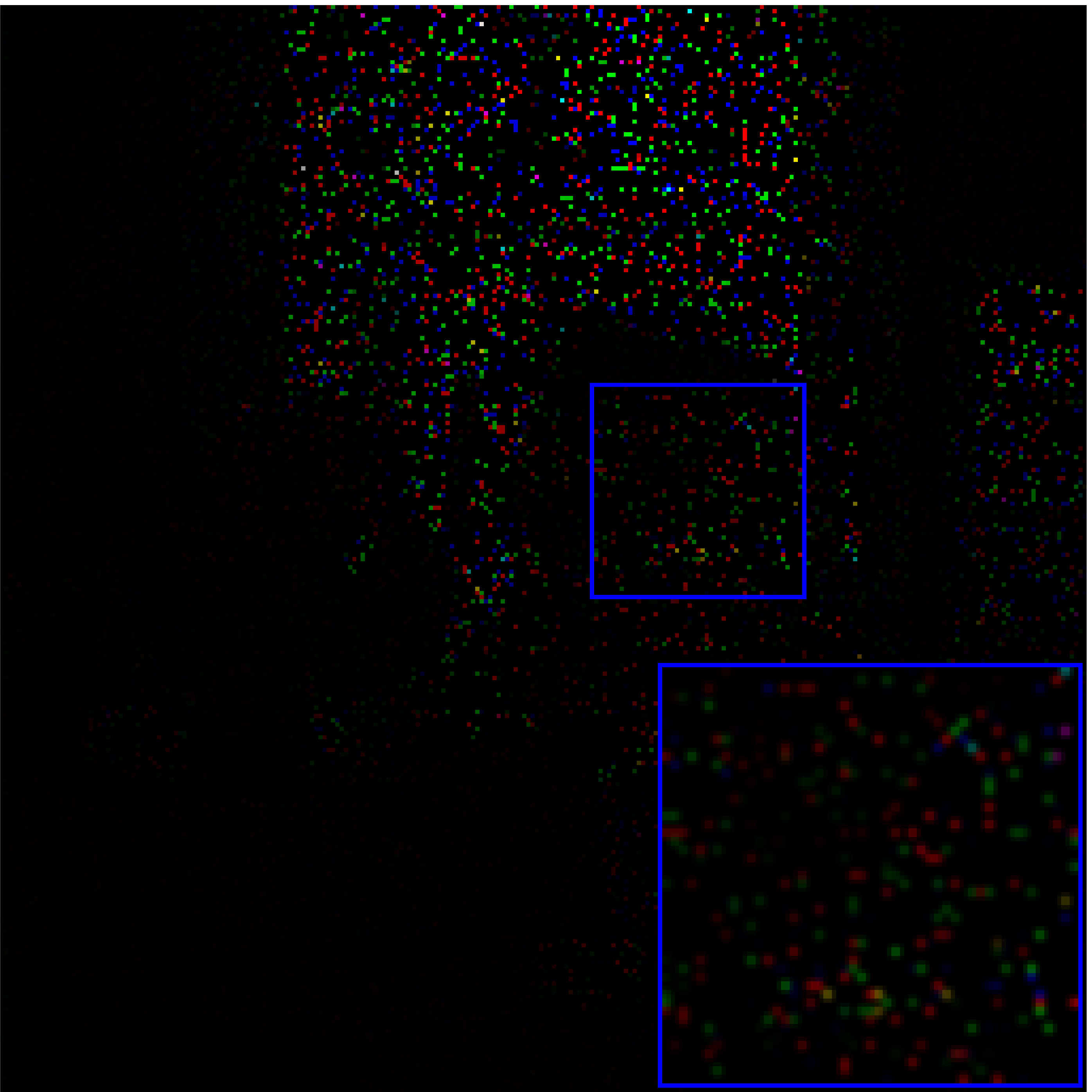}&
\includegraphics[width=0.19\textwidth]{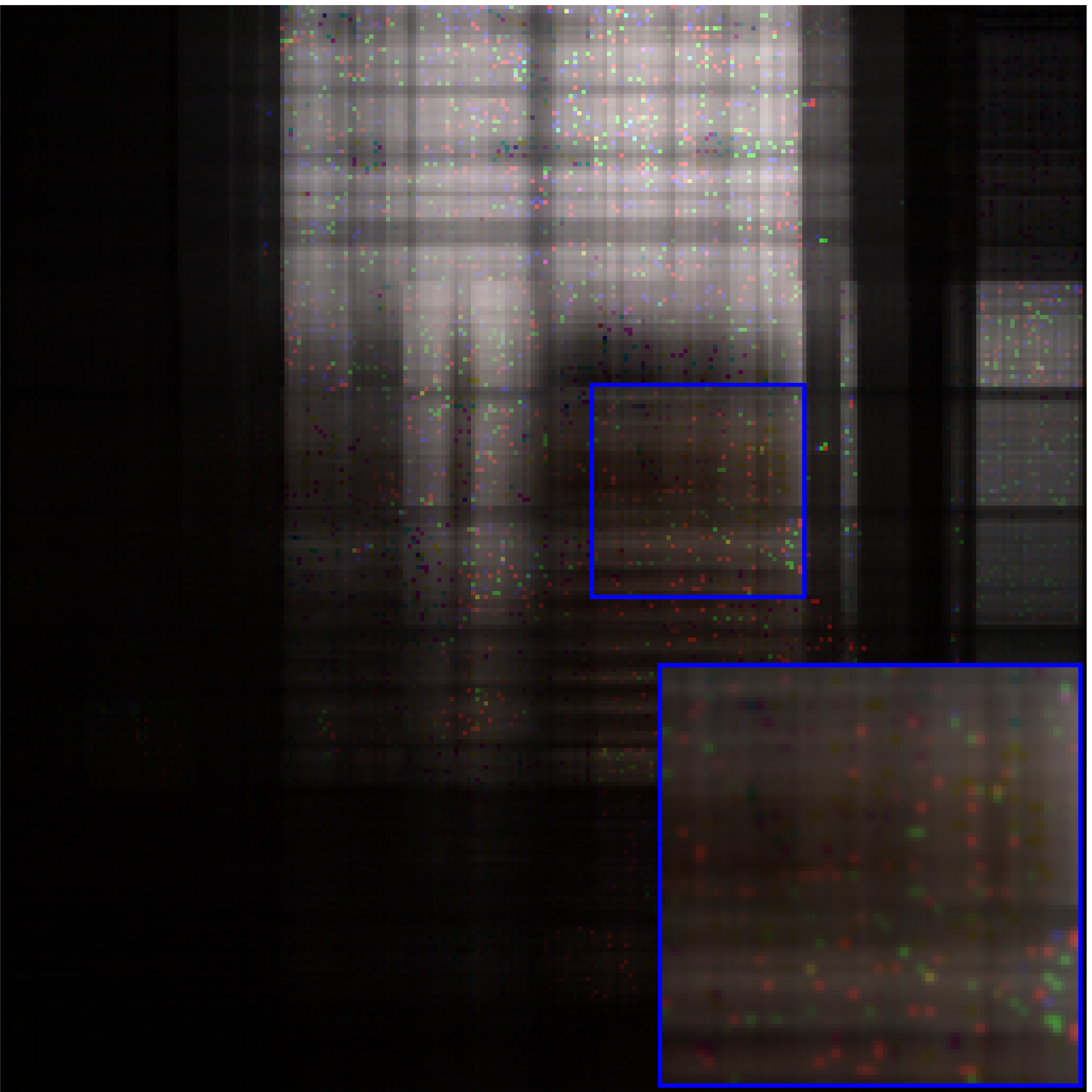}&
\includegraphics[width=0.19\textwidth]{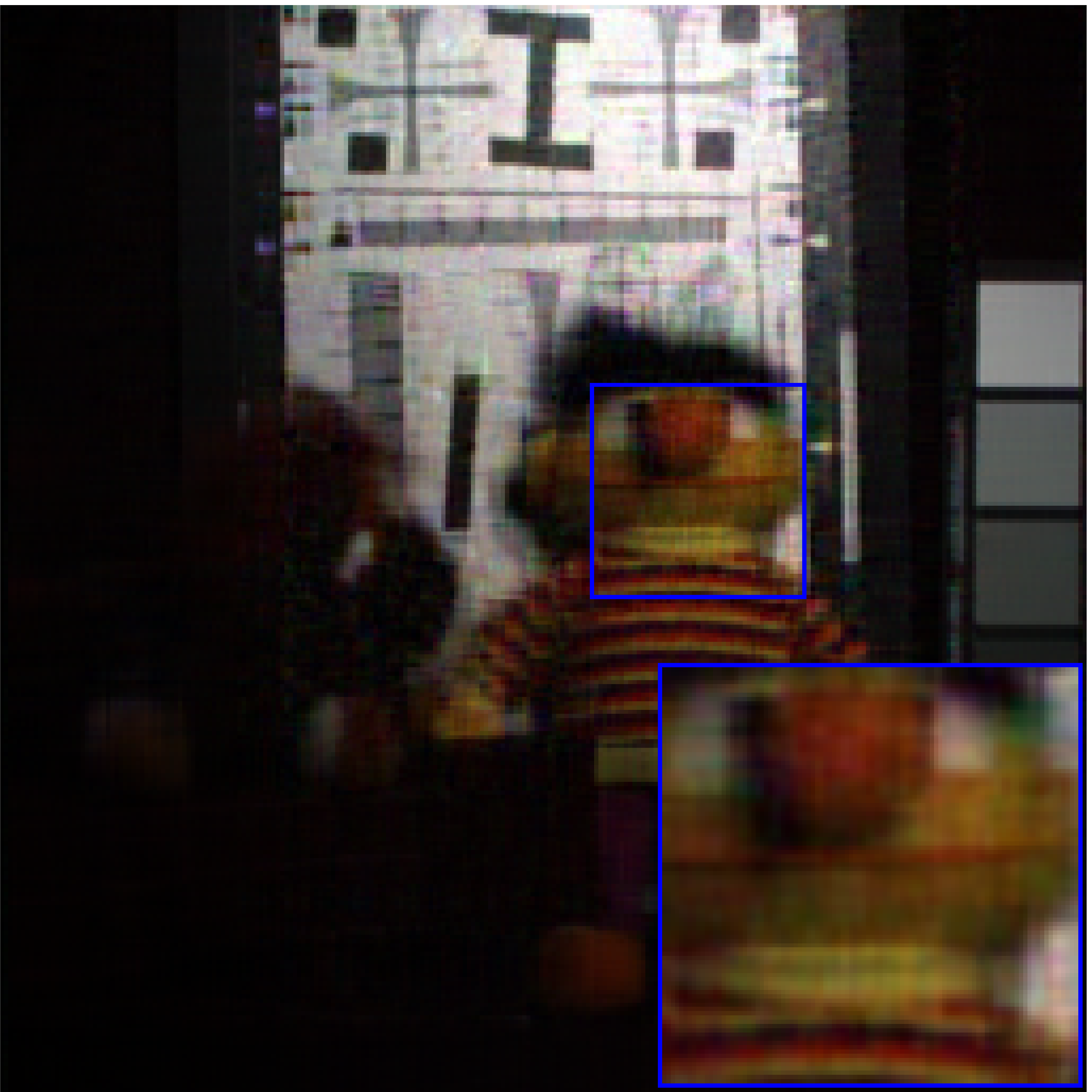}&
\includegraphics[width=0.19\textwidth]{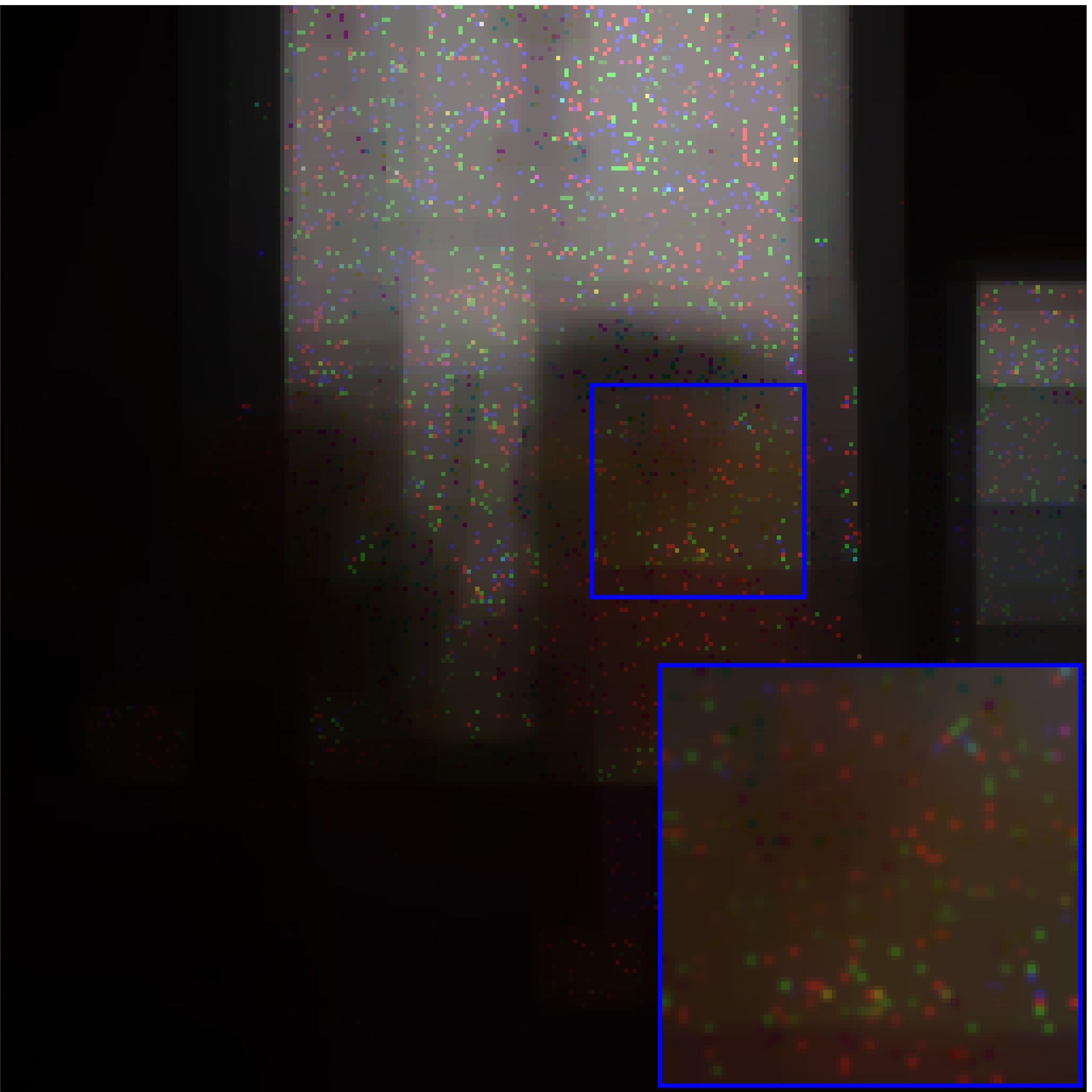}\vspace{0.01cm}\\
(a) Original& (b) Observed & (c) HaLRTC & (d) NSNN & (e) LRTC-TV\\
\includegraphics[width=0.19\textwidth]{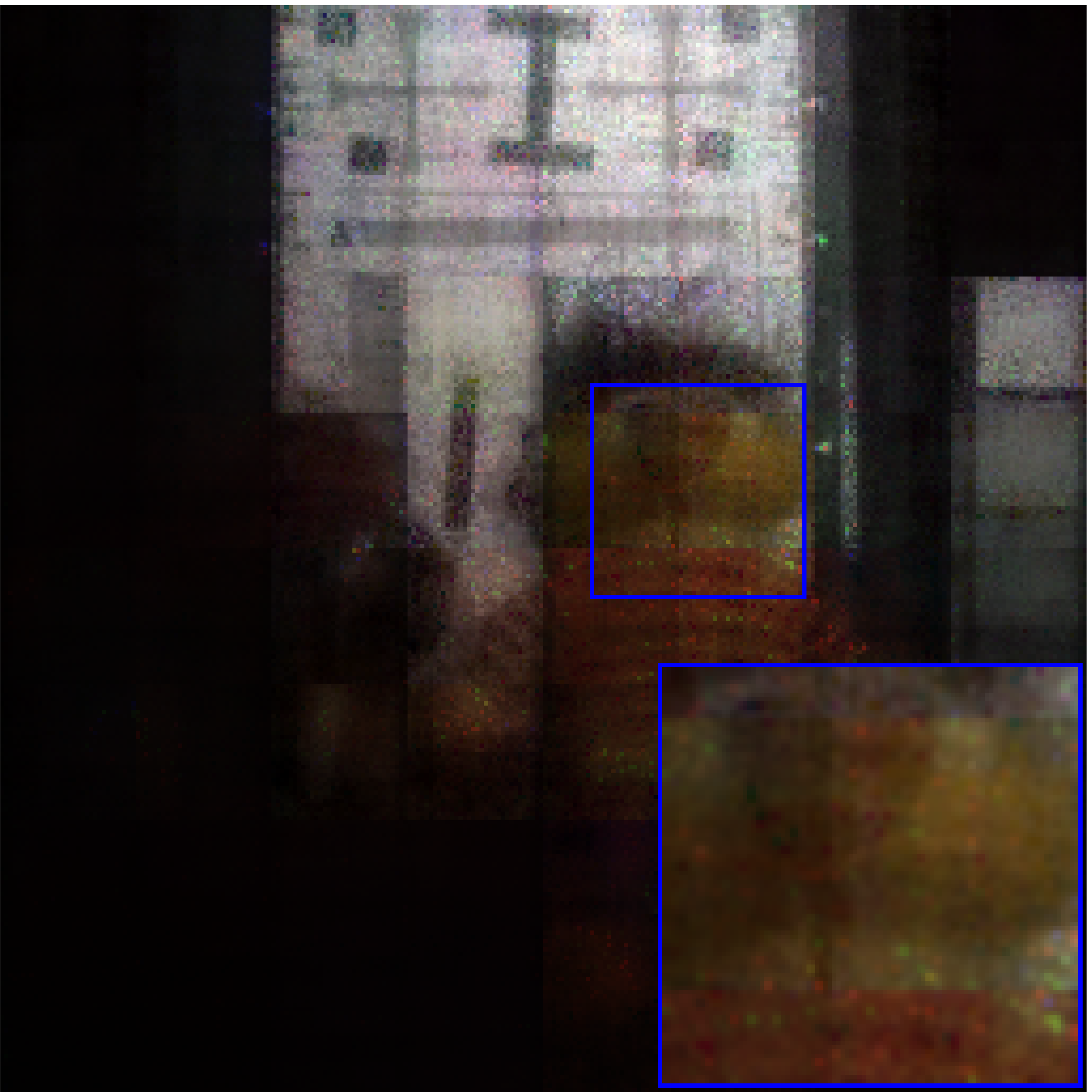}&
\includegraphics[width=0.19\textwidth]{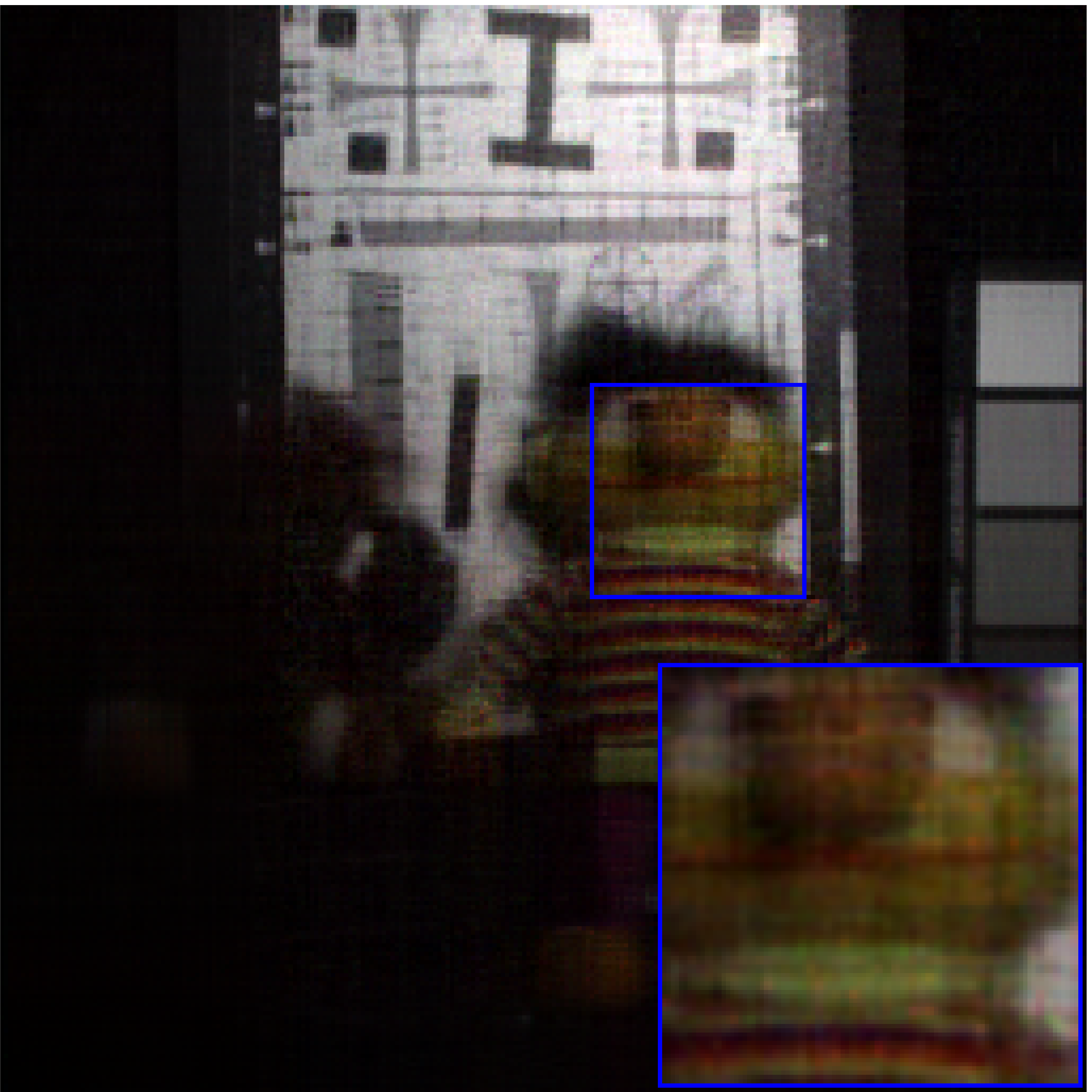}&
\includegraphics[width=0.19\textwidth]{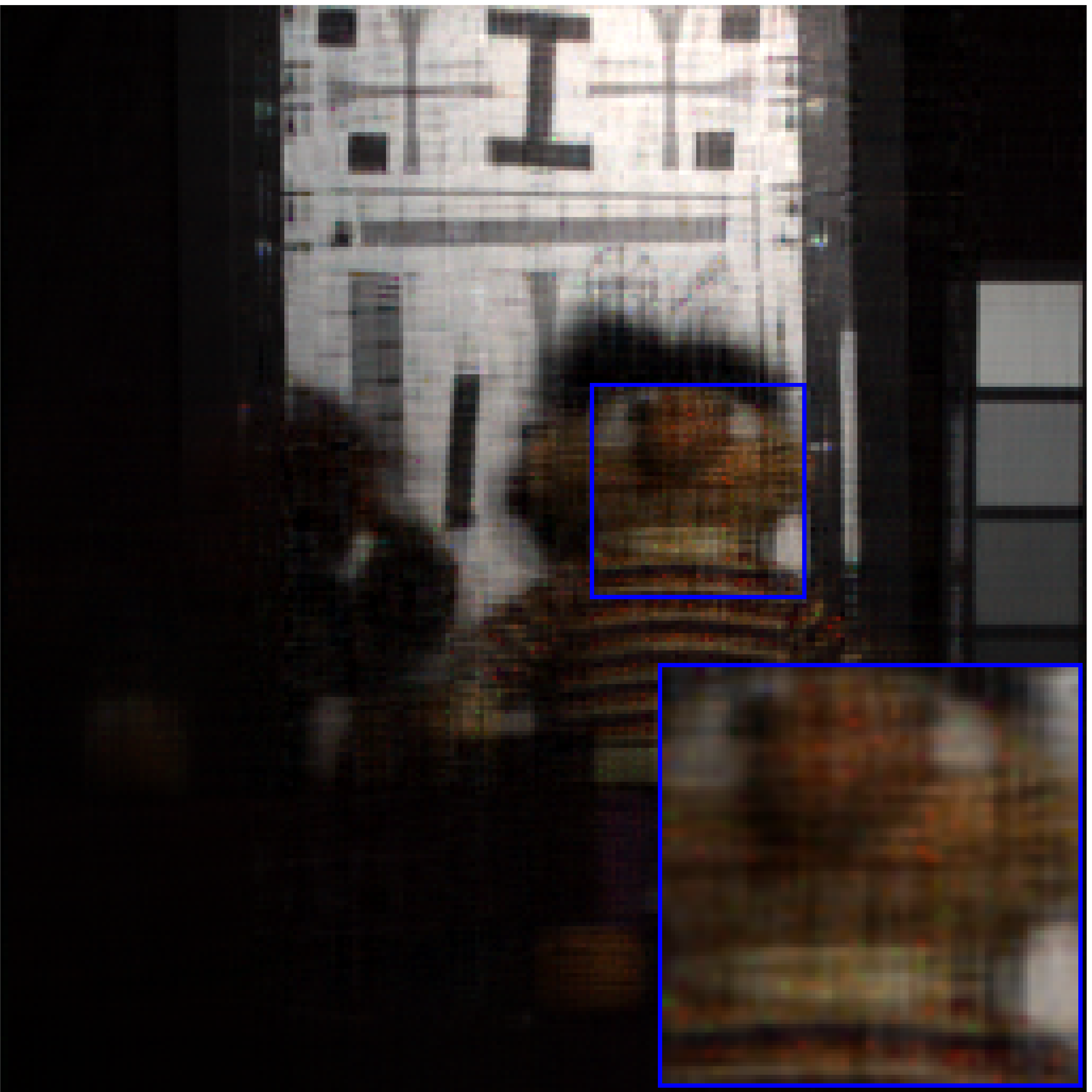}&
\includegraphics[width=0.19\textwidth]{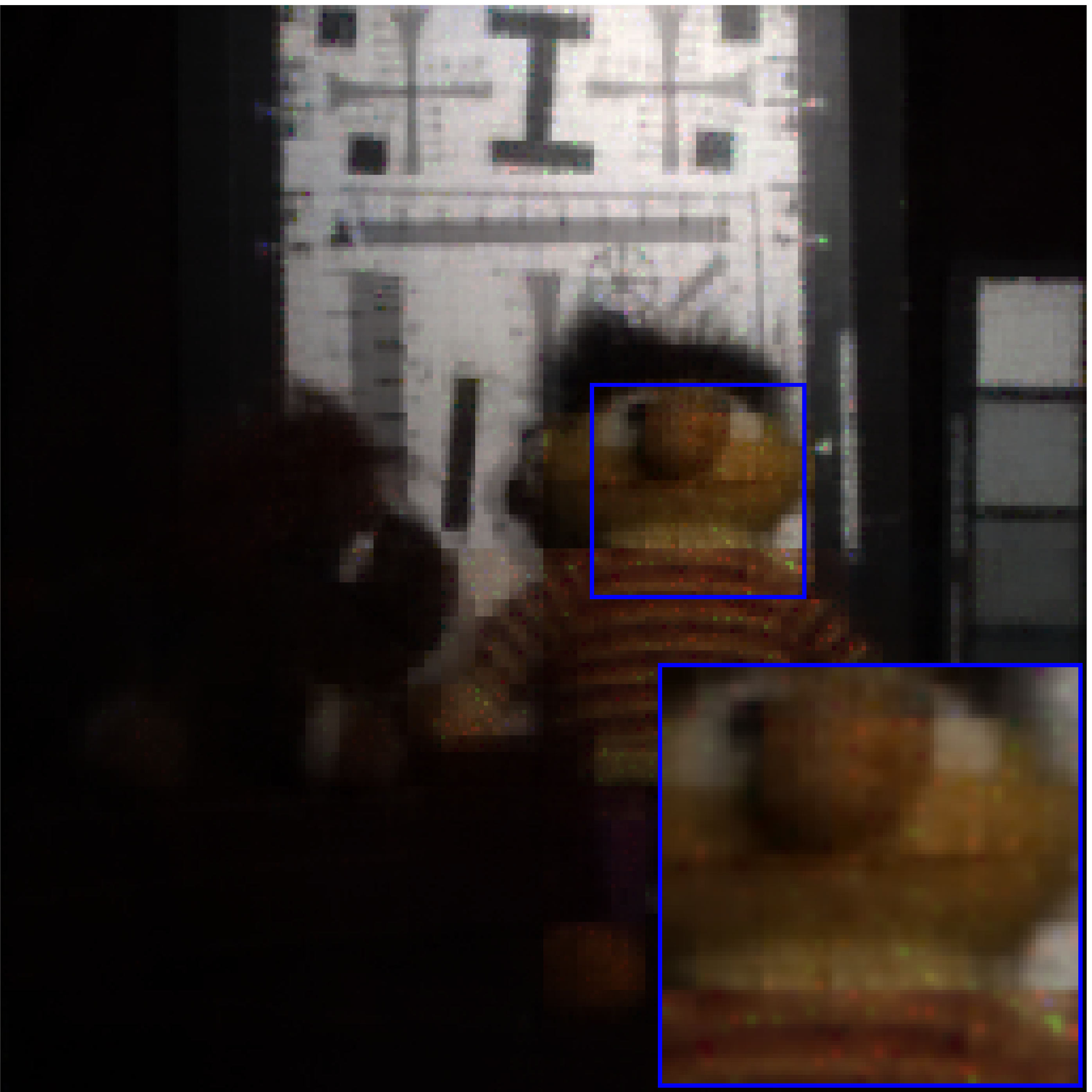}&
\includegraphics[width=0.19\textwidth]{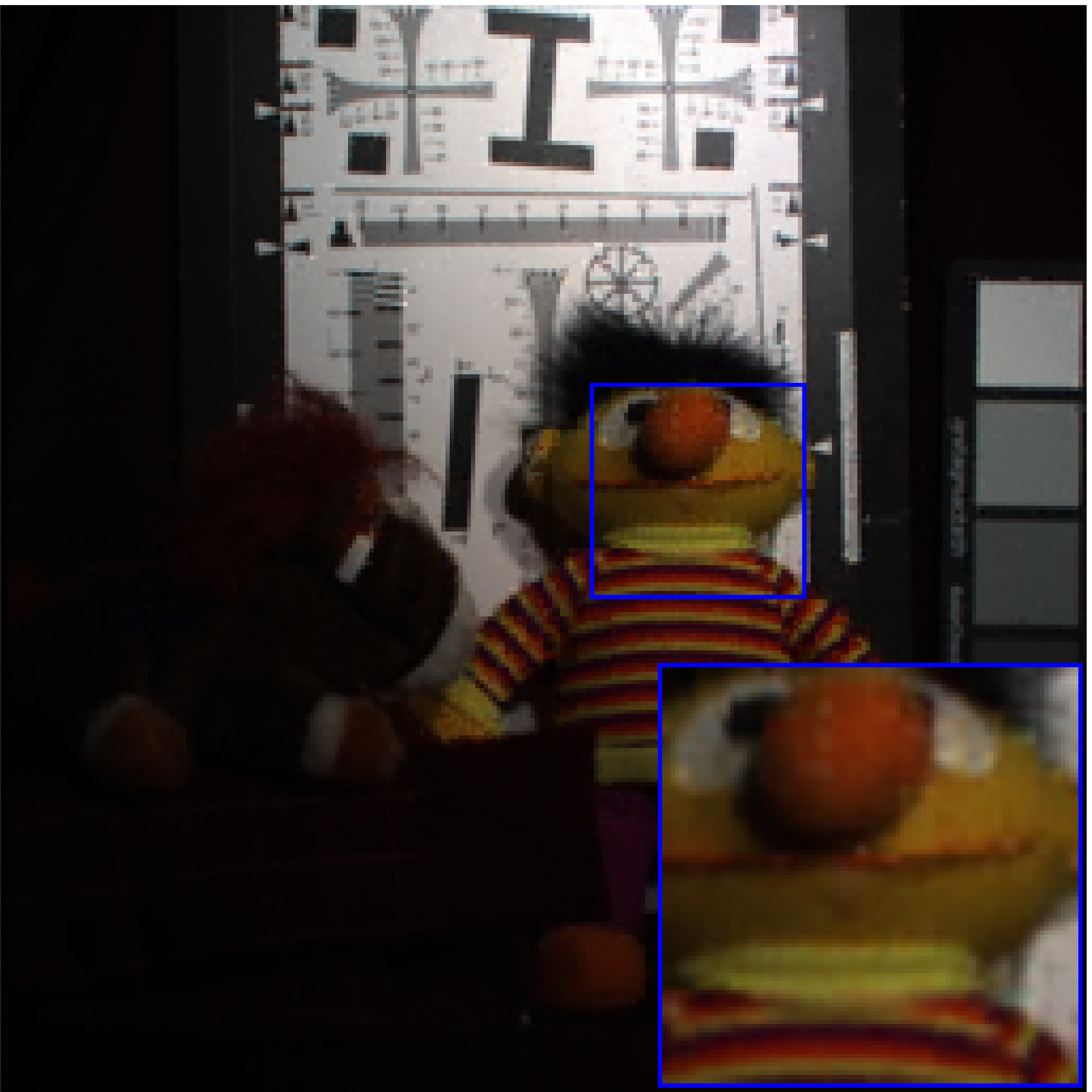}\vspace{0.01cm}\\
(f) SiLRTC-TT & (g) tSVD & (h) KBR & (i) TRNN & (j) LogTR\\
\includegraphics[width=0.19\textwidth]{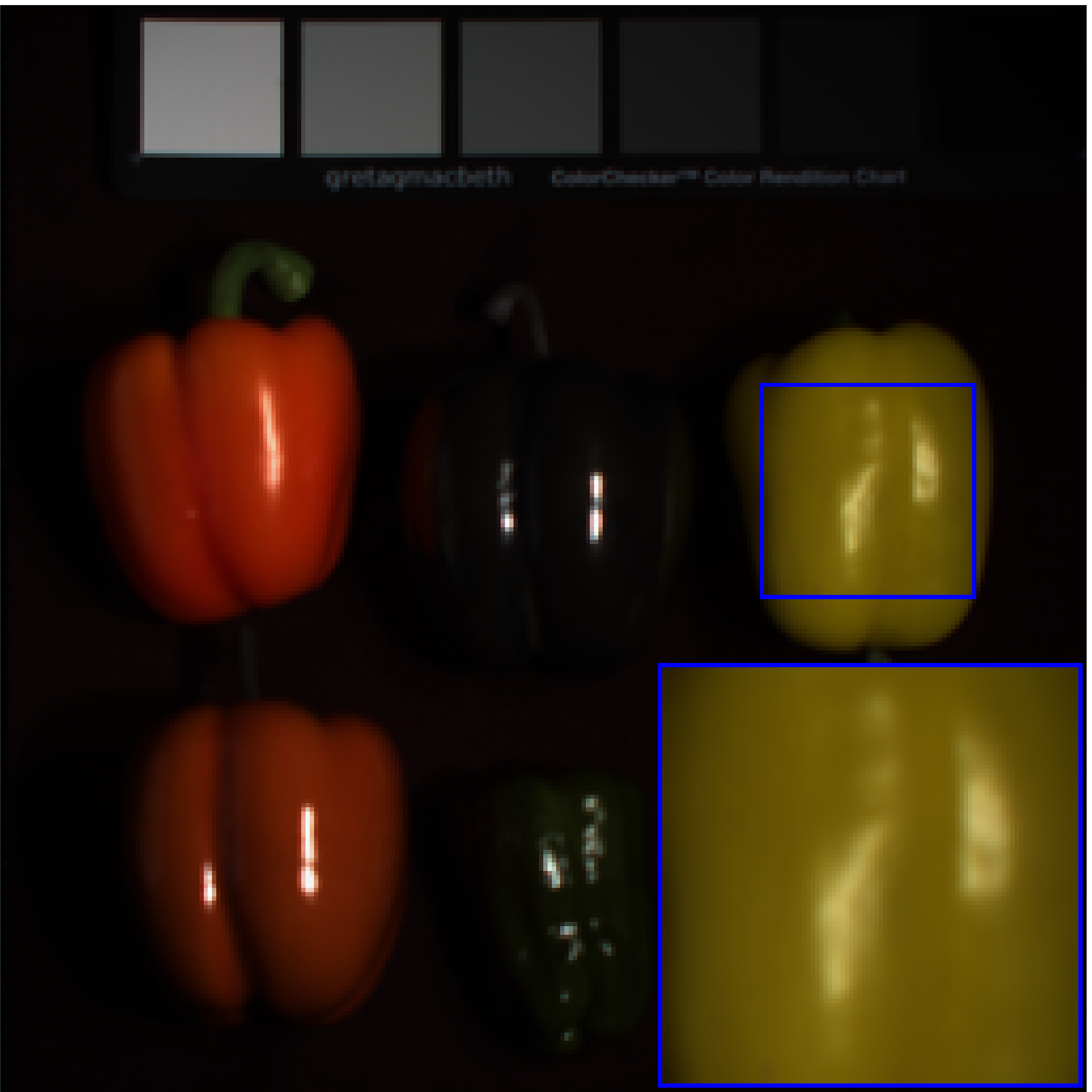}&
\includegraphics[width=0.19\textwidth]{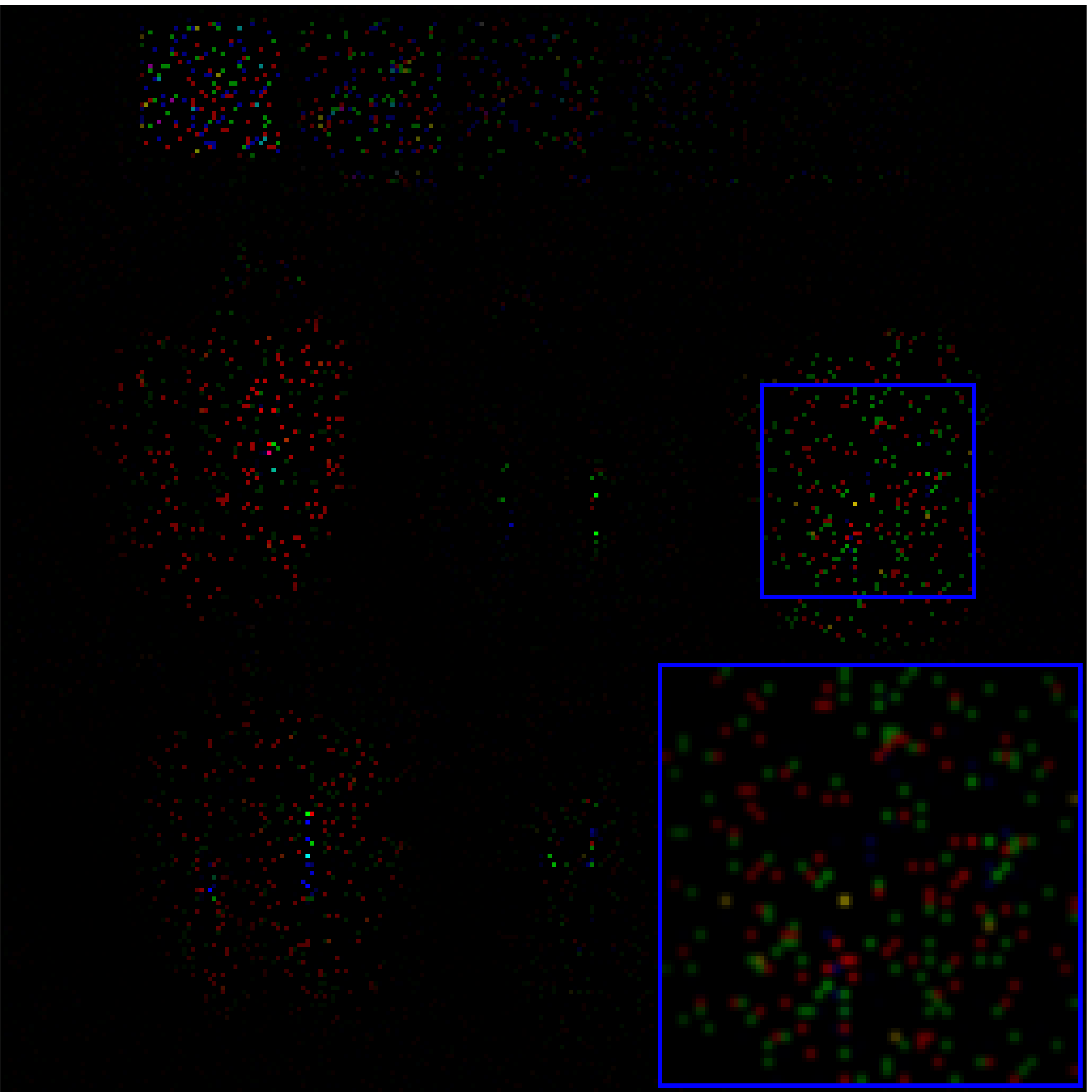}&
\includegraphics[width=0.19\textwidth]{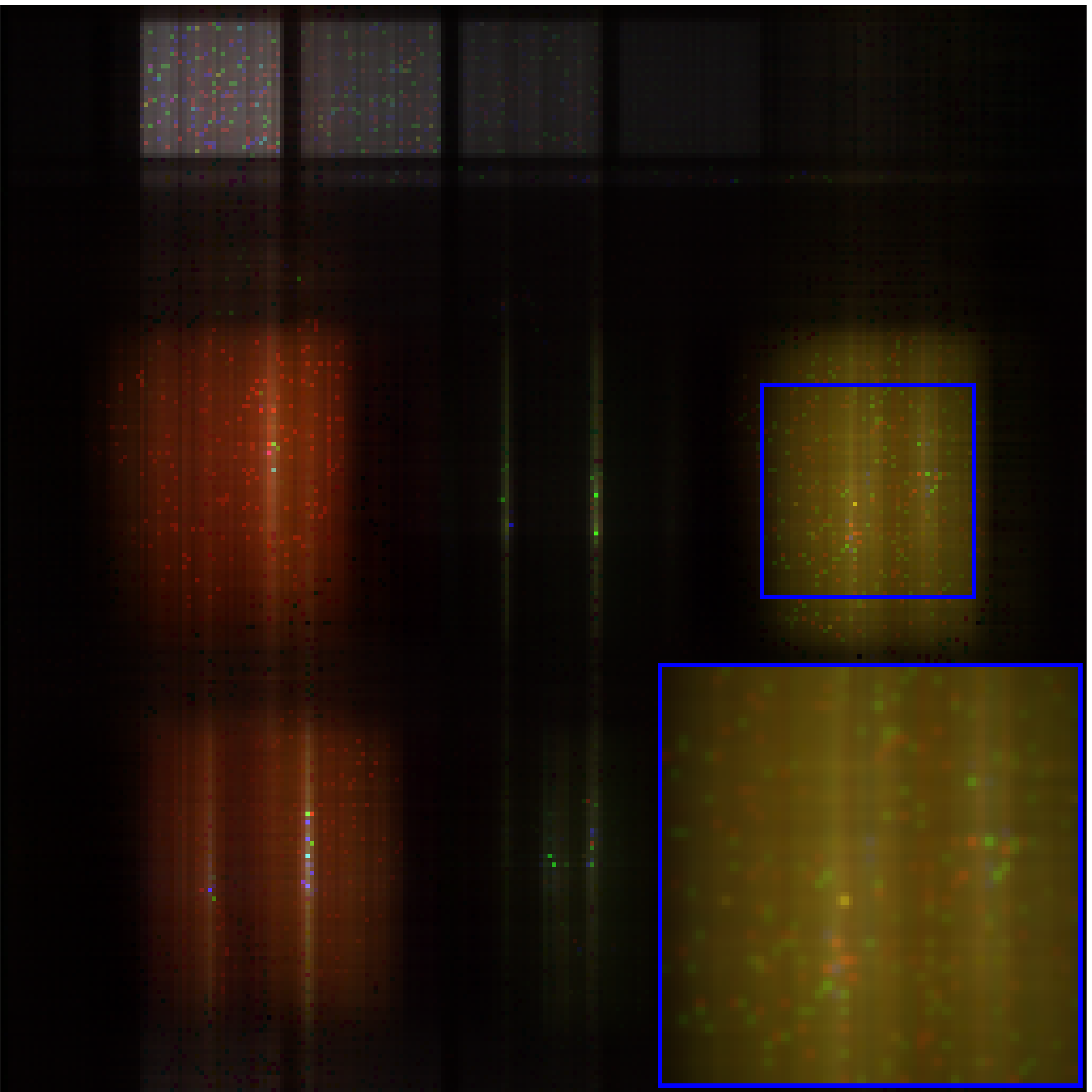}&
\includegraphics[width=0.19\textwidth]{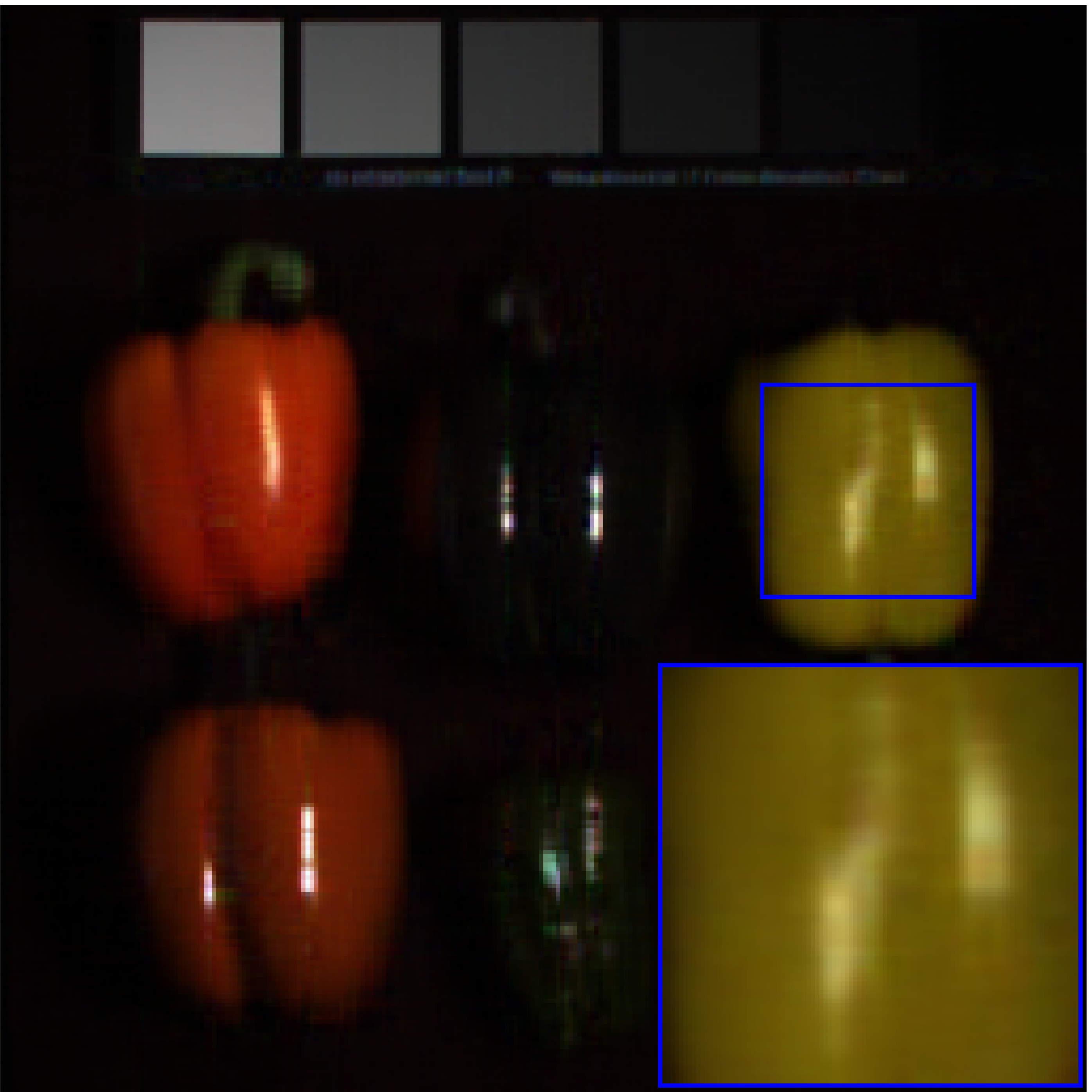}&
\includegraphics[width=0.19\textwidth]{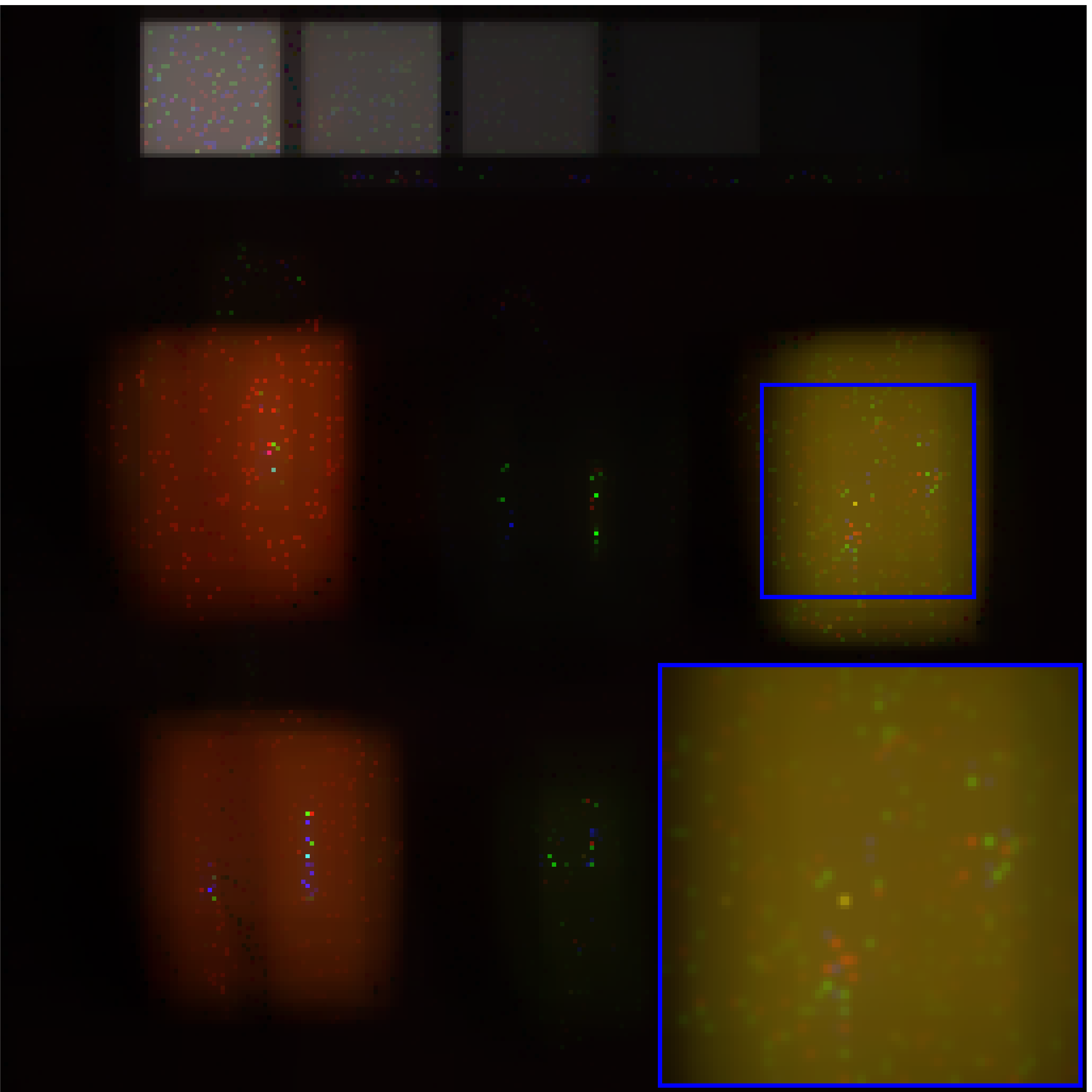}\vspace{0.01cm}\\
(a) Original & (b) Observed & (c) HaLRTC & (d) NSNN & (e) LRTC-TV\\
\includegraphics[width=0.19\textwidth]{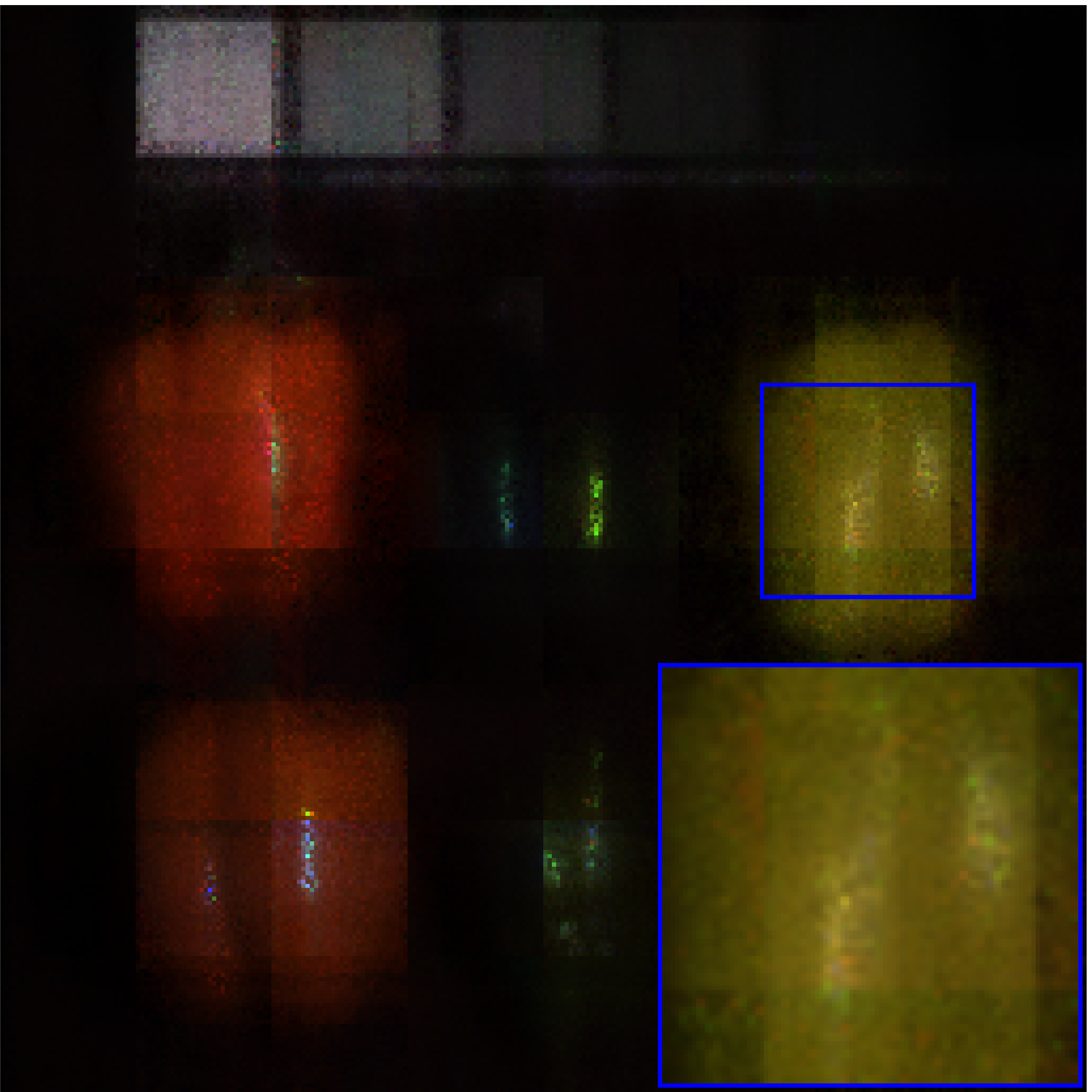}&
\includegraphics[width=0.19\textwidth]{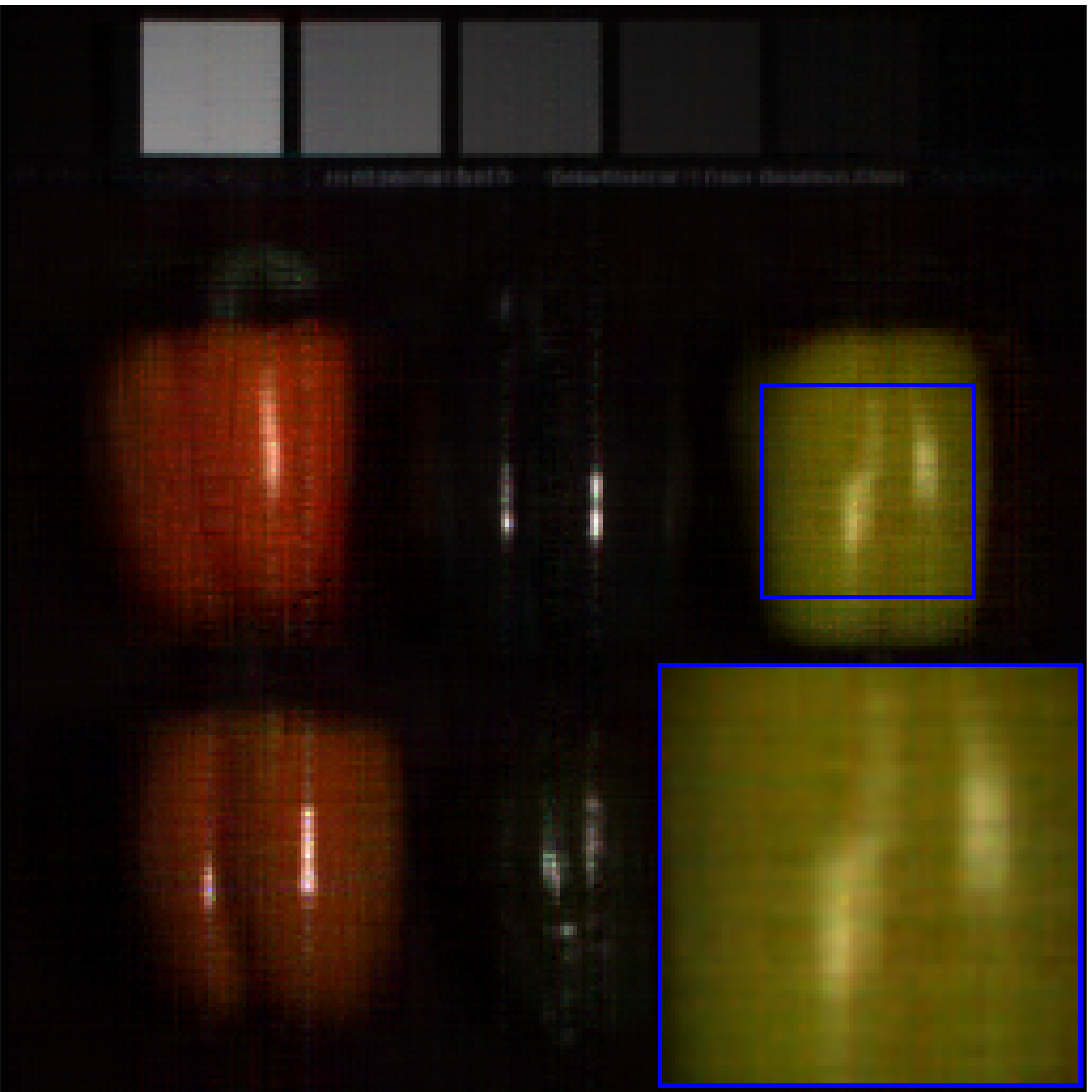}&
\includegraphics[width=0.19\textwidth]{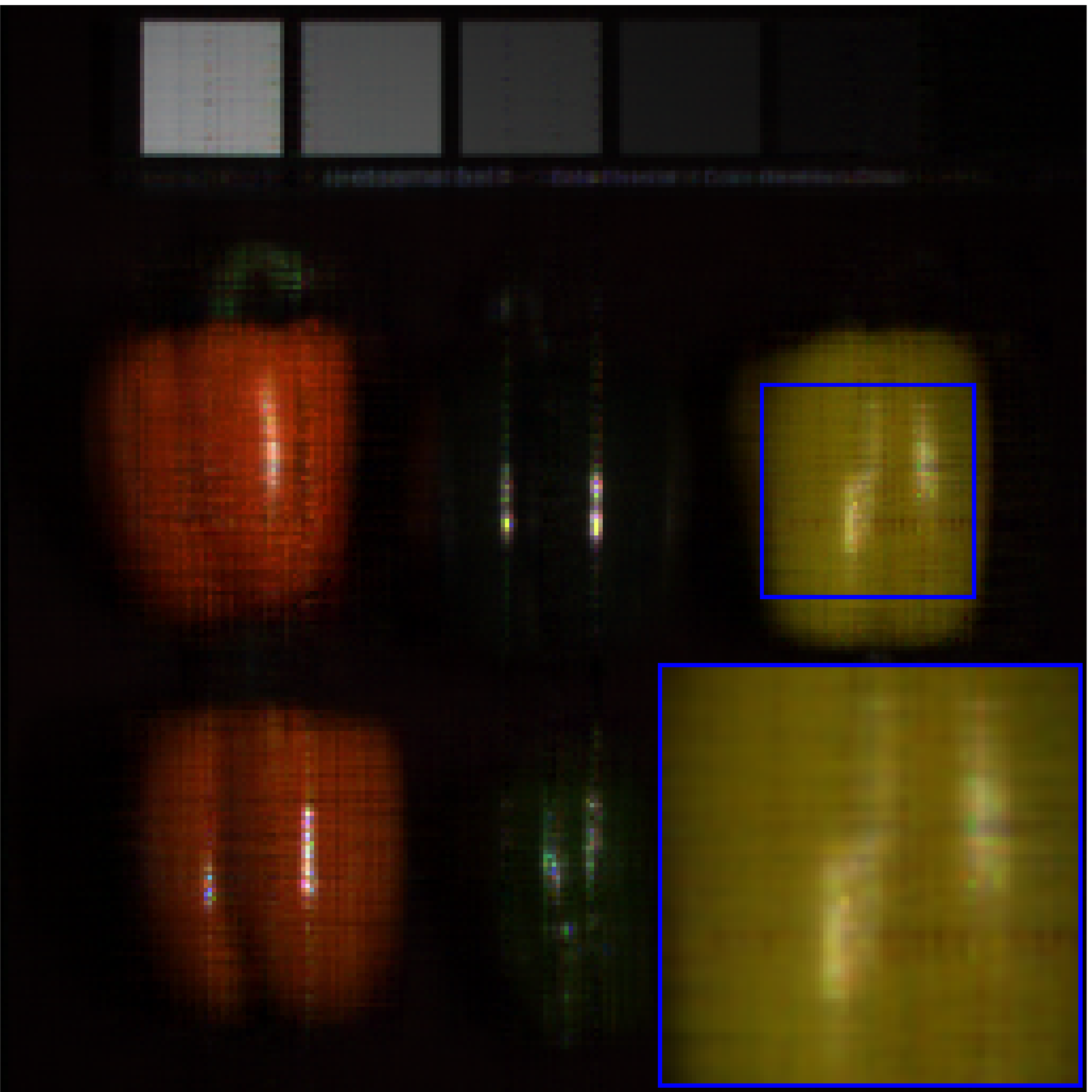}&
\includegraphics[width=0.19\textwidth]{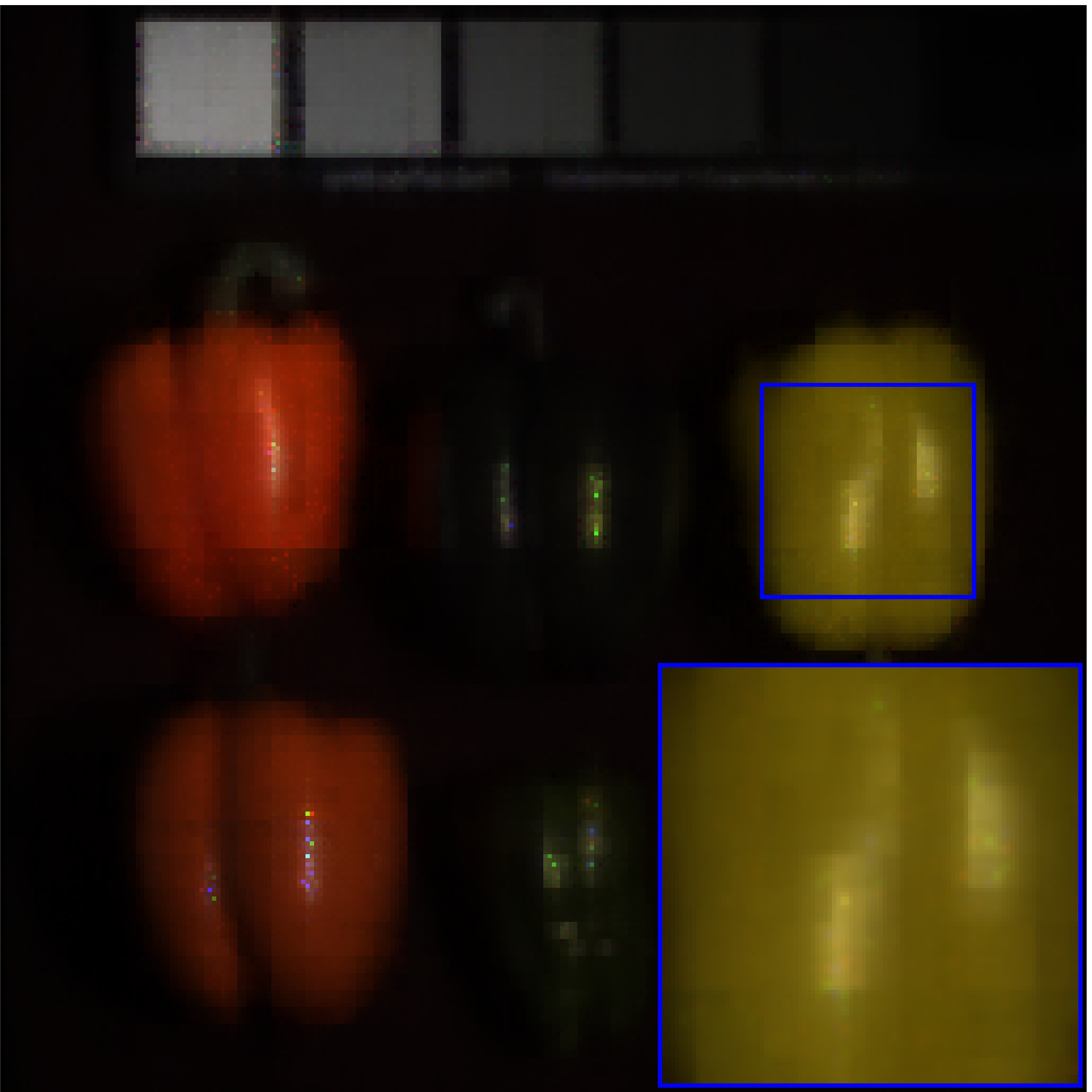}&
\includegraphics[width=0.19\textwidth]{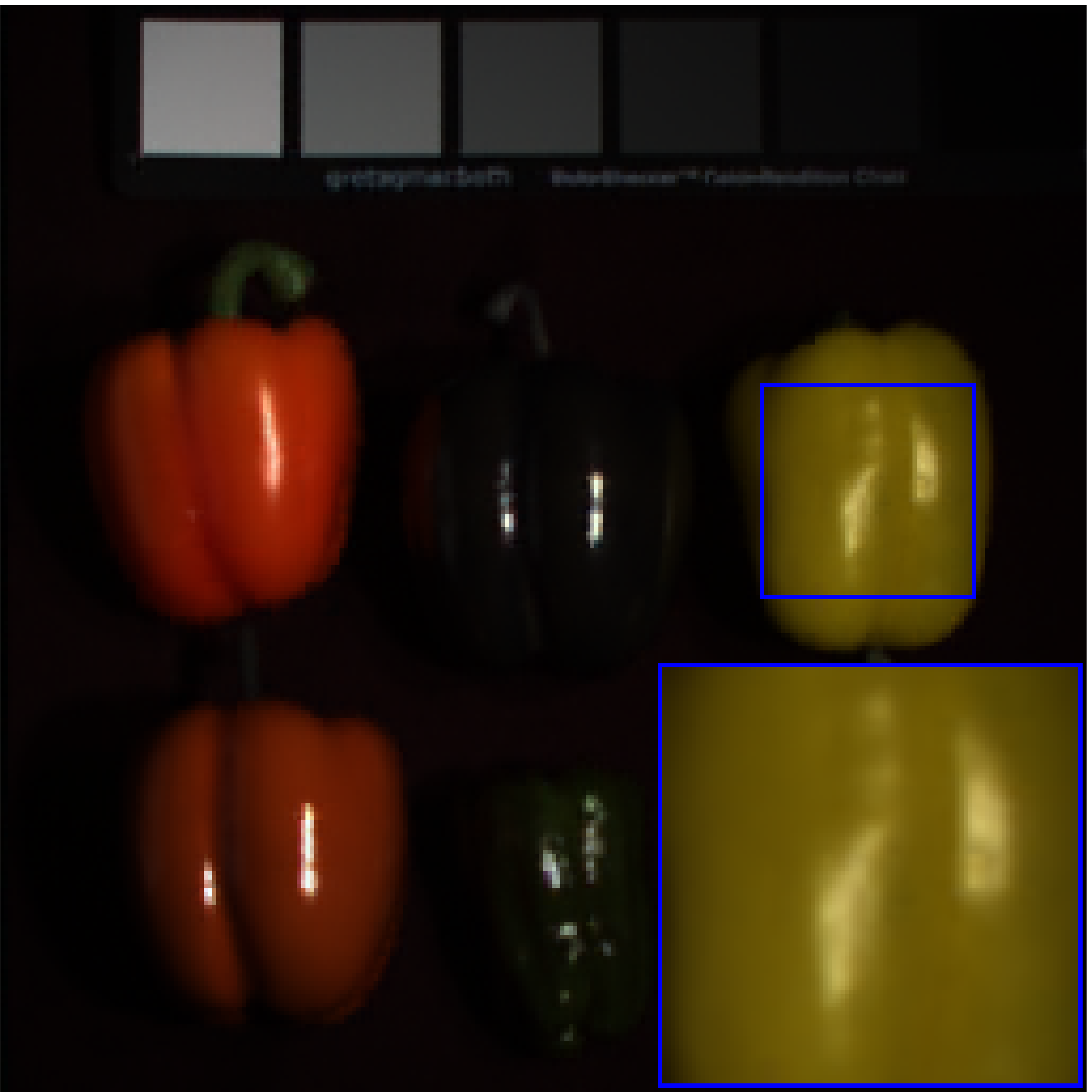}\vspace{0.01cm}\\
(f) SiLRTC-TT & (g) tSVD & (h) KBR & (i) TRNN & (j) LogTR
\end{tabular}
\caption{\small{Recovered MSIs \emph{Feathers}, \emph{Toy}, and \emph{Peppers} for random missing entries with $SR=0.05$. The color image is composed of bands 30, 20, and 10.}}
  \label{fig:msi_05}
  \end{center}\vspace{-0.3cm}
\end{figure}

\textbf{Structural missing.} We test recovering color images with structural missing entries, including random curves missing for \emph{House}, random stripes missing for \emph{Barbara}, and random texts missing for \emph{Airplane}; the results are shown in Figure \ref{fig:image_structural}. Obviously, for \emph{Barbara}, tSVD fails to recover the missing slices; HaLRTC, NSNN, and KBR recover the horizontal stripes, but remain clear vertical traces; there are ``shadows" still retained in the images recovered by LRTC-TV, SiLRTC-TT, and TRNN. For \emph{House} and \emph{Airplane}, it is clear that the outlines of curves and texts can be seen on images recovered by all compared methods. In contrast, the proposed method fills most missing areas without outlines and performs well in preserving structures and edges, which can be seen from the enlarged subregions marked by a blue box of each image. Besides, from the quality indexes reported below the recovered images, the proposed method obtains the highest PSNR and SSIM values.

\begin{table}[!htp]\scriptsize
\renewcommand\arraystretch{1.2}
\caption{PSNR and SSIM values of different methods on MSIs completion with different SRs.}
\vspace{-0.5cm}
\begin{center}
\begin{tabular}{c|c|ccccccccc}
\hline \hline
\multirow{1}{*}{MSIs} &\multicolumn{1}{c|}{SR} &Method &HaLRTC &NSNN &LRTC-TV & SiLRTC-TT & tSVD &KBR & TRNN & LogTR \\  \hline
\multirow{6}{*}{\emph{Feathers}}
          &\multirow{2}{*}{0.05}& PSNR    &22.27	  &30.05     &20.70   &23.75   &27.46	    &28.57     &27.00       &\textbf{36.35}  \\
          &                     & SSIM    &0.7058	  &0.8526    &0.7408  &0.7610  &0.7720	    &0.8471    &0.8742      &\textbf{0.9730} \\ \cline{2-11}

          &\multirow{2}{*}{0.1} & PSNR    &25.22	  &32.99     &25.43   &29.12   &31.66       &37.63     &32.34       &\textbf{42.89}  \\
          &                     & SSIM    &0.8056	  &0.9034    &0.8499  &0.8831  &0.8792      &0.9606    &0.9521      &\textbf{0.9914} \\ \cline{2-11}

          &\multirow{2}{*}{0.2} & PSNR    &29.31      &37.29     &29.84   &37.40   &36.70      	&44.35     &38.81       &\textbf{50.37}  \\
          &                     & SSIM    &0.9012     &0.9529    &0.9309  &0.9738  &0.9505      &0.9882    &0.9863      &\textbf{0.9977} \\
          \hline
\multirow{6}{*}{\emph{Toy}}
          &\multirow{2}{*}{0.05}& PSNR    &20.20	  &30.49     &18.21   &23.25   &28.66	    &28.85     &27.67       &\textbf{36.91}  \\
          &                     & SSIM    &0.6691	  &0.8867    &0.6799  &0.7492  &0.8413	    &0.8593    &0.8862      &\textbf{0.9771} \\ \cline{2-11}

          &\multirow{2}{*}{0.1} & PSNR    &24.72	  &33.63     &24.96   &29.10   &32.55       &37.60     &33.30       &\textbf{43.96}  \\
          &                     & SSIM    &0.8021	  &0.9345    &0.8294  &0.8890  &0.9164      &0.9718    &0.9601      &\textbf{0.9943} \\ \cline{2-11}

          &\multirow{2}{*}{0.2} & PSNR    &29.78      &37.45     &30.39   &37.59   &37.71      	&45.29     &40.85       &\textbf{52.88}  \\
          &                     & SSIM    &0.9107     &0.9682    &0.9329  &0.9783  &0.9665      &0.9930    &0.9913      &\textbf{0.9989} \\
          \hline
\multirow{6}{*}{\emph{Peppers}}
          &\multirow{2}{*}{0.05}& PSNR    &24.56	  &38.13     &25.30   &26.95   &32.52	    &32.55     &29.66       &\textbf{42.24}  \\
          &                     & SSIM    &0.7464	  &0.9576    &0.8602  &0.8335  &0.8558	    &0.9105    &0.9381      &\textbf{0.9930} \\
          \cline{2-11}
          &\multirow{2}{*}{0.1} & PSNR    &30.54	  &42.77     &29.66   &33.07   &37.60       &44.73     &33.95       &\textbf{48.28}  \\
          &                     & SSIM    &0.9008	  &0.9815    &0.9381  &0.9418  &0.9384      &0.9905    &0.9790      &\textbf{0.9982} \\
          \cline{2-11}
          &\multirow{2}{*}{0.2} & PSNR    &36.74      &47.48     &36.41   &41.71   &43.50      	&52.71     &40.32       &\textbf{55.77}  \\
          &                     & SSIM    &0.9680     &0.9931    &0.9812  &0.9894  &0.9803      &0.9979    &0.9953      &\textbf{0.9995} \\
          \hline \hline
\end{tabular}\label{table:msi}
\end{center}
\end{table}
\subsection{MSIs}
We test the proposed method on the CAVE MSI database containing 32 real-world scenes. For MSIs, we test the random sampling case and set $SR= 0.01, 0.05, 0.1, \textrm{and} \ 0.2$. Before applying LogTR to fill the missing entries, we transform the MSI data of size $256\times 256 \times 31$ to a ninth-order tensor of size $4 \times 4 \times 4 \times 4 \times 4 \times 4  \times 4 \times 4 \times 31$.

Figures \ref{fig:msi_01} and \ref{fig:msi_05} show the visual results for \emph{Toy}, \emph{Feather}, and \emph{Peppers} MSIs at $SR=0.01$ and $0.05$. In the extreme case $SR=0.01$ in Figure \ref{fig:msi_01}, all compared methods hardly restore the outline of the original images, while LogTR can obtain promising visual results for the testing data. From Figure \ref{fig:msi_05}, we observe that LogTR can maintain the smoothness area of \emph{Peppers} and preserve the textures and details of \emph{Feathers} and \emph{Toy}.

Table \ref{table:msi} summarizes the PSNR and SSIM values of \emph{Feathers}, \emph{Peppers}, and \emph{Toy} reconstructed by eight LRTC methods for three SRs. Figure \ref{fig:MSI_all_psnr} lists the comparison of the PSNR values by different methods on the whole CAVE dataset with $SR=0.1$. Form these quality indexes, LogTR achieves superior performance as before.

\begin{figure}[!t]
\scriptsize\setlength{\tabcolsep}{0.9pt}
\begin{center}
\begin{tabular}{cc}
\includegraphics[width=0.99\textwidth]{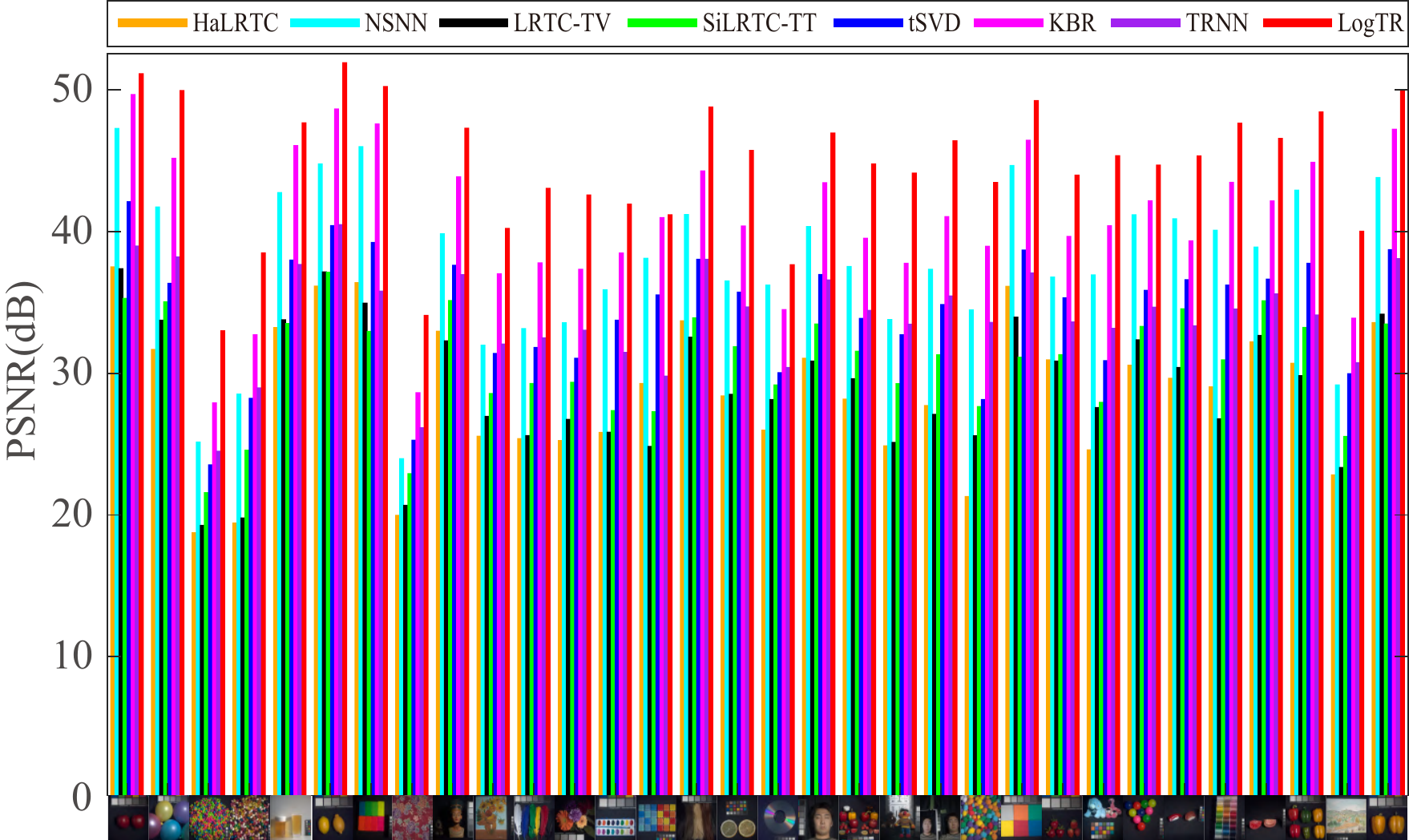} \vspace{0.1cm}\\
\end{tabular}
\caption{\small{PSNR values of different methods on the dataset CAVE with $SR=0.1$.}}
  \label{fig:MSI_all_psnr}
  \end{center}\vspace{-0.3cm}
\end{figure}

\begin{figure}[!t]
\scriptsize\setlength{\tabcolsep}{0.5pt}
\begin{center}
\begin{tabular}{ccccc}
\includegraphics[width=0.19\textwidth]{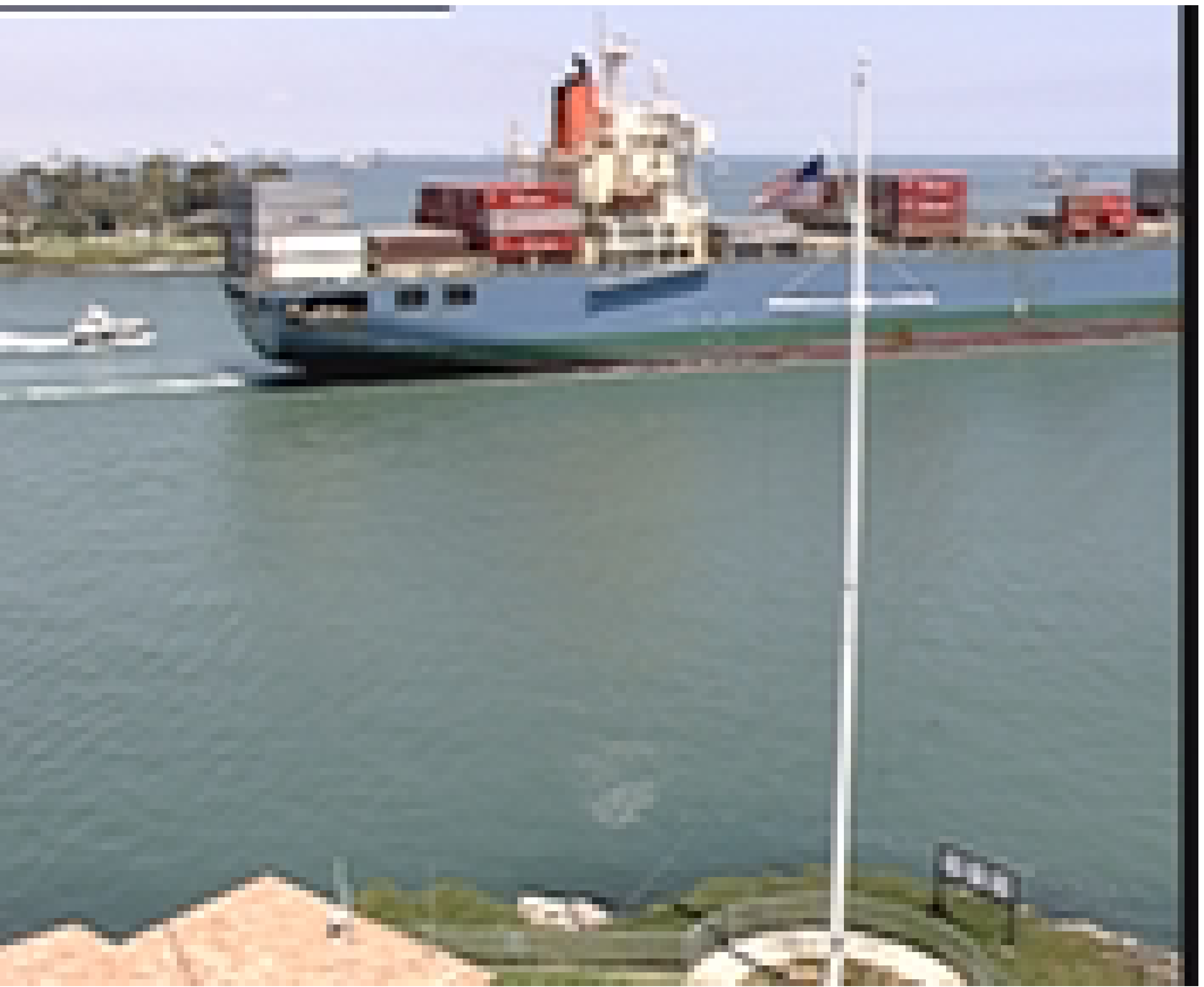}&
\includegraphics[width=0.19\textwidth]{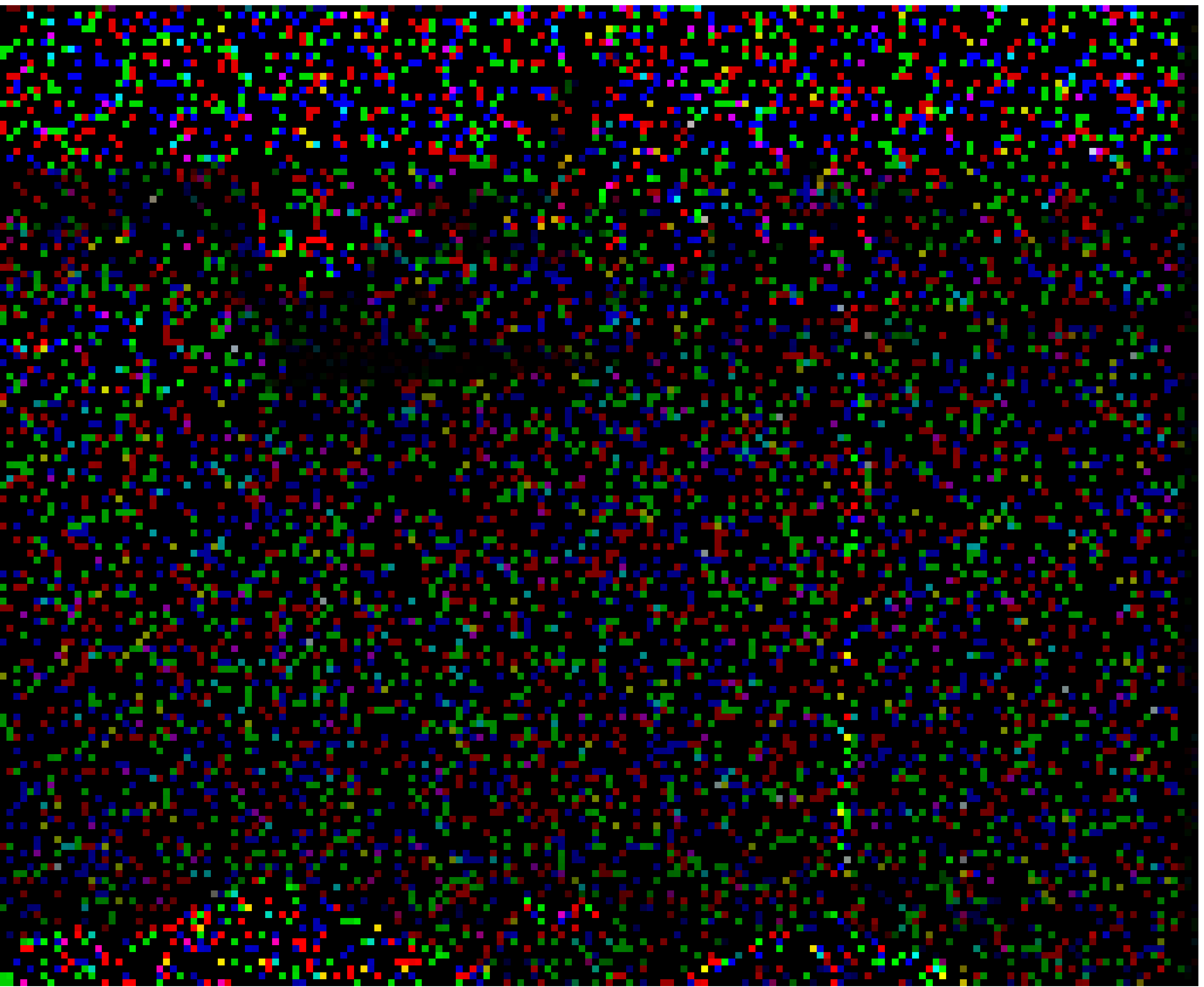}&
\includegraphics[width=0.19\textwidth]{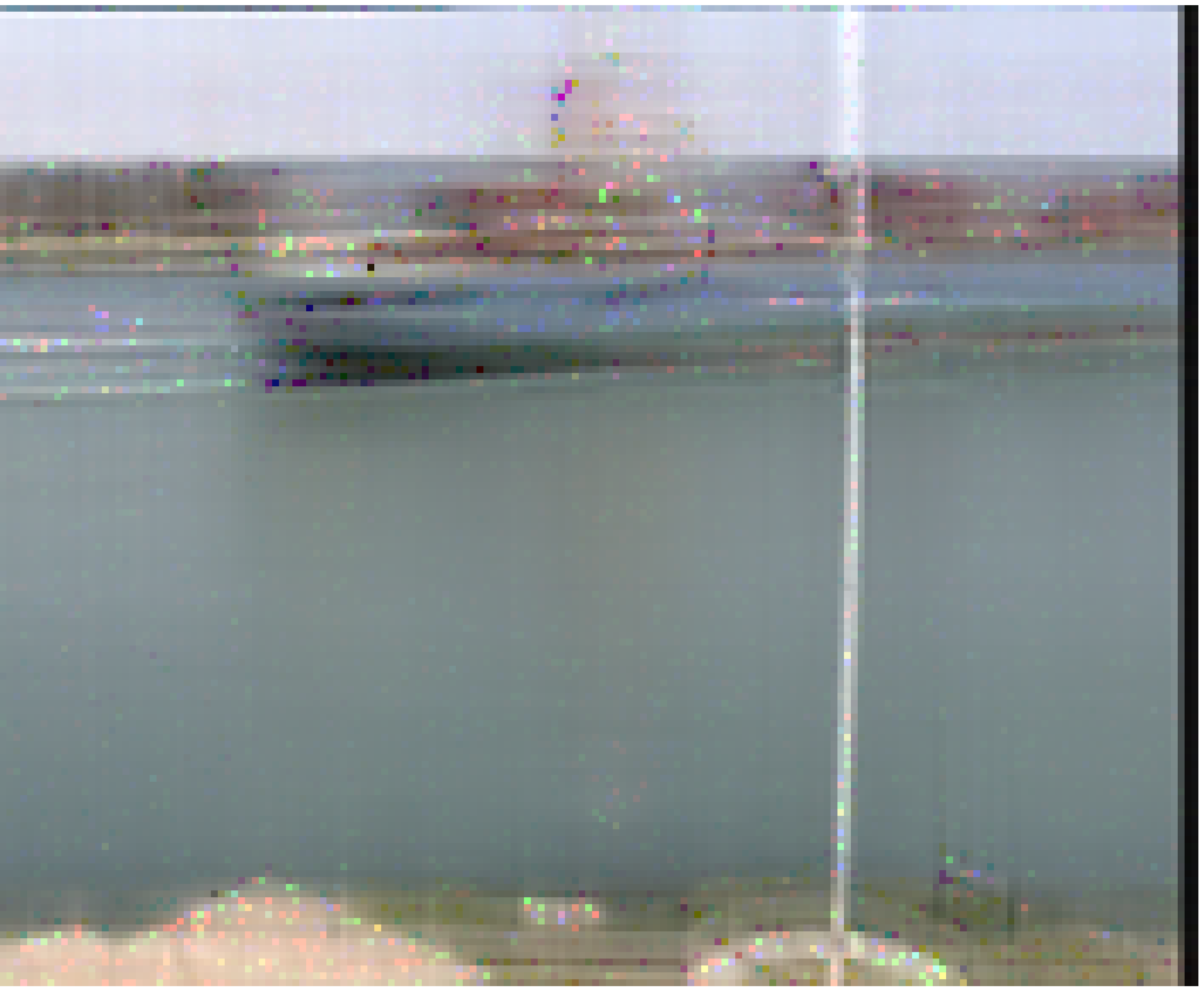}&
\includegraphics[width=0.19\textwidth]{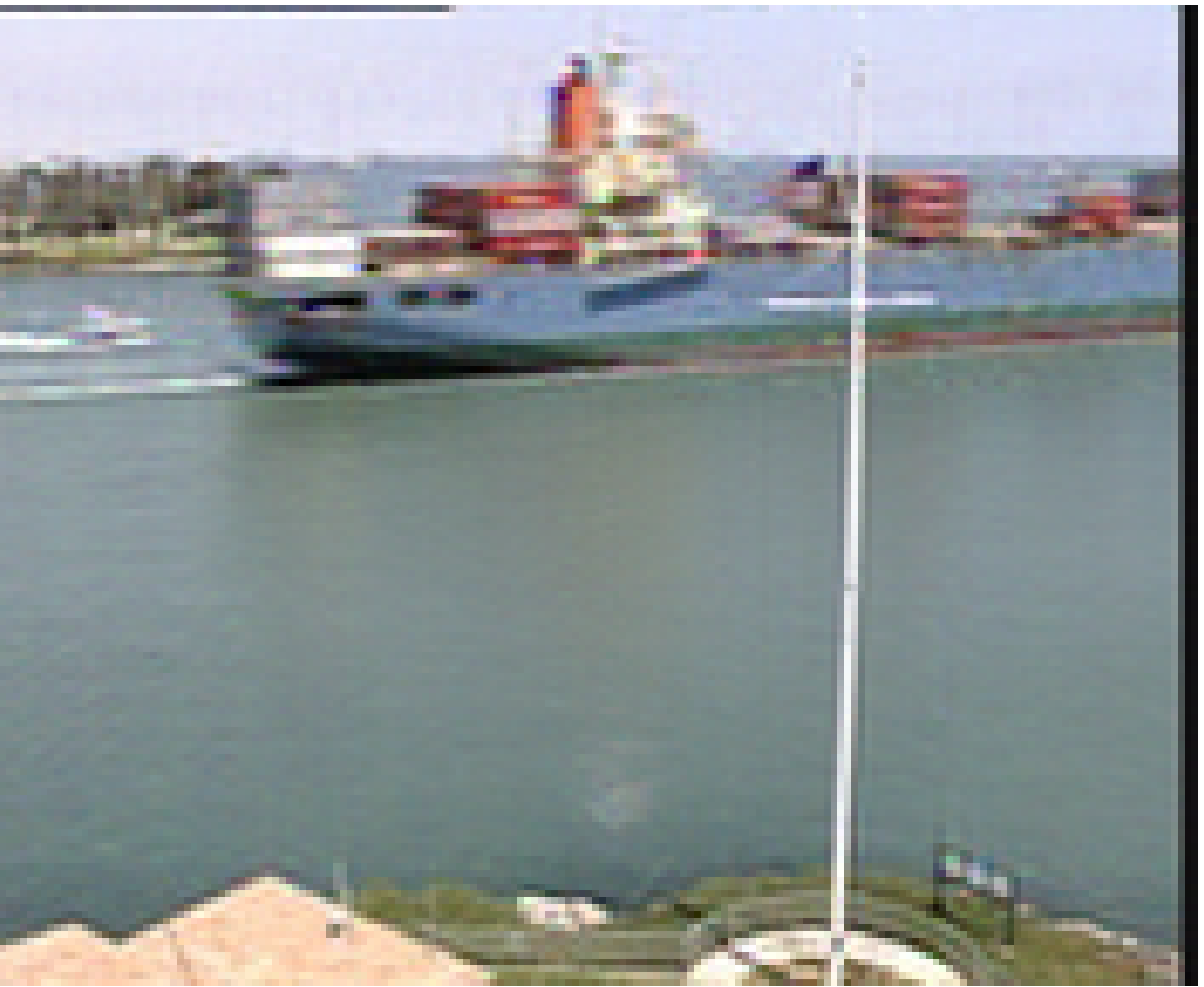}&
\includegraphics[width=0.19\textwidth]{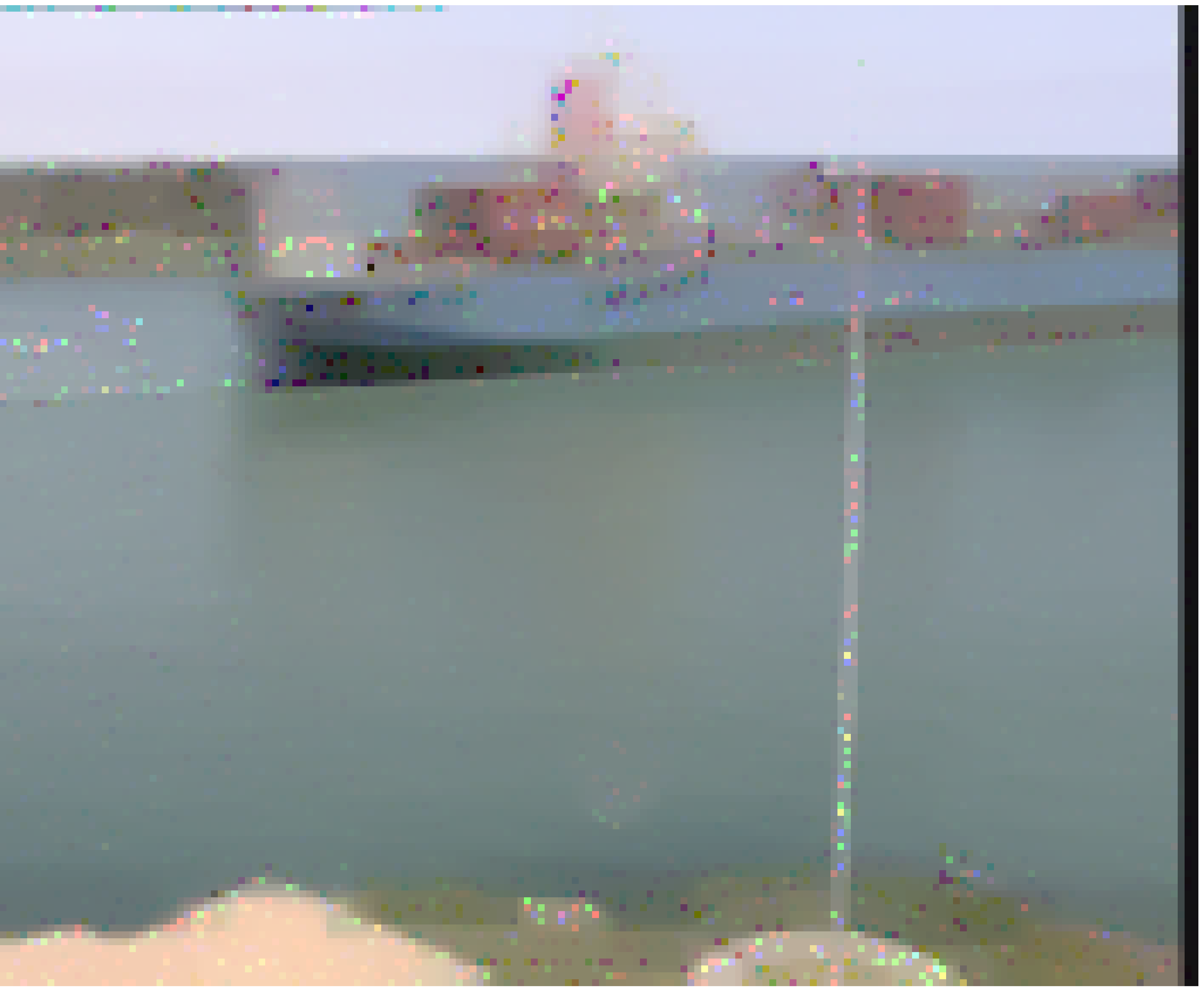}\\
(a) Original & (b) Observed & (c) HaLRTC & (d) NSNN & (e) LRTC-TV \\
\includegraphics[width=0.19\textwidth]{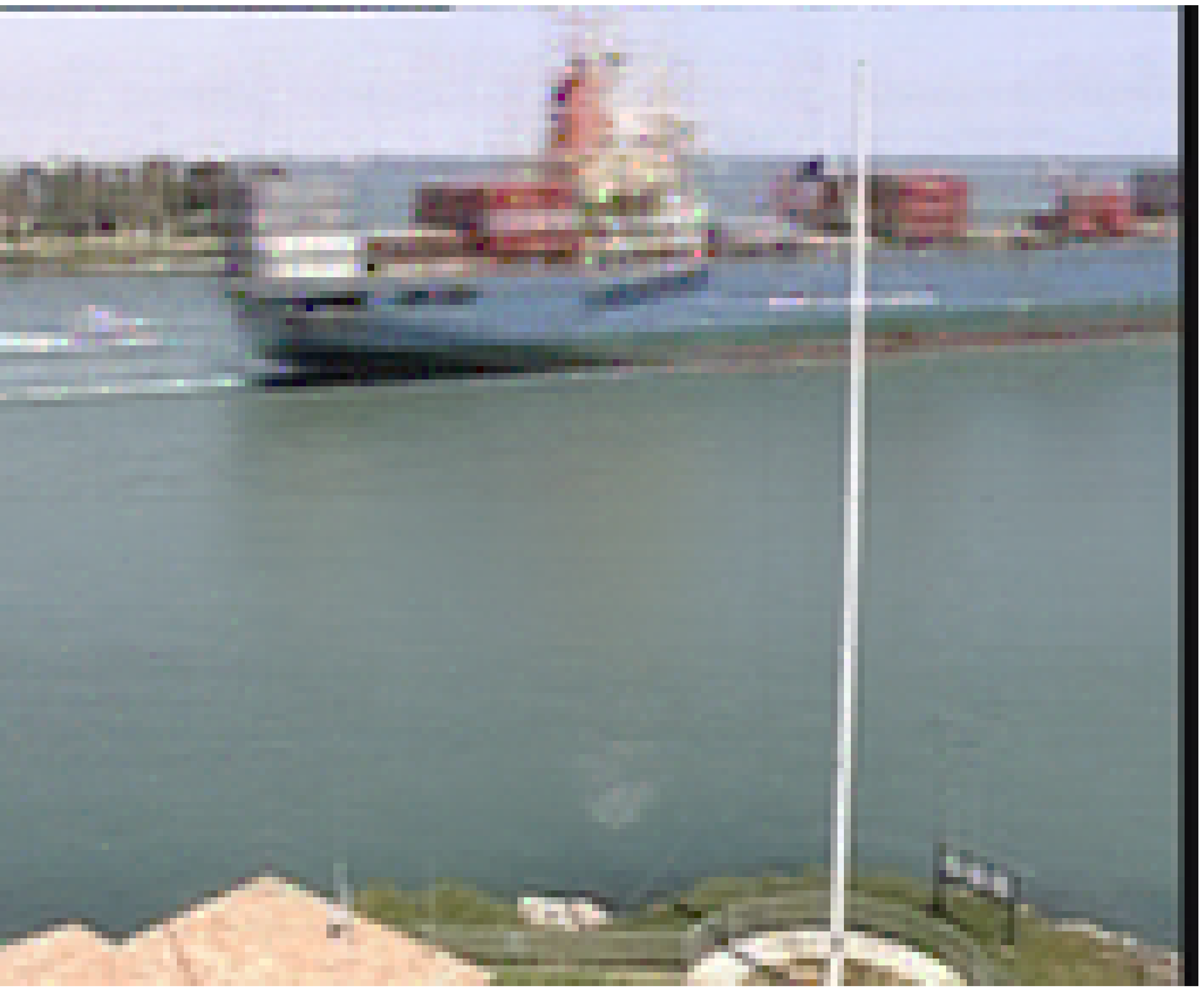}&
\includegraphics[width=0.19\textwidth]{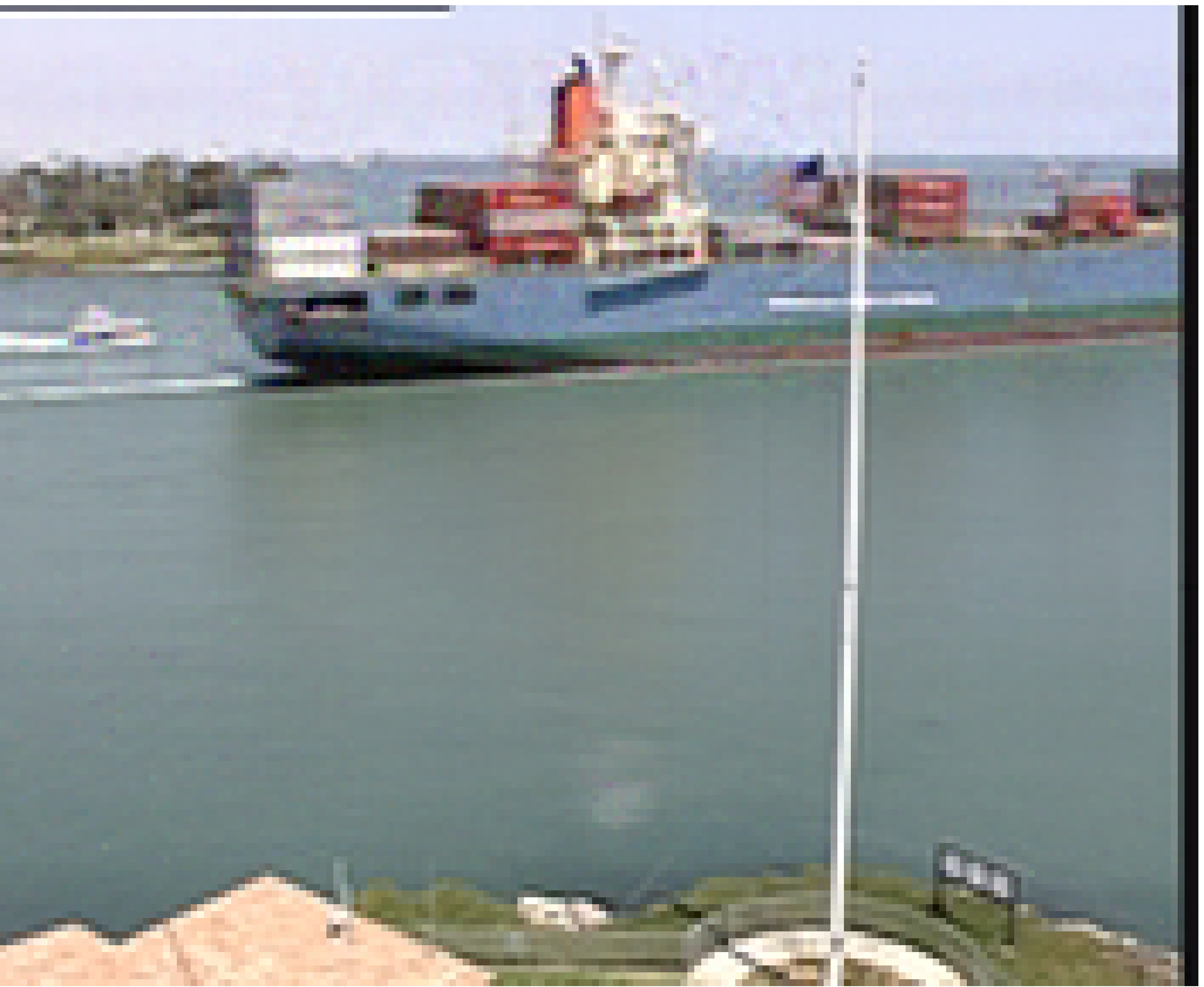}&
\includegraphics[width=0.19\textwidth]{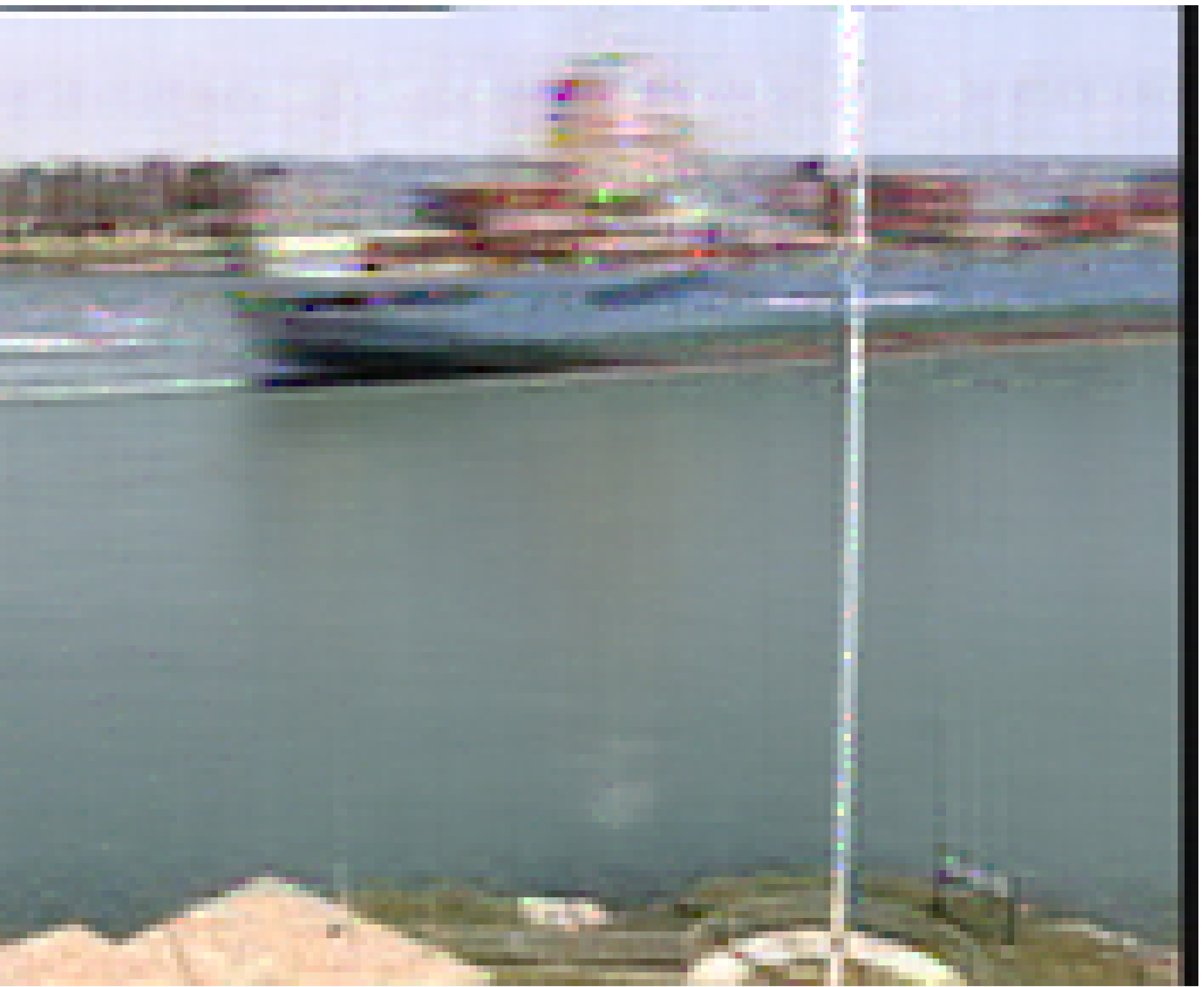}&
\includegraphics[width=0.19\textwidth]{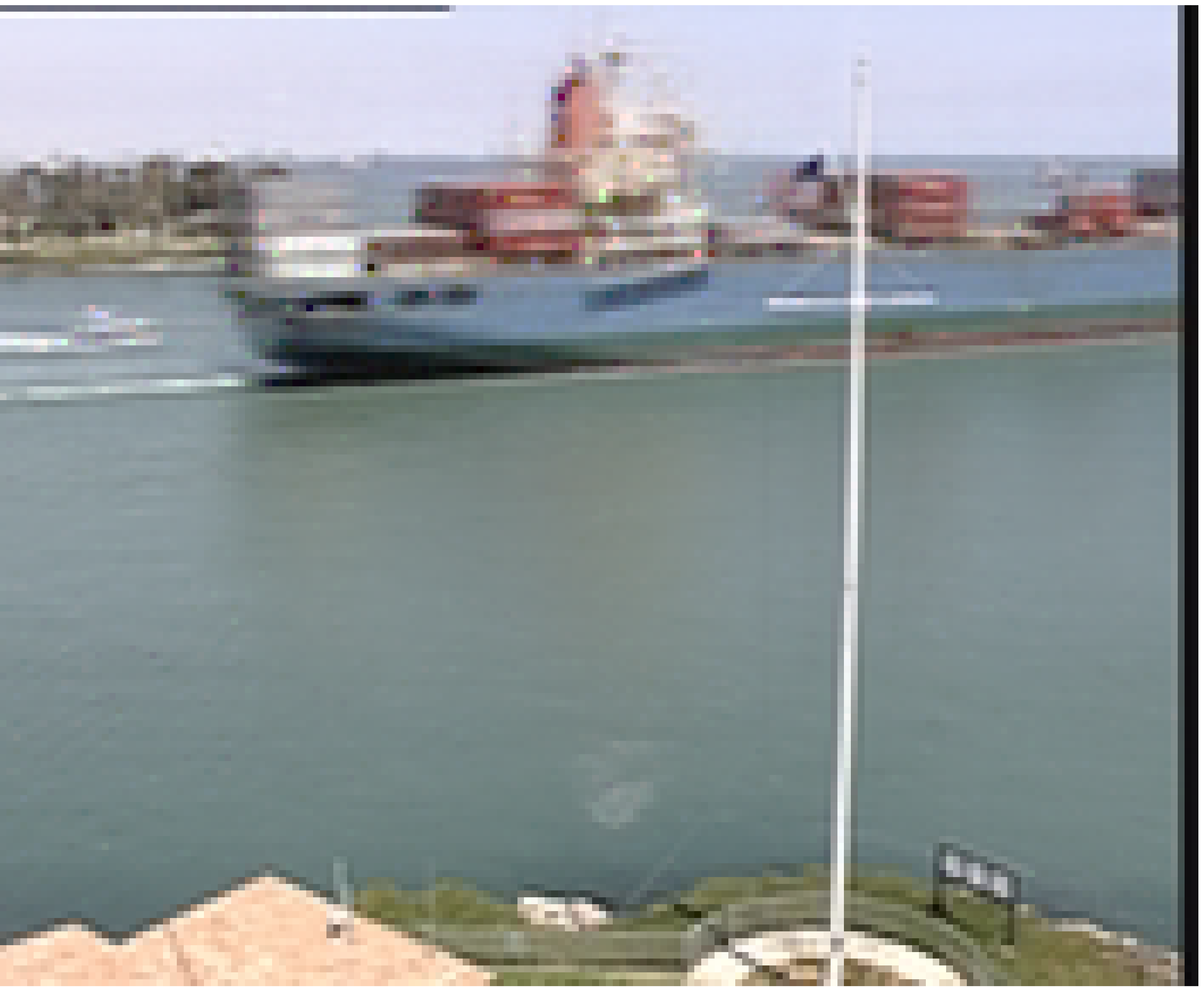}&
\includegraphics[width=0.19\textwidth]{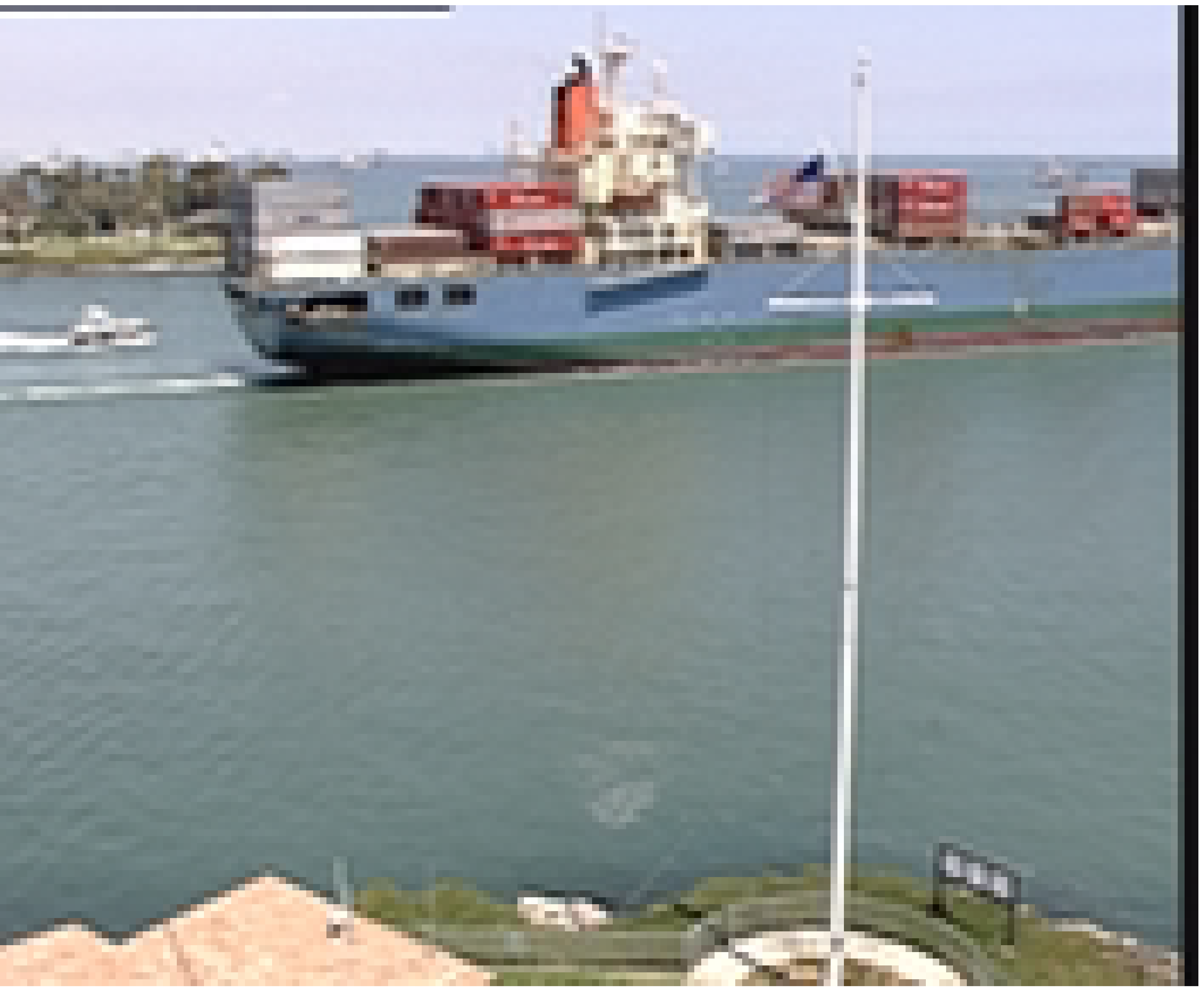}\\
(f) SiLRTC-TT & (g) tSVD & (h) KBR & (i) TRNN & (j) LogTR\\
\includegraphics[width=0.19\textwidth]{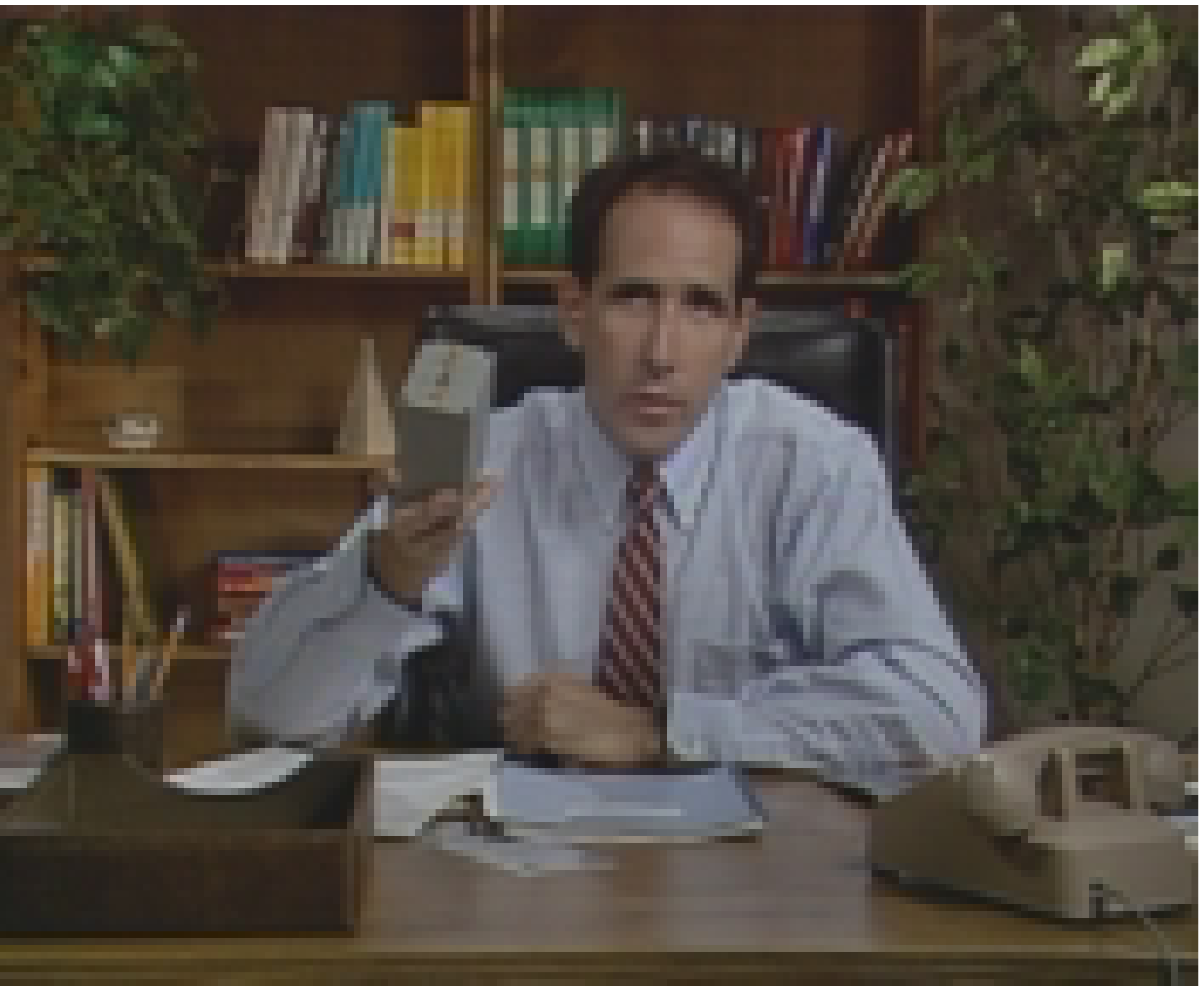}&
\includegraphics[width=0.19\textwidth]{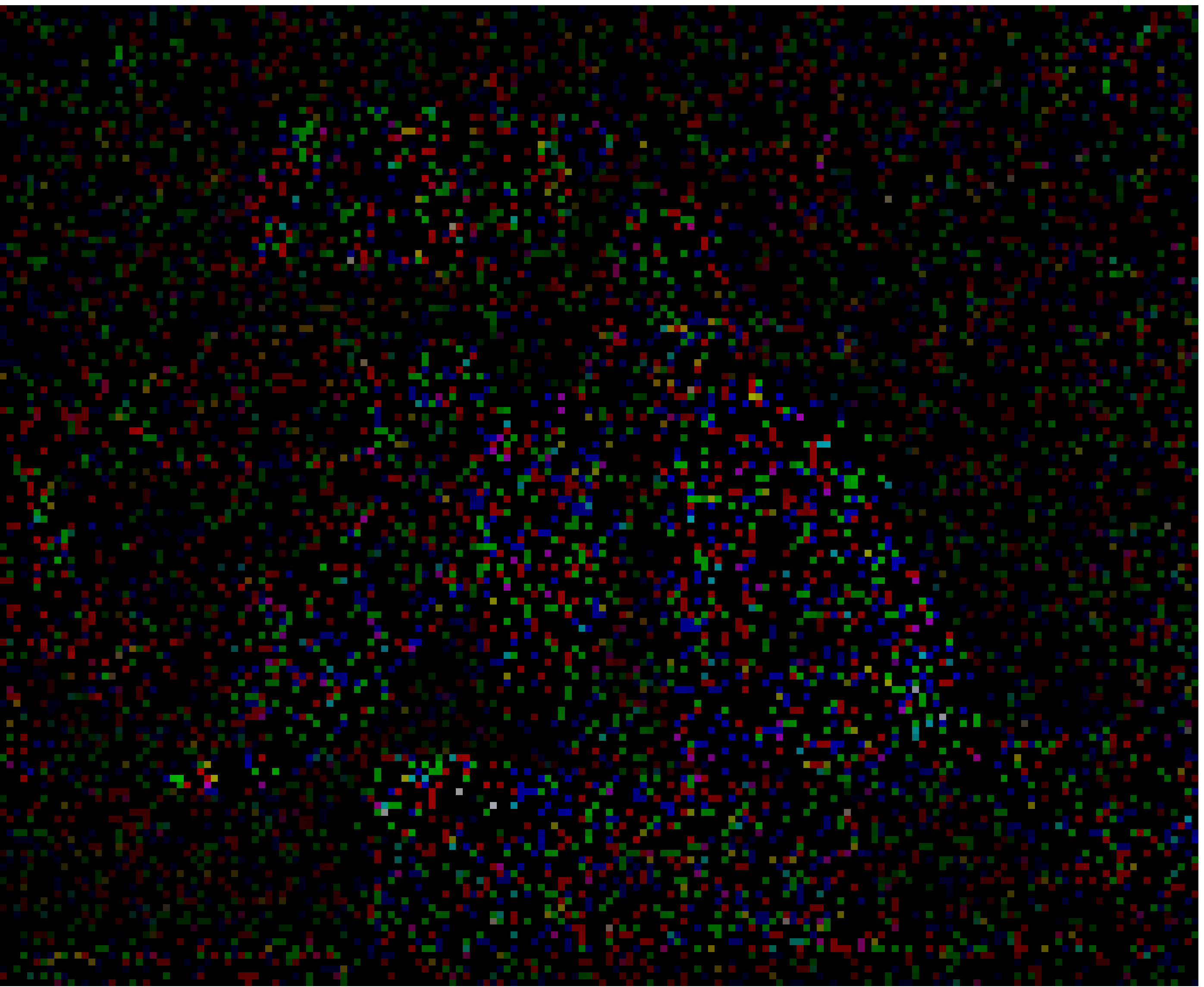}&
\includegraphics[width=0.19\textwidth]{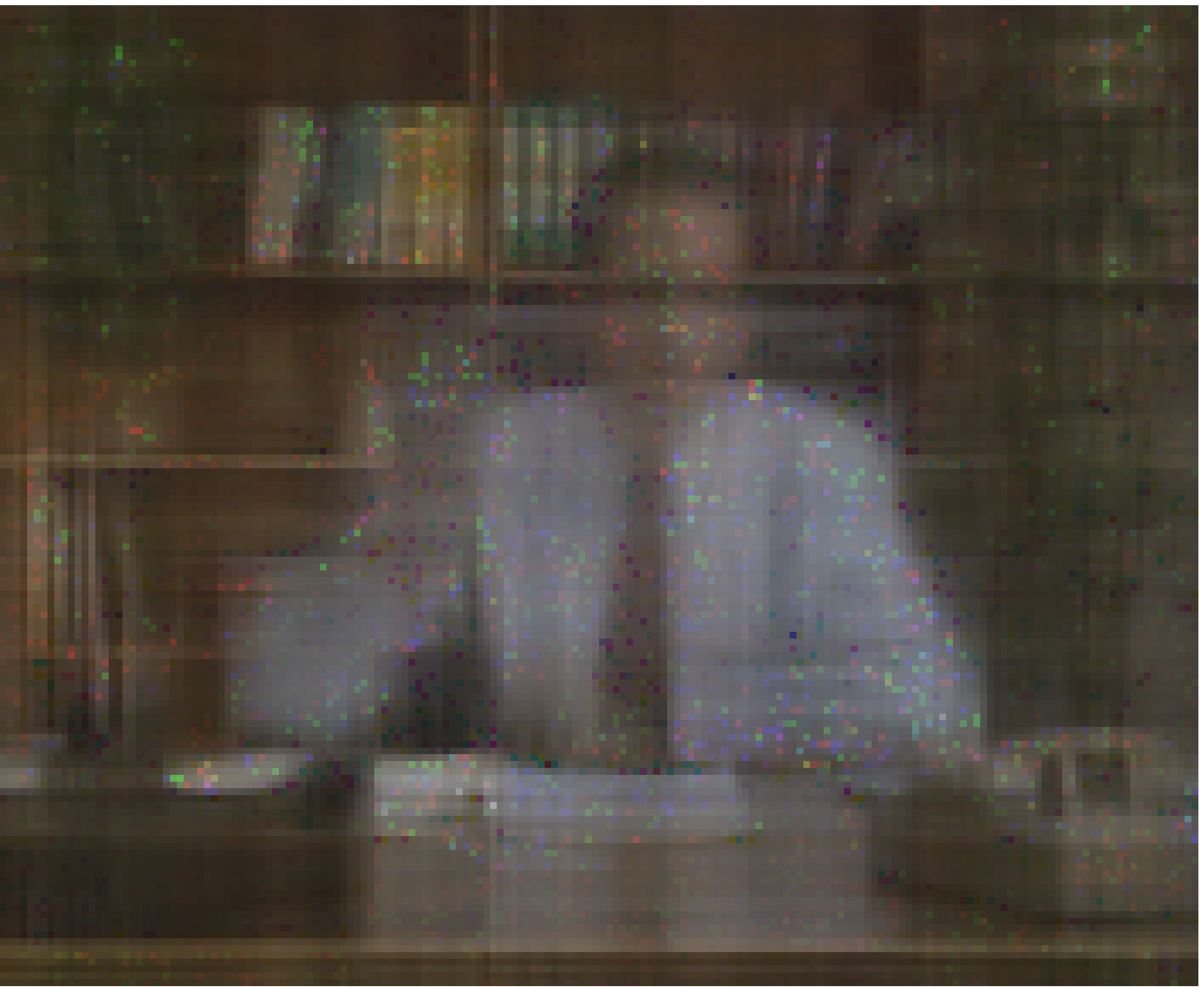}&
\includegraphics[width=0.19\textwidth]{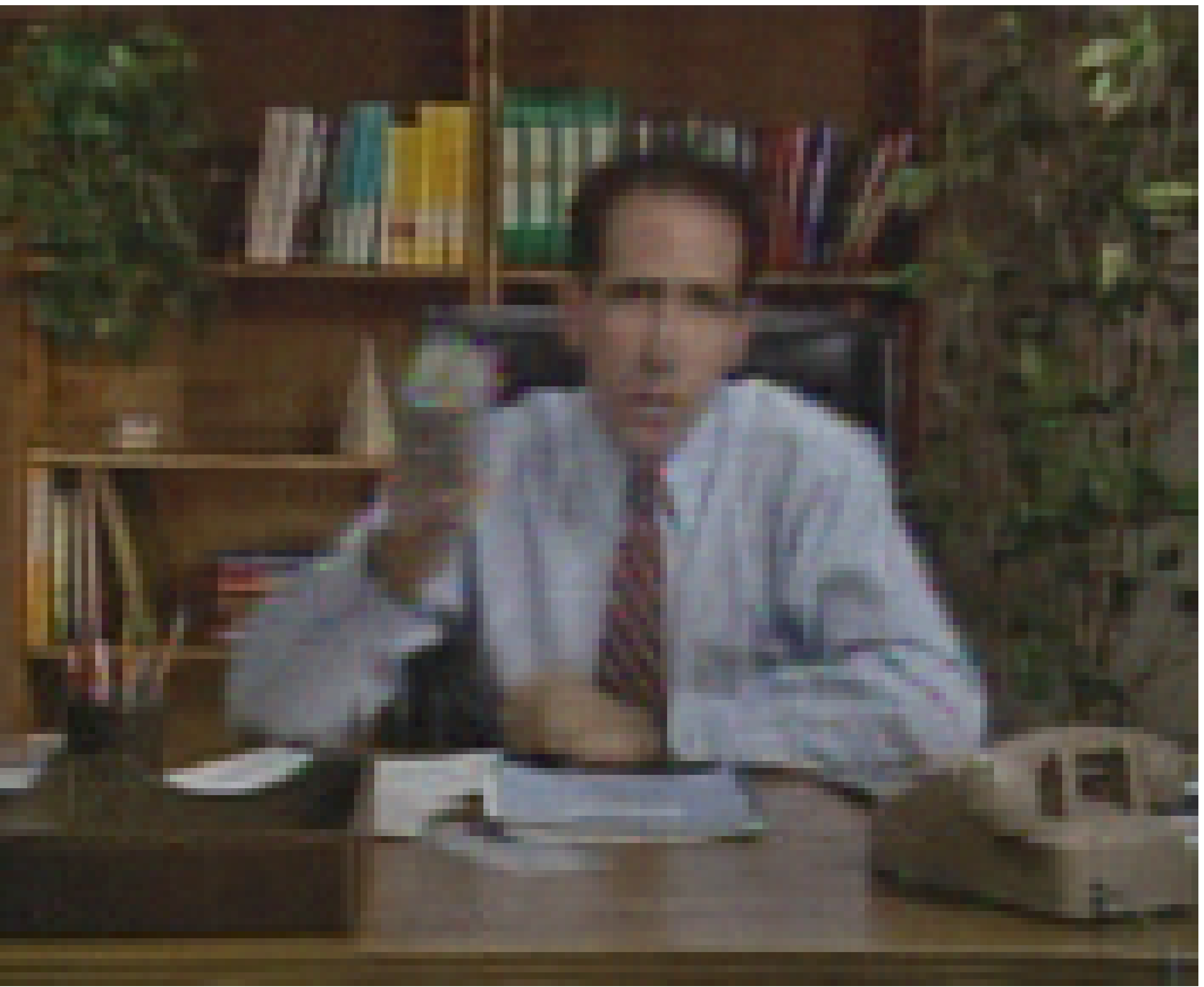}&
\includegraphics[width=0.19\textwidth]{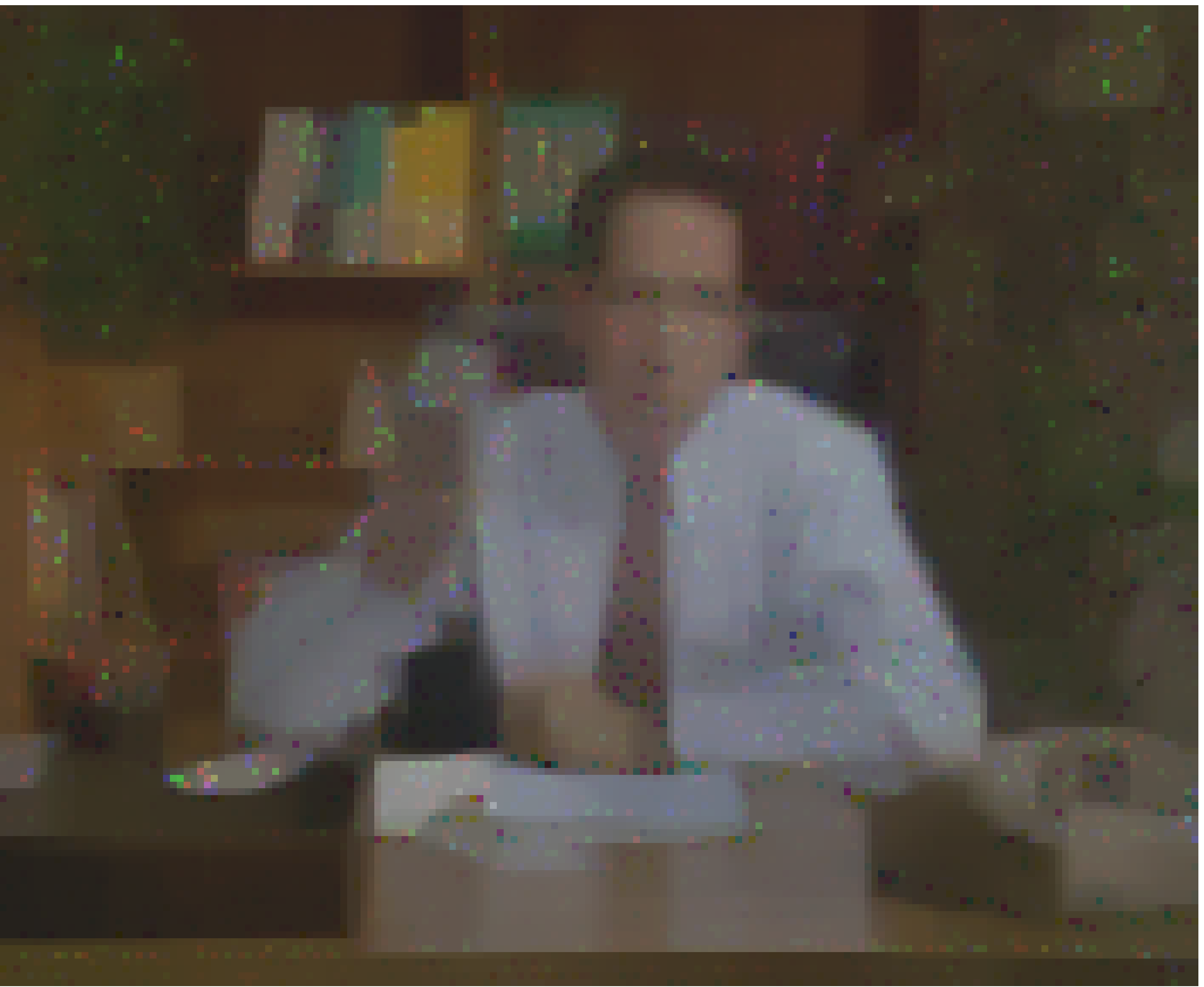}\\
(a) Original& (b) Observed & (c) HaLRTC & (d) NSNN & (e) LRTC-TV \\
\includegraphics[width=0.19\textwidth]{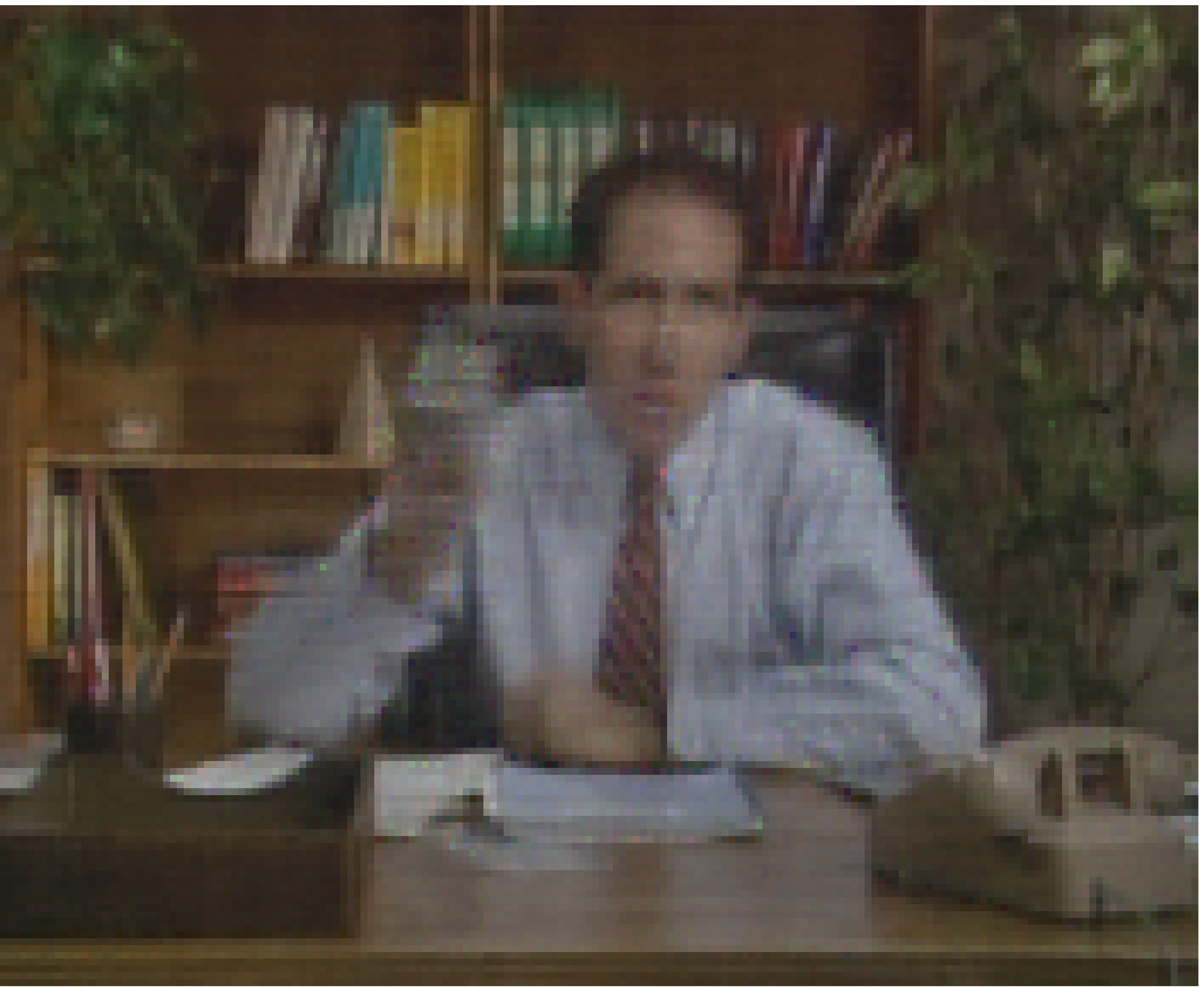}&
\includegraphics[width=0.19\textwidth]{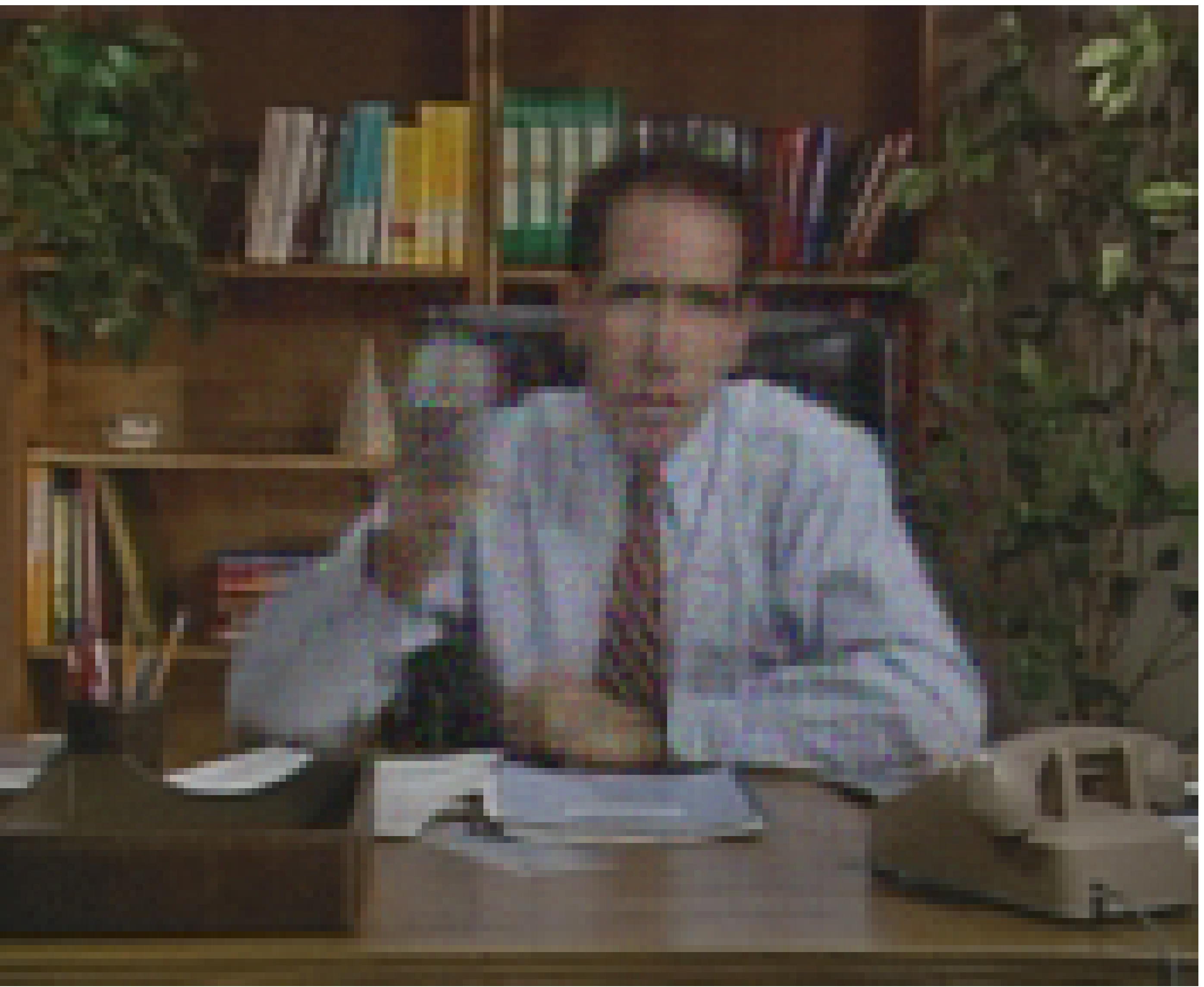}&
\includegraphics[width=0.19\textwidth]{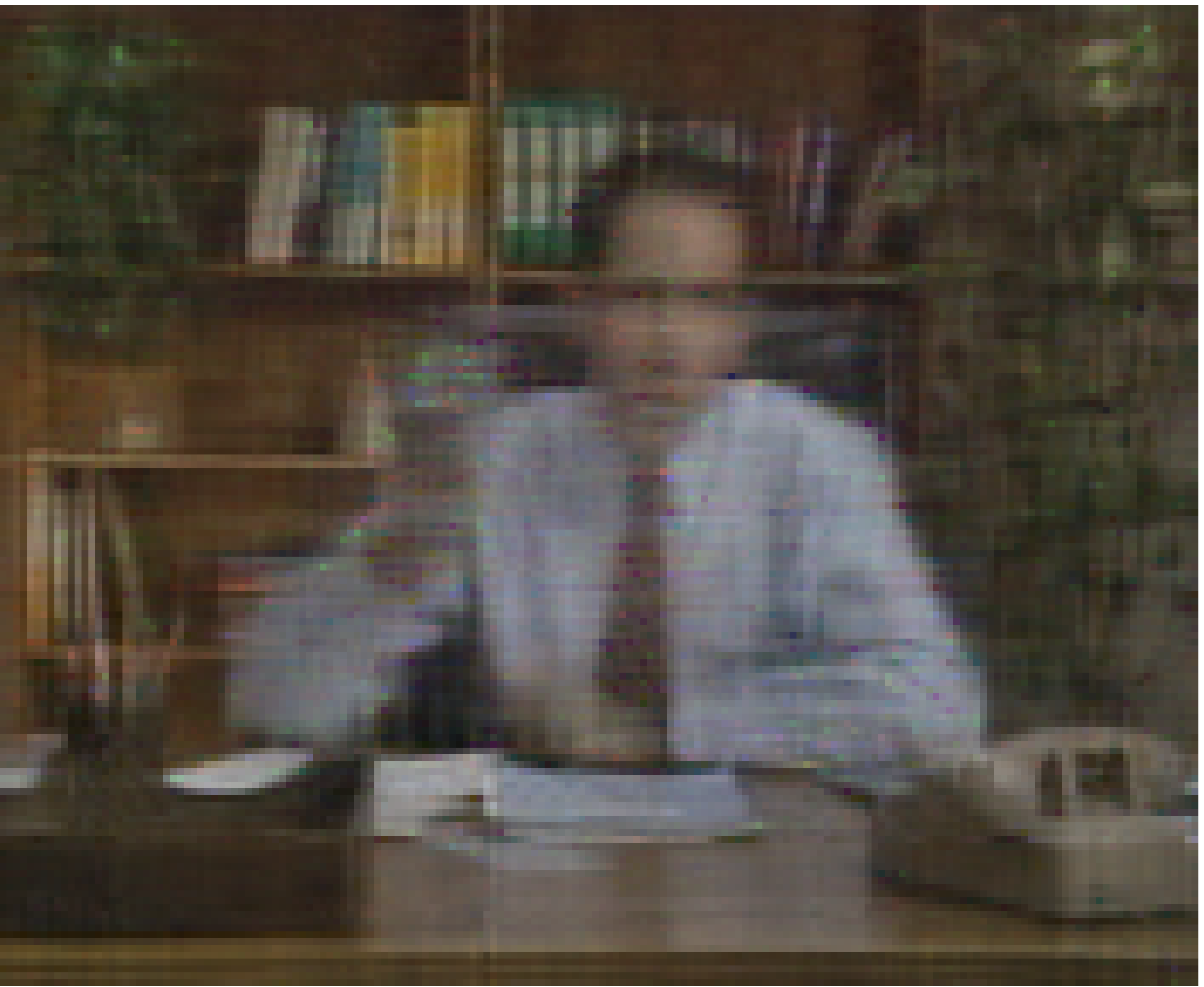}&
\includegraphics[width=0.19\textwidth]{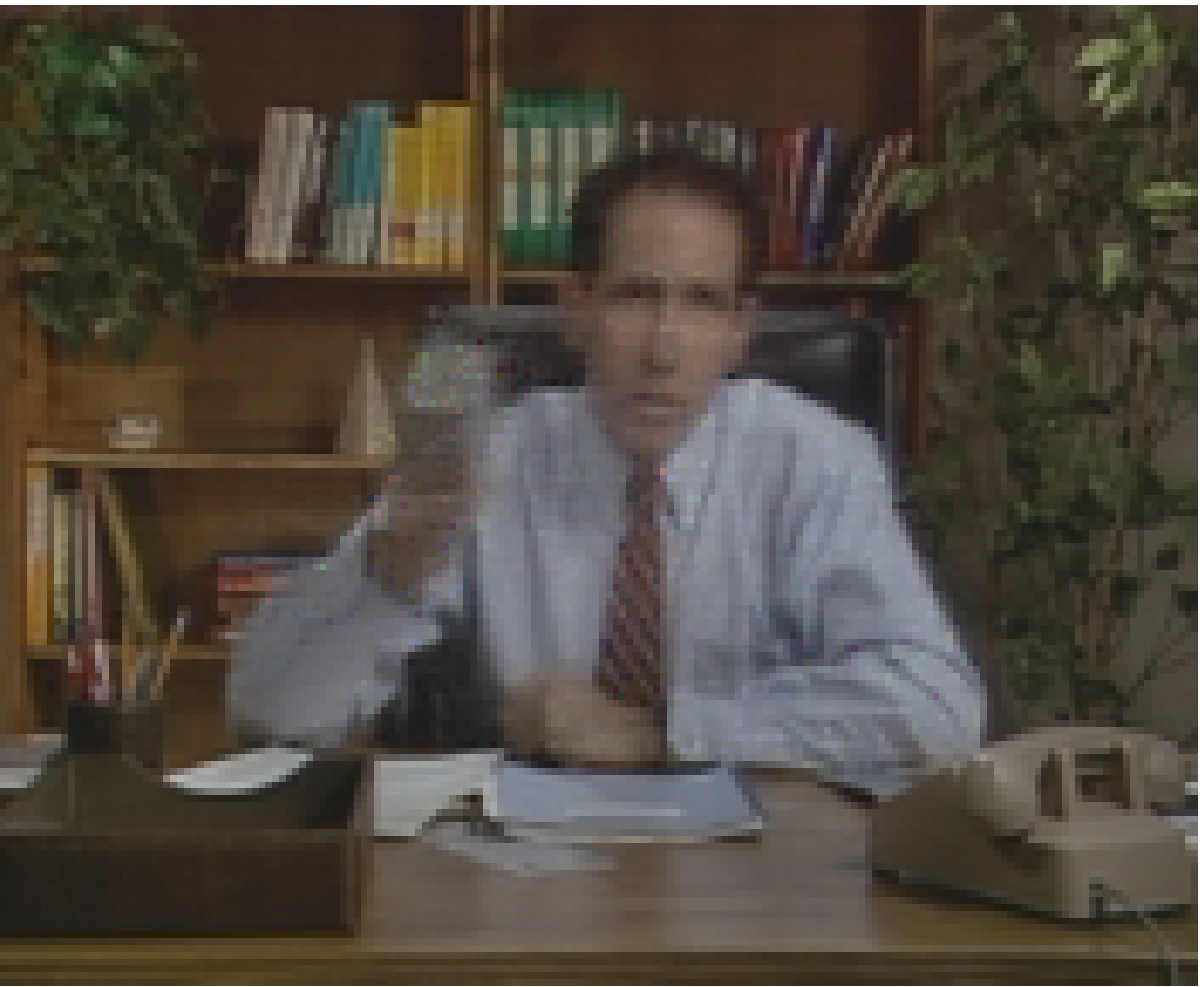}&
\includegraphics[width=0.19\textwidth]{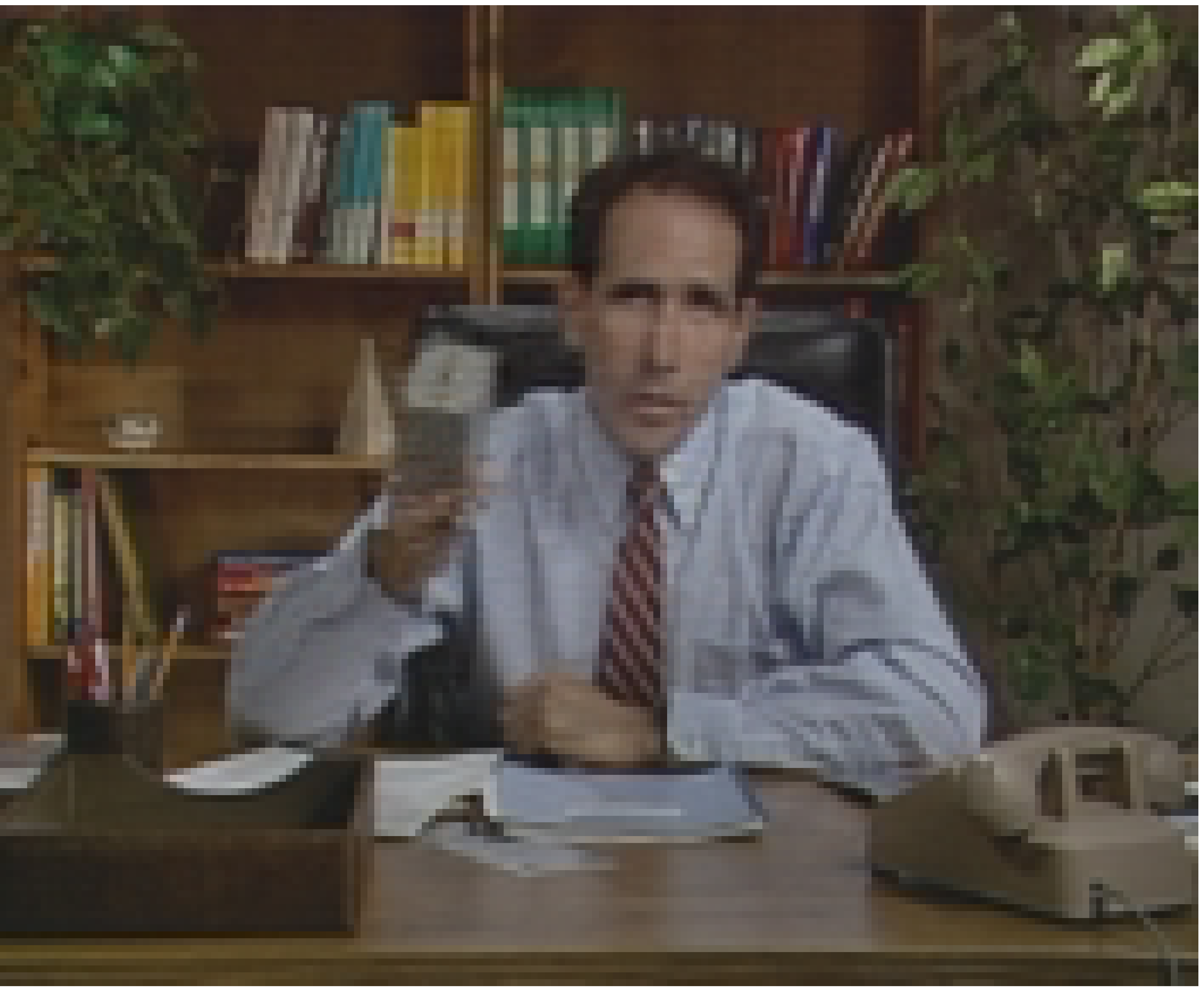}\\
(f) SiLRTC-TT & (g) tSVD & (h) KBR & (i) TRNN & (j) LogTR \\
\includegraphics[width=0.19\textwidth]{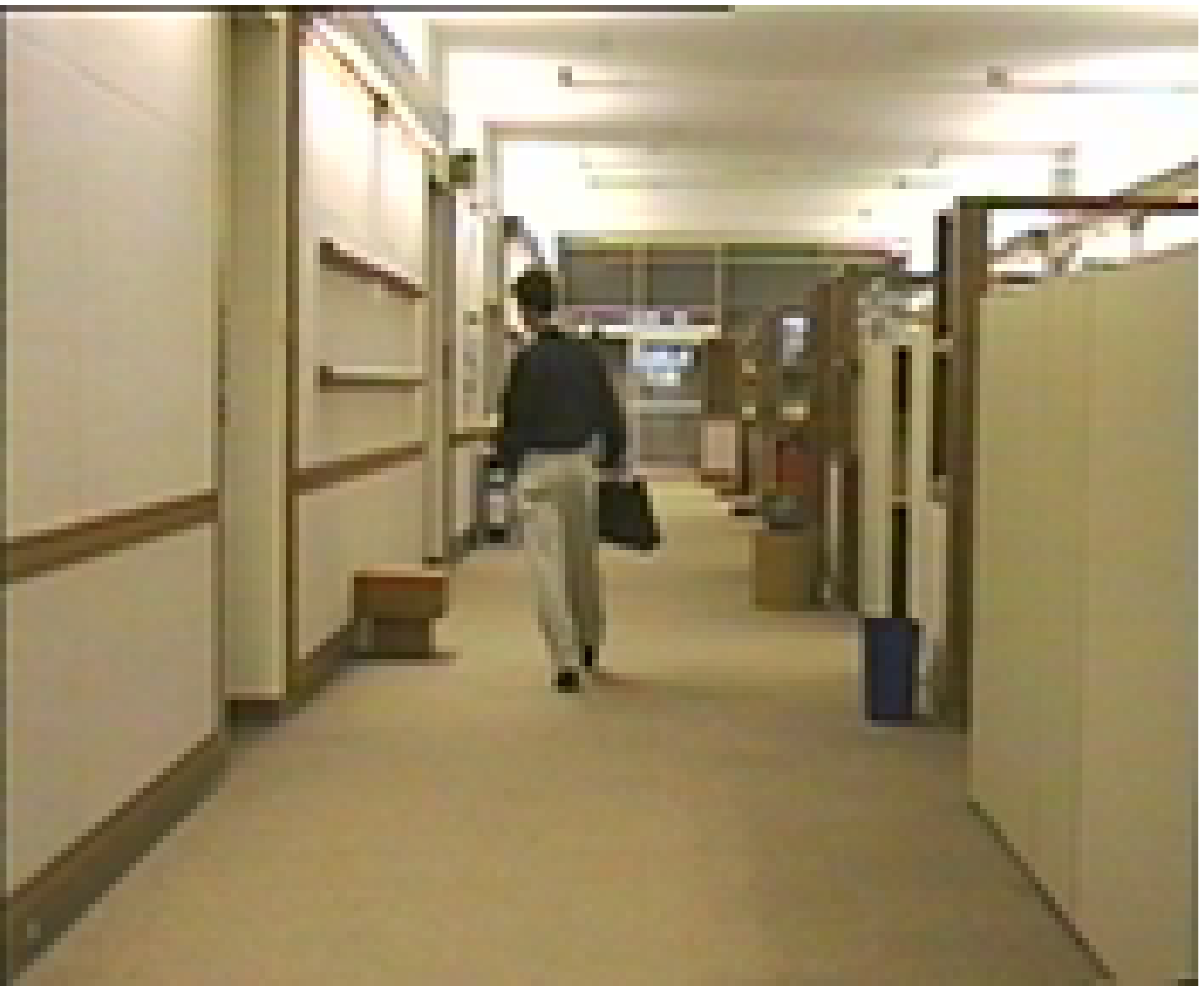}&
\includegraphics[width=0.19\textwidth]{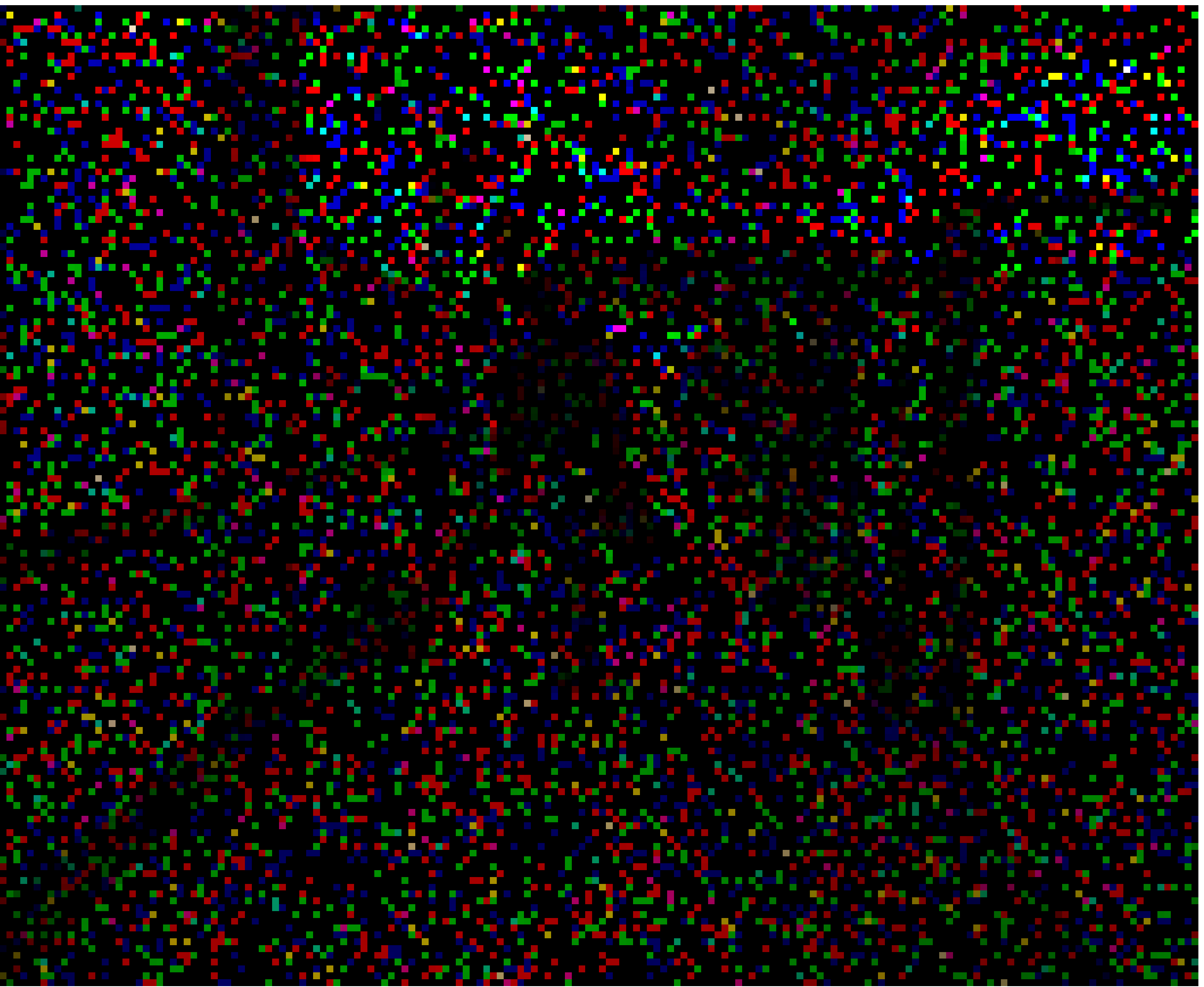}&
\includegraphics[width=0.19\textwidth]{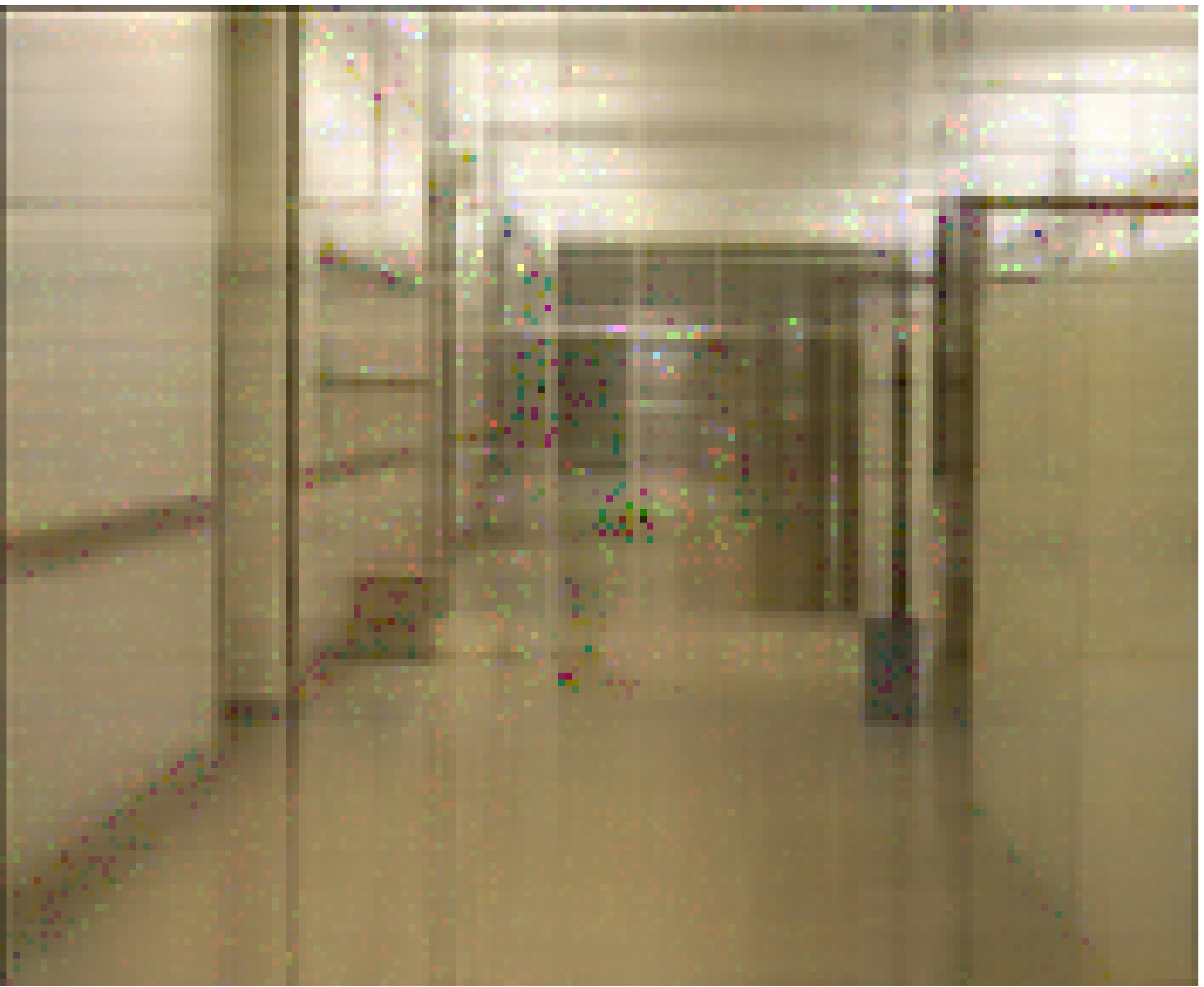}&
\includegraphics[width=0.19\textwidth]{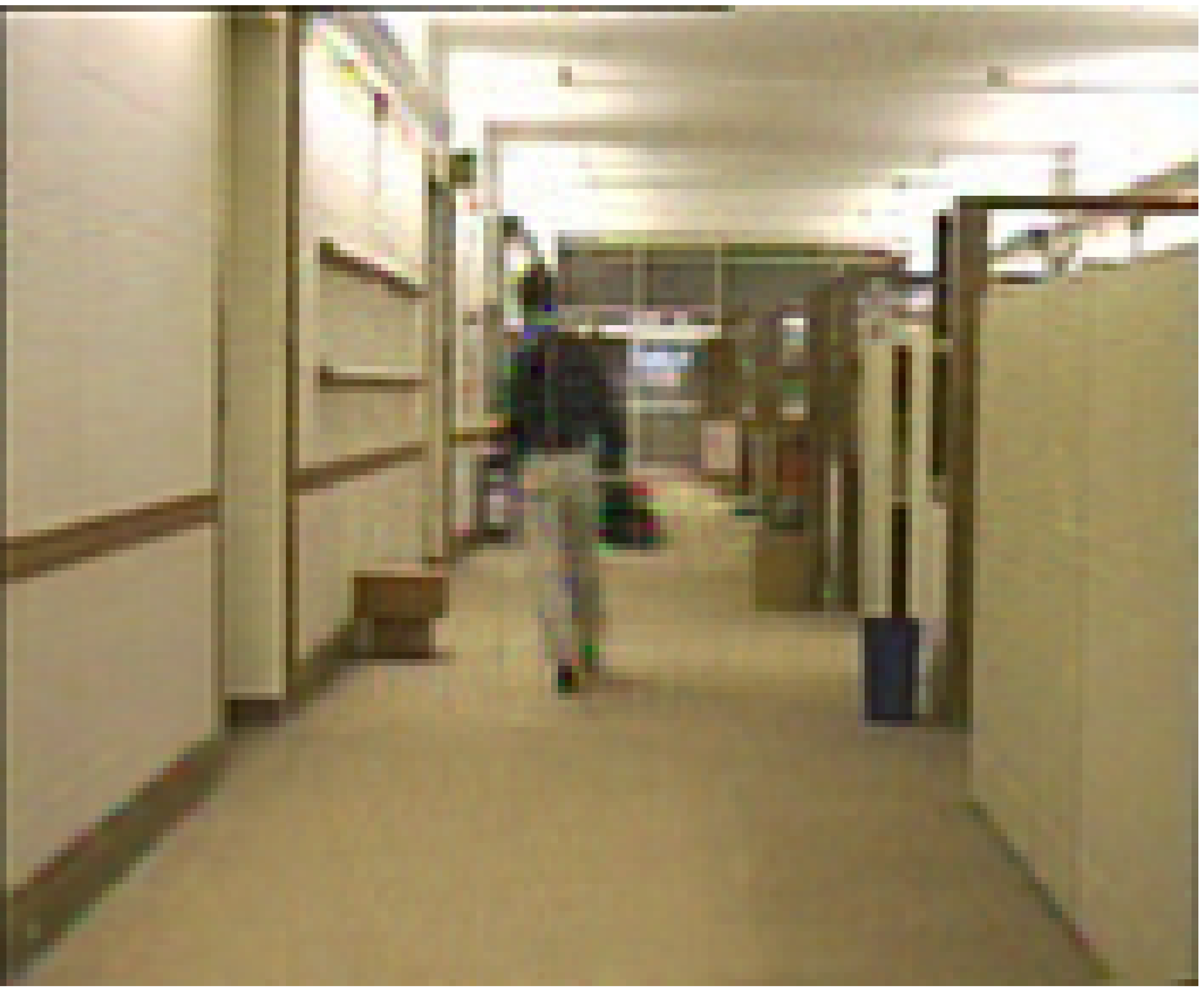}&
\includegraphics[width=0.19\textwidth]{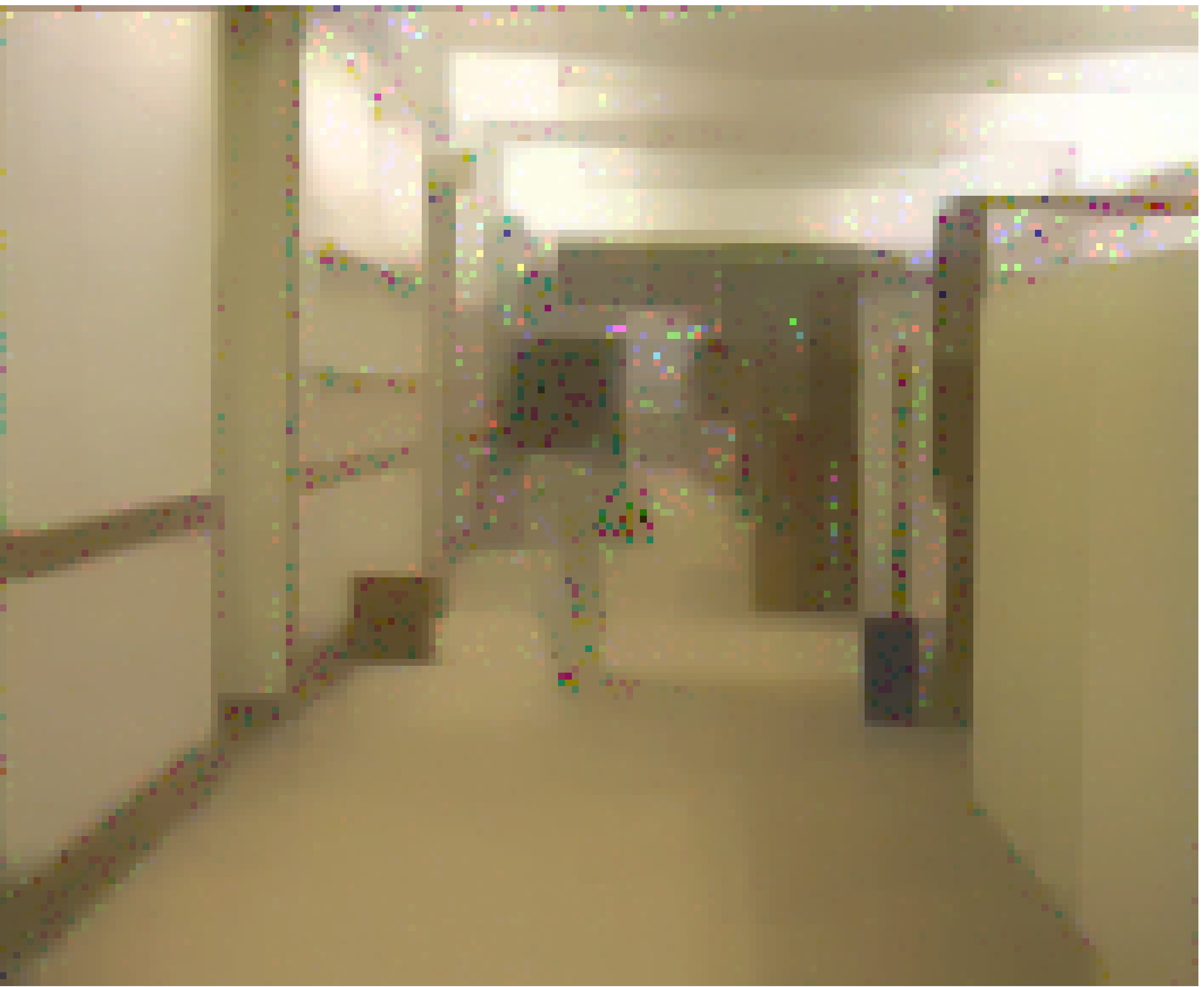}\\
(a) Original & (b) Observed & (c) HaLRTC & (d) NSNN & (e) LRTC-TV  \\
\includegraphics[width=0.19\textwidth]{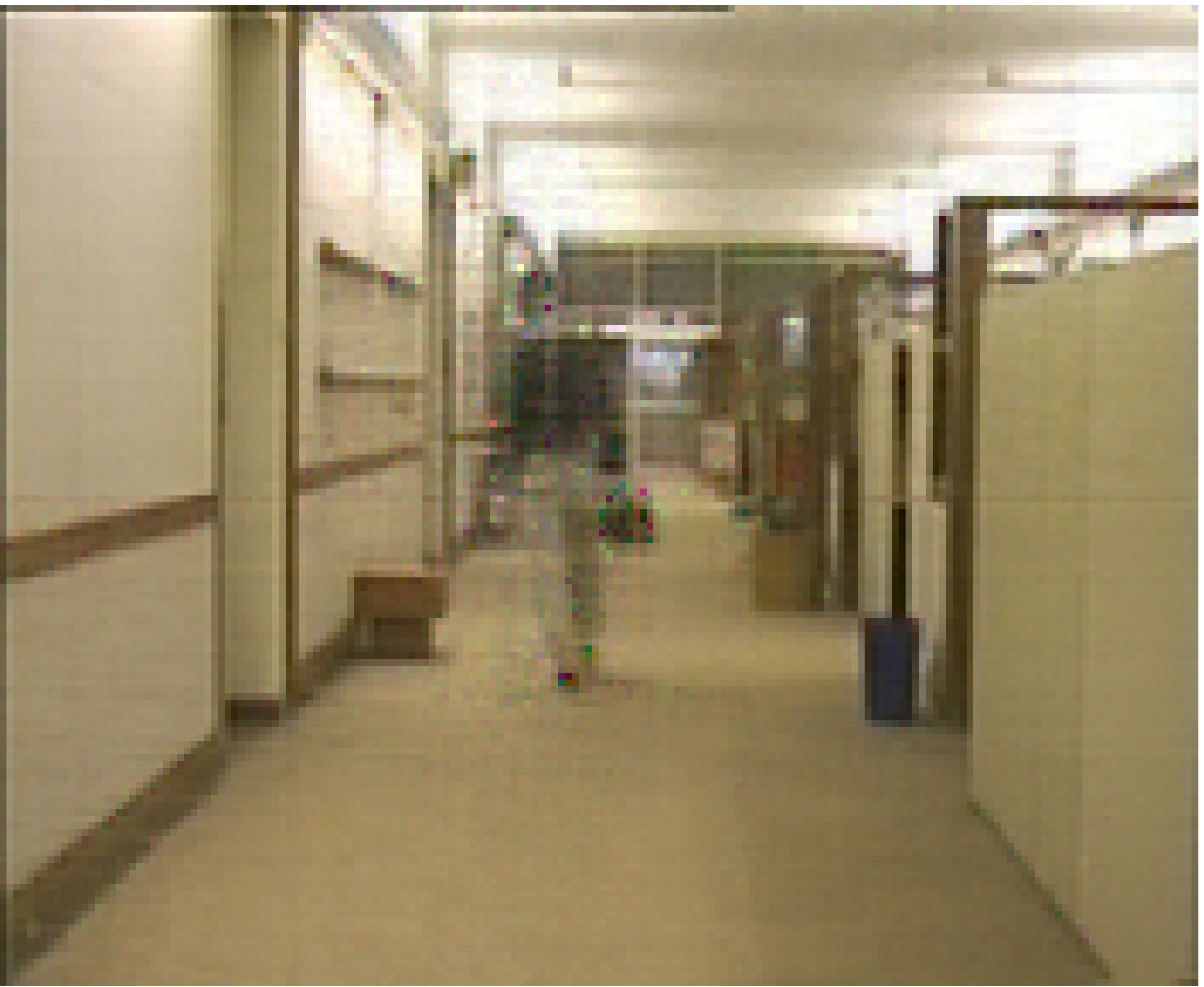}&
\includegraphics[width=0.19\textwidth]{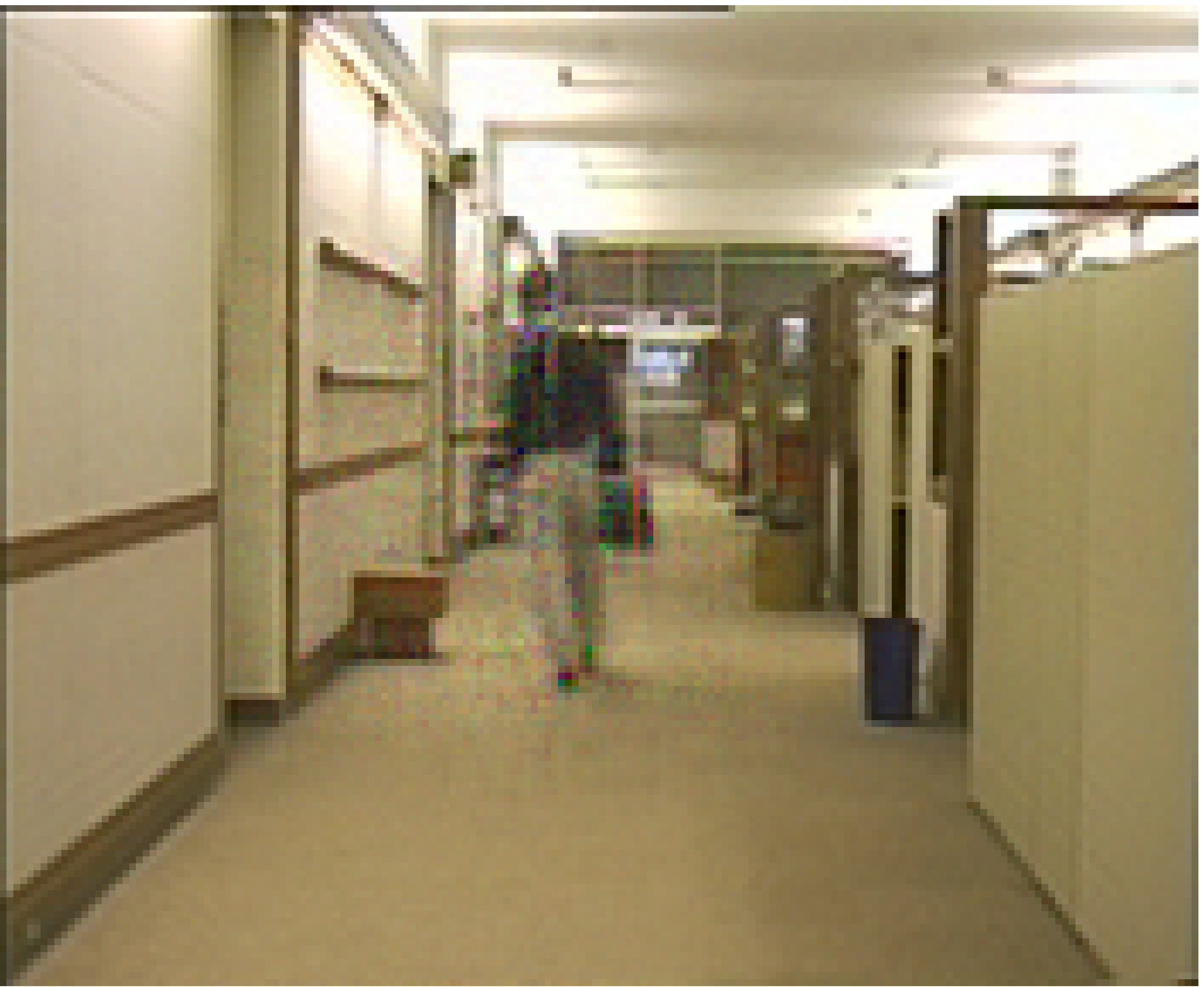}&
\includegraphics[width=0.19\textwidth]{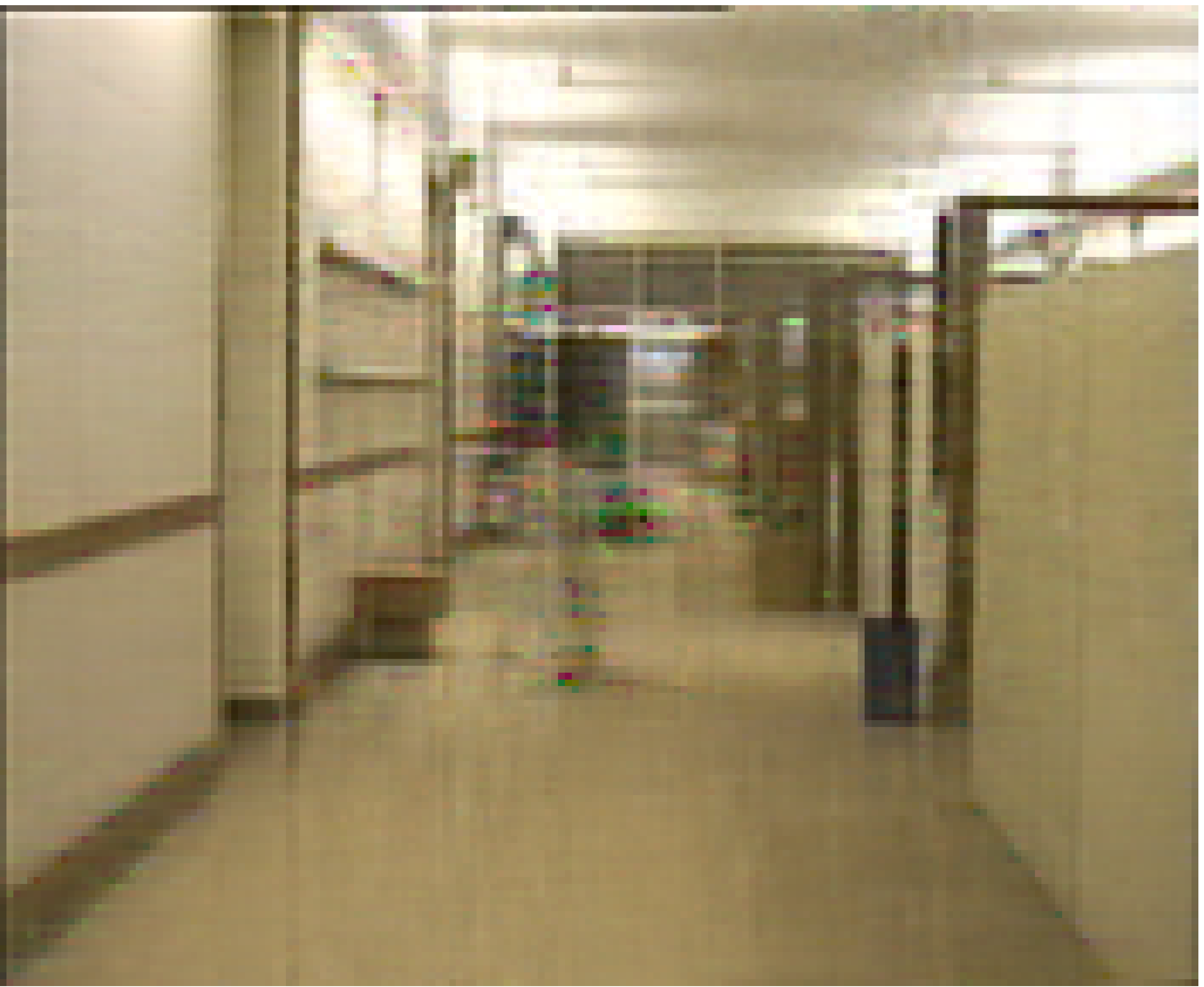}&
\includegraphics[width=0.19\textwidth]{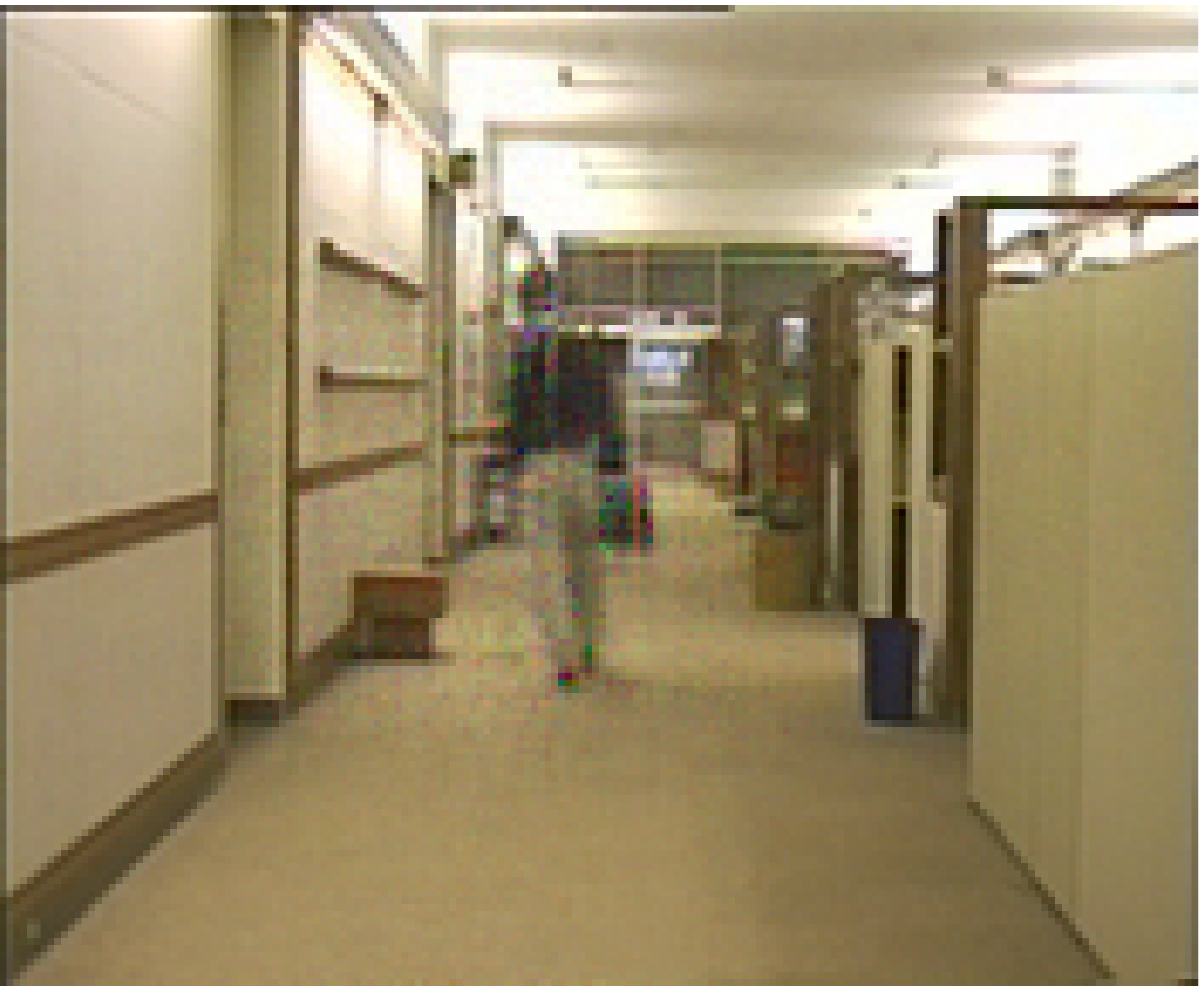}&
\includegraphics[width=0.19\textwidth]{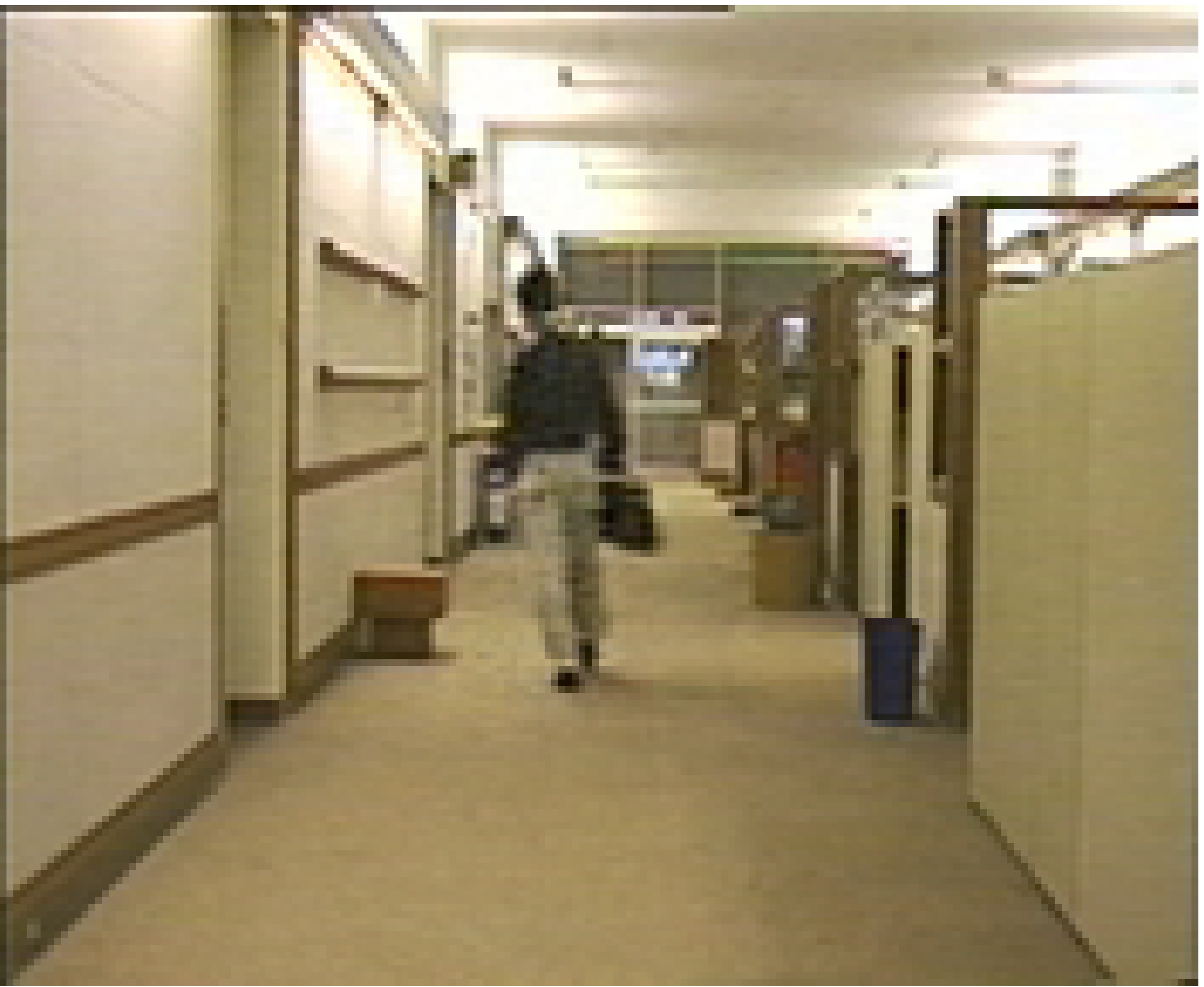}\\
(f) SiLRTC-TT & (g) tSVD & (h) KBR & (i) TRNN & (j) LogTR
\end{tabular}
\caption{\small{Recovered color videos \emph{Container}, \emph{Salesman}, and \emph{Hall} for random missing entries with $SR=0.1$.}}
  \label{fig:video}
  \end{center}\vspace{-0.3cm}
\end{figure}

\subsection{Color videos}
We test six color videos\footnote{https://media.xiph.org/video/derf/}, including \emph{Container}, \emph{Salesman}, \emph{Hall}, \emph{Foreman}, \emph{Claire}, and \emph{Suzie}. All testing videos are with size $144\times 176 \times 3 \times 144$. The SRs are set as 0.1, 0.2, and 0.3. To obtain balanced unfolding matrices, we first permute videos with order $[1,4,3,2]$ and then transform it into a tenth-order tensor, whose size is $6 \times 6 \times 4 \times 4 \times 6  \times 6 \times 3 \times 4 \times 4 \times 11$.

\begin{table}[!t]\scriptsize
\renewcommand\arraystretch{1.2}
\caption{PSNR and SSIM values of different methods on color videos completion with different SRs.}
\vspace{-0.5cm}
\begin{center}
\begin{tabular}{c|c|ccccccccc}
\hline \hline
\multirow{1}{*}{Videos}&\multicolumn{1}{c|}{SR} &Method &HaLRTC &NSNN &LRTC-TV & SiLRTC-TT & tSVD &KBR & TRNN & LogTR \\  \hline
\multirow{6}{*}{\emph{Container}}
          &\multirow{2}{*}{0.1} & PSNR    &22.68	  &30.70     &22.40    &28.30    &33.40	    &26.12     &30.51       &\textbf{42.00}  \\
          &                     & SSIM    &0.7651	  &0.9164    &0.7531   &89.24    &0.9272	&0.8565    &0.9389      &\textbf{0.9864} \\ \cline{2-11}

          &\multirow{2}{*}{0.2} & PSNR    &25.85	  &35.53     &25.76    &33.77    &37.83     &29.88     &36.75       &\textbf{46.21}  \\
          &                     & SSIM    &0.8614	  &0.9596    &0.8592   &0.9565   &0.9604    &0.9253    &0.9800      &\textbf{0.9925} \\ \cline{2-11}

          &\multirow{2}{*}{0.3} & PSNR    &28.58      &39.93     &28.41    &38.17    &40.85     &33.13     &41.85       &\textbf{48.53}  \\
          &                     & SSIM    &0.9146     &0.9805    &0.9147   &0.9786   &0.9816    &0.9606    &0.9909      &\textbf{0.9951} \\
          \hline
\multirow{6}{*}{\emph{Salesman}}
          &\multirow{2}{*}{0.1} & PSNR    &23.76	  &31.56     &25.21    &29.29    &31.33	    &27.90     &32.29       &\textbf{36.74}  \\
          &                     & SSIM    &0.6477	  &0.8910    &0.6593   &0.8640   &0.9013	&0.7995    &0.9312      &\textbf{0.9678} \\ \cline{2-11}

          &\multirow{2}{*}{0.2} & PSNR    &27.10	  &34.62     &28.80    &33.74    &34.34     &31.65     &36.48       &\textbf{40.48}  \\
          &                     & SSIM    &0.7970	  &0.9405    &0.8230   &0.9444   &0.9449    &0.9075    &0.9699      &\textbf{0.9847} \\ \cline{2-11}

          &\multirow{2}{*}{0.3} & PSNR    &29.82      &36.86     &31.20    &36.85    &36.52     &34.55     &39.39       &\textbf{43.23}  \\
          &                     & SSIM    &0.7970     &0.9622    &0.8962   &0.9708   &0.9643    &0.9511    &0.9835      &\textbf{0.9913} \\
          \hline
\multirow{6}{*}{\emph{Hall}}
          &\multirow{2}{*}{0.1} & PSNR    &22.61	  &30.92     &22.55    &27.89    &30.75	    &26.82     &30.51       &\textbf{34.92}  \\
          &                     & SSIM    &0.7469	  &0.9145    &0.7664   &0.8911   &0.9170	&0.8603    &0.9125      &\textbf{0.9532} \\
          \cline{2-11}
          &\multirow{2}{*}{0.2} & PSNR    &26.11	  &33.99     &27.02    &32.18    &33.34     &30.76     &34.83       &\textbf{38.30}  \\
          &                     & SSIM    &0.8560	  &0.9454    &0.8903   &0.9445   &0.9442    &0.9278    &0.9656      &\textbf{0.9734} \\
          \cline{2-11}
          &\multirow{2}{*}{0.3} & PSNR    &28.89      &36.04     &29.75    &35.09    &35.28     &33.73     &35.12       &\textbf{41.13}  \\
          &                     & SSIM    &0.9114     &0.9622    &0.9333   &0.9645   &0.9584    &0.9572    &0.9564      &\textbf{0.9841} \\
          \hline
\multirow{6}{*}{\emph{Foreman}}
          &\multirow{2}{*}{0.1} & PSNR    &19.89	  &28.87     &22.08    &23.98    &24.01	    &24.44     &26.21       &\textbf{30.77}  \\
          &                     & SSIM    &0.5082	  &0.8624    &0.6934   &0.6889   &0.6113	&0.7260    &0.8130      &\textbf{0.8928} \\
          \cline{2-11}
          &\multirow{2}{*}{0.2} & PSNR    &23.21	  &32.28     &26.71    &27.87    &26.90     &28.12     &30.61       &\textbf{35.72}  \\
          &                     & SSIM    &0.6770	  &0.9235    &0.8504   &0.8356   &0.7477    &0.8592    &0.9169      &\textbf{0.9569} \\
          \cline{2-11}
          &\multirow{2}{*}{0.3} & PSNR    &25.99      &34.84     &29.48    &31.21    &29.26     &30.94     &34.16       &\textbf{39.41}  \\
          &                     & SSIM    &0.7959     &0.9524    &0.9122   &0.9111   &0.8300    &0.9185    &0.9584      &\textbf{0.9788} \\
          \hline
\multirow{6}{*}{\emph{Claire}}
          &\multirow{2}{*}{0.1} & PSNR    &26.76	  &36.52     &30.07    &31.70    &33.26	    &29.39     &34.35       &\textbf{39.62}  \\
          &                     & SSIM    &0.8668	  &0.9636    &0.9182   &0.9377   &0.9380	&0.9061    &0.9649      &\textbf{0.9803} \\
          \cline{2-11}
          &\multirow{2}{*}{0.2} & PSNR    &30.70	  &40.01     &34.08    &35.84    &36.83     &33.23     &38.63       &\textbf{43.57}  \\
          &                     & SSIM    &0.9285	  &0.9794    &0.9581   &0.9697   &0.9662    &0.9542    &0.9828      &\textbf{0.9883} \\
          \cline{2-11}
          &\multirow{2}{*}{0.3} & PSNR    &33.81      &42.59     &36.64    &38.88    &39.38     &36.16     &41.67       &\textbf{46.36}  \\
          &                     & SSIM    &0.9589     &0.9867    &0.9742   &0.9822   &0.9780    &0.9741    &0.9895      &\textbf{0.9924} \\
          \hline
\multirow{6}{*}{\emph{Suzie}}
          &\multirow{2}{*}{0.1} & PSNR    &23.60	  &32.26     &27.42    &28.22    &27.99	    &28.13     &29.86       &\textbf{33.17}  \\
          &                     & SSIM    &0.6825	  &0.8810    &0.7975   &0.8058   &0.7431	&0.7895    &0.8563      &\textbf{0.9003} \\
          \cline{2-11}
          &\multirow{2}{*}{0.2} & PSNR    &27.38	  &34.60     &31.29    &31.79    &30.67     &31.32     &33.62       &\textbf{36.84}  \\
          &                     & SSIM    &0.7957	  &0.9223    &0.8847   &0.8877   &0.8298    &0.8755    &0.9254      &\textbf{0.9504} \\
          \cline{2-11}
          &\multirow{2}{*}{0.3} & PSNR    &30.14      &36.44     &33.57    &34.51    &32.65     &33.76     &36.42       &\textbf{39.78}  \\
          &                     & SSIM    &0.8654     &0.9464    &0.9233   &0.9312   &0.8801    &0.9220    &0.9569      &\textbf{0.9724} \\
          \hline \hline
\end{tabular}\label{table:video}
\end{center}
\end{table}

Figure \ref{fig:video} shows one frame of videos \emph{Container}, \emph{Salesman}, and \emph{Hall} recovered by all methods with $SR=0.1$. We observe that NSNN, SiLRTC-TT, tSVD, and TRNN can not keep structures of the recovered videos, such as the ripples of water in \emph{Container} and the tie of \emph{Salesman}, and HaLRTC, LRTC-TV, and KBR over-smooth the moved subjects, leading to obvious detail missing. In contrast, LogTR visually outperforms compared methods in keeping details and edges.

Table \ref{table:video} summaries the PNSR and SSIM values for different SRs. Figure \ref{fig:video_ps_ss} plots the PSNR and SSIM values corresponding to the frame number with $SR=0.1$. Again, for different SRs and all frames, LogTR achieves higher PSNR and SSIM values than compared methods.

\begin{figure}[!h]
\scriptsize\setlength{\tabcolsep}{0.5pt}
\begin{center}
\begin{tabular}{ccc}
\includegraphics[width=0.95\textwidth]{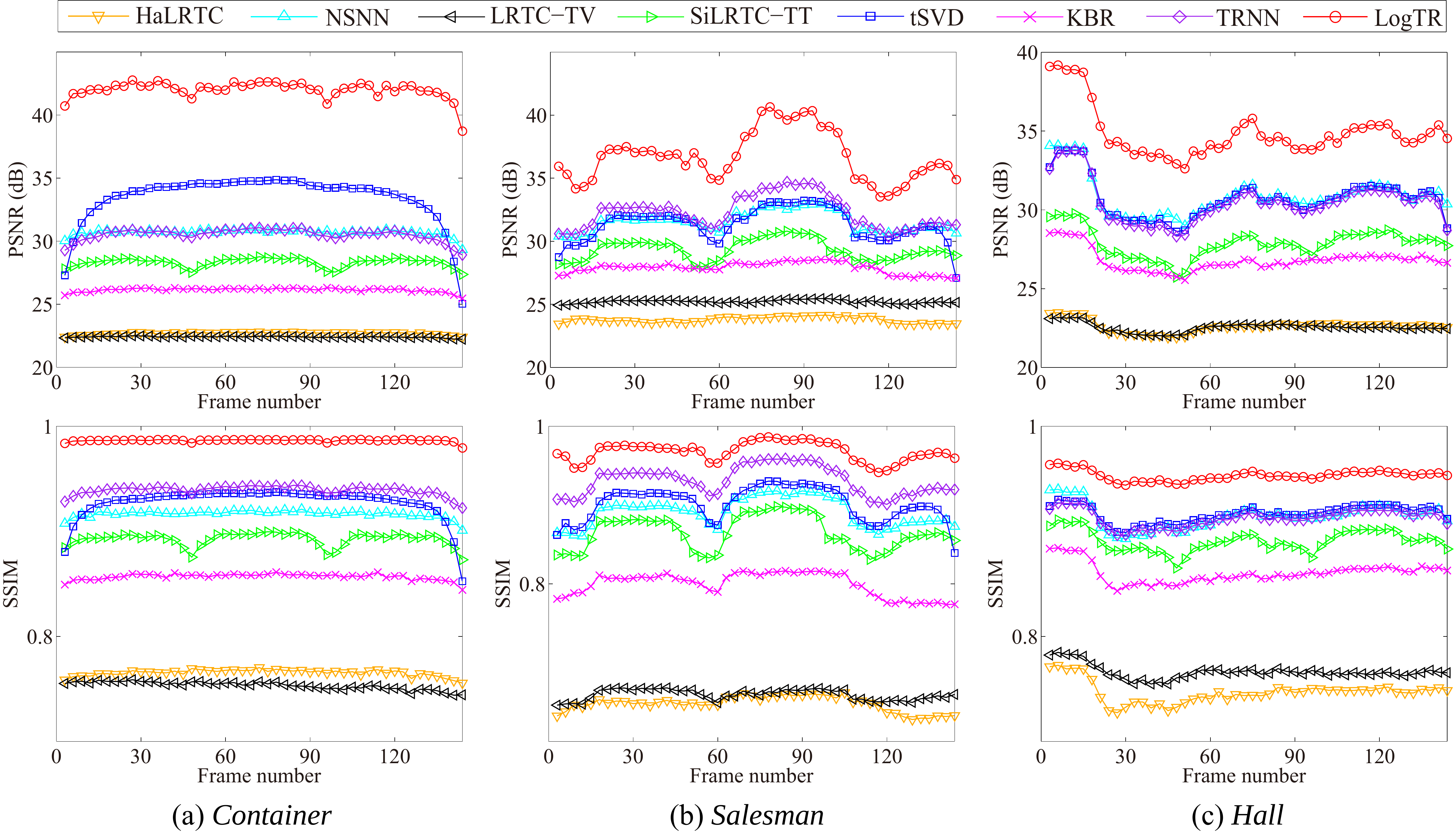}
\end{tabular}
\caption{PSNR and SSIM values of different methods on color videos completion with $SR=0.1$.}
  \label{fig:video_ps_ss}
  \end{center}\vspace{-0.3cm}
\end{figure}
%
%
%

\section{Conclusion}
\label{section:Conclusion}
In this paper, we propose a new nonconvex relaxation based on logdet function of the TR rank to more accurately depict the global low-rank prior of tensors for LRTC. We develop the ADMM algorithm to solve the nonconvex optimization problem with convergence analysis. Experiments on color images, MSIs, and color videos show that the proposed method can not only flexibly adapt to different completion tasks, but also achieve better performance than some state-of-the-art methods. In future work, we will try to apply the proposed nonconvex low-rank approximation to other tasks, such as denoising \cite{Li2016Multiplicative,Ma2017Truncated,Wang2018Speckle} and rain streaks removal \cite{Jiang2018FastDeRain}.
\section*{Acknowledgments}
This research is supported by NSFC (61876203, 61772003, 11901450), Project funded by China Postdoctoral Science Foundation (2018M643611), National Postdoctoral Program for Innovative Talents (BX20180252), and Science Strength Promotion Programme of UESTC. The authors would like to thank the authors \cite{Bengua2017Efficient,Ji2017A,Li2017LRTCTV,Liu2013tensor,Xie2018Kronecker,Zhang2017tSVD} for providing the free download of the source code.

\bibliographystyle{plain}
\bibliography{NTR}
\end{document}